\theoremstyle{plain}
\newtheorem{theorem}{Theorem}[section]
\newtheorem{proposition}[theorem]{Proposition}
\newtheorem{lemma}[theorem]{Lemma}
\newtheorem{corollary}[theorem]{Corollary}
\theoremstyle{definition}
\newtheorem{definition}[theorem]{Definition}
\theoremstyle{remark}
\newtheorem{remark}[theorem]{Remark}
\definecolor{mydarkblue}{rgb}{0,0.08,0.45}
\colorlet{LightGreen}{Green!20}
\colorlet{LightPeach}{Peach!20}
\newcommand{\lbbr}{\{\!\!\{}
\newcommand{\rbbr}{\}\!\!\}}
\newcommand{\first}[1]{\textbf{#1}}
\title{Extending the Design Space of Graph Neural Networks by Rethinking Folklore Weisfeiler-Lehman}
\author{
    Jiarui Feng$^{1}$~~~Lecheng Kong$^{1}$~~~Hao Liu$^{1}$~~~Dacheng Tao$^{2}$~~~Fuhai Li$^{1}$\\
    \textbf{Muhan Zhang$^{3}$\thanks{Corresponding author}~~~Yixin Chen$^{1}$}\\
    \texttt{\{feng.jiarui, jerry.kong, liuhao, fuhai.li, ychen25\}@wustl.edu,}\\
    \texttt{dacheng.tao@gmail.com,~muhan@pku.edu.cn}\\
    ${}^1$Washington University in St. Louis~~~${}^2$JD Explore Academy~~~${}^3$Peking University\\}
\begin{document}

\maketitle

\doparttoc 
\faketableofcontents 

\begin{abstract}
Message passing neural networks (MPNNs) have emerged as the most popular framework of graph neural networks (GNNs) in recent years. However, their expressive power is limited by the 1-dimensional Weisfeiler-Lehman (1-WL) test. Some works are inspired by $k$-WL/FWL (Folklore WL) and design the corresponding neural versions. Despite the high expressive power, there are serious limitations in this line of research. In particular, (1) $k$-WL/FWL requires at least $O(n^k)$ space complexity, which is impractical for large graphs even when $k=3$; (2) The design space of $k$-WL/FWL is rigid, with the only adjustable hyper-parameter being $k$. To tackle the first limitation, we propose an extension, $(k, t)$-FWL. We theoretically prove that even if we fix the space complexity to $O(n^k)$ (for any $k \geq 2$) in $(k, t)$-FWL, we can construct an expressiveness hierarchy up to solving the graph isomorphism problem. To tackle the second problem, we propose $k$-FWL+, which considers any equivariant set as neighbors instead of all nodes, thereby greatly expanding the design space of $k$-FWL. Combining these two modifications results in a flexible and powerful framework $(k, t)$-FWL+. We demonstrate $(k, t)$-FWL+ can implement most existing models with matching expressiveness. We then introduce an instance of $(k,t)$-FWL+ called Neighborhood$^2$-FWL (N$^2$-FWL), which is practically and theoretically sound. We prove that N$^2$-FWL is no less powerful than 3-WL, and can encode many substructures while only requiring $O(n^2)$ space. Finally, we design its neural version named \textbf{N$^2$-GNN} and evaluate its performance on various tasks. N$^2$-GNN achieves record-breaking results on ZINC-Subset (\textbf{0.059}) outperforming previous SOTA results by 10.6\%. Moreover, N$^2$-GNN achieves new SOTA results on the BREC dataset (\textbf{71.8\%}) among all existing high-expressive GNN methods.
\end{abstract}

\section{Introduction}
In recent years, graph neural networks (GNNs) have become one of the most popular and powerful methods for graph representation learning, following a message passing framework~\citep{kipf2017semisupervised, hamilton2017inductive, veličković2018graph, gilmer2017neural}. However, the expressive power of message passing GNNs is bounded by the one-dimensional Weisfeiler-Lehman (1-WL) test~\citep{xu2018powerful, morris2019weisfeiler}. As a result, numerous efforts have been made to design GNNs with higher expressive power. We provide a more detailed discussion in Section~\ref{sec:related}.

Several works have drawn inspiration from the $k$-dimensional Weisfeiler-Lehman ($k$-WL) or Folklore Weisfeiler-Lehman ($k$-FWL) test~\citep{weisfeiler1968reduction} and developed corresponding neural versions~\citep{morris2019weisfeiler, morris2020weisfeiler, maron2019provably, zhao2022a}. However, $k$-WL/FWL has two inherent limitations. First, while the expressive power increases with higher values of $k$, the space and time complexity also grows exponentially, requiring $O(n^k)$ space complexity and $O(n^{k+1})$ time complexity, which makes it impractical even when $k=3$. Thus, the question arises: \textbf{Can we retain high expressiveness without exploding both time and space complexities?} Second, the design space of WL-based algorithms is rigid, with the only adjustable hyper-parameter being $k$. However, there is a significant gap in expressive power even between consecutive values of $k$, making it hard to fine-tune the tradeoffs. Moreover, increasing the expressive power does not necessarily translate into better real-world performance, as it may lead to overfitting~\citep{morris2020weisfeiler, morris2023wl}. Although some works try to tackle this problem~\citep{morris2020weisfeiler,zhao2022a}, there is still limited understanding of \textbf{how to expand the design space of the original $k$-FWL to a broader space} that enables us to identify the most appropriate instance to match the complexity of real-world tasks.

To tackle the first limitation, we notice that $k$-FWL and $k$-WL have the same space complexity but $k$-FWL can achieve the same expressive power as ($k$+1)-WL. We found the key component that allows $k$-FWL to have stronger power is the tuple aggregation style. Enlightened by this observation, we propose $(k, t)$-FWL, which extends the tuple aggregation style in $k$-FWL. Specifically, in the original $k$-FWL, a neighbor of a $k$-tuple is defined by iteratively replacing its $i$-th element with a node $u$, and $u$ traverses all nodes in the graph. In $(k, t)$-FWL, we extend a single node $u$ to a $t$-tuple of nodes and carefully design a replacement scheme to insert the $t$-tuple into a $k$-tuple to define its neighbor. We demonstrate that even with a fixed space complexity of $O(n^k)$ (for any $k \geq 2$), $(k, t)$-FWL can construct an expressive hierarchy capable of solving the graph isomorphism problem. To deal with the second limitation, we revisit the definition of neighborhood in $k$-FWL. Inspired by previous works~\citep{morris2020weisfeiler, zhang2023complete} which consider only local neighbors (i.e., $u$ must be connected to the $k$-tuple) instead of global neighbors in $k$-WL/FWL, we find that the neighborhood (i.e., which $u$ are used to construct a $k$-tuple's neighbors) can actually be extended to any equivariant set related to the $k$-tuple, resulting in $k$-FWL+. Combining the two modifications leads to a novel and powerful FWL-based algorithm $(k, t)$-FWL+. $(k, t)$-FWL+ is highly flexible and can be used to design different versions to fit the complexity of real-world tasks. Based on the proposed  $(k,t)$-FWL+ framework, we implement many different instances that are closely related to existing powerful GNN/WL models, further demonstrating the flexibility of $(k,t)$-FWL+.

Finally, we propose an instance of $(2, 2)$-FWL+ named \textbf{Neighborhood$^2$-FWL} that is both theoretically expressive and practically powerful. It considers the local neighbors of both two nodes in a 2-tuple. Despite having a space complexity of $O(n^2)$, which is lower than 3-WL's $O(n^3)$ space complexity, this instance can still partially outperform 3-WL and is able to count many substructures. We implement a neural version named \textbf{Neighborhood$^2$-GNN (N$^2$-GNN)} and evaluate its performance on various synthetic and real-world datasets. Our results demonstrate that N$^2$-GNN outperforms existing SOTA methods across most tasks.  Particularly, it achieves \textbf{0.059} in ZINC-Subset, surpassing existing state-of-the-art models by significant margins. Meanwhile, it achieves \textbf{71.8\%} on the BREC dataset, the new SOTA among all existing high-expressive GNN methods.
\section{Preliminaries}

\textbf{Notations.} Let $\{\cdot\}$ denote a set, $\{\!\!\{ \cdot \}\!\!\}$ denote a multiset (a set that allows repetition), and $(\cdot)$ denote a tuple. As usual, let $[n] = \{1,2,\ldots, n \}$. Let $G=(V(G),E(G), l_G)$ be an undirected, colored graph, where $V(G)=[n]$ is the node set with $n$ nodes, $E(G)\subseteq V(G)\times V(G)$ is the edge set, and $l_G\colon V(G) \to C$ is the graph coloring function with $C=\{c_1,\ldots,c_d\}$ denote a set of $d$ distinct colors. Let $\mathcal{N}_{k}(v)$ denote a set of nodes within $k$ hops of node $v$ and $Q_k(v)$ denote the $k$-th hop neighbors of node $v$ and we have $\mathcal{N}_k(v) = \bigcup_{i=0}^k Q_i(v)$. Let $\text{SPD}(u, v)$ denote the shortest path distance between $u$ and $v$. 
We use $x_{v} \in \mathbb{R}^{d_n}$ to denote attributes of node $v \in V(G)$ and $e_{uv}\in \mathbb{R}^{d_e}$ to denote attributes of edge $(u,v) \in E(G)$. They are usually the one-hot encoding of the node and edge color respectively. We say that two graphs $G$ and $H$ are \textit{isomorphic} (denoted as $G\simeq H$) if there exists a bijection $\varphi \colon V(G) \to V(H)$ such that $\forall u, v \in V(G), (u, v) \in E(G) \Leftrightarrow (\varphi(u), \varphi(v)) \in E(H)$ and $\forall v \in V(G), l_G(v) = l_H(\varphi(v)) $. Denote $V(G)^{k}$ the set of $k$-tuples of vertices and $\mathbf{v}=(v_1,\ldots,v_k) \in V(G)^{k}$ a $k$-tuple of vertices. Let $S_n$ denote the permutation group of $[n]$ and $g\in S_n:[n]\rightarrow [n]$ be a particular permutation. When a permutation $g\in S_n$ operates on any target $X$, we denote it by $g \cdot X$. Particularly, a permutation operating on an edge set $E(G)$ is $g\cdot E(G)=\{(g(u), g(v))|(u, v)\in E(G)\}$. A permutation operating on a $k$-tuple $\mathbf{v}$ is $g\cdot \mathbf{v} = (g(v_1), \ldots, g(v_k))$. 
A permutation operating on a graph is $g \cdot G=( g\cdot V(G), g\cdot E(G), g \cdot l_G)$. 


\textbf{$k$-dimensional Weisfeiler-Lehman test.} The $k$-dimensional Weisfeiler-Lehman ($k$-WL) test is a family of algorithms used to test graph isomorphism. There are two variants of the $k$-WL test: $k$-WL and $k$-FWL (Folklore WL). We first describe the procedure of 1-WL, which is also called the color refinement algorithm~\citep{weisfeiler1968reduction}. Let $\mathcal{C}^{0}_{1wl}(v)=l_G(v)$ be the initial color of node $v \in V(G)$. At the $l$-th iteration, 1-WL updates the color of each node using the following equation:
\begin{equation}
\label{eq:1-wl}
\mathcal{C}^{l}_{1wl}(v)=\text{HASH}\left(\mathcal{C}^{l-1}_{1wl}(v), \lbbr \mathcal{C}^{l-1}_{1wl}(u)|u \in Q_{1}(v) \rbbr \right).
\end{equation}
After the algorithm converges, a color histogram is constructed using the colors assigned to all nodes. If the color histogram is different for two graphs, then the two graphs are non-isomorphic. However, if the color histogram is the same for two graphs, they can still be non-isomorphic.

The $k$-WL and $k$-FWL, for $k\geq 2$, are generalizations of the 1-WL, which do not color individual nodes but node tuples $\mathbf{v} \in V(G)^k$. Let $\mathbf{v}_{w/j}$ be a $k$-tuple obtained by replacing the $j$-th element of $\mathbf{v}$ with $w$. That is $\mathbf{v}_{w/j}=(v_1,\ldots, v_{j-1},w,v_{j+1},\ldots, v_k)$. 
The main difference between $k$-WL and $k$-FWL lies in their aggregation way. For $k$-WL, the set of $j$-th neighbors of tuple $\mathbf{v}$ is denoted as 
$Q_j(\mathbf{v})= \{\mathbf{v}_{w/j}|w \in V(G)\}$, $j \in [k]$. Instead, the $w$-neighbor of tuple $\mathbf{v}$ for $k$-FWL is denoted as $Q_{w}^{F}(\mathbf{v}) = \left (\mathbf{v}_{w/1}, \mathbf{v}_{w/2},\ldots, \mathbf{v}_{w/k}\right)$. 
Let $\mathcal{C}^0_{kwl}(\mathbf{v})=\mathcal{C}^0_{kfwl}(\mathbf{v})$ be the initial color for $k$-WL and $k$-FWL, respectively. They are usually the isomorphism types of tuple $\mathbf{v}$. At the $l$-th iteration, $k$-WL and $k$-FWL update the color of each tuple according to the following equations:
\begin{gather}
    \label{eq:kwl}
  \textbf{WL:}\quad \mathcal{C}^{l}_{kwl}(\mathbf{v})=\text{HASH} \left( \mathcal{C}^{l-1}_{kwl}(\mathbf{v}), \left(\lbbr \mathcal{C}^{l-1}_{kwl}(\mathbf{u})|\mathbf{u} \in Q_j(\mathbf{v})\rbbr |\ j \in [k]\right) \right), \\
      \label{eq:kfwl}
  \textbf{FWL:}\quad \mathcal{C}^{l}_{kfwl}(\mathbf{v})= \text{HASH}\left( \mathcal{C}^{l-1}_{kfwl}(\mathbf{v}), \lbbr \left( \mathcal{C}^{l-1}_{kfwl}(\mathbf{u})|\mathbf{u} \in Q_{w}^{F}(\mathbf{v})\right)|w\in V(G) \rbbr\right).
\end{gather}

The procedure described above is repeated until convergence, resulting in a color histogram of the graph that can be compared with the histograms of other graphs. Prior research has shown that ($k$+1)-WL and ($k$+1)-FWL are strictly more powerful than $k$-WL and $k$-FWL, respectively, except in the case where 1-WL is equivalent to 2-WL. Additionally, it has been shown that $k$-FWL is equivalent to ($k$+1)-WL~\citep{cai1992optimal, grohe2017descriptive, morris2019weisfeiler}. We leave the additional discussion on $k$-WL/FWL in Appendix~\ref{app:additional_def}.

\textbf{Message passing neural networks.} Message Passing Neural Networks (MPNNs) are a type of GNNs that update node representations by iteratively aggregating information from their direct neighbors. Let $h^{l}_{v}$ be the output of MPNNs of node $v \in V(G)$ after $l$ layers and we let $h^{0}_{v}=x_{v}$. At $l$-th layer, the representation is updated by:
\begin{equation}
\label{eq:mpnn_n}
h^{l}_v=\mathbf{U}^{l}\left(h^{l-1}_v, \lbbr m^{l}_{vu}|u\in Q_1(v)\rbbr \right),\quad m^{l}_{vu}=\mathbf{M}^{l}\left(h^{l-1}_v,h^{l-1}_u,e_{vu}\right),
\end{equation}
where $\mathbf{U}^{l}$ and $\mathbf{M}^{l}$ are learnable update and message functions, usually parameterized by multi-layer perceptrons (MLPs). After $L$ layers, MPNNs output the final representation $h^L_v$ for all nodes $v \in V(G)$. The graph-level representation is obtained by:
\begin{equation}
\label{eq:mpnn_g}
    h_{G}=\mathbf{R}(\lbbr h^{L}_{v}|v\in V(G)\rbbr ),
\end{equation}
where $\mathbf{R}$ is a readout function. The expressive power of MPNNs is at most as powerful as 1-WL~\citep{xu2018powerful,morris2019weisfeiler}.
\section{Rethinking and extending the Folklore Weisfeiler-Lehman test}
\subsection{Rethinking and extending the aggregation style in $k$-FWL}
Increasing $k$ can increase the expressive power of $k$-FWL for distinguishing graph structures. However, increasing $k$ brings significant memory costs, as $k$-FWL requires $O(n^k)$ memory. It becomes impractical even when $k=3$~\citep{maron2019provably}. Therefore, it is natural to ask:

\begin{figure*}[t]
\centering
\vspace{-10pt}
\includegraphics[width=\textwidth]{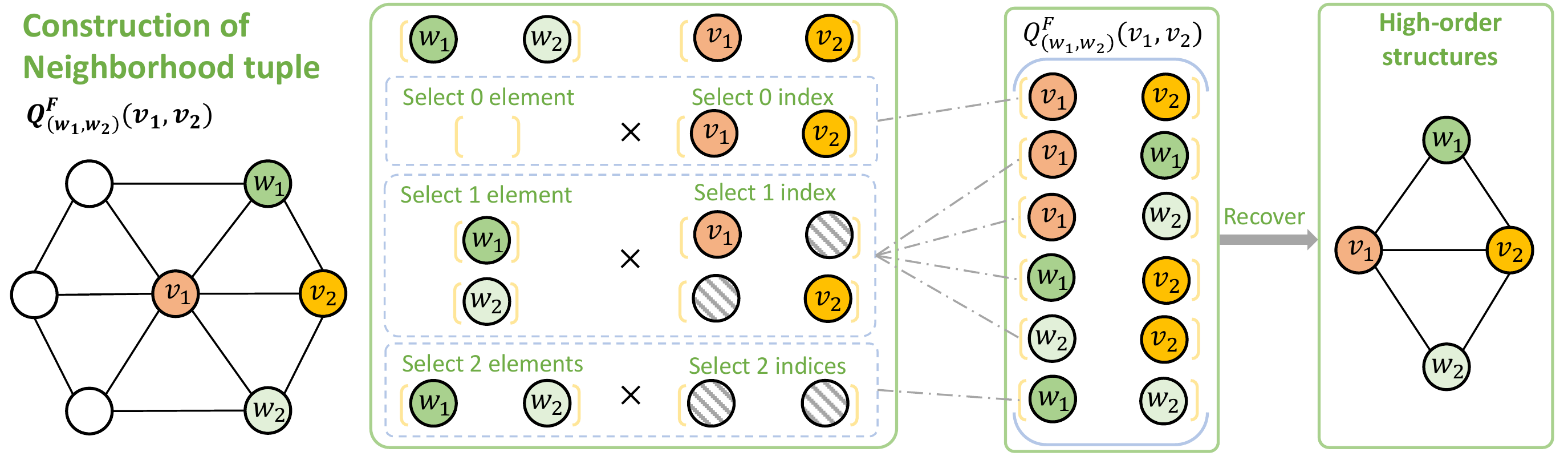}
\vspace{-5pt}
\caption{Illustration of the construction of neighborhood tuple $Q^F_{(w_1, w_2)}(v_1,v_2)$ in $(2, 2)$-FWL. We sequentially select 0, 1, 2 elements from $(w_1, w_2)$ to replace 0, 1, 2 elements in $(v_1, v_2)$, resulting in three sub-tuple of length 1, 4, 1, respectively. The final neighborhood tuple is the concatenation of three sub-tuples. We can easily recover high-order graph structures from the constructed neighborhood tuple with the isomorphism type of 2-tuples.}
 \vspace{-15pt}
\label{fig:neighborhood_tuple}
\end{figure*}

\centerline{\textbf{\textit{Can we achieve higher expressive power without exploding the memory cost?}}}

To achieve this goal, we first notice that $k$-FWL has a higher expressive power than $k$-WL but only requires the same $O(n^{k})$ memory cost. To understand why, let's use 2-WL and 2-FWL as examples and rewrite Equation~(\ref{eq:kwl}) and Equation~(\ref{eq:kfwl}):
\begin{align*}
\begin{aligned}
&\textbf{2-WL:}\ \ \mathcal{C}^{l}_{2wl}(v_1, v_2) = \text{HASH}\left( \mathcal{C}^{l-1}_{2wl}(v_1, v_2), \lbbr \mathcal{C}^{l-1}_{2wl}(v_1, w)|w\in V(G)\rbbr, \lbbr \mathcal{C}^{l-1}_{2wl}(w, v_2)|w\in V(G)\rbbr \right), \\
&\textbf{2-FWL:}\ \ \mathcal{C}^{l}_{2fwl}(v_1, v_2) = \text{HASH}\left( \mathcal{C}^{l-1}_{2fwl}(v_1, v_2), \lbbr \left(\mathcal{C}^{l-1}_{2fwl}(v_1, w), \mathcal{C}^{l-1}_{2fwl}(w, v_2)\right)|w\in V(G)\rbbr \right),
\end{aligned}
\end{align*}
where $\mathcal{C}_{2wl}^{l}(v_1, v_2)$ and $\mathcal{C}_{2fwl}^{l}(v_1, v_2)$ are the color of tuple $(v_1, v_2)$ at iteration $l$ for 2-WL and 2-FWL, respectively. We can see the key difference between 2-WL and 2-FWL is that: in 2-FWL, a tuple of color $(\mathcal{C}^{l-1}_{2fwl}(v_1, w), \mathcal{C}^{l-1}_{2fwl}(w, v_2))$ is aggregated. While in 2-WL, the colors of nodes are considered in separate multisets. To understand why the first aggregation is more powerful, we can further rewrite the update equation of 2-FWL as follows:
\begin{equation*}
\mathcal{C}^{l}_{2fwl}(v_1, v_2) = \text{HASH}\left(\lbbr \left(\mathcal{C}^{l-1}_{2fwl}(v_1, v_2), \mathcal{C}^{l-1}_{2fwl}(v_1, w), \mathcal{C}^{l-1}_{2fwl}(w, v_2)\right)|w\in V(G)\rbbr \right).
\end{equation*}
It is easy to verify that the above equation is equivalent to the original one. This means that 2-FWL updates the color of tuple $(v_1, v_2)$ by aggregating different tuples of $((v_1, v_2), (v_1, w), (w, v_2))$, which can be viewed as aggregating information of 3-tuples $(v_1, v_2, w)$. For example, $\text{HASH}(\mathcal{C}^{0}_{2fwl}(v_1, v_2), \mathcal{C}^{0}_{2fwl}(v_1, w), \mathcal{C}^{0}_{2fwl}(w, v_2))$ can fully recover the isomorphism type of tuple $(v_1, v_2, w)$ used in 3-WL. The above observations can be easily extended to $k$-FWL. The key insight here is that \textbf{tuple-style aggregation is the key to lifting the expressive power of $k$-FWL.} Since aggregating $k$-tuple can boost the expressive power of $k$-FWL to be equivalent to ($k$+1)-WL without the increase of space complexity, we may wonder, can we further extend the size of the tuple? 

We first introduce the neighborhood tuple to extend the original $w$-neighbor used in $k$-FWL. Let $Q^{F}_{\mathbf{w}}(\mathbf{v})$ denote a neighborhood tuple related to $k$-tuple $\mathbf{v}$ and $t$-tuple $\mathbf{w}$. The neighborhood tuple contains all possible results such that we sequentially select $m \in [0,1,\ldots, min(k, t)]$ elements from $\mathbf{w}$ to replace $m$ elements in $\mathbf{v}$. For each $m$, we first select all possible combinations of $m$-tuples (no repeated elements) from tuple $\mathbf{w}$ with a pre-defined order to form a new tuple, denoted as $P_m(\mathbf{w})$. At the same time, we select all possible combinations of $m$ indices from $\mathbf{i}=(1,2,\ldots, k)$ with a pre-defined order to form another new tuple $P_m(\mathbf{i})$. Finally, we iterate all $(\mathbf{w}^{\prime}, \mathbf{i}^{\prime}) \in P_m(\mathbf{w}) \times P_m(\mathbf{i})$ to get a $|P_m(\mathbf{w})||P_m(\mathbf{i})|$-sub-tuple, where for each $(\mathbf{w}^{\prime}, \mathbf{i}^{\prime})$ we replace the elements with indices $\mathbf{i}^{\prime}$ in the original $\mathbf{v}$ with $\mathbf{w}^{\prime}$. The final neighborhood tuple is the concatenation of all sub-tuples. Note that the pre-defined orders can be flexible as long as they are consistent for any $k$ and $t$. 

Here is an example of a possible construction of a neighborhood tuple when $k=t=2$. Let $Q^{F}_{(w_1, w_2)}(v_1, v_2)$ denote the neighborhood tuple. In this case, we need to enumerate $m$ in $[0,1,2]$. First, when $m=0$, we select 0 elements from $(w_1, w_2)$ to replace $(v_1, v_2)$, resulting in a sub-tuple with only one element $\left((v_1, v_2)\right)$. Next, when $m=1$. we sequentially select $w_1$ and $w_2$, resulting in $P_1(\mathbf{w})=(w_1, w_2)$. Similarly, we can get $P_1(\mathbf{i})=(2, 1)$. Thus, the final sub-tuple is $\left((v_1, w_1), (v_1, w_2), (w_1, v_2), (w_2, v_2)\right)$. The case of $m=2$ is simple and we have the result sub-tuple be $\left((w_1, w_2)\right)$. By concatenating all three sub-tuples, we have $Q^{F}_{(w_1, w_2)}(v_1, v_2)=((v_1, v_2), (v_1, w_1), (v_1, w_2), (w_1, v_2), (w_2, v_2), (w_1, w_2))$. The construction procedure is shown in Figure~\ref{fig:neighborhood_tuple}. By default, we adopt the above order in $Q^{F}_{\mathbf{w}}(\mathbf{v})$ in the rest of the paper if $k=t=2$. We leave a more formal definition of the neighborhood tuple $Q^{F}_{\mathbf{w}}(\mathbf{v})$ in Appendix~\ref{app:additional_def}.

With the neighborhood tuple, we are ready to introduce \textbf{$(k, t)$-FWL}, which extends $k$-FWL by extending the tuple size in the update function. Let $\mathcal{C}^{l}_{ktfwl}(\mathbf{v})$ be the color of tuple $\mathbf{v}$ at iteration $l$ for $(k,t)$-FWL, the update function of $(k, t)$-FWL is:
\begin{equation}
\label{eq:ktfwl}
(k,t)\textbf{-FWL:}\quad \mathcal{C}^{l}_{ktfwl}(\mathbf{v}) = \text{HASH}(\mathcal{C}^{l-1}_{ktfwl}(\mathbf{v}), \lbbr \left( \mathcal{C}^{l-1}_{ktfwl}(\mathbf{u})|\mathbf{u} \in Q^{F}_{\mathbf{w}}(\mathbf{v})\right) | \mathbf{w} \in V^{t}(G)\rbbr_t),
\end{equation}
where $\lbbr \cdot \rbbr_t$ is hierarchical multiset over $t$-tuples. In a hierarchical multiset $\lbbr\mathbf{v} | \mathbf{v}\in V^t(G)\rbbr_t$, elements are grouped hierarchically according to the node order of the tuple. For example, to construct $\lbbr(v_1, v_2, v_3)|(v_1, v_2, v_3)\in V^3(G) \rbbr_3$ from $\lbbr(v_1, v_2, v_3)|(v_1, v_2, v_3)\in V^3(G) \rbbr$, we first group together all elements with the same $v_2$ and $v_3$. That is,  $\forall v_2, v_3 \in V(G)$, we denote $t(v_2, v_3)=\lbbr(v_1, v_2, v_3)|v_1 \in V(G)\rbbr$ as the grouped result. Next, use the similar procedure, we have $\forall v_3 \in V(G)$, $t(v_3) = \lbbr t(v_2, v_3)|v_2 \in V(G)\rbbr$. Finally, we group all possible $v_3 \in V(G)$ to get $\lbbr(v_1, v_2, v_3)|(v_1, v_2, v_3)\in V^3(G) \rbbr_3 = \lbbr t(v_3)|v_3 \in V(G)\rbbr$.

It is easy to see $(k, 1)$-FWL is equivalent to $k$-FWL under our definition of $Q^{F}_{\mathbf{w}}(\mathbf{v})$. Further, as we only need to maintain representations of all $k$-tuples. $(k, t)$-FWL has a fixed space complexity of $O(n^k)$. Here we show the expressive power of $(k, t)$-FWL.
\begin{proposition}
\label{pro:solve_isomporhism}
For $k\geq 2$ and $t \geq 1$, if $t\geq n-k$, $(k, t)$-FWL can solve the graph isomorphism problems with the size of the graph less than or equal to $n$.
\end{proposition}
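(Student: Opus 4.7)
The plan is to prove that with $t \geq n-k$, a single iteration of $(k,t)$-FWL is enough to encode the entire isomorphism type of the graph rooted at any $k$-tuple $\mathbf{v}$, from which the graph-isomorphism claim follows by comparing color histograms.

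First, I would fix two graphs $G$ and $H$ of size at most $n$ and a $k$-tuple $\mathbf{v}=(v_1,\ldots,v_k)$ with pairwise distinct entries. The key observation is that since the outer aggregation in Equation~(\ref{eq:ktfwl}) ranges over \emph{all} $\mathbf{w}\in V(G)^t$, it in particular ranges over all $t$-tuples whose entries, together with $\{v_1,\ldots,v_k\}$, cover every vertex of $G$; such $\mathbf{w}$ exist because $k+t\geq n$. For any such ``covering'' $\mathbf{w}=(w_1,\ldots,w_t)$, I would show that the sequence of initial colors
\[
\big(\mathcal{C}^0_{ktfwl}(\mathbf{u})\big)_{\mathbf{u}\in Q^F_{\mathbf{w}}(\mathbf{v})}
\]
already determines the isomorphism type of $G$ as a graph with the ordered root $(\mathbf{v},\mathbf{w})$. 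Recall that $\mathcal{C}^0_{ktfwl}(\mathbf{u})$ records the atomic type of $\mathbf{u}$, i.e.\ the labels of its entries, the equality pattern among them, and all edge relations inside $\mathbf{u}$. Since the neighborhood tuple is built by replacing $m\in\{0,1,\ldots,\min(k,t)\}$ positions of $\mathbf{v}$ by $m$-subtuples of $\mathbf{w}$ in a canonical order determined only by $k$ and $t$, each generated $k$-tuple occupies a known slot, and therefore reading off the initial colors gives both edge data and the positions of $\mathbf{v}$ and $\mathbf{w}$.

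Next, I would check that every ordered pair from $S:=\{v_1,\ldots,v_k\}\cup\{w_1,\ldots,w_t\}$ appears in at least one $\mathbf{u}\in Q^F_{\mathbf{w}}(\mathbf{v})$: pairs with both elements in $\mathbf{v}$ are witnessed by the $m=0$ block $\mathbf{v}$ itself; mixed pairs $(v_i,w_j)$ are witnessed by an $m=1$ replacement; and pairs with both elements in $\mathbf{w}$ are witnessed by an $m=2$ replacement, which is available because $k\geq 2$ and, by hypothesis, $t\geq n-k\geq 2$ whenever the case of two $w$-elements can arise. Combined with the label and equality information coming from the atomic types, this shows that the full induced structure on $S=V(G)$ is reconstructible from the neighborhood tuple. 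Therefore $\mathcal{C}^1_{ktfwl}(\mathbf{v})$ is at least as fine as the isomorphism type of the pointed graph $(G,\mathbf{v})$. To finish, if the graph-level histograms $\lbbr \mathcal{C}^1_{ktfwl}(\mathbf{v})\mid \mathbf{v}\in V(G)^k\rbbr$ for $G$ and $H$ coincide, then picking a $\mathbf{v}$ in $G$ with distinct entries yields a $\mathbf{v}'$ in $H$ inducing the same pointed isomorphism type, hence $G\simeq H$.

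The main technical obstacle I expect is verifying rigorously that the ordered structure of $Q^F_{\mathbf{w}}(\mathbf{v})$ really does let one recover the labelled induced subgraph on $S$: because the neighborhood tuple only contains those replacement patterns dictated by the construction, one must case-split on $m$ and argue that each ordered pair in $S\times S$ (and each position of $\mathbf{v}$ and each position of $\mathbf{w}$) is unambiguously identified by its location inside the neighborhood tuple. A secondary, easier subtlety is the case $|V(G)|<n$: here $\mathbf{w}$ will necessarily contain repeated entries or entries already in $\mathbf{v}$, but the equality pattern stored in the atomic types of the $\mathbf{u}\in Q^F_{\mathbf{w}}(\mathbf{v})$ lets the test distinguish genuine vertices from duplicates, so the reconstruction argument still goes through.
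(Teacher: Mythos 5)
Your proposal is correct and takes essentially the same route as the paper: your claim that, for a covering $\mathbf{w}$, the initial colors along $Q^{F}_{\mathbf{w}}(\mathbf{v})$ (with their fixed slots and equality patterns) reconstruct the isomorphism type of the extended tuple $(\mathbf{v},\mathbf{w})$ is exactly the content of the paper's Lemma~\ref{lem:initial_color}, and the paper likewise concludes in one iteration that the graph color histogram encodes the isomorphism types of all $(k+t)$-tuples, which determines the graph once $k+t\geq n$. The minor points you gloss over (repeated entries, graphs smaller than $k$, equal vertex counts being forced by equal histograms) are handled the same way by the atomic-type equality patterns, so no substantive difference remains.
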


\begin{theorem}
\label{thm:ktfwl_vs_k+twl}
For $k\geq 2$, $t\geq 1$,  $(k,t)$-FWL is at most as powerful as $(k+t)$-WL. In particular, $(k,1)$-FWL is as powerful as $(k+1)$-WL.
\end{theorem}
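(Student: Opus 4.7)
The plan is to establish the first assertion via the following refinement lemma, proved by induction on the $(k,t)$-FWL iteration index $l$ while keeping $(k+t)$-WL at its stable coloring:

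\emph{Lemma.} For any $(k+t)$-tuples $\mathbf{x}_1 \in V(G)^{k+t}$, $\mathbf{x}_2 \in V(H)^{k+t}$, any $k$-subset of positions $I \subseteq [k+t]$, and any $l \geq 0$: if $\mathcal{C}^{\ast}_{(k+t)wl}(\mathbf{x}_1) = \mathcal{C}^{\ast}_{(k+t)wl}(\mathbf{x}_2)$, then $\mathcal{C}^l_{ktfwl}(\mathbf{x}_1|_I) = \mathcal{C}^l_{ktfwl}(\mathbf{x}_2|_I)$, where $\mathbf{x}|_I$ denotes the $k$-sub-tuple at positions $I$. Passing $l \to \infty$ yields the theorem: a color-preserving matching of $(k+t)$-tuples produced by $(k+t)$-WL projects onto a color-preserving matching of $k$-tuples under $(k,t)$-FWL, so graphs indistinguishable by $(k+t)$-WL are also indistinguishable by $(k,t)$-FWL.

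I prove the lemma by induction on $l$. The base case $l=0$ is immediate: the isomorphism type of a $(k+t)$-tuple $(\mathbf{v}, \mathbf{w})$ determines that of every $k$-sub-tuple $\mathbf{x}|_I$, and this is the initial color for both algorithms. For the inductive step, write $\mathbf{v}_i = \mathbf{x}_i|_I$ and $\mathbf{w}_i = \mathbf{x}_i|_{[k+t]\setminus I}$. By the inductive hypothesis at $l-1$ applied to $\mathbf{x}_1, \mathbf{x}_2$ themselves, $\mathcal{C}^{l-1}_{ktfwl}(\mathbf{v}_1) = \mathcal{C}^{l-1}_{ktfwl}(\mathbf{v}_2)$, handling the ``current color'' argument of the $(k,t)$-FWL HASH. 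The key observation for the multiset-over-$\mathbf{w}$ part is that every $\mathbf{u} \in Q^F_{\mathbf{w}}(\mathbf{v}_i)$ is a specific $k$-sub-tuple of the $(k+t)$-tuple $(\mathbf{v}_i, \mathbf{w})$; hence whenever $\mathcal{C}^{\ast}_{(k+t)wl}(\mathbf{v}_1, \mathbf{w}) = \mathcal{C}^{\ast}_{(k+t)wl}(\mathbf{v}_2, \mathbf{w}')$, the inductive hypothesis yields equality of $\mathcal{C}^{l-1}_{ktfwl}$ on every pair of corresponding sub-tuples, so the tuples of colors indexed by $Q^F$ coincide. It therefore suffices to construct a bijection $\sigma : V(G)^t \to V(H)^t$ with $\mathcal{C}^{\ast}_{(k+t)wl}(\mathbf{v}_1, \mathbf{w}) = \mathcal{C}^{\ast}_{(k+t)wl}(\mathbf{v}_2, \sigma(\mathbf{w}))$ for every $\mathbf{w}$ that respects the hierarchical multiset structure $\lbbr \cdot \rbbr_t$. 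This $\sigma$ is built coordinate-by-coordinate from the outermost inward: at each step, the stability of $(k+t)$-WL (via its single-position-replacement aggregations, equivalently the bijective pebble-game strategy) supplies a bijection on $V$ for the current coordinate that preserves the stable color, and iterating along positions $k+t, k+t-1, \ldots, k+1$ yields the required $\sigma$ with the matching hierarchical structure.

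The ``in particular'' clause is then immediate: the definition of $Q^F_w(\mathbf{v})$ with $t=1$ recovers the classical $k$-FWL $w$-neighbor, so $(k,1)$-FWL coincides with $k$-FWL, and the general bound yields $k$-FWL $\leq (k+1)$-WL; the reverse inequality $(k+1)$-WL $\leq k$-FWL is the classical Cai--F\"urer--Immerman theorem cited in the preliminaries. The main technical obstacle is the hierarchical-bijection construction: the $(k+t)$-WL update directly supplies only single-coordinate bijections per position, so preserving the hierarchical grouping of $\lbbr \cdot \rbbr_t$ requires choosing the bijection for each coordinate consistently over all values of the already-fixed outer coordinates. Unfolding the bijective pebble-game strategy in the right order is exactly what resolves this, but it is where the most care is needed.
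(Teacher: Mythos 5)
Your argument is correct and essentially the same as the paper's: both show that the stable $(k+t)$-WL colouring, projected to $k$-tuples, cannot be refined by $(k,t)$-FWL, using exactly your two ingredients --- every element of $Q^{F}_{\mathbf{w}}(\mathbf{v})$ is a sub-tuple of $(\mathbf{v},\mathbf{w})$ whose stable colour is controlled via invariance under a common index permutation (\cref{lem:k+twl_permutation}), and the hierarchical matching over $V^t$ is obtained by coordinate-by-coordinate unrolling of stability (\cref{lem:sequential_pool}) --- with the \emph{in particular} clause handled identically through $(k,1)$-FWL $=k$-FWL $\simeq (k+1)$-WL. The only cosmetic differences are that you run an induction on $l$ where the paper uses the one-more-iteration stable-colour argument (\cref{rem:stable_color}, \cref{pro:stable_k+twl_ktfwl}), and that your lemma should be phrased for ordered position selections (or combined with the permutation-invariance fact), since elements such as $(w_1,v_2)\in Q^{F}_{(w_1,w_2)}(v_1,v_2)$ pick positions out of increasing order.
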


\begin{proposition}
\label{pro: ktfwl_hierachy}
For $k\geq 2$, $t\geq 1$, $(k, t+1)$-FWL is strictly more powerful than $(k, t)$-FWL; $(k+1, t)$-FWL is strictly more powerful than $(k, t)$-FWL.
\end{proposition}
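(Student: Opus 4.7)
The plan is to prove each strict inequality via a two-part argument: an inductive simulation establishing the $\geq$ direction, and an explicit distinguishing graph pair witnessing strictness.

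For the simulation $(k, t+1)$-FWL $\geq (k, t)$-FWL, I would induct on the iteration $l$, with the hypothesis that $\mathcal{C}^{l-1}_{k,t+1}$ refines $\mathcal{C}^{l-1}_{k,t}$ on all $k$-tuples (the base case is immediate, since both colorings initialize with the isomorphism type of $\mathbf{v}$). For the inductive step, the hierarchical multiset in the $(k, t+1)$-FWL update decomposes, by its outermost grouping over the last coordinate $u$ of $\mathbf{w}$, into an outer multiset indexed by $u \in V(G)$ of inner hierarchical multisets over prefixes $\mathbf{w}' \in V^{t}(G)$. Within each inner multiset, the neighborhood tuple $Q^F_{(\mathbf{w}', u)}(\mathbf{v})$ contains $Q^F_{\mathbf{w}'}(\mathbf{v})$ as an identifiable sub-sequence, namely the sub-tuples corresponding to selections with $m \leq t$ whose chosen elements lie entirely in $\mathbf{w}'$; by the fixed ordering convention, these occupy predictable positions. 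Hence, by the inductive hypothesis, one can read off the $(k, t)$-FWL aggregation from the $(k, t+1)$-FWL aggregation, and injectivity of HASH yields the required refinement. For $(k+1, t)$-FWL $\geq (k, t)$-FWL, an analogous induction works by mapping each $k$-tuple $\mathbf{v}$ to a canonical $(k+1)$-tuple extension such as $(\mathbf{v}, v_k)$ and verifying that its neighborhood tuples encode those of $\mathbf{v}$ as an identifiable sub-sequence.

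For strictness, I would combine Theorem~\ref{thm:ktfwl_vs_k+twl} with classical separations in the WL hierarchy. The case $t=1$ of $(k+1, t) > (k, t)$ is immediate, since $(k+1, 1)$-FWL $\equiv (k+2)$-WL is strictly more powerful than $(k, 1)$-FWL $\equiv (k+1)$-WL. For general $t$, and for the $(k, t+1) > (k, t)$ direction, the strategy is to exhibit Cai-Furer-Immerman-type graph pairs known to separate $(k+t+1)$-WL from $(k+t)$-WL: Theorem~\ref{thm:ktfwl_vs_k+twl} immediately implies that $(k, t)$-FWL cannot distinguish them, and one then verifies by direct combinatorial analysis on the extended neighborhood tuple that $(k, t+1)$-FWL (respectively $(k+1, t)$-FWL) can.

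The main obstacle lies in this last verification: Theorem~\ref{thm:ktfwl_vs_k+twl} only provides an upper bound on $(k, t)$-FWL, so establishing that the additional $(t+1)$-th coordinate (or the additional $(k+1)$-th coordinate) genuinely recovers the extra $(k+t+1)$-WL-level distinguishing information encoded by the CFI pair requires a dedicated combinatorial argument on how the enlarged neighborhood tuple exposes structure invisible to $(k, t)$-FWL, rather than a purely formal reduction. The simulation arguments, by contrast, amount to syntactic bookkeeping on the ordered construction of $Q^F_\mathbf{w}(\mathbf{v})$ and the definition of the hierarchical multiset.
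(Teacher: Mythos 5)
Your simulation argument for the non-strict comparisons is essentially the paper's Lemma~\ref{lem:ktfwl_hierarchy}: an induction on $l$ exploiting that $Q^{F}_{\mathbf{w}'}(\mathbf{v})$ sits inside $Q^{F}_{(\mathbf{w}',u)}(\mathbf{v})$ at predictable positions (the paper restricts to the selection indices $(1,\dots,t)$ rather than decomposing by the last coordinate, but this is the same bookkeeping), and likewise for padding a $k$-tuple to a $(k+1)$-tuple. That part is fine.

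The strictness half, however, is where your proposal has a genuine gap, and you name it yourself: after invoking Theorem~\ref{thm:ktfwl_vs_k+twl} to show $(k,t)$-FWL cannot separate a CFI pair that $(k+t)$-WL fails on, you still owe a proof that $(k,t+1)$-FWL and $(k+1,t)$-FWL \emph{do} separate it, and you only gesture at ``a dedicated combinatorial analysis'' without supplying one. This cannot be a formal reduction, because the theorem only gives an \emph{upper} bound of $(k+t+1)$-WL on $(k,t+1)$-FWL (equality is only established for $t=1$), so knowing that $(k+t+1)$-WL separates the pair says nothing about $(k,t+1)$-FWL. The paper closes exactly this hole with two concrete ingredients: (i) Lemma~\ref{lem:fwl_distance}, showing that the stable $(k,t)$-FWL coloring determines all pairwise shortest-path distances among the nodes of a tuple (hence, via the rewriting in Equation~(\ref{eq:ktfwl_rw_1}), the colors of the effective $(k+t+1)$-node tuples in $(k,t+1)$-FWL and $(k+1,t)$-FWL encode all pairwise distances among those $k+t+1$ nodes); and (ii) the specific Morris-et-al.\ CFI construction $G_{k+t}$, $H_{k+t}$, for which the separating invariant is the existence of a distance-two-clique of size $k+t+1$ --- a property visible to any algorithm that records pairwise distances on $(k+t+1)$-node tuples, which is precisely what the enlarged variants do (Lemma~\ref{lem:ktfwl_cfi}). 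Without an argument of this kind (or some substitute lower-bound technique for the enlarged algorithms on your chosen graph family), the strictness claims remain unproven; your correct observation that the $t=1$, $(k+1,t)$-versus-$(k,t)$ case follows from $(k+1,1)\simeq(k+2)$-WL covers only that special case.
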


We leave all formal proofs and complexity analysis in Appendix~\ref{app:proof_ktfwl}. Briefly speaking, even if we fix the size of $k$, $(k, t)$-FWL can still construct an expressive hierarchy by varying $t$. Further, if $t$ is large enough, $(k, t)$-FWL can actually enumerate all possible combinations of tuples with the size of the graph, and thus equivalent to the relational pooling on graph~\citep{murphy19relational}.  It is worth noting that the size of $Q^{F}_{\mathbf{w}}(\mathbf{v})$ will grow exponentially with an increase in the size of $t$. However, the key contribution of $(k, t)$-FWL is that even when $k=2$, $(k, t)$-FWL can still construct an expressive hierarchy for solving graph isomorphism problems. Therefore, high-order embedding may not be necessary for building high-expressivity WL algorithms. Note that our $(k, t)$-FWL is also different from subgraph GNNs such as $k, l$-WL~\citep{zhou2023relational} and $l$-OSAN~\citep{qian2022ordered}, where $l$-tuples are labeled independently to enable learning $k$-tuple representations in all $l$-tuples' subgraphs, resulting in $O(n^{k+l})$ space complexity.

\subsection{Rethinking and extending the aggregation scope of $k$-FWL}
Another problem of $k$-FWL is its limited design space, as the only adjustable hyperparameter is $k$. It is well known that there is a huge gap in expressive power even if we increase from $k$ to $k+1$. For example, 1-WL cannot count any cycle even with a length of 3, but 2-FWL can already count up to 7-cycle~\citep{arvind_weisfeiler-leman_2020, huang2023boosting, chen2020can}. Moreover, increasing the expressive power does not always bring better performance when designing the corresponding neural version as it quickly leads to overfitting~\citep{morris2019weisfeiler, maron2019provably,morris2023wl}. Therefore, we ask another question:

\centerline{\textbf{\textit{Can we extend the $k$-FWL to a more flexible and fine-grained design space?}}}

To address this issue, we identify that the inflexibility of $k$-FWL's design space arises from the definition of the neighbor used in the aggregation step. Unlike 1-WL, $k$-FWL lacks the concept of local neighbors and instead requires the aggregation of all $|V(G)|$ global neighbors to update the color of each tuple $\mathbf{v}$. Recently, some works have extended $k$-WL by incorporating local information~\citep{morris2020weisfeiler, zhang2023complete}. Inspired by previous works, we find that the definition of neighbor can actually be much more flexible than just considering local neighbors or global neighbors. Specifically, for each $k$-tuple $\mathbf{v}$ in graph $G$, we define equivariant set $ES(\mathbf{v})$ to be the neighbors set of tuple $\mathbf{v}$ and propose \textbf{$k$-FWL+}. 

\begin{definition}
\textit{An \textbf{equivariant set} $ES(\mathbf{v})$ is a set of nodes related to $\mathbf{v}$ and equivariant given the permutation $g\in S_n$. That is, $\forall w\in ES(\mathbf{v})$ in graph $G$ implies $g(w) \in ES(g\cdot\mathbf{v})$ in graph $g \cdot G$. }
\end{definition}
Some nature equivariant sets $ES(v)$ including $V(G)$, $\mathcal{N}_k(v)$, and $Q_k(v)$, etc. Let $\mathcal{C}^{l}_{kfwl+}(\mathbf{v})$ be the color of tuple $\mathbf{v}$ at iteration $l$ for $k$-FWL+, we have:
\begin{equation}
\label{eq:kfwl+}
k\textbf{-FWL+:}\quad \mathcal{C}^{l}_{kfwl+}(\mathbf{v}) = \text{HASH}(\mathcal{C}^{l-1}_{kfwl+}(\mathbf{v}), \lbbr \left( \mathcal{C}^{l-1}_{kfwl+}(\mathbf{u})|\mathbf{u} \in Q^{F}_{w}(\mathbf{v})\right) | w \in ES(\mathbf{v})\rbbr).
\end{equation}
The key of $k$-FWL+ is that the equivariant set $ES(\mathbf{v})$ can be any set of nodes as long as it is equivariant to any permutation $g\in S_n$. For example, if $ES(\mathbf{v})=V(G)$, the $k$-FWL+ will reduce to the original $k$-FWL. Instead, if $ES(\mathbf{v})=\bigcup^k_{i=0} \mathcal{N}_1(v_i)$, it becomes the localized $k$-FWL~\citep{zhang2023complete}. We can also design other more innovative equivariant sets $ES(\mathbf{v})$. For example, denote $\mathcal{SP}(\mathbf{v})$ to be a set that contains all nodes in the shortest paths between any $v_i, v_j \in \mathbf{v}$. It is still a valid equivariant set of tuple $\mathbf{v}$. We can see that the design space of $k$-FWL+ is much broader than $k$-FWL. 

\subsection{$(k,t)$-FWL+: A flexible and powerful extension of $k$-FWL}
In this section, we propose \textbf{$(k, t)$-FWL+}, which combines both $(k, t)$-FWL in Equation~(\ref{eq:ktfwl}) and $k$-FWL+ in Equation~(\ref{eq:kfwl+}).  Let $\mathcal{C}^{l}_{ktf+}(\mathbf{v})$ be the color of tuple $\mathbf{v}$ at iteration $l$ for $(k,t)$-FWL+, the color update equation of $(k, t)$-FWL+ is:
\begin{equation}
\label{eq:ktfwl+}
(k, t)\textbf{-FWL+:}\quad \mathcal{C}^{l}_{ktf+}(\mathbf{v}) = \text{HASH}(\mathcal{C}^{l-1}_{ktf+}(\mathbf{v}), \lbbr \left( \mathcal{C}^{l-1}_{ktf+}(\mathbf{u})|\mathbf{u} \in Q^{F}_{\mathbf{w}}(\mathbf{v})\right) | \mathbf{w} \in ES^t(\mathbf{v})\rbbr_t),
\end{equation}
where $ES^{t}(\mathbf{v}) = ES_{1}(\mathbf{v}) \times ES_{2}(\mathbf{v}) \times \cdots \times ES_{t}(\mathbf{v})$ and $ES_{i}(\mathbf{v})$ can be any $ES(\mathbf{v})$. \textbf{$(k, t)$-FWL+} integrates advantages from both $(k, t)$-FWL and $k$-FWL+. First, by extending $V^t(G)$ to $ES^t(\mathbf{v})$, we can hugely reduce the time complexity of $(k, t)$-FWL as $ES^t(\mathbf{v})$ usually contains much less elements than $V^t(G)$. Second, by extending the tuple size in aggregation, we only require a small $k$ (even with $k=2$) to achieve high expressive power instead of being bounded by $k$-FWL. Finally, the most important advantage is that the design space of $(k,t)$-FWL+ is much more flexible than $k$-FWL and we can design the most suitable instance for different tasks. However, as the expressive power of $(k,t)$-FWL+ highly relies on the choice of $ES^t(\mathbf{v})$, it is hard to analyze the expressive power of $(k,t)$-FWL+ in a systematic way. Instead, in the next section, we provide some practical implementations of $(k,t)$-FWL+ that can cover many existing powerful models.

\subsection{Gallery of $(k,t)$-FWL+ instances}
In this section, we provide some practical designs of $ES^{t}(\mathbf{v})$ that are closely correlated to many existing powerful GNN/WL models. Note that any model directly follows the original $k$-FWL like PPGN~\citep{maron2019provably} can obviously be implemented by $(k,t)$-FWL+ as $(k,t)$-FWL+ also includes the original $k$-FWL.  We specify that $k$-tuple $\mathbf{v}=(v_1,v_2,\ldots, v_k)$ to avoid any confusion.

\begin{proposition}
\label{pro:slfwl}
Let $t=1$, $k=2$, and $ES(\mathbf{v})=\mathcal{N}_1(v_1)\cup\mathcal{N}_1(v_2)$, the corresponding $(k, t)$-FWL+ instance is equivalent to SLFWL(2)~\citep{zhang2023complete} and strictly more powerful than any existing node-based subgraph GNNs.
\end{proposition}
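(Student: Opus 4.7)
The plan is to establish the two claims separately: first equivalence with SLFWL(2), then strict superiority over node-based subgraph GNNs via the known hierarchy for SLFWL(2).

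For the equivalence claim, I would instantiate Equation~(\ref{eq:ktfwl+}) with $k=2$ and $t=1$. Since $t=1$, the tuple $\mathbf{w}$ degenerates to a single node $w$, and the neighborhood tuple $Q^F_w(v_1,v_2)$ reduces (according to the default construction with $m\in[0,1]$) to $((v_1,v_2),(w,v_2),(v_1,w))$. Therefore the update rule becomes
\begin{equation*}
\mathcal{C}^{l}(v_1,v_2)=\mathrm{HASH}\!\left(\mathcal{C}^{l-1}(v_1,v_2),\ \lbbr (\mathcal{C}^{l-1}(w,v_2),\mathcal{C}^{l-1}(v_1,w)) \mid w\in \mathcal{N}_1(v_1)\cup\mathcal{N}_1(v_2)\rbbr\right),
\end{equation*}
which I would then compare directly with the definition of SLFWL(2) given in~\citep{zhang2023complete}. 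The expected match is line-by-line: SLFWL(2) performs precisely the 2-FWL aggregation but restricts the point $w$ to the union $\mathcal{N}_1(v_1)\cup\mathcal{N}_1(v_2)$. A brief argument that both directions of color refinement produce identical partitions on all graphs then yields the equivalence.

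For the second claim, I would not reprove the strict separation from scratch but instead appeal to the result already established in~\citep{zhang2023complete}, which shows that SLFWL(2) is strictly more powerful than any node-based subgraph GNN (for example, by exhibiting graphs distinguished by SLFWL(2) but not by NGNN/ID-GNN/GNN-AK/etc., and showing simulability in the other direction). Combining this with the equivalence from the first step gives the strict superiority of our $(2,1)$-FWL+ instance.

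The main obstacle is the first step: the construction of $Q^F_{\mathbf{w}}(\mathbf{v})$ in our paper and the neighbor structure used in the cited SLFWL(2) definition use different notational conventions, so I must be careful that the hierarchical ordering of aggregated tuples, the inclusion of the self-color $\mathcal{C}^{l-1}(v_1,v_2)$ in each inner tuple (from the $m=0$ component), and the ordered versus unordered nature of the indices $(w,v_2)$ vs.\ $(v_1,w)$ all line up so that the induced equivalence relations on $V(G)^2$ coincide at every iteration. Once this bookkeeping is verified by induction on $l$, both directions of the equivalence are immediate, and the strict superiority follows by invocation.
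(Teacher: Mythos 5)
Your proposal matches the paper's proof essentially verbatim: the paper likewise instantiates Equation~(\ref{eq:ktfwl+}) with $k=2$, $t=1$, $ES(\mathbf{v})=\mathcal{N}_1(v_1)\cup\mathcal{N}_1(v_2)$, observes that the resulting update rule is exactly the SLFWL(2) update of~\citep{zhang2023complete} (the $m=0$ self-term being harmless since the previous color is already aggregated), and then invokes Theorem 7.1 of~\citep{zhang2023complete} for the strict separation from node-based subgraph GNNs. The approach and the division into the two claims are the same, so no further comparison is needed.
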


\begin{proposition}
\label{pro: delta_k_lwl}
Let $t=1$, $k=k$, and $ES(\mathbf{v})=\bigcup^{k}_{i=1} Q_1(v_i)$, the corresponding $(k, t)$-FWL+ instance is more powerful than $\delta$-k-LWL~\citep{morris2020weisfeiler}.    
\end{proposition}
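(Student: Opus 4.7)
The plan is to prove a color-refinement statement: the coloring $\chi^{l} := \mathcal{C}^{l}_{ktf+}$ produced by this $(k,1)$-FWL+ instance refines the coloring $\phi^{l} := \mathcal{C}^{l}_{\delta kwl}$ produced by $\delta$-$k$-LWL, i.e., for any two $k$-tuples $\mathbf{v},\mathbf{u}$ (possibly in different graphs) and every $l\geq 0$, $\chi^{l}(\mathbf{v})=\chi^{l}(\mathbf{u})$ implies $\phi^{l}(\mathbf{v})=\phi^{l}(\mathbf{u})$. I would proceed by induction on $l$. The base case $l=0$ is immediate, since both algorithms are initialized with the atomic (isomorphism) type of the tuple.

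For the inductive step I first instantiate the two update rules. With $t=1$, the neighborhood tuple collapses to $Q^{F}_{w}(\mathbf{v})=(\mathbf{v},\mathbf{v}_{w/1},\ldots,\mathbf{v}_{w/k})$, so taking $ES(\mathbf{v})=\bigcup_{i=1}^{k} Q_{1}(v_{i})$ the $(k,1)$-FWL+ aggregation becomes
\[
A(\mathbf{v}) \;=\; \lbbr \bigl(\chi^{l-1}(\mathbf{v}),\, \chi^{l-1}(\mathbf{v}_{w/1}),\, \ldots,\, \chi^{l-1}(\mathbf{v}_{w/k})\bigr) \,\big|\, w\in \bigcup_{i=1}^{k} Q_{1}(v_{i}) \rbbr,
\]
whereas $\delta$-$k$-LWL aggregates the ordered list $(B_{1}(\mathbf{v}),\ldots,B_{k}(\mathbf{v}))$ with $B_{j}(\mathbf{v}) = \lbbr \phi^{l-1}(\mathbf{v}_{w/j}) \,\big|\, w\in Q_{1}(v_{j}) \rbbr$. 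Assuming $\chi^{l}(\mathbf{v})=\chi^{l}(\mathbf{u})$ yields $\chi^{l-1}(\mathbf{v})=\chi^{l-1}(\mathbf{u})$ and $A(\mathbf{v})=A(\mathbf{u})$, and by the induction hypothesis the former already gives $\phi^{l-1}(\mathbf{v})=\phi^{l-1}(\mathbf{u})$.

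The core step is to extract, from $A(\mathbf{v})$ alone, each $\chi$-analogue $\tilde B_{j}(\mathbf{v}) := \lbbr \chi^{l-1}(\mathbf{v}_{w/j}) \,\big|\, w\in Q_{1}(v_{j}) \rbbr$ by a rule that depends only on $A$. Given any entry of $A(\mathbf{v})$ attached to some $w\in ES(\mathbf{v})$, the $(j{+}1)$-st coordinate is exactly $\chi^{l-1}(\mathbf{v}_{w/j})$; and for any fixed $i\neq j$, the $(i{+}1)$-st coordinate $\chi^{l-1}(\mathbf{v}_{w/i})$ refines the atomic type of $\mathbf{v}_{w/i}$, which in particular records whether position $j$ of that tuple (still $v_{j}$) and position $i$ (namely $w$) are adjacent. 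Hence the predicate ``$w\in Q_{1}(v_{j})$'' is a deterministic function of each entry, so filter-then-project reconstructs $\tilde B_{j}(\mathbf{v})$ from $A(\mathbf{v})$. Applying the same recipe to $A(\mathbf{u})$ gives $\tilde B_{j}(\mathbf{v})=\tilde B_{j}(\mathbf{u})$ for every $j\in[k]$; then applying the induction hypothesis color-by-color inside the multisets yields $B_{j}(\mathbf{v})=B_{j}(\mathbf{u})$, which together with $\phi^{l-1}(\mathbf{v})=\phi^{l-1}(\mathbf{u})$ closes the step.

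The main obstacle is precisely this filtering step: disentangling the per-index multisets $B_{j}$ from the joint aggregate $A$. It works because (a) $\delta$-$k$-LWL assumes $k\geq 2$, so some $i\neq j$ always exists; (b) the atomic isomorphism type at $l=0$ already encodes all pairwise adjacencies of tuple positions and is preserved under refinement; and (c) the positions inside $Q^{F}_{w}(\mathbf{v})$ are fixed by the construction in Section~3.1, so the slot-to-$j$ correspondence is well-defined. The corner case $w=v_{j}$ is handled identically, since whether $v_{j}\in Q_{1}(v_{j})$ (i.e.\ a self-loop) is also dictated by the atomic type. Following the paper's convention---contrasted with the explicit ``strictly'' in Proposition~\ref{pro: ktfwl_hierachy}---``more powerful'' here is read as ``at least as powerful'', so no separating example is required.
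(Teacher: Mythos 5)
Your proof is correct and takes essentially the same route as the paper's: the crux in both is that each entry of the joint aggregate over $w\in\bigcup_{i=1}^{k}Q_1(v_i)$ determines whether $w\in Q_1(v_j)$ (via the adjacency between positions $i\neq j$ and $j$ recorded in the atomic type of $\mathbf{v}_{w/i}$, which is preserved under refinement), so the per-index local multisets used by $\delta$-$k$-LWL can be filtered and projected out of the $(k,1)$-FWL+ aggregation. The only difference is packaging: you run a layerwise induction showing $\mathcal{\hat{C}}^{l}$ refines $\mathcal{\tilde{C}}^{l}$, while the paper applies the same extraction to the stable colors and argues that one further $\delta$-$k$-LWL iteration cannot refine them (Remark~\ref{rem:stable_color}), and both correctly read ``more powerful'' as ``at least as powerful'' in the sense of Definition~\ref{def:algorithm_compare}.
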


\begin{proposition}
\label{pro:graphsnn}
Let $t=2$, $k=2$, and $ES^2(\mathbf{v})= (Q_1(v_1)\cap Q_1(v_2)) \times (Q_1(v_1)\cap Q_1(v_2))$, the corresponding $(k, t)$-FWL+ instance is more powerful than GraphSNN~\citep{wijesinghe2022a}.
\end{proposition}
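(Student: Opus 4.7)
The plan is to establish the simulation direction: every pair of graphs distinguished by GraphSNN is also distinguished by the proposed $(2,2)$-FWL+ instance. The central observation is that GraphSNN's per-edge structural coefficient is a function of the overlap subgraph $S_{vu}$ induced on $\{v,u\}\cup(Q_1(v)\cap Q_1(u))$, which is exactly the structure made accessible when the instance aggregates $Q^{F}_{(w_1,w_2)}(v,u)$ over $(w_1,w_2)\in(Q_1(v)\cap Q_1(u))^2$. A first step is to recast GraphSNN as a color-refinement procedure: writing $h^l_v$ for its node color at iteration $l$ and $\omega(S_{vu},l)$ for the structural weight attached to edge $(v,u)$, a deterministic function of $S_{vu}$ together with the $h^l$-labels on its vertices, its update reads $h^{l+1}_v=\text{HASH}\!\left(h^l_v,\lbbr(\omega(S_{vu},l),h^l_u)\mid u\in Q_1(v)\rbbr\right)$.

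Next I would prove an encoding lemma: the color $\mathcal{C}^{l}_{ktf+}(v,u)$ refines the isomorphism type of $S_{vu}$ endowed with the $h^l$-colors of its vertices, and therefore determines $\omega(S_{vu},l)$. This holds because $Q^{F}_{(w_1,w_2)}(v,u)$ contains the 2-tuples $(v,w_1),(v,w_2),(w_1,u),(w_2,u),(w_1,w_2)$, whose colors encode both the adjacency pattern among $v,u,w_1,w_2$ and the labels of $w_1,w_2$; aggregating hierarchically over all pairs in $(Q_1(v)\cap Q_1(u))^2$ then recovers the entire edge set inside the common neighborhood together with incident labels.

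I would then close the simulation by induction on $l$, treating $\mathcal{C}^{l}_{ktf+}(v,v)$ as the analogue of $h^l_v$. The hypothesis is that $\mathcal{C}^{l}_{ktf+}(v,v)$ refines $h^l_v$ and that for $(v,u)\in E(G)$ the color $\mathcal{C}^{l}_{ktf+}(v,u)$ refines the triple $(h^l_v,h^l_u,\omega(S_{vu},l))$. For the inductive step, $\mathcal{C}^{l+1}_{ktf+}(v,v)$ aggregates over $(w_1,w_2)\in(Q_1(v))^2$, and the diagonal pairs $w_1=w_2=w$ are separable from the off-diagonal ones by the initial isomorphism type of $(w_1,w_2)$; this isolates the per-neighbor contribution $\mathcal{C}^{l}_{ktf+}(v,w)$, from which both $h^l_w$ and $\omega(S_{vw},l)$ can be read off, reproducing GraphSNN's neighbor aggregation exactly.

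The hard part is the encoding lemma: one must verify that the hierarchical multiset combined with the six-component neighborhood tuple suffices to reconstruct the fully labelled isomorphism type of $S_{vu}$ without letting non-isomorphic overlap subgraphs collide. This requires careful bookkeeping of which sub-tuple of $Q^{F}_{(w_1,w_2)}(v,u)$ contributes which piece of adjacency or label information, and of how the hierarchical grouping $\lbbr\cdot\rbbr_2$ prevents spurious identifications between pairs that share only one coordinate.
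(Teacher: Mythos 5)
Your overall strategy (simulate GraphSNN, with diagonal tuple colors playing the role of node colors and off-diagonal colors supplying the structural coefficient, then close by induction/stability) is the same as the paper's, but two steps as you state them would fail. First, your induction hypothesis that for an edge $(v,u)$ the color $\mathcal{C}^{l}_{ktf+}(v,u)$ refines $h^{l}_v$ and $h^{l}_u$ cannot be maintained: the update of an off-diagonal tuple $(v,u)$ aggregates only over $(w_1,w_2)\in\left(Q_1(v)\cap Q_1(u)\right)^2$, i.e.\ over \emph{common} neighbors, whereas $h^{l+1}_u$ depends on all of $Q_1(u)$. In a triangle-free graph the common neighborhood of any edge is empty, so $\mathcal{C}^{l}_{ktf+}(v,u)$ never moves past its initial isomorphism type while $h^{l}_u$ keeps refining --- so that component of the hypothesis is simply false for $l\geq 1$. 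The repair is to read $h^{l}_w$ off the diagonal tuple $(w,w)$ instead: in the update of $(v,v)$ the diagonal choices $w_1=w_2=w\in Q_1(v)$ are separable by initial isomorphism type, and the corresponding neighborhood tuple contains both $\mathcal{C}^{l}_{ktf+}(v,w)$ and $\mathcal{C}^{l}_{ktf+}(w,w)$ in fixed positions, which yields the paired multiset $\lbbr(\mathcal{C}^{l}(v,w),\mathcal{C}^{l}(w,w))\mid w\in Q_1(v)\rbbr$ that GraphSNN's node update needs; this is exactly the route the paper takes.

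Second, your encoding lemma overreaches: you claim $\mathcal{C}^{l}_{ktf+}(v,u)$ refines the fully labelled isomorphism type of the overlap subgraph $S_{vu}$, but the aggregation only delivers a (hierarchically grouped) multiset of pairwise adjacency/label data over ordered pairs of common neighbors, which is degree-sequence/1-WL--type information about $S_{vu}$ and does not in general pin down its isomorphism type (certainly not at $l=1$, where tuple colors are just initial isomorphism types). Fortunately this strength is not needed: GraphSNN's weight is the fixed quantity $\frac{|E_{v_1v_2}||V_{v_1v_2}|^{\lambda}}{|V_{v_1v_2}||V_{v_1v_2}-1|}$, so you only need $|V_{v_1v_2}|$ (the number of aggregated pairs) and $|E_{v_1v_2}|$ (recoverable because $(w_1,w_2)\in Q^{F}_{(w_1,w_2)}(v_1,v_2)$ and its initial color records adjacency), which is all the paper proves --- and indeed the paper shows this works even with a plain, non-hierarchical multiset. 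If you restate the off-diagonal hypothesis as ``$\mathcal{C}^{l}(v,u)$ determines the GraphSNN edge coefficient for $l\geq 1$'' and pull node colors from diagonal tuples, your argument becomes correct and coincides with the paper's proof.
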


\begin{proposition}
\label{pro:edge_based_sgnn}
Let $t=2$, $k=2$, and $ES^2(\mathbf{v})= Q_1(v_2) \times Q_1(v_1)$, the corresponding $(k, t)$-FWL+ instance is more powerful than edge-based subgraph GNNs like I$^2$-GNN~\citep{huang2023boosting}.
\end{proposition}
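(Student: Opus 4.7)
The plan is to give a simulation argument showing that any distinguishing power of I$^2$-GNN is already present in this $(2,2)$-FWL+ instance. It suffices to prove that for every pair of graphs that I$^2$-GNN separates, the $(2,2)$-FWL+ colorings also differ, which in turn reduces to an inductive refinement statement at the level of pair colors.

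First I would fix a clean formulation of I$^2$-GNN so the simulation target is unambiguous: for each ordered root pair $(v_1,v_2)$, initialize $h^0_{v_1,v_2,u}$ to encode $u$'s attribute together with the markings ``$u=v_1$?'' and ``$u=v_2$?''; iterate
\begin{equation*}
h^l_{v_1,v_2,u}=\mathbf{U}\bigl(h^{l-1}_{v_1,v_2,u},\lbbr \mathbf{M}(h^{l-1}_{v_1,v_2,u'},e_{uu'})\mid u'\in Q_1(u)\rbbr\bigr)
\end{equation*}
for $L$ layers, then pool $\{h^L_{v_1,v_2,u}\}_u$ to get the edge representation. The inductive invariant I would aim to prove is: for every $l\ge 0$ and every triple $(v_1,v_2,u)$, the tuple $\bigl(\mathcal{C}^l_{ktf+}(v_1,v_2),\mathcal{C}^l_{ktf+}(v_1,u),\mathcal{C}^l_{ktf+}(u,v_2)\bigr)$ is at least as refined as $h^l_{v_1,v_2,u}$. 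Pooling over $u$ then shows $\mathcal{C}^L_{ktf+}(v_1,v_2)$ refines I$^2$-GNN's edge representation on $(v_1,v_2)$, and collecting over root pairs finishes the proposition. The base case $l=0$ is immediate: the isomorphism type of a $2$-tuple already records node attributes, adjacency, and the equality tests $u=v_1$, $u=v_2$.

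The main obstacle is the inductive step, which requires producing the paired multiset
$M(u)=\lbbr(\mathcal{C}^{l-1}(v_1,u'),\mathcal{C}^{l-1}(u',v_2))\mid u'\in Q_1(u)\rbbr$
needed to invoke the inductive hypothesis and replay I$^2$-GNN's aggregation; the separate marginals over $(v_1,u')$-colors and over $(u',v_2)$-colors are not enough, since the identity of each $u'$ must tie the two sides together. The pairing comes from the construction of the neighborhood tuple $Q^F_{\mathbf{w}}(\mathbf{v})$ in Figure~\ref{fig:neighborhood_tuple}: when the $(2,2)$-FWL+ update at $(v_1,v_2)$ aggregates over $(w_1,w_2)\in Q_1(v_2)\times Q_1(v_1)$, every $w_1$ contributes an inner $6$-tuple containing both $\mathcal{C}^{l-1}(v_1,w_1)$ and $\mathcal{C}^{l-1}(w_1,v_2)$, and the hierarchical multiset $\lbbr\cdot\rbbr_2$ preserves that co-membership, so $\lbbr(\mathcal{C}^{l-1}(v_1,w_1),\mathcal{C}^{l-1}(w_1,v_2))\mid w_1\in Q_1(v_2)\rbbr$ is extractable from $\mathcal{C}^l_{ktf+}(v_1,v_2)$; by symmetry the same holds for the $w_2\in Q_1(v_1)$ slot. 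The analogous updates at the intermediate tuples $(v_1,u)$ and $(u,v_2)$ expose the paired multisets $\lbbr(\mathcal{C}^{l-1}(v_1,u'),\mathcal{C}^{l-1}(u',u))\mid u'\in Q_1(u)\rbbr$ and $\lbbr(\mathcal{C}^{l-1}(u,u'),\mathcal{C}^{l-1}(u',v_2))\mid u'\in Q_1(u)\rbbr$ respectively; chaining these against each other and against $\mathcal{C}^{l-1}(v_1,v_2)$ lets the hash identify each $u'\in Q_1(u)$ consistently across the two sides and thus reconstruct $M(u)$ up to refinement.

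Specializing the $(2,2)$-FWL+ hash to realize the composition of $\mathbf{U}$ and $\mathbf{M}$ then upgrades $M(u)$ into a refinement of $h^l_{v_1,v_2,u}$, closing the induction; the edge-level pooling step on top is routine. The step I expect to take the most care is precisely the last part of the inductive step above—verifying that the joint identity of a generic neighbor $u'$ can be matched across the two different pair-color views without loss—because the ``obvious'' approach through marginals is insufficient and the pairing must be routed through the neighborhood tuple's inner structure. Subgraph-GNN variants that only refine I$^2$-GNN (e.g.\ single-marking edge-subgraph GNNs) fall into the same framework and are handled identically.
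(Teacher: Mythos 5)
Your overall strategy (simulate I$^2$-GNN by induction on layers) is the right one, but the inductive invariant you chose does not close, and the failure is exactly at the step you flagged. To replay the I$^2$-GNN update at the triple $(v_1,v_2,u)$ you need the \emph{jointly paired} multiset $M(u)=\lbbr(\mathcal{C}^{l-1}(v_1,u'),\mathcal{C}^{l-1}(u',v_2))\mid u'\in Q_1(u)\rbbr$, but the three pair colors in your invariant only give you two separately paired multisets: from $\mathcal{C}^{l}(v_1,u)$ you can extract $\lbbr(\mathcal{C}^{l-1}(v_1,u'),\mathcal{C}^{l-1}(u',u))\mid u'\in Q_1(u)\rbbr$, and from $\mathcal{C}^{l}(u,v_2)$ you can extract $\lbbr(\mathcal{C}^{l-1}(u,u'),\mathcal{C}^{l-1}(u',v_2))\mid u'\in Q_1(u)\rbbr$ (both extractions are fine, for the reason you give). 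The proposed ``chaining'' of these two multisets is, however, precisely a join of two marginals through the anonymous index $u'$: the only available matching keys are colors such as $\mathcal{C}^{l-1}(u',u)$, and nothing forces these keys to be distinct across different neighbors $u'$ of $u$. If two neighbors $u',u''$ share the same $(\cdot,u)$-side colors but have swapped $(v_1,\cdot)$- and $(\cdot,v_2)$-side colors, the two extractable multisets (and $\mathcal{C}^{l-1}(v_1,v_2)$) are unchanged while $M(u)$ changes, so $M(u)$ is not determined by the data your induction hypothesis provides. Note also that $v_2$ never appears in the update of the tuple $(v_1,u)$ and $v_1$ never appears in that of $(u,v_2)$, so no richer unpacking of those hashes supplies the missing link. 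Hence the inductive step is unproven (and the reconstruction claim is false as a determinacy statement), independently of whether the invariant itself happens to hold.

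The paper's proof avoids this by choosing a different correspondence between the two objects. Instead of fixing the root pair as the FWL+ $2$-tuple and carrying the refined node $u$ through two auxiliary pair colors, it puts the I$^2$-GNN \emph{refined} node in the second slot of the FWL+ tuple and the second root-edge endpoint in the outer neighbor slot: it rewrites the update into the three-level hierarchy of Equations~(\ref{eq: ktfwl_edge_sgnn_1})--(\ref{eq: ktfwl_edge_sgnn_3}) and proves by induction (Lemma~\ref{lem:ktfwl_edge}) that the level-$l$ color of the $4$-tuple $(v_1,v_2,w_1,w_2)$ refines the I$^2$-GNN color of $(v_1,w_1,w_2)$ at level $l-1$. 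With this alignment the aggregation over the refined node's neighborhood $Q_1(\cdot)$ is literally the inner index of the FWL+ update at the corresponding pair, so the pairing you were trying to reassemble is supplied directly by the construction and never has to be recovered from marginals; a separate argument (Lemma~\ref{lem:edge_pooling}, building on the stability lemmas for edge-based subgraph GNNs) then reconciles the different nesting order of the final graph-level pooling. If you want to salvage your write-up, re-index your invariant along these lines rather than trying to justify the join.
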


\begin{proposition}
\label{pro:kp_gnn}
Let $t=1$, $k=2$, and $ES(\mathbf{v})=Q_{\text{SPD}(v_1,v_2)}(v_1)\cap Q_1(v_2)$, the corresponding $(k, t)$-FWL+ instance is at most as powerful as KP-GNN~\citep{feng2022how} with the peripheral subgraph encoder as powerful as 1-WL. 
\end{proposition}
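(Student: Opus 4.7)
The plan is to prove, by induction on the iteration count $\ell$, that the color $\mathcal{C}^{\ell}_{ktf+}(v_1, v_2)$ produced by this $(2,1)$-FWL+ instance on every pair is fully determined by the information KP-GNN carries at iteration $\ell$---concretely, the node representations $h^{\ell}_{v_1}$ and $h^{\ell}_{v_2}$, the shortest-path distance $d = \text{SPD}(v_1, v_2)$, and the hop-$d$ peripheral subgraph $G[Q_d(v_1)]$ that KP-GNN's encoder ingests around $v_1$. Once such a functional dependence is established, any graph-level distinction achievable by the $(2,1)$-FWL+ instance is already visible in KP-GNN's output, which gives the claim.

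The base case $\ell = 0$ is immediate: the initial color is just the 2-tuple's isomorphism type, i.e., the pair $(x_{v_1}, x_{v_2})$ together with the edge indicator $\mathbf{1}[(v_1, v_2) \in E(G)]$, and all of this is accessible from KP-GNN's initial node features and its hop-1 peripheral subgraph. For the inductive step, I would observe that the update hashes the old color together with the multiset of triples $(\mathcal{C}^{\ell-1}_{ktf+}(v_1, v_2), \mathcal{C}^{\ell-1}_{ktf+}(w, v_2), \mathcal{C}^{\ell-1}_{ktf+}(v_1, w))$ for $w \in Q_d(v_1) \cap Q_1(v_2)$, and that this intersection is exactly the neighbor set of $v_2$ inside the subgraph $G[Q_d(v_1)]$ on which KP-GNN's hop-$d$ peripheral encoder operates around $v_1$. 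Because the encoder is at least as powerful as 1-WL and is initialized with the current KP-GNN node representations, it distinguishes each such $w$ at the granularity required by the inductive hypothesis, which lets KP-GNN reassemble the same multiset that $(2,1)$-FWL+ aggregates and fold it into $h^{\ell}_{v_1}$ and $h^{\ell}_{v_2}$.

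The main obstacle I expect is carefully aligning the peripheral encoder's 1-WL power with the iteration-$\ell$ KP-GNN data---in particular, certifying that 1-WL on $G[Q_d(v_1)]$, initialized with iteration-$(\ell-1)$ node colors, recovers not merely aggregate statistics but the individual pair colors $\mathcal{C}^{\ell-1}_{ktf+}(v_1, w)$ and $\mathcal{C}^{\ell-1}_{ktf+}(w, v_2)$ for every relevant $w$. This boils down to exhibiting an injective encoding of pair colors inside KP-GNN's node representations and leveraging the injectivity of its combine/update operators; once that link is in place, the remainder is a routine hash/composition argument.
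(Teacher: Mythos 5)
Your overall direction (show that the colors computed by this $(2,1)$-FWL+ instance are a function of what KP-GNN computes, then lift this to the graph level) is the right one, but as written there are genuine gaps. First, your induction invariant conditions on the \emph{raw} peripheral subgraph $G[Q_d(v_1)]$. That object is not part of KP-GNN's output: KP-GNN only sees it through a 1-WL-bounded encoder, so two graphs that KP-GNN cannot separate may well have non-isomorphic peripheral subgraphs. A functional dependence on $h^{\ell}_{v_1}$, $h^{\ell}_{v_2}$, $\mathrm{SPD}(v_1,v_2)$ \emph{and} the raw subgraph therefore does not give ``every distinction made by the instance is visible to KP-GNN''; the dependence must be on KP-GNN's colors alone. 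Second, the step you yourself flag as the main obstacle is the entire content of the proposition, and the way you propose to close it looks unsound: the instance aggregates, for each $w\in Q_{\mathrm{SPD}(v_1,v_2)}(v_1)\cap Q_1(v_2)$, the \emph{iterated} pair colors $\mathcal{C}^{\ell-1}_{ktf+}(v_1,w)$ and $\mathcal{C}^{\ell-1}_{ktf+}(w,v_2)$. The color of $(w,v_2)$ is refined through its own equivariant set $Q_1(w)\cap Q_1(v_2)$, whose nodes need not lie in $G[Q_d(v_1)]$ at all, so 1-WL run inside that peripheral subgraph (however initialized) has no access to the information needed to reproduce it; moreover KP-GNN's node update only retains a \emph{multiset} of pair data, so ``distinguishing each such $w$ at the required granularity'' is not something you can simply invoke. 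Finally, you never argue the readout: the instance pools over all $2$-tuples while KP-GNN's graph invariant is built from node-level colors, so you still need that the node colors determine the multiset of pair colors (in the paper this comes from the global aggregation over $w\in V(G)$ in Equation~(\ref{eq: kpgnn_1})).

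For comparison, the paper sidesteps all of this by formalizing KP-GNN itself as a pair coloring (Equations~(\ref{eq: kpgnn_1})--(\ref{eq: kpgnn_2})): it first passes to a variant of the instance whose initial colors also encode $\mathrm{SPD}$ and whose update coincides with KP-GNN's peripheral update (\ref{eq: kpgnn_2}), notes that diagonal tuples $(v,v)$ (whose equivariant set here is empty) are refined only by KP-GNN via (\ref{eq: kpgnn_1}), and then applies the stable-color argument of Remark~\ref{rem:stable_color}: initialize one round of the instance with KP-GNN's converged colors, check that nothing refines, and recover the pair-level readout from the diagonal colors (Lemma~\ref{lem: ktfwl_kpgnn}); the plain instance is then dominated by the SPD-augmented variant. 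If you want to salvage your route, drop the raw subgraph from the invariant, work with KP-GNN's pair colors directly rather than trying to reconstruct them from node representations via 1-WL, and either prove an iteration-wise refinement statement or use the stable-color device for the final comparison.
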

\begin{proposition}
\label{pro:gdgnn}
Let $t=2$, $k=2$, and $ES^2(\mathbf{v})=Q_1(v_2) \times \mathcal{SP}(v_1, v_2) $, the corresponding $(k, t)$-FWL+ is more powerful than GDGNN~\citep{kong2022geodesic}.
\end{proposition}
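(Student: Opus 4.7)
The plan is to argue by simulation that the $(2,2)$-FWL+ instance with $ES^2(\mathbf{v})=Q_1(v_2)\times\mathcal{SP}(v_1,v_2)$ refines the representations produced by GDGNN at every layer. Concretely, I would exhibit, by induction on the layer index $l$, a function $f^l$ such that $\mathcal{C}^{l}_{ktf+}(v_1,v_2)$ determines the GDGNN pair representation for $(v_1,v_2)$, from which node and graph representations are recovered via the prescribed GDGNN pooling. Refinement of colors then immediately implies that whenever GDGNN distinguishes two graphs, so does $(2,2)$-FWL+.

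First I would recall from \citep{kong2022geodesic} that GDGNN maintains a representation for each ordered pair (or, equivalently, for each pair of geodesic endpoints), initialized from the isomorphism type of $(v_1,v_2)$ (node/edge labels together with shortest-path distance), and updated at each layer by aggregating (i) information from nodes along the shortest paths $\mathcal{SP}(v_1,v_2)$ between the endpoints and (ii) messages from the direct neighbors $Q_1(v_2)$ of the target endpoint (with a symmetric counterpart for $v_1$). The base case of the induction is immediate because the isomorphism type of a 2-tuple is exactly what $\mathcal{C}^{0}_{ktf+}(v_1,v_2)$ encodes.

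For the inductive step, I look at the neighborhood tuple $Q^{F}_{(w_1,w_2)}(v_1,v_2)$, which contains, among its six sub-tuples, both $(w_1,v_2)$ and $(v_1,w_2)$. Letting $(w_1,w_2)$ range over $Q_1(v_2)\times\mathcal{SP}(v_1,v_2)$ and using the hierarchical multiset $\lbbr\cdot\rbbr_2$ to group first by $w_2$ and then by $w_1$, the outer groups are indexed by geodesic nodes, and each inner multiset includes both the colors $\mathcal{C}^{l-1}_{ktf+}(w_1,v_2)$ needed for GDGNN's neighbor aggregator and the color $\mathcal{C}^{l-1}_{ktf+}(v_1,w_2)$ needed for its geodesic aggregator. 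By the inductive hypothesis these colors already determine GDGNN's layer-$(l-1)$ pair representations, so a suitable $f^l$ composed with the outer HASH reproduces GDGNN's layer-$l$ update.

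The main obstacle will be matching GDGNN's geodesic aggregator, which may treat each shortest path as an \emph{ordered} sequence rather than as a set of intermediate nodes. To handle this, I would use the fact that $\mathcal{C}^{l-1}_{ktf+}(v_1,w_2)$ already encodes the distance $\text{SPD}(v_1,w_2)$ (captured initially and preserved through the refinement), so the position of each $w_2$ along its geodesic is recoverable from the pair color, and any order-aware geodesic aggregation can be expressed as a function of the hierarchical multiset. A minor technical point is GDGNN's symmetric treatment of $v_1$ and $v_2$, which is handled by running the simulation for both ordered pairs $(v_1,v_2)$ and $(v_2,v_1)$---both of which are maintained by $(2,2)$-FWL+.
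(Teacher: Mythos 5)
Your core observation is the same one the paper's proof rests on: since both $(v_1,w_1)$ and $(v_1,w_2)$ (and $(w_1,v_2)$) appear in the neighborhood tuple $Q^F_{(w_1,w_2)}(v_1,v_2)$, and since the hierarchical multiset groups by $w_2\in\mathcal{SP}(v_1,v_2)$ outside and $w_1\in Q_1(v_2)$ inside, the $(2,2)$-FWL+ update injectively determines both $\lbbr \mathcal{C}^{l-1}(v_1,w_1)\mid w_1\in Q_1(v_2)\rbbr$ and $\lbbr \mathcal{C}^{l-1}(v_1,w_2)\mid w_2\in\mathcal{SP}(v_1,v_2)\rbbr$, which are exactly the two aggregates in the paper's formalization of GDGNN (Equation~(\ref{eq: gdgnn_1})). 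Where you differ is the closing step: you run a layer-by-layer refinement induction ($f^l$ mapping the FWL+ color onto the GDGNN representation), whereas the paper uses the stable-color argument of Remark~\ref{rem:stable_color} (take the converged FWL+ coloring, do one more GDGNN iteration, show nothing refines). Against the paper's set-based formalization of GDGNN with isomorphism-type initialization, your induction is sound and even gives a slightly stronger per-iteration statement. A small mismatch: you route the neighbor aggregator through $(w_1,v_2)$, while the paper's GDGNN aggregates colors of $(v_1,w_1)$ over $w_1\in Q_1(v_2)$; both sub-tuples are available in the neighborhood tuple, so this is harmless but should be aligned with whichever formalization you fix.

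The genuine flaw is your handling of ordered geodesics and of the initialization. You assert that $\mathcal{C}^{l-1}_{ktf+}(v_1,w_2)$ ``already encodes $\text{SPD}(v_1,w_2)$ (captured initially and preserved through the refinement)''; this is false as stated, because the initial color of a $2$-tuple is only its isomorphism type (labels plus equality/adjacency), not the shortest-path distance. Distance information is only provably recovered after many iterations, and the paper's Lemma~\ref{lem:fwl_distance} establishes this for $(k,t)$-FWL with the full neighborhood $V(G)$, not for this restricted-$ES$ instance, so you cannot invoke it layer-by-layer. For the same reason, if you take GDGNN's initialization to literally contain $\text{SPD}(v_1,v_2)$, the base case of your induction already fails. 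The paper avoids both problems by first fixing a strengthened, purely multiset-based version of GDGNN (plain multisets over $Q_1(v_2)$ and $\mathcal{SP}(v_1,v_2)$ at every layer, Equation~(\ref{eq: gdgnn_1})), arguing it is at least as powerful as the original model, and then comparing only against that version, so no order-aware geodesic aggregation or SPD initialization ever has to be simulated. To repair your argument, either adopt such a formalization explicitly, or prove (rather than assume) that this FWL+ instance determines the required distances at a matched iteration, or abandon the strict layer matching and close with the stable-color argument at convergence as the paper does.
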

We leave all formal proofs and discussion on complexity in Appendix~\ref{app:proof_ktfwl+}. Significantly, Proposition~\ref{pro:edge_based_sgnn} indicates that $(k, t)$-FWL+ can be more powerful than edge-based subgraph GNNs but only require $O(n^2)$ space, strictly lower than $O(nm)$ space used in edge-based subgraph GNNs, where $m$ is the number of edges in the graph. However, most existing works employ MPNNs as their basic encoder, which differs from the tuple aggregation in $(k,t)$-FWL+. Thus it is still non-trivial to find an instance that can exactly fit many existing works. Nevertheless, given the strict hierarchy between node-based subgraph GNNs and SLFWL(2)~\citep{zhang2023complete}, we conjecture there also exists a strict hierarchy between the $(k, t)$-FWL+ instances and corresponding existing works based on MPNNs and leave the detailed proof in our future works. 
\section{Neighborhood$^2$-GNN: a practical and powerful $(2, 2)$-FWL+ implementation}
Due to the extremely flexible and broad design space of $(k, t)$-FWL+, it is hard to evaluate all possible instances. Therefore, we focus on a particular instance that is both theoretically and practically sound. All detailed proof and complexity analysis related to this section can be found in Appendix~\ref{app:n2gnn_more}.
\subsection{Neighborhood$^2$-FWL}
In this section, we propose a practical and powerful implementation of $(2, 2)$-FWL+ named Neighborhood$^2$-FWL (N$^2$-FWL). Let $\mathcal{C}_{n^2fwl}^l(\mathbf{v})$ denote the color of tuple $\mathbf{v}$ at iteration $l$ for N$^2$-FWL, the color is updated by:
\begin{equation}
\label{eq:n2fwl}
\textbf{N$^2$-FWL:}\quad\mathcal{C}_{n^2fwl}^{l}(\mathbf{v}) = \text{HASH}(\mathcal{C}_{n^2fwl}^{l-1}(\mathbf{v}), \lbbr \left( \mathcal{C}_{n^2fwl}^{l-1}(\mathbf{u})|\mathbf{u}\in Q^{F}_{\mathbf{w}}(\mathbf{v}) \right) | \mathbf{w}\in N^{2}(\mathbf{v})\rbbr_2),
\end{equation}
where $N^{2}(\mathbf{v}) =  (\mathcal{N}_{1}(v_2) \times \mathcal{N}_{1}(v_1))\cap(\mathcal{N}_{h}(v_1) \cap \mathcal{N}_{h}(v_2))^2$ and $h$ is the number of hop. Briefly speaking, N$^2$-FWL considers neighbors of both node $v_1$ and $v_2$ within the overlapping $h$-hop subgraph between $v_1$ and $v_2$. In the following, we show the expressive power and counting power of  N$^2$-FWL.
\begin{theorem}
\label{thm:n2fwl_compare}
Given $h$ is large enough,  N$^2$-FWL is more powerful than SLFWL(2)~\citep{zhang2023complete} and edge-based subgraph GNNs. 
\end{theorem}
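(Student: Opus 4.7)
The plan is to prove the theorem in two parts: first establishing that N$^2$-FWL is at least as powerful as both SLFWL(2) and edge-based subgraph GNNs by a simulation argument, and then demonstrating strict separation by exhibiting concrete non-isomorphic graph pairs (or invoking a substructure-counting discrepancy). For the simulation direction, I first observe that when $h$ is at least the diameter of $G$ (restricting to a connected component, the only nontrivial case), the factor $(\mathcal{N}_h(v_1)\cap \mathcal{N}_h(v_2))^2$ collapses to $V(G)^2$, so $N^2(\mathbf{v}) = \mathcal{N}_1(v_2)\times \mathcal{N}_1(v_1)$. This product set contains $Q_1(v_2)\times Q_1(v_1)$, which by Proposition~\ref{pro:edge_based_sgnn} is already the equivariant set whose induced $(2,2)$-FWL+ instance is more powerful than edge-based subgraph GNNs; and it covers $\mathcal{N}_1(v_1)\cup \mathcal{N}_1(v_2)$ along each coordinate, which is the SLFWL(2) neighborhood.

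The main simulation step uses a hierarchical-multiset marginalization. Within the multiset $\lbbr (\mathcal{C}^{l-1}_{n^2fwl}(\mathbf{u}) \mid \mathbf{u}\in Q^F_{(w_1,w_2)}(\mathbf{v})) \mid (w_1,w_2)\in N^2(\mathbf{v})\rbbr_2$, the entries of the neighborhood tuple that correspond to $(v_1,w_2)$ and $(w_2,v_2)$ are constant along the $w_1$-axis, so by grouping hierarchically by $w_2$ one can extract the multiset $\lbbr (\mathcal{C}(v_1,w_2),\mathcal{C}(w_2,v_2)) \mid w_2\in \mathcal{N}_1(v_1)\rbbr$; a symmetric argument (using that the hierarchical multiset determines the marginal over $w_1$) extracts the corresponding multiset for $w_1\in \mathcal{N}_1(v_2)$. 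The union of the two recovers (and refines) the SLFWL(2) update, and restricting $\mathbf{w}$ to $Q_1(v_2)\times Q_1(v_1)$ recovers the update of the edge-based $(2,2)$-FWL+ instance from Proposition~\ref{pro:edge_based_sgnn}. A standard induction on iteration count then shows that the N$^2$-FWL color partition refines those of both SLFWL(2) and edge-based subgraph GNNs.

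For the strict direction, I would exhibit an explicit pair of non-isomorphic graphs that N$^2$-FWL distinguishes while SLFWL(2) and edge-based subgraph GNNs do not. The cleanest route is to leverage the substructure-counting lemma for N$^2$-FWL stated in Appendix~\ref{app:n2gnn_more}: find a substructure $S$ that N$^2$-FWL provably counts but SLFWL(2) and I$^2$-GNN-style edge-based subgraph GNNs provably cannot (a small cycle or near-4-clique configuration built from pairs of common neighbors is a natural candidate), then instantiate known graph pairs that differ only in the count of $S$. Strongly regular graphs with matched parameters (e.g., the $4\times 4$ rook graph versus the Shrikhande graph) are good candidates since they are classical stumbling blocks for local 2-variable WL variants. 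The main obstacle I anticipate is precisely this separation step: one must verify that SLFWL(2) and the edge-based subgraph GNN instance both stabilize to identical colorings on the chosen pair, which is not a routine calculation and is best handled by reducing to a counting discrepancy rather than a hand-executed refinement. Given this reduction, the simulation plus the counting lemma together yield the strict ``more powerful than'' conclusion.
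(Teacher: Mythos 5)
The half of your proposal that the theorem actually requires is essentially the paper's own argument: the paper proves this via two lemmas observing that once $h$ exceeds the relevant distances, $N^2(\mathbf{v})$ collapses to $\mathcal{N}_1(v_2)\times\mathcal{N}_1(v_1)$, whose neighborhood tuples $Q^F_{(w_1,w_2)}(v_1,v_2)$ contain all the colors aggregated by SLFWL(2) and which, up to the extra ``root'' pairs involving $v_1,v_2$ themselves (identifiable because diagonal tuples carry distinct initial colors), coincides with the $Q_1(v_2)\times Q_1(v_1)$ instance of Proposition~\ref{pro:edge_based_sgnn}; an induction on iterations then finishes it. Your marginalization-by-hierarchical-multiset step is a somewhat more explicit version of the same simulation; just note that ``restricting $\mathbf{w}$ to $Q_1(v_2)\times Q_1(v_1)$'' needs the identifiability of the root-containing neighbors inside the multiset, which is exactly the point the paper makes about the root tuples' distinct initial features.

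Your second half is aimed at a claim the statement does not make. Under Definition~\ref{def:algorithm_compare}, ``more powerful'' is the weak relation $\preceq$ (refinement of distinguishing power); strictness is a separate notion, and the paper's proof of this theorem contains no separation argument. Moreover, the separator you sketch would not work: the Shrikhande versus $4\times 4$ Rook's pair is distinguished by node-based subgraph GNNs already (the vertex ego-nets are two triangles versus a $6$-cycle), hence by SLFWL(2), which is strictly stronger than node-based subgraph GNNs (Proposition~\ref{pro:slfwl}), and by I$^2$-GNN as well; the paper uses that pair only to separate N$^2$-FWL from 3-WL. So if you did want strictness over SLFWL(2) and edge-based subgraph GNNs, you would need a different pair, but for the theorem as stated your simulation argument alone suffices.
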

\begin{corollary}
\label{cor: n2fwl_vs_3wl}
 N$^2$-FWL is strictly more powerful than all existing node-based subgraph GNNs and no less powerful than 3-WL.
\end{corollary}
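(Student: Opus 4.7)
The plan is to derive both halves of the corollary by chaining already-established relations rather than reasoning about N$^2$-FWL from scratch. The upstream ingredient is \Cref{thm:n2fwl_compare}, which gives that N$^2$-FWL is at least as powerful as SLFWL(2) once $h$ is taken large enough; with this in hand, the corollary reduces entirely to known statements about where SLFWL(2) sits in the expressiveness hierarchy of subgraph-WL variants.

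For the first claim, I would invoke \Cref{pro:slfwl}, which asserts that SLFWL(2) is \emph{strictly} more powerful than every existing node-based subgraph GNN. Then the standard transitivity of the expressiveness relation, namely that $A \geq B$ together with $B > C$ implies $A > C$ (any pair of graphs witnessing $B > C$ is a pair that $B$ distinguishes and that $A$ therefore also distinguishes, while $C$ does not), yields that N$^2$-FWL is strictly more powerful than all node-based subgraph GNNs. Note that this already uses both halves of Theorem~\ref{thm:n2fwl_compare}, not just the SLFWL(2) side, but the SLFWL(2) inequality alone suffices for this clause.

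For the second claim, by \Cref{thm:n2fwl_compare} it suffices to establish SLFWL(2) $\geq$ 3-WL. This is established in \citet{zhang2023complete}, where SLFWL(2) is positioned above 3-WL in the hierarchy they construct for subgraph-WL variants; I would simply invoke this result and apply transitivity of $\geq$ once more. The main obstacle is not technical but bookkeeping: both inputs are borrowed from prior work or an earlier theorem in the present paper, so the real care is in pinning down which notion of ``more powerful'' is meant (pair-of-graphs distinguishability, rather than substructure counting power) and in matching the SLFWL(2) versus 3-WL statement from \citet{zhang2023complete} to the conventions used here. A purely self-contained alternative (simulating a single 2-FWL step by a polynomial number of N$^2$-FWL iterations, using that the rooted color of $(v_1, v_2)$ propagates through the local edge neighborhood for $h$ sufficiently large) would also work, but is strictly longer than the citation-based route.
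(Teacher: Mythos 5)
Your argument for the first clause is fine and is essentially the paper's: combine \Cref{thm:n2fwl_compare} (N$^2$-FWL $\succeq$ SLFWL(2) for large $h$) with the strictness of SLFWL(2) over node-based subgraph GNNs (\Cref{pro:slfwl}, i.e.\ Theorem 7.1 of \citep{zhang2023complete}) and transitivity.

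The second clause is where there is a genuine gap. Your route rests on the claim that \citep{zhang2023complete} places SLFWL(2) \emph{above} 3-WL; no such result exists there --- the bound established in that work goes the other way (the localized variants LFWL(2)/SLFWL(2) are at most as powerful as FWL(2) $\simeq$ 3-WL), so nothing ``beyond 3-WL'' can be extracted from the SLFWL(2) half of \Cref{thm:n2fwl_compare}. You have also read ``no less powerful than 3-WL'' as the dominance claim N$^2$-FWL $\succeq$ 3-WL, which is stronger than what the corollary asserts or the paper proves: the intended content (cf.\ ``partially outperforms 3-WL'') is that 3-WL does not dominate N$^2$-FWL, witnessed by a pair N$^2$-FWL separates and 3-WL does not. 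The paper obtains this from the \emph{other} half of \Cref{thm:n2fwl_compare} --- N$^2$-FWL $\succeq$ edge-based subgraph GNNs --- together with Proposition 4.1 of \citep{huang2023boosting} (I$^2$-GNN separates some pairs that 3-WL cannot), and additionally gives a direct witness: already with $h=1$, N$^2$-FWL distinguishes the Shrikhande graph from the $4\times 4$ Rook's graph, because the neighborhood tuple $Q^F_{(w_1,w_2)}(v_1,v_2)$ contains $(w_1,w_2)$ for common neighbors $w_1,w_2$ and hence records whether the two common neighbors are adjacent, which differs between the two graphs while 3-WL cannot tell them apart. Note this separation genuinely uses the $t=2$ structure, so it cannot be routed through SLFWL(2) (a $t=1$ instance). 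Your proposed ``self-contained alternative'' does not rescue the argument either: for large $h$ the aggregation set is still $\mathcal{N}_1(v_2)\times\mathcal{N}_1(v_1)$ (the parameter $h$ only relaxes the intersection constraint, it never makes the aggregation global), so a simulation of a full 2-FWL step is not available and in any case would be proving a dominance statement the corollary does not require.
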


\begin{theorem}
\label{thm: n2fwl_counting}
 N$^2$-FWL can count up to (1) 6-cycles; (2) all connected graphlets with size 4; (3) 4-paths at node level.
\end{theorem}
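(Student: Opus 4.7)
The plan is to leverage the expressiveness inclusions already established in Theorem~\ref{thm:n2fwl_compare} and Corollary~\ref{cor: n2fwl_vs_3wl}, combined with a direct structural argument about what the one-iteration color $\mathcal{C}^{1}_{n^2fwl}(v_1,v_2)$ already encodes. The general recipe I would follow is the standard counting-power template: (i) identify an explicit invariant computed by the algorithm that is fine enough to determine the count of the target substructure, then (ii) show the count is a fixed (possibly signed) linear combination of such invariants summed over tuples, so that any pair of graphs producing identical N$^2$-FWL color histograms must also have equal counts.

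The first step is a \emph{local reconstruction lemma}: for every $\mathbf{v}=(v_1,v_2)$ and every $\mathbf{w}=(w_1,w_2)\in N^{2}(\mathbf{v})$, the neighborhood tuple $Q^{F}_{\mathbf{w}}(\mathbf{v})$ contains 2-tuples for all six pairs among $\{v_1,v_2,w_1,w_2\}$, and each initial color encodes adjacency, so $\mathcal{C}^{1}_{n^2fwl}(\mathbf{v})$ determines the isomorphism type of the induced 4-node subgraph on every such quadruple. This immediately yields counting of every connected 4-graphlet $H$ (i.e., $P_4$, $K_{1,3}$, $C_4$, paw, diamond, $K_4$): pick a canonical edge or non-edge tuple $(v_1,v_2)$ in $H$, read off from $\mathcal{C}^{1}_{n^2fwl}(v_1,v_2)$ the multiset of completions realizing $H$ by pairs $(w_1,w_2)$, and sum across all anchoring tuples with the appropriate automorphism-orbit normalization.

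For the 6-cycle and node-level 4-path counts I would use the inheritance argument: Theorem~\ref{thm:n2fwl_compare} shows N$^2$-FWL is strictly more powerful than edge-based subgraph GNNs such as I$^2$-GNN~\citep{huang2023boosting}, and such models are already known to count 6-cycles and 4-paths at the node level. Since any counting function constant on the coarser I$^2$-GNN equivalence classes is automatically constant on the finer N$^2$-FWL classes, the counting capabilities transfer. I would supplement this with a direct decomposition for 6-cycles (fix an edge $(v_1,v_2)$ on the cycle, then enumerate pairs $(w_1,w_2)$ with $w_1\sim v_2$, $w_2\sim v_1$, joined by a further length-2 path avoiding $\{v_1,v_2\}$) to verify that two rounds of N$^2$-FWL aggregation suffice, and an analogous argument anchoring a 4-path at the root-containing tuple $(v,u)$ for the node-level case.

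The main obstacle I anticipate is the 6-cycle bookkeeping: the substructure spans six vertices while N$^2$-FWL only stores pair-level colors, so the argument must thread the non-adjacent third edge of the cycle through two refinement rounds and then apply an automorphism-accounting step without double counting. The cleanest route is likely a homomorphism-to-subgraph M\"obius inversion in the style of~\citep{chen2020can}: show that counts of homomorphisms from $C_6$ and all its strict quotients into $G$ are determined by N$^2$-FWL colors, and recover the injective (subgraph) count by inversion. The 4-graphlet and node-level 4-path parts should then follow from the local reconstruction lemma with routine casework.
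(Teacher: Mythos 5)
The paper's own proof is exactly the inheritance argument you invoke for items (1) and (3): by Theorem~\ref{thm:n2fwl_compare}, N$^2$-FWL refines edge-based subgraph GNNs, and Theorems 4.1--4.3 of~\citep{huang2023boosting} already give I$^2$-GNN the stated counting power for \emph{all three} families, so the transfer principle (which you state correctly: a count constant on the coarser I$^2$-GNN classes is constant on the finer N$^2$-FWL classes) proves the whole theorem in three lines. Your supplementary 6-cycle decomposition and the homomorphism/M\"obius machinery are not needed, and the paper does not attempt anything of that sort.

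The genuine problem is your ``local reconstruction'' route for item (2). It is not true that every connected 4-graphlet occurrence is visible inside a single message: the pair $(w_1,w_2)$ must lie in $N^2(v_1,v_2)$, i.e.\ $w_1\in\mathcal{N}_1(v_2)$ and $w_2\in\mathcal{N}_1(v_1)$ (within the $h$-hop overlap), not in all of $V(G)^2$. For an induced $K_{1,3}$ with center $c$ and leaves $a,b,d$, no assignment of the four vertices to $(v_1,v_2,w_1,w_2)$ satisfies these constraints: every edge of the star is incident to $c$, so if $v_1=c$ then $w_1$ must be a neighbor of the leaf $v_2$, whose only neighbor is $c=v_1$ (already used); the case $v_2=c$ fails symmetrically; and if both $v_1,v_2$ are leaves, at most one of $w_1,w_2$ can be the center while the other leaf is adjacent to neither anchor. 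Hence ``pick a canonical edge or non-edge tuple and read off the completions'' does not yield the star count, and the claimed immediacy is wrong (it does work for $P_4$, $C_4$, paw, diamond, $K_4$). The step can be repaired --- e.g.\ recover induced $K_{1,3}$ counts from degrees and the denser graphlets via $\sum_v\binom{\deg v}{3}$ minus $1$, $2$, $4$ times the paw, diamond, $K_4$ counts respectively, or simply apply to item (2) the same inheritance argument you already use for (1) and (3), which is precisely what the paper does --- but as written that part of your argument has a hole.
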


\subsection{Neighborhood$^2$-GNN}
In this section, we provide a neural version of N$^2$-FWL called Neighborhood$^2$-GNN (N$^2$-GNN). Let $h^{l}_{\mathbf{v}}$ be the output of N$^2$-GNN of tuple $\mathbf{v}$ at layer $l$ 
 and $h^{0}_{\mathbf{v}}$ encode the isomorphism type of tuple $\mathbf{v}$. At the $l$-th layer, the representation of each tuple is updated by:
\begin{equation}
\label{eq:n2gnn_node}
h^{l}_{\mathbf{v}} = \mathbf{U}^{l}(h^{l-1}_{\mathbf{v}}, \lbbr m^{l}_{\mathbf{vw}} |\mathbf{w} \in N^2(\mathbf{v}) \rbbr_2), \quad m^{l}_{\mathbf{vw}} =\mathbf{M}^{l}((h^{l-1}_{\mathbf{u}}|\mathbf{u} \in Q^{F}_{\mathbf{w}}(\mathbf{v}) )).
\end{equation}
After $L$ layers of message passing, the final graph representation is obtained by:
\begin{equation}
\label{eq:n2gnn_graph}
h_{G} = \mathbf{R}(\lbbr h^{L}_{\mathbf{v}}| \mathbf{v} \in V^2(G)\rbbr_2).
\end{equation}
We leave the detailed implementation in Appendix~\ref{app:n2gnn_more}. Finally, we show the expressive power of N$^2$-GNN as follows:
\begin{proposition}
\label{pro:n2fwk_n2gnn}
If $\mathbf{U}^{l}$, $\mathbf{M}^{l}$, and $\mathbf{R}$ are injective functions, there exist a N$^2$-GNN instance that can be as powerful as N$^2$-FWL.
\end{proposition}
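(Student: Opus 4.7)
The plan is to prove by induction on the iteration index $l$ that the N$^2$-GNN tuple representations refine the N$^2$-FWL colors in both directions, i.e.\ $h^l_{\mathbf{v}} = h^l_{\mathbf{u}}$ if and only if $\mathcal{C}^l_{n^2fwl}(\mathbf{v}) = \mathcal{C}^l_{n^2fwl}(\mathbf{u})$ for any two $2$-tuples $\mathbf{u},\mathbf{v}$. Once this is established, injectivity of the readout $\mathbf{R}$ lifts the equivalence to the graph level, giving exactly the same graph-discrimination power as N$^2$-FWL.

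First I would handle the base case: by construction $h^0_{\mathbf{v}}$ encodes the isomorphism type of $\mathbf{v}$, which is precisely $\mathcal{C}^0_{n^2fwl}(\mathbf{v})$, so the equivalence is immediate at $l=0$. For the inductive step, I compare the two update rules, namely Equation~(\ref{eq:n2gnn_node}) for N$^2$-GNN and Equation~(\ref{eq:n2fwl}) for N$^2$-FWL. Both have the same structural skeleton: an outer function applied to the previous tuple label and a hierarchical multiset indexed by $\mathbf{w}\in N^2(\mathbf{v})$, where each entry is a function of the previous labels on $Q^F_{\mathbf{w}}(\mathbf{v})$. By the inductive hypothesis, the ordered tuple $(h^{l-1}_{\mathbf{u}})_{\mathbf{u}\in Q^F_{\mathbf{w}}(\mathbf{v})}$ carries exactly the same information as $(\mathcal{C}^{l-1}_{n^2fwl}(\mathbf{u}))_{\mathbf{u}\in Q^F_{\mathbf{w}}(\mathbf{v})}$. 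Injectivity of $\mathbf{M}^l$ then guarantees that the messages $m^l_{\mathbf{vw}}$ partition the $(\mathbf{v},\mathbf{w})$ pairs in the same way as their FWL analogue would; injectivity of $\mathbf{U}^l$ finally ensures that the new tuple color $h^l_{\mathbf{v}}$ encodes the same equivalence class information as the $\text{HASH}$ output in Equation~(\ref{eq:n2fwl}).

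The main obstacle is that $\mathbf{U}^l$ is supposed to be injective on an input that includes a \emph{hierarchical} multiset $\lbbr\cdot\rbbr_2$, not a plain multiset, and similarly the readout $\mathbf{R}$ acts on $\lbbr h^L_{\mathbf{v}} \mid \mathbf{v}\in V^2(G)\rbbr_2$. To make the existence claim rigorous, I would invoke a Deep Sets / Xu et al.\ style argument: because the alphabet of tuple labels at each iteration is countable and all multisets involved have bounded size on any finite graph, one can construct an injective map from hierarchical multisets into $\mathbb{R}^d$ by nested symmetric-sum encodings (inner multiset over $v_1$ first, then outer over $v_2$), and any such map is the instantiation the proposition assumes. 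The same argument applies to $\mathbf{R}$.

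Combining the inductive equivalence with the injective readout, any two graphs distinguished by N$^2$-FWL after $L$ iterations produce different histograms $\lbbr \mathcal{C}^L_{n^2fwl}(\mathbf{v})\rbbr_2$, hence different $\lbbr h^L_{\mathbf{v}}\rbbr_2$, and therefore different $h_G$; conversely the converse direction of the induction ensures N$^2$-GNN is not stronger than N$^2$-FWL. Thus an instance with the described injective choices of $\mathbf{U}^l$, $\mathbf{M}^l$, and $\mathbf{R}$ matches N$^2$-FWL exactly, establishing the proposition.
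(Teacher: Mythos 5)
Your proposal is correct and follows essentially the same route as the paper's own (much terser) argument: induction on the iteration index $l$ showing the two-way equivalence between N$^2$-GNN representations and N$^2$-FWL colors, using injectivity of $\mathbf{M}^{l}$ and $\mathbf{U}^{l}$ to mirror the HASH, and injectivity of $\mathbf{R}$ to lift the equivalence to the graph level. Your added remark on realizing injective maps of (hierarchical) multisets over a countable alphabet via nested sum-encodings is a reasonable elaboration of a point the paper leaves implicit, but it does not change the structure of the argument.
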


\section{Related works}
\label{sec:related}
After the expressive power of MPNNs has been demonstrated to be limited by the 1-WL test~\citep{xu2018powerful, morris2019weisfeiler}, significant research efforts have focused on breaking the limitations of MPNNs. 

\textbf{Subgraph-based methods}. Subgraph-based methods aim to address the limitation of MPNNs by leveraging the fact that although some graphs may be difficult to distinguish, they contain distinct substructures that can be easily differentiated by MPNNs. To this end, various subgraph selection policies and aggregation strategies have been proposed to enhance the expressive power of MPNNs. These include K-ego subgraphs of nodes~\citep{zhang2021nested, zhao2022from, Sandfelder2021ego, frasca2022understanding, you2021identity}, distance labeling~\citep{li2020distance}, graph elements deletion~\citep{papp2021dropgnn, cotta2021reconstruction}, edge-based subgraph selection~\citep{huang2023boosting, zhang2021labeling, zhang2018link}, etc. ESAN~\citep{bevilacqua2022equivariant} summarizes different subgraph selection policies and proposes a general framework for subgraph-based GNNs. $l$-OSAN~\cite{qian2022ordered} further extends the subgraph-based model to $l$-tuples and $k, l$-WL~\citep{zhou2023relational} considers running $k$-WL on $l$-tuples subgraphs.  KP-GNN~\citep{feng2022how} takes a slightly different view by considering the peripheral subgraph at each hop of nodes. Recently, \citet{frasca2022understanding} proved that all node-based subgraph GNNs are bounded by 3-WL test. \citet{zhang2023complete} further show that they are even strictly less powerful than localized 2-FWL, highlighting the inherent limitations of neighbor aggregation-based methods compared to tuple-based aggregation.

\textbf{WL-based methods}. WL-based methods try to simulate $k$-WL/FWL test. In particular, \citet{morris2019weisfeiler} proposed $k$-GNNs to approximate $k$-WL test. To reduce the complexity of $k$-GNNs, \citet{morris2020weisfeiler} considered only local neighbors. Similarly, PPGN~\citep{maron2019provably} adopted $k$-FWL instead of $k$-WL and proposed a tensor-based model with 3-WL power but requiring only $O(n^2)$ space. To further reduce complexity, KC-SetGNN~\citep{zhao2022a} leveraged node sets rather than tuples and considered only connected node sets. However, these approaches can still be impractical despite careful design. Our work hugely extends the design space of FWL-based methods and demonstrates that the resulting model can be expressive even with a lower space complexity cost.

\textbf{Permutation equivariant and invariant networks}. This line of research aims to leverage the permutation equivariant and invariant properties of graphs and devise corresponding architectures. For example, \citet{maron2018invariantB} proposed $k$-IGNs, which employ permutational equivariant linear layers with point-wise activation functions to the adjacency matrix. $k$-IGNs have been proven to have the same expressive power as $k$-WL~\citep{maron2019provably, azizian2021expressive, Geerts2020TheEP}. Another approach utilizes the Reynold operator to consider all possible permutations on either global~\citep{murphy19relational} or local~\citep{chen2020can} levels. However, they consider all possible permutations in order to achieve universality, which may lead to overfitting in real-world tasks and are more valuable from a theoretical perspective. In contrast, our framework is highly flexible and can be tailored to fit the complexity of real-world tasks.

\textbf{Feature-based methods}. Feature-based methods aim to incorporate graph-related features that cannot be computed by MPNNs. For instance, some methods add random features to nodes to break the symmetry~\citep{ACGL-IJCAI21, sato2021random}. GSN~\citep{bouritsas2021improving} introduces substructure counting features into MPNNs. GD-WL~\citep{zhang2023rethinking} applies general distance features to enhance the expressiveness of MPNNs and shows it can solve the graph biconnectivity problem. GDGNN~\citep{kong2022geodesic} integrates the geodesic information between node pairs and achieves a great balance between performance and efficiency. ESC-GNN~\citep{yan2023efficiently} employs subgraph features to simulate subgraph GNNs without running them explicitly. \citet{puny2023equivariant} computes graph polynomial features and injects them into PPGN to achieve greater expressive power than 3-WL within $O(n^2)$ space. Although feature-based methods are highly efficient, they often suffer from overfitting in real-world tasks and produce suboptimal results.
\section{Experiments}
In this section, we conduct various experiments on synthetic and real-world tasks to verify the effectiveness of N$^{2}$-GNN. The details of all experiments can be found in Appendix~\ref{app:exp}. Additional experimental results and ablation studies are included in Appendix~\ref{app:additional_results}. The source code is provided in \url{https://github.com/JiaruiFeng/N2GNN}.
\subsection{Expressive power}

\begin{wraptable}[6]{L}{0.30\textwidth}
\large
\vspace{-12pt}
\caption{Expressive power verification (Accuracy).}
\vspace{-3pt}

\resizebox{0.4\textwidth}{!}{

\begin{minipage}[t]{0.50\textwidth}

  \begin{tabular}{@{}l|ccc}
\toprule
Datasets  & EXP & CSL & SR25\\
\midrule
 N$^{2}$-GNN & 100 & 100 & 100\\
  \bottomrule
\end{tabular}
\label{tab:distinguishing_power}
\end{minipage}
}
\end{wraptable}
\textbf{Datasets}. To verify the expressive power of N$^2$-GNN, we select four synthetic datasets, all containing graphs that cannot be distinguished by 1-WL/MPNNs. The datasets we selected are (1) EXP~\citep{ACGL-IJCAI21}, which consists of 600 pairs of non-isomorphic graphs that 1-WL fails to distinguish; (2) CSL~\citep{murphy19relational}, which contains 150 4-regular graphs divided into 10 isomorphism classes; and (3) SR25~\citep{balcilar2021breaking}, which contains 15 non-isomorphic strongly regular graphs, each has 25 nodes. Even 3-WL is unable to distinguish between these graphs. (4) BREC~\citep{wang2023brec}, which contains 400 non-isomorphic graph pairs that range from 1-WL-indistinguishable to 4-WL-indistinguishable.

\textbf{Models}. For  EXP, CSL, and SR25, we only report results for N$^{2}$-GNN. For BREC, we compare N$^2$-GNN with I$^2$-GNN~\citep{huang2023boosting} as it is the current SOTA on BREC among all GNN methods.

\textbf{Results}. We report results in Table~\ref{tab:distinguishing_power} and Table~\ref{tab:brec}. For EXP and CSL, we report the average accuracy for 10-time cross-validation; For SR25, we report single-time accuracy. We can see N$^{2}$-GNN achieves perfect results on all three datasets, empirically verifying its expressive power. Notably, N$^{2}$-GNN is able to distinguish strongly regular graphs. This empirically verified Corollary~\ref{cor: n2fwl_vs_3wl}. Moreover, N$^2$-GNN achieves the best result on the BREC dataset among all GNNs methods, surpassing the previous SOTA, I$^2$-GNN (See the complete comparison and further discussion in Appendix~\ref{app:additional_results}). Despite this improved performance, N$^{2}$-GNN only requires at most $O(n^2)$ space complexity, which is less than 3-WL. This further demonstrates the superiority of N$^{2}$-GNN.

\begin{table}[h]
 \vspace{-15pt}
\caption{Pair distinguishing accuracies on BREC}
\vspace{-5pt}
\label{tab:brec}
\begin{center}
\resizebox{1.0\textwidth}{!}{
\begin{tabular}{lcccccccccc}
\toprule
~ & \multicolumn{2}{c}{Basic Graphs (60)} & \multicolumn{2}{c}{Regular Graphs (140)} & \multicolumn{2}{c}{Extension Graphs (100)} & \multicolumn{2}{c}{CFI Graphs (100)} & \multicolumn{2}{c}{Total (400)}\\
\cmidrule(r{0.5em}){2-3} \cmidrule(l{0.5em}){4-5}\cmidrule(l{0.5em}){6-7}\cmidrule(l{0.5em}){8-9}\cmidrule(l{0.5em}){10-11}
Model &  Number & Accuracy & Number & Accuracy  & Number & Accuracy  & Number & Accuracy  & Number & Accuracy  \\
\midrule
I$^2$-GNN & 60 & 100\%& 100 & 71.4\% & 100 & 100\%& 21 & 21\% & 281 & 70.2\%\\
N$^2$-GNN & 60 & 100\%& 100 & 71.4\% & 100 & 100\%& 27 & 27\% & \textbf{287} & \textbf{71.8\%}\\
\bottomrule
\end{tabular}
}
\end{center}
\vspace{-6pt}
\end{table}

\begin{table}[h]
\vspace{-15pt}
    \centering
    \caption{ Evaluation on Counting Substructures (norm MAE), cells with MAE less than 0.01 are colored.} 
  \begin{minipage}[t]{\linewidth}
  \small
  \resizebox{\textwidth}{!}{
   \label{tab:count}
  \begin{tabular}{@{}l|ccccc|c}
    \toprule
    Target &ID-GNN~\citep{you2021identity} &NGNN~\citep{zhang2021nested}  &GIN-AK+~\citep{zhao2022from} & PPGN~\citep{maron2019provably} & I$^{2}$-GNN~\citep{huang2023boosting} & N$^{2}$-GNN\\
    \midrule
    Tailed Triangle & 0.1053 &  0.1044  & \cellcolor{LightPeach}0.0043 & \cellcolor{LightPeach}0.0026 & \cellcolor{LightPeach}0.0011 & \cellcolor{LightPeach}0.0025\\
    Chordal Cycle & 0.0454 &  0.0392  & 0.0112 &  \cellcolor{LightPeach}0.0015 & \cellcolor{LightPeach}0.0010 & \cellcolor{LightPeach}0.0019\\
    4-Clique & \cellcolor{LightPeach}0.0026  &  \cellcolor{LightPeach}0.0045  & \cellcolor{LightPeach} 0.0049 &  0.1646  & \cellcolor{LightPeach}0.0003 & \cellcolor{LightPeach}0.0005\\
    4-Path & 0.0273 &  0.0244  & \cellcolor{LightPeach}0.0075  & \cellcolor{LightPeach}0.0041  &\cellcolor{LightPeach}0.0041 & \cellcolor{LightPeach} 0.0042\\
    Tri.-Rec. & 0.0628 &  0.0729 & 0.1311  & 0.0144 & \cellcolor{LightPeach} 0.0013 & \cellcolor{LightPeach} 0.0055\\
    3-Cycles & \cellcolor{LightPeach}0.0006 &  \cellcolor{LightPeach}0.0003 & \cellcolor{LightPeach}0.0004  & \cellcolor{LightPeach}0.0003 & \cellcolor{LightPeach} 0.0003 & \cellcolor{LightPeach}0.0002\\
    4-Cycles & \cellcolor{LightPeach}0.0022 &  \cellcolor{LightPeach}0.0013  & \cellcolor{LightPeach}0.0041  & \cellcolor{LightPeach}0.0009 & \cellcolor{LightPeach}0.0016 & \cellcolor{LightPeach}0.0024 \\
    5-Cycles & 0.0490 &  0.0402  & 0.0133  &  \cellcolor{LightPeach}0.0036 & \cellcolor{LightPeach}0.0028 & \cellcolor{LightPeach}0.0039\\
    6-Cycles  & 0.0495 &  0.0439  & 0.0238  &  \cellcolor{LightPeach}0.0071 & \cellcolor{LightPeach}0.0082 & \cellcolor{LightPeach} 0.0075\\
  \bottomrule
\end{tabular}}
\end{minipage}

\vspace{-20pt}
\end{table}

\subsection{Substructure counting}

\textbf{Datasets}. To verify the substructure counting power of N$^{2}$-GNN, we select the synthetic dataset from~\citep{zhao2022from, huang2023boosting}. The dataset consists of 5000 randomly generated graphs from different distributions, which are split into training, validation, and test sets with a ratio of 0.3/0.2/0.5. The task is to perform node-level counting regression. Specifically, we choose tailed-triangle, chordal cycles, 4-cliques, 4-paths, triangle-rectangle, 3-cycles, 4-cycles, 5-cycles, and 6-cycles as target substructures.

\textbf{Models}. We compare N$^{2}$-GNN with the following baselines: Identity-aware GNN (ID-GNN)~\citep{you2021identity}, Nested GNN (NGNN)~\citep{zhang2021nested}, GNN-AK+~\citep{zhao2022from}, PPGN~\citep{maron2019provably}, I$^2$-GNN~\citep{huang2023boosting}. Results for all baselines are reported from~\citep{huang2023boosting}. 

\textbf{Results}. We report the results for counting substructures in Table~\ref{tab:count}. All results are the average normalized test MAE of three runs with different random seeds. We colored all results that are less than 0.01, which is an indication of successful counting, the same as in~\citep{huang2023boosting}. We can see that N$^{2}$-GNN achieves MAE less than 0.01 among all different substructures/cycles, which empirically verified Theorem~\ref{thm: n2fwl_counting}. Specifically, N$^{2}$-GNN achieves the best result on 4-path counting and comparable performance to I$^2$-GNN on most substructures. Moreover, N$^{2}$-GNN can count 4-clique extremely well, which is theoretically and empirically infeasible for 3-WL~\citep{yan2023efficiently} and PPGN~\citep{maron2019provably}.

\begin{table}[h]
\vspace{-15pt}
\centering
   \caption{MAE results on QM9 (smaller the better).}
  \begin{minipage}[t]{\linewidth}
  \resizebox{\textwidth}{!}{
   \label{tab:qm9}
  \begin{tabular}{@{}l|cccccc|c}
    \toprule
    Target&DTNN~\citep{wu2018moleculenet}&MPNN~\citep{wu2018moleculenet} &PPGN~\citep{maron2019provably} &NGNN~\citep{zhang2021nested}& KP-GIN$'$~\citep{feng2022how} & I$^{2}$-GNN~\citep{huang2023boosting} & N$^{2}$-GNN\\
    \midrule
    $\mu$ & 0.244 &  0.358  & \first{0.231}& 0.433 & 0.358 & 0.428 & 0.333\\
    $\alpha$ & 0.95 &  0.89  & 0.382 & 0.265 & 0.233 & 0.230 & \first{0.193}\\
    $\varepsilon_{\text{HOMO}}$ & 0.00388 &  0.00541  & 0.00276 & 0.00279  & 0.00240 & 0.00261 & \first{0.00217}\\
    $\varepsilon_{\text{LUMO}}$ & 0.00512 &  0.00623  &0.00287 & 0.00276  & 0.00236 &0.00267 & \first{0.00210}\\
    $\Delta \varepsilon$ & 0.0112 &  0.0066 &0.00406 & 0.00390 & 0.00333 & 0.00380 & \first{0.00304}\\
    $\langle R^2 \rangle$ & 17.0 &  28.5 & 16.7 & 20.1 & 16.51 & 18.64 & \first{14.47}\\
    ZPVE & 0.00172 &  0.00216  & 0.00064 &0.00015 & 0.00017 & 0.00014 & \first{0.00013} \\
    $U_0$ & 2.43 &  2.05  & 0.234 & 0.205 &  0.0682 & 0.211 & \first{0.0247}\\
    $U$ & 2.43 &  2.00  & 0.234 & 0.200 &  0.0696 & 0.206 & \first{0.0315}\\
    $H$ & 2.43 & 2.02  & 0.229 & 0.249 &  0.0641 & 0.269 & \first{0.0182}\\
    $G$ & 2.43 & 2.02  & 0.238 & 0.253 & 0.0484 & 0.261 & \first{0.0178}\\
    $C_v$ & 0.27 &  0.42  & 0.184 & 0.0811 &  0.0869 & \first{0.0730} & 0.0760\\ 
  \bottomrule
\end{tabular}}
\end{minipage}

\vspace{-12pt}
\end{table}

\subsection{Molecular properties prediction}

\begin{wraptable}[19]{L}{0.54\textwidth}

\renewcommand{\arraystretch}{0.87}
\setlength{\tabcolsep}{2.0pt}
\scriptsize
\centering
\caption{MAE results on ZINC (smaller the better).}
\vspace{-5pt}
\resizebox{0.53\textwidth}{!}{
\label{tab:zinc}
\begin{tabular}{@{}l|c|cc}
\toprule
\multirow{2}{*}{Model}  & \multirow{2}{*}{\# Param} & ZINC-Subset & ZINC-Full \\ 
 & & Test MAE & Test MAE \\ \midrule
CIN~\citep{bodnar2021weisfeiler} & \textasciitilde 100k & 0.079 $\pm$ 0.006 & \first{0.022 $\pm$ 0.002} \\ \midrule
$\delta$-$2$-GNN~\citep{morris2020weisfeiler} & - & - & 0.042 $\pm$ 0.003 \\
KC-SetGNN~\citep{zhao2022a} & - & 0.075 $\pm $ 0.003 & - \\
PPGN~\citep{maron2019provably} & - &  0.079 $\pm$ 0.005 & 0.022 $\pm$ 0.003 \\ \midrule
GD-WL~\citep{zhang2023rethinking} & 503k & 0.081 $\pm$ 0.009 & 0.025 $\pm$ 0.004 \\
GPS~\citep{rampasek2022GPS} & 424k & 0.070 $\pm $0.004 & - \\
Specformer~\citep{bo2023specformer} & \textasciitilde 500k & 0.066 $\pm$ 0.003 & - \\\midrule
NGNN~\citep{zhang2021nested} & \textasciitilde 500k & \ 0.111 $\pm$ 0.003 & 0.029 $\pm$ 0.001 \\
GNN-AK~\citep{zhao2022from} & \textasciitilde 500k & 0.093 $\pm$ 0.002 & - \\
ESAN~\citep{bevilacqua2022equivariant} & 446k &0.097 $\pm$ 0.006 & 0.025 $\pm$ 0.003 \\
SUN~\citep{frasca2022understanding} & 526k & 0.083 $\pm$ 0.003 & 0.024 $\pm$ 0.003 \\
KP-GIN$^{\prime}$~\citep{feng2022how} & 489k & 0.093 $\pm$ 0.007 & - \\
I$^{2}$-GNN~\citep{huang2023boosting} & - & 0.083 $\pm$ 0.001 & 0.023 $\pm$ 0.001\\ 
SSWL+~\citep{zhang2023complete} & 387k & 0.070 $\pm$ 0.005 & \first{0.022 $\pm$ 0.002} \\ \midrule
N$^{2}$-GNN & 316k/414k & \first{0.059 $\pm$ 0.002} & \first{0.022 $\pm$ 0.002}  \\
\bottomrule
\end{tabular}}
\end{wraptable}

\textbf{Datasets}. To evaluate the performance of N$^2$-GNN on real-world tasks, we select two popular molecular graphs datasets: QM9~\citep{wu2018moleculenet, ramakrishnan2014quantum} and ZINC~\citep{dwivedi2020benchmarkgnns}. QM9 dataset contains over 130K molecules with 12 different molecular properties as the target regression task. The dataset is split into training, validation, and test sets with a ratio of 0.8/0.1/0.1. The ZINC dataset has two variants: ZINC-subset (12k graphs) and ZINC-full (250k graphs), and the task is graph regression. 
The training, validation, and test splits for the ZINC datasets are provided.

\textbf{Models}. For QM9, we report baseline results of DTNN and MPNN from~\citep{wu2018moleculenet}. We further adopt PPGN~\citep{maron2019provably}, NGNNs~\citep{zhang2021nested}, KP-GIN$\prime$~\citep{feng2022how}, I$^{2}$-GNN~\citep{huang2023boosting}. We report results of PPGN, NGNN, and KP-GIN$\prime$ from~\citep{feng2022how} and results of I$^{2}$-GNN from~\citep{huang2023boosting}. For ZINC, the baseline including CIN~\citep{bodnar2021weisfeiler}, $\delta$-2-GNN~\citep{morris2020weisfeiler}, KC-SetGNN~\citep{zhao2022a}, PPGN~\citep{maron2019provably}, Graphormer-GD~\citep{zhang2023rethinking}, GPS~\citep{rampasek2022GPS}, Specformer~\citep{bo2023specformer}, NGNN~\citep{zhang2021nested}, GNN-AK-ctx~\citep{zhao2022from}, ESAN~\citep{bevilacqua2022equivariant}, SUN~\citep{frasca2022understanding}, KP-GIN$\prime$~\citep{feng2022how}, I$^2$-GNN~\citep{huang2023boosting}, SSWL+~\citep{zhang2023complete}. We report results of all baselines from~\citep{zhang2023complete, feng2022how, puny2023equivariant, zhao2022a, rampasek2022GPS, bo2023specformer, huang2023boosting}. 

\textbf{Results}. For QM9, we report the single-time final test MAE in Table~\ref{tab:qm9}. We can see that N$^{2}$-GNN outperforms all other methods in almost all targets with a large margin. Particularly, we achieve $63.8\%$, $54.7\%$, $71.6\%$, and $63.2\%$ improvement comparing to the second-best results on target $U_0$, $U$, $H$, and $G$ respectively. For ZINC-Subset and ZINC-Full, we report the average test MAE and standard deviation of 10 runs with different random seeds in Table~\ref{tab:zinc}. We can see N$^2$-GNN still surpasses all previous methods with only the smallest model parameter size among all baselines. It achieves $10.6\%$ improvement over the second-best methods on ZINC-Subset and the same performance over the previous SOTA on ZINC-full. All these results demonstrate the superiority of N$^2$-GNN on real-world tasks.

\section{Limitations, future works, and conclusions}
\textbf{Limitations:} For $(k, t)$-FWL, although we can control the space complexity by fixing the $k$, the time complexity of $(k, t)$-FWL will grow exponentially with the increase of $t$ (as discussed in Appendix~\ref{app:proof_ktfwl}). Therefore, the resulting model can still be impractical when $t$ is large. We believe this result is evidence of the "no free lunch" in developing a more expressive GNN model. For $(k, t)$-FWL+, although we demonstrate its capability by implementing many instances corresponding to different existing GNN models, it is still unclear what is the whole space of $(k, t)$-FWL+, especially the space of $ES(\mathbf{v})$. Moreover, the quantitative expressiveness analysis of $(k, t)$-FWL+ is still unexplored. Finally, for N$^2$-GNN, the practical complexity can still be unbearable, especially for dense graphs if the aggregation is implemented in a parallel way, as we need to aggregate many more neighbors for each tuple than normal MPNN. It also introduces the optimization issue on N$^2$-GNN, which makes it hard to achieve its theoretical expressiveness. In Appendix~\ref{app:proof_ktfwl},~\ref{app:proof_ktfwl+}, and~\ref{app:n2gnn_more}, we provide more detailed discussion on the limitation of $(k, t)$-FWL, $(k, t)$-FWL+, and N$^2$-GNN, respectively.

\textbf{Future works}:
There are several directions that are worth further exploration. First, can we characterize the whole space of $ES(\mathbf{v})$ and thus can theoretically and quantitatively analyze the expressive power of $(k, t)$-FWL+. Further, how to design advanced model architecture to achieve the theoretical power of $(k, t)$-FWL+. Finally, can we provide a systematic implementation code package and practical usage of different downstream tasks such that researchers can easily develop an instance of $(k, t)$-FWL+ that can best fit their downstream tasks? We leave these to our future work.

\textbf{Conclusions:} In this work, we propose $(k, t)$-FWL+, a flexible and powerful extension of $k$-FWL. First, $(k, t)$-FWL+ expands the tuple aggregation style in $k$-FWL. We theoretically prove that given any fixed pace complexity of $O(n^k)$, $(k, t)$-FWL+ can still construct an expressiveness hierarchy up to solving the graph isomorphism problem. Second, $(k, t)$-FWL+ extends the global neighborhood definition in $k$-FWL to any equivariant set, which enables a finer-grained design space. We show that the $(k, t)$-FWL+ framework can implement many existing powerful GNN models with matching expressiveness. We further implement an instance named N$^2$-GNN which is both practically powerful and theoretically expressive. Theoretically, it partially outperforms 3-WL but only requires $O(n^2)$ space. Empirically, it achieves new SOTA on BREC, ZINC-Subset, and ZINC-Full datasets. We envision $(k, t)$-FWL+ can serve as a promising general framework for designing highly effective GNNs tailored to real-world problems.
\section{Acknowledgments}
Jiarui Feng, Lecheng Kong, Hao Liu, and Yixin Chen are supported by NSF grant CBE-2225809. Muhan Zhang is partially supported by the National Natural Science Foundation of China (62276003) and Alibaba Innovative Research Program.
\medskip

\newpage
\bibliography{references}
\bibliographystyle{unsrtnat}

\newpage
\appendix
{\hypersetup{linkcolor=mydarkblue} 
\renewcommand \thepart{} 
\renewcommand \partname{}
\part{Appendix} 
\setcounter{secnumdepth}{4}
\setcounter{tocdepth}{4}
\parttoc 
}

\newpage

\section{Additional preliminaries}
\label{app:additional_def}
In this section, we provide additional preliminaries.

\subsection{Additional definitions}

\begin{definition}
When we say we \textbf{select a $t$-tuple} $\mathbf{w}^{\prime}=(w^{\prime}_1, w^{\prime}_2, \cdots, w^{\prime}_t)$ from a $p$-tuple $\mathbf{v}=(v_1,v_2,\cdots,v_p)$ ($p \geq t$), it means that we select $t$ unique indices $(i_1, i_2, \cdots, i_t)$ from $[p]$ such that $i_1 < i_2 <\cdots < i_t$. Next,  to construct $\mathbf{w}^{\prime}$, we let $w^{\prime}_{1}=v_{i_1}, \ldots,w^{\prime}_{t}=v_{i_t}$. 
When we say we \textbf{select all possible $t$-tuple} $\mathbf{w}^{\prime}$ from a $p$-tuple $\mathbf{v}$, we enumerate all possible $t$ indices and construct $t$-tuples correspondingly. There are $\left(\begin{array}{c} p  \\ t \end{array}\right)$ different selection results and we denote $\tilde{P}_{t}(\mathbf{v})$ as 
the set of all possible selections. Finally, when we say \textbf{select all possible $t$-tuple} $\mathbf{w}^{\prime}$ from a $p$-tuple $\mathbf{v}$ \textbf{with order}, we further assume that the selection process is taken with a predefined order such that the sequence of selected $t$ indices will be the same if we repeat the selection. For example, we can always assume we start from indices $(1,2,\ldots, t)$. Next, fix other indices and enumerate all possible $i_t$ in ascending order. Then, increase $i_{t-1}$ by 1 and repeat the procedure. If we select all possible 2-tuple from $(v_1,v_2,v_3)$, the result is $((v_1, v_2), (v_1, v_3), (v_2, v_3))$. We denote $P_{t}(\mathbf{v})$ as the tuple of the selection result.
\end{definition}

\begin{definition}
\label{def:neighbor_tuple}
\textbf{Neighborhood tuple} $Q^{F}_{\mathbf{w}}(\mathbf{v})$ is a generalization of neighbor set $Q^{F}_{w}(\mathbf{v})$ in $k$-FWL. Briefly speaking, the neighborhood tuple enumerates all possible combinations of elements in $\mathbf{v}$ and $\mathbf{w}$ follows a pre-defined order. Suppose the length of the tuple is $k$ and $t$ for $\mathbf{v}$ and $\mathbf{w}$, respectively. To construct the neighborhood tuple $Q^{F}_{\mathbf{w}}(\mathbf{v})$, we enumerate $m$ from 0 to $min(k, t)$. For each $m$, we construct a sub-tuple, and the final neighborhood tuple $Q^{F}_{\mathbf{w}}(\mathbf{v})$ is the concatenation of all sub-tuples.  Given a particular $m$, the sub-tuple is constructed by selecting all possible $m$-tuple $\mathbf{w}^{\prime}=(w^{\prime}_1, \cdots, w^{\prime}_m)$ from $t$-tuple $\mathbf{w}$ with order to construct a tuple $P_{m}(\mathbf{w})$. At the same time, we select all possible $m$ indices $\mathbf{i}^{\prime}=(i_1^{\prime},\ldots, i_m^{\prime})$ from $\mathbf{i}=(1,2,\ldots, k)$ with order to construct another tuple $P_{m}(\mathbf{i})$. Finally, for each $(\mathbf{w}^{\prime}, \mathbf{i}^{\prime}) \in P_{m}(\mathbf{w}) \times P_{m}(\mathbf{i})$, we add a tuple to the resulted sub-tuple by replacing $v_{i^{\prime}_{o}}$ with $w^{\prime}_{o}$ for $o = 1, 2, \ldots, m$.  The neighborhood tuple is constructed by concatenating all sub-tuples. The length of the neighborhood tuple is $\sum^{min(k, t)}_{m=0}\left(\begin{array}{c} k  \\ m \end{array}\right) \times \left(\begin{array}{c} t  \\ m \end{array}\right)$.
\end{definition}

\begin{definition}
\label{def:algorithm_compare}
Let $F_1$ and $F_2$ be two color refinement algorithms, and denote $\mathcal{C}_i(G), i \in \{1, 2\}$ as the color histogram computed by $F_i$ for graph $G$ after convergence. We say:
\begin{itemize}[leftmargin=20pt]
    \item $F_1$ is \textbf{more powerful} than $F_2$, denoted as $F_2 \preceq F_1$, if for any pair of graph $G$ and $H$, $\mathcal{C}_1(G) = \mathcal{C}_1(H)$ also implies $\mathcal{C}_2(G) = \mathcal{C}_2(H)$.
    \item $F_1$ is \textbf{as powerful as} $F_2$, denoted as $F_1 \simeq F_2$, if we have $F_1 \preceq F_2$ and $F_2 \preceq F_1$. 
    \item $F_1$ is \textbf{strictly more powerful} than $F_2$, denoted as $F_2 \prec F_1$, if we have $F_2 \preceq F_1$ and there exist graphs $G$ and $H$ such that $\mathcal{C}_1(G) \neq \mathcal{C}_1(H)$ and $\mathcal{C}_2(G) = \mathcal{C}_2(H)$.
    \item $F_1$ and $F_2$ are \textbf{incomparable}, denoted as $F_1 \nsim F_2$, if neither $F_1 \preceq F_2$ nor $F_2 \preceq F_1$ holds.
        
\end{itemize}
\end{definition}

\begin{remark}
\label{rem:stable_color}
To prove that $F_1$ is more powerful than $F_2$, one way is to show that the color of node/tuple cannot be further refined by $F_2$ if the color output from $F_1$ is already stable (convergence). This was also indicated in Remark B.2 in~\citep{zhang2023complete}. Note that the initial color of $F_1$ should be at least as powerful as the initial color of $F_2$. Namely, if we use the initial color to compute the color histogram, we have $F_2 \preceq F_1$.
\end{remark}

\subsection{More on Weisfeiler-Lehman test and Folklore Weisfeiler-Lehman test}
First, we restate the color update equation of both $k$-WL and $k$-FWL.
\begin{gather*}
  \textbf{WL:}\quad \mathcal{C}^{l}_{kwl}(\mathbf{v})=\text{HASH} \left( \mathcal{C}^{l-1}_{kwl}(\mathbf{v}), \left(\lbbr \mathcal{C}^{l-1}_{kwl}(\mathbf{u})|\mathbf{u} \in Q_j(\mathbf{v})\rbbr |\ j \in [k]\right) \right), \\
  \textbf{FWL:}\quad \mathcal{C}^{l}_{kfwl}(\mathbf{v})= \text{HASH}\left( \mathcal{C}^{l-1}_{kfwl}(\mathbf{v}), \lbbr \left ( \mathcal{C}^{l-1}_{kfwl}(\mathbf{u})|\mathbf{u} \in Q_{w}^{F}(\mathbf{v})\right)|w\in V(G) \rbbr\right).
\end{gather*}
The first step of both $k$-WL and $k$-FWL is to encoder the isomorphic type of tuple $\mathbf{v}=(v_1,v_2,...,v_k)$ as initial color such that for any two tuples $\mathbf{u}, \mathbf{v} \in v^{k}(G)$, $\mathcal{C}^{0}_{kfwl}(\mathbf{v})=\mathcal{C}^{0}_{kfwl}(\mathbf{u})$ and $\mathcal{C}^{0}_{kwl}(\mathbf{v})=\mathcal{C}^{0}_{kwl}(\mathbf{u})$  if and only if the ordered subgraphs induced by two tuples have the same isomorphism type. Namely, $\forall i,j \in [k]$ (1) $l_G(v_i)=l_G(u_i)$ and (2) $(v_i, v_j)\in E(G) \iff (u_i, u_j)\in E(G)$. Another thing is that the $\text{HSAH}$ here is an injective function. As long as the input is different, the output of $\text{HASH}$ is different. This will serve as a basic concept in the later proof. The algorithm converge means that $\forall \mathbf{v}\in V^{k}(G)$, we have $\mathcal{C}_{kwl}^{l+1}(\mathbf{v})=\mathcal{C}_{kwl}^{l}(\mathbf{v})$ or $\mathcal{C}_{kfwl}^{l+1}(\mathbf{v})=\mathcal{C}_{kfwl}^{l}(\mathbf{v})$ . Denote the convergent iteration as $\infty$. After the algorithm convergence, the color histogram can be computed by another injective function:
\begin{equation}
\label{eq:kwl_pool}
\mathcal{C}_{kfwl}(G)=\text{HASH}(\lbbr \mathcal{C}^{\infty}_{kfwl}(\mathbf{v})| \mathbf{v} \in V^{k}(G)\rbbr).
\end{equation}
Here we use $k$-FWL as an example, but the procedure for other color refinement algorithms is the same.

\section{Additional discussion on $(k, t)$-FWL}
\label{app:proof_ktfwl}
\subsection{Detailed proofs for $(k, t)$-FWL}
In this section, we provide all detailed proofs related to $(k, t)$-FWL. Let $\mathbf{p}=(\mathbf{v}, \mathbf{w})$ be a new tuple that is the concatenation of a $k$-tuple $\mathbf{v}$ and a $t$-tuple $ 
\mathbf{w}$. Let $\mathcal{C}^{0}_{(k+t)wl}(\mathbf{p})$ be the isomorphic type of tuple $\mathbf{p}$. We start to prove the expressive power of $(k, t)$-FWL with the first lemma:

\begin{lemma}
\label{lem:initial_color}
Suppose $k \geq 2$ and $t\geq 1$. Given two graphs $G$, $H$, two tuples $\mathbf{v}_1 \in V^{k}(G)$,$ \mathbf{v}_2 \in V^{k}(H)$, and another two tuples  $\mathbf{w}_1 \in V^{t}(G)$,$ \mathbf{w}_2 \in V^{t}(H)$. Let $\mathbf{p}_1=(\mathbf{v}_1, \mathbf{w}_1)$, $\mathbf{p}_2=(\mathbf{v}_2, \mathbf{w}_2)$. We have $\mathcal{C}^{0}_{(k+t)wl}(\mathbf{p}_1)=\mathcal{C}^{0}_{(k+t)wl}(\mathbf{p}_2) \iff \left( \mathcal{C}^{0}_{ktfwl}(\mathbf{u}_1)|\mathbf{u}_1 \in Q^{F}_{\mathbf{w}_1}(\mathbf{v}_1)\right) = \left( \mathcal{C}^{0}_{ktfwl}(\mathbf{u}_2)|\mathbf{u}_2 \in Q^{F}_{\mathbf{w}_2}(\mathbf{v}_2)\right)$.
\end{lemma}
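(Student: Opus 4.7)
The plan is to exploit the fact that the neighborhood tuple $Q^F_{\mathbf{w}}(\mathbf{v})$ is built by a fixed, $(k,t)$-dependent sequence of index selections on the concatenation $\mathbf{p}=(\mathbf{v},\mathbf{w})$. Thus the $o$-th entry of $Q^F_{\mathbf{w}_1}(\mathbf{v}_1)$ is the sub-tuple of $\mathbf{p}_1$ at the same positions as the $o$-th entry of $Q^F_{\mathbf{w}_2}(\mathbf{v}_2)$ is of $\mathbf{p}_2$. Since the initial $(k,t)$-FWL color of a $k$-tuple is precisely the ordered isomorphism type of the induced sub-graph on that tuple, the lemma reduces to: the family of isomorphism types of a fixed collection of $k$-sub-tuples of $\mathbf{p}$ agrees for $\mathbf{p}_1$ and $\mathbf{p}_2$ if and only if the full $(k+t)$-tuple isomorphism types agree.

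For the forward direction, if $\mathcal{C}^0_{(k+t)wl}(\mathbf{p}_1)=\mathcal{C}^0_{(k+t)wl}(\mathbf{p}_2)$, then the position-respecting isomorphism between $\mathbf{p}_1$ and $\mathbf{p}_2$ restricts to any fixed index set of size $k$. Applying this to each selection used in the construction of $Q^F_{\mathbf{w}}(\mathbf{v})$ gives equal isomorphism types, hence equal initial colors, at every position of the neighborhood tuple, which yields the required tuple equality.

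For the backward direction, I will show that every pair of indices $(i,j)\in[k+t]^2$ is jointly realized at some two fixed positions of some element of $Q^F_{\mathbf{w}}(\mathbf{v})$. Three cases cover this: if $i,j\in[k]$ the $m=0$ element $\mathbf{v}$ itself works; if exactly one of $i,j$ lies in $\{k+1,\ldots,k+t\}$, choose an $m=1$ element that inserts the corresponding $\mathbf{w}$-coordinate into a position of $\mathbf{v}$ other than the one we wish to retain (possible because $k\geq 2$); if both $i,j\in\{k+1,\ldots,k+t\}$, choose an $m=2$ element inserting both $\mathbf{w}$-coordinates (possible because $k\geq 2$, and vacuous if $t=1$). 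Each such element's initial color encodes the labels and the edge bit of the corresponding pair, so equality of the tuple of initial colors forces the label of each position and the edge bit of each pair of $\mathbf{p}_1$ and $\mathbf{p}_2$ to agree. Aggregating over all pairs gives $\mathcal{C}^0_{(k+t)wl}(\mathbf{p}_1)=\mathcal{C}^0_{(k+t)wl}(\mathbf{p}_2)$.

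The main obstacle is purely bookkeeping: I need to match the deterministic orderings $P_m(\mathbf{w})$ and $P_m(\mathbf{i})$ used in Definition~\ref{def:neighbor_tuple} so that, for each of the three cases above, I can exhibit an explicit position $o$ in the neighborhood tuple and explicit positions $a,b$ within the resulting $k$-tuple such that entry $a$ is $p_i$ and entry $b$ is $p_j$. Once this correspondence is made precise, the implication is immediate from the definition of the isomorphism-type initial color and the injectivity of the outer tuple comparison.
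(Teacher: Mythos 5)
Your proposal is correct and follows essentially the same route as the paper's proof: both directions rest on the fixed positional correspondence between entries of $Q^{F}_{\mathbf{w}}(\mathbf{v})$ and sub-tuples of $\mathbf{p}=(\mathbf{v},\mathbf{w})$, together with the fact that initial colors are ordered isomorphism types. Your three-case analysis for the backward direction (using $m=0$, $m=1$ with $k\geq 2$, and $m=2$) merely makes explicit the step the paper states briefly as "for any $i,j\in[k+t]$ we can find $p_{1i},p_{1j}$ in some $\mathbf{u}_1$", so it is a welcome elaboration rather than a different argument.
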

\begin{proof}
First, we prove the forward direction. That is, if $\mathcal{C}^{0}_{(k+t)wl}(\mathbf{p}_1)=\mathcal{C}^{0}_{(k+t)wl}(\mathbf{p}_2)$, we have $\left( \mathcal{C}^{0}_{ktfwl}(\mathbf{u}_1)|\mathbf{u}_1 \in Q^{F}_{\mathbf{w}_1}(\mathbf{v}_1)\right) = \left( \mathcal{C}^{0}_{ktfwl}(\mathbf{u}_2)|\mathbf{u}_2 \in Q^{F}_{\mathbf{w}_2}(\mathbf{v}_2)\right)$. Recall the definition of the initial color,  if $\mathcal{C}^{0}_{(k+t)wl}(\mathbf{p}_1)=\mathcal{C}^{0}_{(k+t)wl}(\mathbf{p}_2)$, we must have the ordered subgraph induced by $\mathbf{p}_1$ is isomorphic to the ordered subgraph induced by $\mathbf{p}_2$. Namely, $\forall i,j \in [k +t]$, $(p_{1i}, p_{1j}) \in E(G) \iff (p_{2i}, p_{2j}) \in E(H)$ and $l_G(p_{1i}) = l_G(p_{2i})$. Next, given Definition~\ref{def:neighbor_tuple}, we know that for any neighborhood tuple, there exists a single indices mapping function $O$ that given $\mathbf{p}=(\mathbf{v}, \mathbf{w})$, for any $\mathbf{u} \in Q^{F}_{\mathbf{w}}(\mathbf{v})$, we have $O(\mathbf{u}, Q^{F}_{\mathbf{w}}(\mathbf{v}))=(o_1,\ldots, o_k)$ such that $(u_1,\ldots,u_k)=(p_{o_1},\ldots, p_{o_k})$. Further, the construction of the neighborhood tuple follows the same ordering rule no matter the input. This means that, for both $\mathbf{u}_1 \in Q^{F}_{\mathbf{w}_1}(\mathbf{v}_1)$ and $\mathbf{u}_2 \in Q^{F}_{\mathbf{w}_2}(\mathbf{v}_2)$, as long as $\mathbf{u}_1$ and $\mathbf{u}_2$ are in the same position of neighborhood tuple $Q^{F}_{\mathbf{w}_1}(\mathbf{v}_1)$ and $Q^{F}_{\mathbf{w}_2}(\mathbf{v}_2)$, we must have $O(\mathbf{u}_1, Q^{F}_{\mathbf{w}_1}(\mathbf{v}_1))=O(\mathbf{u}_2,Q^{F}_{\mathbf{w}_2}(\mathbf{v}_2))=(o_1,\ldots, o_k)$. Finally, as for any $o_i$ and $o_j$, we have $(p_{1o_i},p_{1o_j}) \in E(G) \iff (p_{2o_i},p_{2o_j}) \in E(H)$, $l_G(p_{1o_i})=l_G(p_{2o_i})$, and $l_G(p_{1o_j})=l_G(p_{2o_j})$, we conclude that $\mathcal{C}^{0}_{ktfwl}(\mathbf{u}_1)=\mathcal{C}^{0}_{ktfwl}(\mathbf{u}_2)$. As it is the same for any $\mathbf{u}_1 \in Q^{F}_{\mathbf{w}_1}(\mathbf{v}_1)$ and $\mathbf{u}_2 \in Q^{F}_{\mathbf{w}_2}(\mathbf{v}_2)$, it must holds that $\left( \mathcal{C}^{0}_{ktfwl}(\mathbf{u}_1)|\mathbf{u}_1 \in Q^{F}_{\mathbf{w}_1}(\mathbf{v}_1)\right) = \left( \mathcal{C}^{0}_{ktfwl}(\mathbf{u}_2)|\mathbf{u}_2 \in Q^{F}_{\mathbf{w}_2}(\mathbf{v}_2)\right)$. 
The proof of the backward direction is similar to the forward direction. As for any $i, j\in [k+t]$, we can find $p_{1i}, p_{1j}$ exist in some $\mathbf{u}_1 \in Q^{F}_{\mathbf{w}_1}(\mathbf{v}_1)$ and $p_{2i}, p_{2j}$ in the corresponding $\mathbf{u}_2 \in Q^{F}_{\mathbf{w}_2}(\mathbf{v}_2)$. Since we have $\mathcal{C}^{0}_{ktfwl}(\mathbf{u}_1)=\mathcal{C}^{0}_{ktfwl}(\mathbf{u}_2)$, we can easily conclude that $(p_{1i}, p_{1j})\in E(G) \iff (p_{2i}, p_{2j})\in E(G)$, $l_G(p_{1i})=l_G(p_{2i})$, and $l_G(p_{1j})=l_G(p_{2j})$. This concludes the proof.
\end{proof}

Lemma~\ref{lem:initial_color} indicates that $(k, t)$-FWL can recover the isomorphism type of high-order tuple, which is impossible in the original $k$-FWL. Therefore, as long as $t$ is large enough, we can recover the isomorphism type of the whole graph. Further, Lemma~\ref{lem:initial_color} also implies that we can resort to high-order tuples to prove the expressive power of $(k, t)$-FWL. Define the $H_t(\mathbf{v})=\{(\mathbf{v}, \mathbf{w}) | \mathbf{w} \in V^{t}(G)\}$, we slightly rewrite Equation~(\ref{eq:ktfwl}):
\begin{gather}
\label{eq:ktfwl_rw_1}
\mathcal{C}^{l}_{ktfwl}(\mathbf{p}) = \text{HASH} \left( \left( \mathcal{C}^{l-1}_{ktfwl}(\mathbf{u})|\mathbf{u} \in Q^{F}_{\mathbf{w}}(\mathbf{v})\right) \right),\\
\label{eq:ktfwl_rw_2}
\mathcal{C}^{l}_{ktfwl}(\mathbf{v}) = \text{HASH}\left(\mathcal{C}^{l-1}_{ktfwl}(\mathbf{v}), \lbbr  \mathcal{C}^{l}_{ktfwl}(\mathbf{p})| \mathbf{p} \in H_t(\mathbf{v})\rbbr_t\right). 
\end{gather}

It is easy to verify the rewritten will not affect the expressive power of $(k,t)$-FWL. Now, we are ready to prove the Proposition~\ref{pro:solve_isomporhism} in the main paper. We restate it here:
\begin{proposition}
For $k\geq 2$ and $t \geq 1$, if $t\geq n-k$, $(k, t)$-FWL can solve the graph isomorphism problems with the size of the graph less than or equal to $n$.
\end{proposition}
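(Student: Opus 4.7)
The plan is to establish both directions of the graph isomorphism test for $(k,t)$-FWL on graphs of size at most $n$. The forward direction --- isomorphic graphs produce identical histograms --- is the standard equivariance argument: every ingredient in Equation~(\ref{eq:ktfwl}), namely the initial isomorphism-type colors, the neighborhood-tuple construction $Q^{F}_{\mathbf{w}}(\mathbf{v})$, and the hierarchical multiset $\lbbr\cdot\rbbr_t$, is equivariant under $S_n$, so any isomorphism $G\simeq H$ permutes $k$-tuples consistently and preserves colors at every iteration. The real content lies in the converse, which is where the hypothesis $t\geq n-k$ is used.

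For the converse, I would first combine the rewritten updates in Equations~(\ref{eq:ktfwl_rw_1})--(\ref{eq:ktfwl_rw_2}) with Lemma~\ref{lem:initial_color} to conclude that after a single iteration $\mathcal{C}^{1}_{ktfwl}(\mathbf{p})$ encodes exactly the isomorphism type of the $(k+t)$-tuple $\mathbf{p}=(\mathbf{v},\mathbf{w})$. Consequently $\mathcal{C}^{1}_{ktfwl}(\mathbf{v})$ determines the hierarchical multiset of isomorphism types of all $(k+t)$-tuples whose first $k$ coordinates are $\mathbf{v}$. The assumption $t\geq n-k$ then implies that for every $\mathbf{v}\in V^k(G)$ there is some $\mathbf{w}\in V^t(G)$ such that the entries of $\mathbf{p}=(\mathbf{v},\mathbf{w})$ collectively cover $V(G)$; I will call such a tuple \emph{full}. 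The isomorphism type of a full tuple records a surjective labeling of $V(G)$ together with the full adjacency and color data, and hence determines the labeled graph up to isomorphism.

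Now suppose the graph-level histograms of $(k,t)$-FWL on $G$ and $H$ agree. Cardinality of the tuple domains already forces $|V(G)|=|V(H)|$. Pick any $\mathbf{v}\in V^k(G)$ together with a full completion $\mathbf{p}=(\mathbf{v},\mathbf{w})$; by assumption there is some $\mathbf{v}'\in V^k(H)$ with the same color, hence with an identical hierarchical multiset of $(k+t)$-tuple isomorphism types, and in particular some matching $\mathbf{p}'=(\mathbf{v}',\mathbf{w}')$ of the same isomorphism type as $\mathbf{p}$. Because $\mathbf{p}$ is full and isomorphism type is preserved, $\mathbf{p}'$ must also be full in $H$, and the positional map $p_i\mapsto p'_i$ descends to an isomorphism $G\simeq H$.

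The main obstacle I anticipate is the bookkeeping around repeated entries: since $k+t$ may exceed the true graph size, a full tuple necessarily lists some vertices more than once. One has to verify that the isomorphism type of $\mathbf{p}$ records the equality pattern $\{(i,j):p_i=p_j\}$ on positions, so that any matching $\mathbf{p}'$ shares the same pattern and the induced assignment on distinct vertices is a well-defined bijection preserving adjacency and colors. Once this is checked, the backward direction goes through and the proposition follows.
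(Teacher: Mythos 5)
Your proposal is correct and takes essentially the same route as the paper: a single iteration together with Lemma~\ref{lem:initial_color} makes the refined colors encode the isomorphism types of all $(k+t)$-tuples, and since $t\geq n-k$ some tuple covers every vertex, so equal color histograms force an isomorphism of the underlying graphs. The paper's proof leaves the final covering step (your ``full tuple'' and equality-pattern bookkeeping) implicit as ``easy to verify,'' which you simply spell out more explicitly.
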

\begin{proof}
Here we show that with only 1 iteration, $(k, t)$-FWL can solve the graph isomorphism problem with the size of graph $n\leq t+k$. Given Lemma~\ref{lem:initial_color}, for any $\mathbf{v} \in V^{k}(G)$, we have $\mathcal{C}^{1}_{ktfwl}(\mathbf{p}_1)=\mathcal{C}^{1}_{ktfwl}(\mathbf{p}_2) \iff \mathcal{C}^{0}_{(k+t)wl}(\mathbf{p}_1)=\mathcal{C}^{0}_{(k+t)wl}(\mathbf{p}_2)$ for any $\mathbf{p}_1,\mathbf{p}_2 \in H_t(\mathbf{v})$. Therefore $\lbbr \mathcal{C}^{1}_{ktfwl}(\mathbf{p})| \mathbf{p} \in H_t(\mathbf{v})\rbbr$ is equivalent to $\lbbr \mathcal{C}^{0}_{(k+t)wl}(\mathbf{p})| \mathbf{p} \in H_t(\mathbf{v})\rbbr$, which is essentially the multiset of isomorphism type of $\mathbf{p} \in H_t(\mathbf{v})$. According to Equation~(\ref{eq:ktfwl_rw_2}), $\mathcal{C}^1_{ktfwl}(\mathbf{v})$ can injectively encode this information. Next, the color histogram of a graph is computed by $\mathcal{C}_{ktfwl}(G)=\text{HASH}(\lbbr \mathcal{C}^{1}_{ktfwl}(\mathbf{v}) |\mathbf{v} \in V^{k}(G)\rbbr)$. It is easy to verify that $\mathcal{C}_{ktfwl}(G)$ can encode the isomorphism type of all $(k+t)$-tuple $\mathbf{p} \in V^{k+t}(G)$. Then, if the graph has size $n \leq k+t$, $\mathcal{C}_{ktfwl}(G)$ can give different color histograms for any non-isomorphic graph pairs. This concludes the proof.
\end{proof}

 Before we prove the relationship between $(k,t)$-FWL and $(k+t)$-WL, we first show some properties of $(k+t)$-WL. Let $\mathcal{C}^{\infty}_{(k+t)wl}(\mathbf{p})$ be the stable color of tuple $\mathbf{p} \in V^{k+t}(G)$ output from $(k+t)$-WL. We first show that: 
 \begin{lemma}
 \label{lem:sequential_pool}
Suppose $k\geq 2$, $t \geq 1$. Given a graph $G$, for any $\mathbf{v}_1, \mathbf{v}_2 \in V^{k}(G)$, $\mathbf{w}_1, \mathbf{w}_2 \in V^{t}(G)$, $\mathbf{p}_1=(\mathbf{v}_1, \mathbf{w}_1), \mathbf{p}_2=(\mathbf{v}_2, \mathbf{w}_2)$, $\mathcal{C}^{\infty}_{(k+t)wl}(\mathbf{p}_1)=\mathcal{C}^{\infty}_{(k+t)wl}(\mathbf{p}_2)$ implies that $\lbbr \mathcal{C}^{\infty}_{(k+t)wl}(\mathbf{q}_1) | \mathbf{q}_1 \in H_t(\mathbf{v}_1)\rbbr_t = \lbbr \mathcal{C}^{\infty}_{(k+t)wl}(\mathbf{q}_2) | \mathbf{q}_2 \in H_t(\mathbf{v}_2)\rbbr_t$.
 \end{lemma}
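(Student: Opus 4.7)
The plan is to prove the lemma by induction on the depth of the hierarchical multiset, driven by the stability (fixed-point) property of the $(k+t)$-WL coloring. The right statement for the induction is a strengthening: for each $0\le s\le t$, whenever there exist $\mathbf{u}_1,\mathbf{u}_2\in V^s(G)$ with $\mathcal{C}^{\infty}_{(k+t)wl}(\mathbf{v}_1,\mathbf{u}_1,\mathbf{w}^*_1)=\mathcal{C}^{\infty}_{(k+t)wl}(\mathbf{v}_2,\mathbf{u}_2,\mathbf{w}^*_2)$ for some $\mathbf{w}^*_i\in V^{t-s}(G)$, the $s$-level hierarchical multiset $A^{(s)}(\mathbf{v},\mathbf{w}^*)$ obtained by varying the first $s$ of the last $t$ coordinates (with position $k+1$ innermost, $k+s$ outermost, and the trailing positions pinned to $\mathbf{w}^*$) satisfies $A^{(s)}(\mathbf{v}_1,\mathbf{w}^*_1)=A^{(s)}(\mathbf{v}_2,\mathbf{w}^*_2)$. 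Taking $s=t$ with empty $\mathbf{w}^*$ recovers the lemma.

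The only ingredient beyond induction is the standard consequence of stability: if two $(k+t)$-tuples share a stable color, then applying one more $(k+t)$-WL step cannot change that, and injectivity of $\text{HASH}$ forces, for every $j\in[k+t]$, the $j$-th neighbor multisets $\lbbr \mathcal{C}^{\infty}_{(k+t)wl}(\mathbf{p}_{1,w/j}) \mid w\in V(G)\rbbr=\lbbr \mathcal{C}^{\infty}_{(k+t)wl}(\mathbf{p}_{2,w/j}) \mid w\in V(G)\rbbr$ to be equal, hence yielding a bijection $\pi:V(G)\to V(G)$ matching the colors pointwise. The base $s=0$ of the induction is immediate because $A^{(0)}$ is just the stable color itself. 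For the step $s-1\to s$, split $\mathbf{u}_i=(\mathbf{u}_i',w_{i,s})$ and apply stability at position $k+s$ to the pair $(\mathbf{v}_1,\mathbf{u}_1,\mathbf{w}^*_1)$ and $(\mathbf{v}_2,\mathbf{u}_2,\mathbf{w}^*_2)$; this produces a bijection $\pi$ with $\mathcal{C}^{\infty}_{(k+t)wl}(\mathbf{v}_1,\mathbf{u}_1',w,\mathbf{w}^*_1)=\mathcal{C}^{\infty}_{(k+t)wl}(\mathbf{v}_2,\mathbf{u}_2',\pi(w),\mathbf{w}^*_2)$ for every $w\in V(G)$. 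Invoking the induction hypothesis for each such $w$ with the longer fixed suffixes $(w,\mathbf{w}^*_1)$ versus $(\pi(w),\mathbf{w}^*_2)$ gives $A^{(s-1)}(\mathbf{v}_1,w,\mathbf{w}^*_1)=A^{(s-1)}(\mathbf{v}_2,\pi(w),\mathbf{w}^*_2)$; assembling these into the outer multiset over $w$ and re-indexing via bijectivity of $\pi$ yields $A^{(s)}(\mathbf{v}_1,\mathbf{w}^*_1)=A^{(s)}(\mathbf{v}_2,\mathbf{w}^*_2)$.

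The real subtlety, and the thing I would expect to trip the proof up if one is careless, is aligning the nested grouping order of the hierarchical multiset with the sequential application of stability. Each outer layer of $A^{(s)}$ must correspond to peeling off exactly one coordinate via stability, while the leftover subproblem still has to present itself as a pair of $(k+t)$-tuples of equal stable color so that the induction can fire; both are guaranteed by the shape of the strengthened claim above. The pointwise bijection $\pi$ supplied by stability is essential, because after invoking the induction for each $w$ separately one still needs to collect the results into a single outer multiset, and only bijectivity of $\pi$ lets us relabel the indexing variable and match the outer multiset on the right-hand side.
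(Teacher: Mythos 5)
Your proof is correct and takes essentially the same route as the paper's: the paper likewise exploits the fixed-point property of the stable $(k+t)$-WL coloring to peel off the $t$ free coordinates one at a time (``iteratively doing the unrolling''), using the multiset equality at each position to match colors pointwise and assembling the nested multisets level by level. Your strengthened inductive statement with pinned suffixes and the explicit bijection $\pi$ is simply a more careful formalization of that same unrolling (and it handles the nesting orientation of $\lbbr\cdot\rbbr_t$ more explicitly than the paper does), so there is no substantive difference in approach.
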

 
 \begin{proof}
 Let $\mathbf{p}_1=(p_{11}, \ldots, p_{1(k+t)}), \mathbf{p}_2=(p_{21}, \ldots, p_{2(k+t)})$. As we have $\mathcal{C}^{\infty}_{(k+t)wl}(\mathbf{p}_1)=\mathcal{C}^{\infty}_{(k+t)wl}(\mathbf{p}_2)$, given it is the stable color, we must have $\lbbr \mathcal{C}^{\infty}_{(k+t)wl}((p_{11}, \ldots, p_{1k}, w, p_{1(k+2)}, \ldots, p_{1(k+t)} ))|w\in V(G)\rbbr=\lbbr \mathcal{C}^{\infty}_{(k+t)wl}((p_{21}, \ldots, p_{2k}, w, p_{2(k+2)}, \ldots, p_{2(k+t)}))|w\in V(G)\rbbr$. Otherwise, by doing one more iteration, we will have $\mathcal{C}^{\infty +1}_{(k+t)wl}(\mathbf{p}_1) \neq \mathcal{C}^{\infty +1}_{(k+t)wl}(\mathbf{p}_2)$, which is a contradiction. Further, we can do another unrolling to have $\lbbr\lbbr \mathcal{C}^{\infty}_{(k+t)wl}((p_{11}, \ldots, p_{1k}, w_1, w_2, \ldots, p_{1(k+t)}))|w_2\in V(G)\rbbr |w_1\in V(G)\rbbr=\lbbr \lbbr \mathcal{C}^{\infty}_{(k+t)wl}((p_{21}, \ldots, p_{2k}, w_1, w_2, \ldots, p_{2(k+t)}))|w_2\in V(G)\rbbr |w_1\in V(G)\rbbr$. This implies that $\lbbr \mathcal{C}^{\infty}_{(k+t)wl}((p_{11}, \ldots, p_{1k}, w_1, w_2, \ldots, p_{1(k+t)}))|(w_1, w_2)\in V^2(G)\rbbr_2=\lbbr \mathcal{C}^{\infty}_{(k+t)wl}((p_{21}, \ldots, p_{2k}, w_1, w_2, \ldots, p_{2(k+t)}))|(w_1, w_2)\in V^2(G)\rbbr_2 $
 Therefore, by iteratively doing the unrolling, we must have $\lbbr \mathcal{C}^{\infty}_{(k+t)wl}((p_{11}, \ldots, p_{1k},w_1, \ldots, w_t)) |(w_1,\ldots, w_t)\in V^{t}(G)\rbbr_t = \lbbr \mathcal{C}^{\infty}_{(k+t)wl}((p_{21}, \ldots, p_{2k},w_1, \ldots, w_t)) |(w_1,\ldots, w_t)\in V^{t}(G)\rbbr_t$, which is exactly $\lbbr \mathcal{C}^{\infty}_{(k+t)wl}(\mathbf{q}_1) | \mathbf{q}_1 \in H_t(\mathbf{v_1})\rbbr_t = \lbbr \mathcal{C}^{\infty}_{(k+t)wl}(\mathbf{q}_2) | \mathbf{q}_2 \in H_t(\mathbf{v}_2)\rbbr_t$. This concludes the proof.
 \end{proof}
 
 Lemma~\ref{lem:sequential_pool} indicates that if the $(k,t)$-WL converges, the hierarchical set will not increase the expressive power of $(k+t)$-WL. Based on this observation, we can slightly rewrite the equation for computing the graph color histogram of $(k+t)$-WL:
 \begin{gather}
 \label{eq:k+twl_re_1}
 \mathcal{C}^{\infty}_{(k+t)wl}(\mathbf{v}) = \text{HASH}(\lbbr \mathcal{C}^{\infty}_{(k+t)wl}(\mathbf{p}) | \mathbf{p} \in H_t(\mathbf{v})\rbbr_t), \\
  \label{eq:k+twl_re_2}
 \mathcal{C}^{\infty}_{(k+t)wl}(G)=\text{HASH}(\lbbr \mathcal{C}^{\infty}_{(k+t)wl}(\mathbf{v}) |\mathbf{v} \in V^{k}(G) \rbbr),
 \end{gather}
 where $ \mathcal{C}^{\infty}_{(k+t)wl}(\mathbf{v})$ is the stable color of $k$-tuple $v$. Here we change the computation of graph color histogram by first hierarchically pooling all tuples in $H_t(\mathbf{v})$ to get a color for $\mathbf{v}$  and then pooling all tuples $\mathbf{v} \in V^k(G)$. It is straightforward that the rewritten will not change the expressive power of $(k+t)$-WL given Lemma~\ref{lem:sequential_pool}. Equation~(\ref{eq:k+twl_re_1}) and Equation~(\ref{eq:k+twl_re_2}) will be used in the latter proofs. Not surprisingly, we have similar result for $(k,t)$-FWL:
  \begin{lemma}
 \label{lem:sequential_pool_ktfwl}
Suppose $k\geq 2$, $t \geq 1$. Given a graph $G$, for any $\mathbf{v}_1, \mathbf{v}_2 \in V^{k}(G)$, $\mathbf{w}_1, \mathbf{w}_2 \in V^{t}(G)$, $\mathbf{p}_1=(\mathbf{v}_1, \mathbf{w}_1), \mathbf{p}_2=(\mathbf{v}_2, \mathbf{w}_2)$, $\mathcal{C}^{\infty}_{ktfwl}(\mathbf{p}_1)=\mathcal{C}^{\infty}_{ktfwl}(\mathbf{p}_2)$ implies that $\lbbr \mathcal{C}^{\infty}_{ktfwl}(\mathbf{q}_1) | \mathbf{q}_1 \in H_t(\mathbf{v}_1)\rbbr_t = \lbbr \mathcal{C}^{\infty}_{ktfwl}(\mathbf{q}_2) | \mathbf{q}_2 \in H_t(\mathbf{v}_2)\rbbr_t$.
 \end{lemma}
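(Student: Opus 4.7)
The plan is to mirror the argument of Lemma~\ref{lem:sequential_pool} but exploit a key structural simplification: the two-level update given by Equations~(\ref{eq:ktfwl_rw_1}) and~(\ref{eq:ktfwl_rw_2}) already bakes the hierarchical multiset $\lbbr \cdot \rbbr_t$ directly into the color of every $k$-tuple. Consequently no iterative ``unrolling'' over the $t$ extra coordinates is needed, and the whole argument reduces to two back-to-back applications of HASH-injectivity.

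First I would observe that once the $k$-tuple colors stabilise at some iteration $L$, the induced $(k+t)$-tuple colors defined by Equation~(\ref{eq:ktfwl_rw_1}) are automatically stable from iteration $L+1$ onwards, since each $\mathcal{C}^{l}_{ktfwl}(\mathbf{p})$ is a deterministic injective function of the preceding layer's $k$-tuple colors. Hence the symbols $\mathcal{C}^{\infty}_{ktfwl}(\mathbf{p}_i)$ and $\mathcal{C}^{\infty}_{ktfwl}(\mathbf{v}_i)$ in the statement are well-defined, and in particular $\mathcal{C}^{\infty}_{ktfwl}(\mathbf{v}) = \mathcal{C}^{\infty-1}_{ktfwl}(\mathbf{v})$ for every $k$-tuple $\mathbf{v}$.

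Next I would apply HASH-injectivity to Equation~(\ref{eq:ktfwl_rw_1}) at the stable iteration. The hypothesis $\mathcal{C}^{\infty}_{ktfwl}(\mathbf{p}_1) = \mathcal{C}^{\infty}_{ktfwl}(\mathbf{p}_2)$ yields componentwise equality of the ordered tuples $\bigl(\mathcal{C}^{\infty-1}_{ktfwl}(\mathbf{u}_i) \mid \mathbf{u}_i \in Q^{F}_{\mathbf{w}_i}(\mathbf{v}_i)\bigr)$ for $i=1,2$. Looking specifically at the $m=0$ slot of the neighborhood tuple, which by Definition~\ref{def:neighbor_tuple} is $\mathbf{v}_i$ itself, I extract $\mathcal{C}^{\infty-1}_{ktfwl}(\mathbf{v}_1) = \mathcal{C}^{\infty-1}_{ktfwl}(\mathbf{v}_2)$, and hence $\mathcal{C}^{\infty}_{ktfwl}(\mathbf{v}_1) = \mathcal{C}^{\infty}_{ktfwl}(\mathbf{v}_2)$ by the stability observation above.

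Finally I would invoke Equation~(\ref{eq:ktfwl_rw_2}) applied to both $\mathbf{v}_1$ and $\mathbf{v}_2$ at the stable iteration: the outputs $\mathcal{C}^{\infty}_{ktfwl}(\mathbf{v}_i)$ agree between $i=1,2$ by the previous step, and the first HASH-arguments $\mathcal{C}^{\infty-1}_{ktfwl}(\mathbf{v}_i)$ agree for the same reason. A second invocation of HASH-injectivity on this pair therefore peels off exactly the desired equality $\lbbr \mathcal{C}^{\infty}_{ktfwl}(\mathbf{q}_1) \mid \mathbf{q}_1 \in H_t(\mathbf{v}_1) \rbbr_t = \lbbr \mathcal{C}^{\infty}_{ktfwl}(\mathbf{q}_2) \mid \mathbf{q}_2 \in H_t(\mathbf{v}_2) \rbbr_t$. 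The only conceptual step is the middle one, namely noticing that the $m=0$ entry of $Q^{F}_{\mathbf{w}}(\mathbf{v})$ is $\mathbf{v}$ itself; this single observation collapses what in the $(k+t)$-WL setting of Lemma~\ref{lem:sequential_pool} required repeated $t$-fold unrolling into a single HASH inversion, so I do not anticipate any deeper obstacle.
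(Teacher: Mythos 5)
Your proposal is correct and follows essentially the same route as the paper's own proof: both hinge on the observation that $\mathbf{v}$ occupies the $m=0$ slot of $Q^{F}_{\mathbf{w}}(\mathbf{v})$, so stability of the colors of $\mathbf{p}_1,\mathbf{p}_2$ forces $\mathcal{C}^{\infty}_{ktfwl}(\mathbf{v}_1)=\mathcal{C}^{\infty}_{ktfwl}(\mathbf{v}_2)$, after which injectivity of the update in Equation~(\ref{eq:ktfwl_rw_2}) yields the equality of the hierarchical multisets. The only difference is cosmetic: you invert the HASH directly at the stable iteration, whereas the paper phrases the same step as a contradiction via one extra refinement round.
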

 \begin{proof}
 Given the definition of the neighborhood tuple, we must have $\mathbf{v}_1 \in Q^{F}_{\mathbf{w}_1}(\mathbf{v}_1)$ and $\mathbf{v}_2 \in Q^{F}_{\mathbf{w}_1}(\mathbf{v}_2)$. This means we also have $\mathcal{C}^{\infty}_{ktfwl}(\mathbf{v}_1)= \mathcal{C}^{\infty}_{ktfwl}(\mathbf{v}_2)$. Otherwise, by doing one more iteration of $(k,t)$-FWL, we will get different colors for $\mathbf{p}_1$ and $\mathbf{p}_2$, which is a contradiction. $\mathcal{C}^{\infty}_{ktfwl}(\mathbf{v}_1)= \mathcal{C}^{\infty}_{ktfwl}(\mathbf{v}_2)$ directly implies that $\lbbr \mathcal{C}^{\infty}_{ktfwl}(\mathbf{q}_1) | \mathbf{q}_1 \in H_t(\mathbf{v}_1)\rbbr_t = \lbbr \mathcal{C}^{\infty}_{ktfwl}(\mathbf{q}_2) | \mathbf{q}_2 \in H_t(\mathbf{v}_2)\rbbr_t$ given Equation~(\ref{eq:ktfwl_rw_2}) and thus conclude the proof.
 \end{proof}
 
 Next, we show another property of $(k+t)$-WL. Denote $I_{k+t}$ as a permutation group operates on the indices of a $k+t$-tuple. For $\sigma \in I_{k+t}$, we denote it operates on a $k+t$-tuple $\mathbf{p}=(p_1, \ldots, p_{k+t})$ as $\sigma \cdot \mathbf{p}=(p_{\sigma(1)},\ldots, p_{\sigma(k+t)})$. For example, for a $\sigma \in I_{3}$ with $\sigma(1)=2, \sigma(2)=1, \sigma(3)=3$, we have $\sigma \cdot (p_1, p_2, p_3) = (p_2, p_1, p_3)$. 
 \begin{lemma}
 \label{lem:k+twl_permutation}
Given two graphs $G$ and $H$, for any $\mathbf{p}_1=(p_{11}, \ldots, p_{1(k+t)}) \in V^{k+t}(G)$, $\mathbf{p}_2=(p_{21}, \ldots, p_{2(k+t)}) \in V^{k+t}(H)$ and a permutation $\sigma \in I_{k+t}$ operates on the indices of tuples, $\mathcal{C}^{l}_{(k+t)wl}(\mathbf{p}_1) = \mathcal{C}^{l}_{(k+t)wl}(\mathbf{p}_2)$ implies that  $\mathcal{C}^{l}_{(k+t)wl}(\sigma \cdot\mathbf{p}_{1}) = \mathcal{C}^{l}_{(k+t)wl}(\sigma \cdot \mathbf{p}_{2})$.
\end{lemma}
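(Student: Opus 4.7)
The plan is induction on the iteration index $l$, resting on the combinatorial identity $Q_j(\sigma \cdot \mathbf{p}) = \{\sigma \cdot \mathbf{u} : \mathbf{u} \in Q_{\sigma(j)}(\mathbf{p})\}$. This identity follows from a direct unpacking: $(\sigma \cdot \mathbf{p})_{w/j}$ carries $w$ in position $j$ and $p_{\sigma(i)}$ in every other position $i$, which is exactly the tuple obtained by first replacing $p_{\sigma(j)}$ with $w$ in $\mathbf{p}$ and then applying $\sigma$, namely $\sigma \cdot \mathbf{p}_{w/\sigma(j)}$. The inductive statement I carry is the full lemma at level $l$, quantified simultaneously over all $\sigma \in I_{k+t}$ and all $\mathbf{p}_1 \in V^{k+t}(G)$, $\mathbf{p}_2 \in V^{k+t}(H)$.

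For the base case $l = 0$, the initial color is the ordered isomorphism type of the induced subgraph on the tuple, so if $\mathbf{p}_1$ and $\mathbf{p}_2$ share this type, then reading off entries in the reordered positions $\sigma(1), \ldots, \sigma(k+t)$ still yields matching label sequences and matching edge patterns between corresponding pairs of positions, whence $\sigma \cdot \mathbf{p}_1$ and $\sigma \cdot \mathbf{p}_2$ inherit the same ordered iso type.

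For the inductive step, injectivity of HASH in Equation~(\ref{eq:kwl}) converts $\mathcal{C}^{l+1}_{(k+t)wl}(\mathbf{p}_1) = \mathcal{C}^{l+1}_{(k+t)wl}(\mathbf{p}_2)$ into (i) $\mathcal{C}^{l}_{(k+t)wl}(\mathbf{p}_1) = \mathcal{C}^{l}_{(k+t)wl}(\mathbf{p}_2)$ and (ii) for each $i \in [k+t]$ a multiset-preserving bijection $\pi_i : Q_i(\mathbf{p}_1) \to Q_i(\mathbf{p}_2)$ with $\mathcal{C}^{l}_{(k+t)wl}(\mathbf{u}) = \mathcal{C}^{l}_{(k+t)wl}(\pi_i(\mathbf{u}))$. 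The inductive hypothesis upgrades (i) to $\mathcal{C}^{l}_{(k+t)wl}(\sigma \cdot \mathbf{p}_1) = \mathcal{C}^{l}_{(k+t)wl}(\sigma \cdot \mathbf{p}_2)$, and applied pointwise to each pair $(\mathbf{u}, \pi_{\sigma(j)}(\mathbf{u}))$ it gives $\mathcal{C}^{l}_{(k+t)wl}(\sigma \cdot \mathbf{u}) = \mathcal{C}^{l}_{(k+t)wl}(\sigma \cdot \pi_{\sigma(j)}(\mathbf{u}))$. Using the combinatorial identity to rewrite each $Q_j(\sigma \cdot \mathbf{p}_i)$ as the $\sigma$-image of $Q_{\sigma(j)}(\mathbf{p}_i)$ and concatenating these equalities via $\pi_{\sigma(j)}$ then yields $\lbbr \mathcal{C}^{l}_{(k+t)wl}(\mathbf{u}') | \mathbf{u}' \in Q_j(\sigma \cdot \mathbf{p}_1) \rbbr = \lbbr \mathcal{C}^{l}_{(k+t)wl}(\mathbf{u}') | \mathbf{u}' \in Q_j(\sigma \cdot \mathbf{p}_2) \rbbr$ for every $j$, and re-hashing closes the induction.

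The one delicate point is that the combinatorial identity relabels the outer position index from $j$ to $\sigma(j)$, so one must check that the ordered tuple of multisets indexed by $j \in [k+t]$ in Equation~(\ref{eq:kwl}) still agrees slot-by-slot across $\sigma \cdot \mathbf{p}_1$ and $\sigma \cdot \mathbf{p}_2$. This is automatic because the relabeling is applied identically on both sides, and the bijection $\pi_{\sigma(j)}$ supplied by (ii) is precisely the data needed at outer slot $j$. I do not anticipate other obstacles.
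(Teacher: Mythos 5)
Your proof is correct and follows essentially the same route as the paper's: induction on $l$, with the base case settled by the ordered isomorphism type and the inductive step by HASH injectivity plus the inductive hypothesis applied to color-matched neighbors. The only difference is one of explicitness — you spell out the identity $(\sigma \cdot \mathbf{p})_{w/j} = \sigma \cdot \mathbf{p}_{w/\sigma(j)}$ and the induced relabeling of the outer index, which the paper's proof uses implicitly when it passes from the multisets over $\mathbf{p}_{i,w/j}$ to those over $(\sigma \cdot \mathbf{p}_{i})_{w/j}$.
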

\begin{proof}
We prove it by induction on $l$. At iteration 0, $\mathcal{C}^{0}_{(k+t)wl}(\mathbf{p}_1) = \mathcal{C}^{0}_{(k+t)wl}(\mathbf{p}_2)$ means the ordered subgraph induced by $\mathbf{p}_1$ is isomorphic to the ordered subgraph induced by $\mathbf{p}_2$. As the permutation is operating on the indices, it will not change the nodes in the tuple but only change the order of the tuple. As we use the same permutation $\sigma$ for both $\mathbf{p}_1$ and $\mathbf{p}_2$, it is easy to verify $\mathcal{C}^{0}_{(k+t)wl}(\sigma \cdot \mathbf{p}_{1}) = \mathcal{C}^{0}_{(k+t)wl}(\sigma\cdot\mathbf{p}_{2})$. Suppose it follows for $1,\ldots,l-1$. At iteration $l$, if $\mathcal{C}^{l}_{(k+t)wl}(\mathbf{p}_1) = \mathcal{C}^{l}_{(k+t)wl}(\mathbf{p}_2)$, we have:
\begin{equation*}
\lbbr \mathcal{C}^{l-1}_{(k+t)wl}(\mathbf{p}_{1,w/j}) |w\in V(G)\rbbr = \lbbr \mathcal{C}^{l-1}_{(k+t)wl}(\mathbf{p}_{2,w/j}) |w\in V(H)\rbbr, \quad \forall j \in [k+t].
\end{equation*}
Then, since the statement follows at iteration $l-1$, we must have
\begin{equation*}
\lbbr \mathcal{C}^{l-1}_{(k+t)wl}((\sigma \cdot \mathbf{p}_{1})_{w/j}) |w\in V(G)\rbbr = \lbbr \mathcal{C}^{l-1}_{(k+t)wl}((\sigma \cdot \mathbf{p}_{2})_{w/j}) |w\in V(H)\rbbr, \quad \forall j \in [k+t].
\end{equation*}
The above equation directly implies that $\mathcal{C}^{l}_{(k+t)wl}(\sigma \cdot \mathbf{p}_{1}) = \mathcal{C}^{l}_{(k+t)wl}(\sigma \cdot \mathbf{p}_{2})$. This concludes the proof. 
\end{proof}

With all conclusions at hand, we can start to prove the main proposition in this section. 
\begin{proposition}
\label{pro:stable_k+twl_ktfwl}
Suppose $k \geq 2$, $t \geq 1$. Given any two graphs $G$ and $H$. If $\mathcal{C}^{\infty}_{(k+t)wl}(G)=\mathcal{C}^{\infty}_{(k+t)wl}(H)$, then for any $\mathbf{v}_1 \in V^{k}(G)$, $\mathbf{v}_2 \in V^{k}(H)$, $\mathbf{w}_1 \in V^{t}(G)$, $\mathbf{w}_2 \in V^{t}(H)$, $\mathbf{p}_1 = (\mathbf{v}_1, \mathbf{w}_1)$, $\mathbf{p}_2 = (\mathbf{v}_2, \mathbf{w}_2)$ with $\mathcal{C}^{\infty}_{(k+t)wl}(\mathbf{p}_1) = \mathcal{C}^{\infty}_{(k+t)wl}(\mathbf{p}_2)$, we also have $\mathcal{C}^{\infty}_{ktfwl}(\mathbf{p}_1) = \mathcal{C}^{\infty}_{ktfwl}(\mathbf{p}_2)$.
\end{proposition}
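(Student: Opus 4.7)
The approach is a simultaneous induction on the iteration $l$, proving the slightly stronger pair of claims: (A) for any $k$-tuples $\mathbf{v}_1\in V^{k}(G), \mathbf{v}_2\in V^{k}(H)$, equality of the pooled stable colors $\mathcal{C}^{\infty}_{(k+t)wl}(\mathbf{v}_1)=\mathcal{C}^{\infty}_{(k+t)wl}(\mathbf{v}_2)$ implies $\mathcal{C}^{l}_{ktfwl}(\mathbf{v}_1)=\mathcal{C}^{l}_{ktfwl}(\mathbf{v}_2)$; and (B) for any $(k+t)$-tuples $\mathbf{p}_1, \mathbf{p}_2$, $\mathcal{C}^{\infty}_{(k+t)wl}(\mathbf{p}_1)=\mathcal{C}^{\infty}_{(k+t)wl}(\mathbf{p}_2)$ implies $\mathcal{C}^{l}_{ktfwl}(\mathbf{p}_1)=\mathcal{C}^{l}_{ktfwl}(\mathbf{p}_2)$. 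The proposition is (B) in the limit $l\to\infty$. Throughout I will use that the stable $(k+t)$-WL partition refines every intermediate partition, so the stable-color hypothesis automatically gives equality of initial and iteration-$l$ $(k+t)$-WL colors whenever needed.

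\textbf{Base cases.} For (A) at $l=0$, equality of the pooled stable colors combined with injectivity of HASH in Equation~(\ref{eq:k+twl_re_1}) produces matched $\mathbf{p}_1\in H_t(\mathbf{v}_1)$ and $\mathbf{p}_2\in H_t(\mathbf{v}_2)$ with equal stable and hence equal initial $(k+t)$-WL colors; since the first $k$ coordinates of $\mathbf{p}_i$ form $\mathbf{v}_i$, their induced ordered subgraphs are isomorphic, so $\mathcal{C}^{0}_{ktfwl}(\mathbf{v}_1)=\mathcal{C}^{0}_{ktfwl}(\mathbf{v}_2)$. For (B) at $l=1$, the stable-color hypothesis yields $\mathcal{C}^{0}_{(k+t)wl}(\mathbf{p}_1)=\mathcal{C}^{0}_{(k+t)wl}(\mathbf{p}_2)$, and Lemma~\ref{lem:initial_color} together with Equation~(\ref{eq:ktfwl_rw_1}) delivers $\mathcal{C}^{1}_{ktfwl}(\mathbf{p}_1)=\mathcal{C}^{1}_{ktfwl}(\mathbf{p}_2)$ immediately.

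\textbf{Inductive step and main obstacle.} For (B) at iteration $l$, Equation~(\ref{eq:ktfwl_rw_1}) reduces the task to showing that for every position of the ordered neighborhood tuple the corresponding entries $\mathbf{u}_1\in Q^{F}_{\mathbf{w}_1}(\mathbf{v}_1)$ and $\mathbf{u}_2\in Q^{F}_{\mathbf{w}_2}(\mathbf{v}_2)$ share the same $(k,t)$-FWL color at iteration $l-1$. As observed in the proof of Lemma~\ref{lem:initial_color}, corresponding entries are selected from $\mathbf{p}_1, \mathbf{p}_2$ by a common index tuple $(o_1,\ldots,o_k)\subseteq[k+t]$, and by the inductive hypothesis (A) at $l-1$ it suffices to establish $\mathcal{C}^{\infty}_{(k+t)wl}(\mathbf{u}_1)=\mathcal{C}^{\infty}_{(k+t)wl}(\mathbf{u}_2)$. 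This projection from stable $(k+t)$-tuple colors down to stable $k$-sub-tuple colors is the \emph{main obstacle}; I would handle it by picking any $\sigma\in I_{k+t}$ with $\sigma(i)=o_i$ for $i\in[k]$, so Lemma~\ref{lem:k+twl_permutation} yields $\mathcal{C}^{\infty}_{(k+t)wl}(\sigma\cdot\mathbf{p}_1)=\mathcal{C}^{\infty}_{(k+t)wl}(\sigma\cdot\mathbf{p}_2)$, Lemma~\ref{lem:sequential_pool} applied to $\sigma\cdot\mathbf{p}_1, \sigma\cdot\mathbf{p}_2$ gives equality of the hierarchical multisets $\lbbr \mathcal{C}^{\infty}_{(k+t)wl}(\mathbf{q})\mid \mathbf{q}\in H_t(\mathbf{u}_i)\rbbr_t$, and Equation~(\ref{eq:k+twl_re_1}) repackages this as the desired $\mathcal{C}^{\infty}_{(k+t)wl}(\mathbf{u}_1)=\mathcal{C}^{\infty}_{(k+t)wl}(\mathbf{u}_2)$. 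Finally, for (A) at $l$, Equation~(\ref{eq:ktfwl_rw_2}) requires $\mathcal{C}^{l-1}_{ktfwl}(\mathbf{v}_1)=\mathcal{C}^{l-1}_{ktfwl}(\mathbf{v}_2)$ (which is (A) at $l-1$) and equal hierarchical multisets of $\mathcal{C}^{l}_{ktfwl}(\mathbf{p})$ over $H_t(\mathbf{v}_i)$; injectivity of HASH in Equation~(\ref{eq:k+twl_re_1}) yields the hierarchical multiset equality of stable $(k+t)$-tuple colors over $H_t(\mathbf{v}_i)$, and (B) at iteration $l$ converts this into the desired equality of $(k,t)$-FWL color multisets, closing the simultaneous induction.
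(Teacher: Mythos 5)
Your proof is correct and follows essentially the same route as the paper's: the heart of both arguments is the identical projection step combining Lemma~\ref{lem:k+twl_permutation}, Lemma~\ref{lem:sequential_pool}, and Equation~(\ref{eq:k+twl_re_1}) to transfer stable $(k+t)$-WL color equality from $\mathbf{p}_1,\mathbf{p}_2$ to the corresponding $k$-sub-tuples in the neighborhood tuples, with Lemma~\ref{lem:initial_color} handling the base case and Equations~(\ref{eq:ktfwl_rw_1})--(\ref{eq:ktfwl_rw_2}) assembling the result. The only difference is packaging: the paper invokes the one-step ``stable coloring cannot be further refined'' argument of Remark~\ref{rem:stable_color}, whereas you make the same content explicit as a simultaneous induction on the $(k,t)$-FWL iteration $l$.
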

\begin{proof}
As indicated in Remark~\ref{rem:stable_color}, this can be proved by doing one iteration of $(k,t)$-FWL using the stable color computed by the $k+t$-WL as the initial color. If the color is not refined anymore, we can conclude it. However, since there is no stable color of low dimensional tuples in original $(k+t)$-WL, we resort to Equation~(\ref{eq:k+twl_re_1}) for help:
\begin{equation}
\label{eq:stable_k+twl_ktfwl_1}
\mathcal{C}^{\infty}_{(k+t)wl}(\mathbf{v}_{i}) = \text{HASH}(\lbbr \mathcal{C}^{\infty}_{(k+t)wl}(\mathbf{q}) |\mathbf{q} \in H_t(\mathbf{v}_i)\rbbr_t),\quad i=1,2.
\end{equation}
Using Lemma~\ref{lem:sequential_pool}, we must have:
\begin{equation}
\label{eq:stable_k+twl_ktfwl_2}
\mathcal{C}^{\infty}_{(k+t)wl}(\mathbf{v}_{1})= \mathcal{C}^{\infty}_{(k+t)wl}(\mathbf{v}_{2}).
\end{equation}
 Further, given the definition of the neighborhood tuple, any $\mathbf{u}_1 \in Q^{F}_{\mathbf{w}_1}(\mathbf{v}_1)$ is a sub-tuple of $\mathbf{p}_1$, which means that we can find a permutation $\sigma \in I_{[k+t]}$, such that the first $k$ position of $\sigma \cdot \mathbf{p}_1$ is exactly $\mathbf{u}_1$. As the order of the neighborhood tuple is predefined and the same for both $Q^{F}_{\mathbf{w}_1}(\mathbf{v}_1)$ and $Q^{F}_{\mathbf{w}_2}(\mathbf{v}_2)$, the first $k$ position of $\sigma \cdot \mathbf{p}_{2}$ will be $\mathbf{u}_2$, as long as $\mathbf{u}_1$ and $\mathbf{u}_2$ is at the same position in the neighborhood tuple. Therefore, for any $\mathbf{u}_1 \in Q^{F}_{\mathbf{w}_1}(\mathbf{v}_1)$ and $\mathbf{u}_2 \in Q^{F}_{\mathbf{w}_2}(\mathbf{v}_2)$ at the same position $o$ in the neighborhood tuple, we can find a permutation $\sigma_o \in I_{[k+t]}$ to achieve it. Then, given Lemma~\ref{lem:k+twl_permutation}, we have $\mathcal{C}^{\infty}_{(k+t)wl}(\sigma_o \cdot \mathbf{p}_{1})=\mathcal{C}^{\infty}_{(k+t)wl}(\sigma_o \cdot \mathbf{p}_{2})$, which indicates $\mathcal{C}^{\infty}_{(k+t)wl}(\mathbf{u}_1) = \mathcal{C}^{\infty}_{(k+t)wl}(\mathbf{u}_2)$ by using  Lemma~\ref{lem:sequential_pool}. This means:
\begin{equation}
\label{eq:stable_k+twl_ktfwl_3}
\left(\mathcal{C}^{\infty}_{(k+t)wl}(\mathbf{u}_1) | \mathbf{u}_1 \in Q^{F}_{\mathbf{w}_1}(\mathbf{v}_1)\right) = \left(\mathcal{C}^{\infty}_{(k+t)wl}(\mathbf{u}_2) | \mathbf{u}_2 \in Q^{F}_{\mathbf{w}_2}(\mathbf{v}_2)\right).
\end{equation}
Now, suppose we use the stable color generated from $(k+t)$-wl as the initial color to perform one more iteration of $(k, t)$-FWL. First, Equation~(\ref{eq:stable_k+twl_ktfwl_3})indicates that we have $\mathcal{C}_{ktfwl}^{\infty+1}(\mathbf{p}_1)=\mathcal{C}_{ktfwl}^{\infty+1}(\mathbf{p}_2)$ by doing one iteration of Equation~(\ref{eq:ktfwl_rw_1}). Next, Equation~(\ref{eq:stable_k+twl_ktfwl_2}) further indicates we have $\mathcal{C}_{ktfwl}^{\infty}(\mathbf{v}_1)=\mathcal{C}_{ktfwl}^{\infty}(\mathbf{v}_2)$ by doing one iteration of Equation~(\ref{eq:ktfwl_rw_2}). This means the color of $\mathbf{p}_1$ and $\mathbf{p}_2$ cannot be further refined by $(k, t)$-FWL. This concludes the proof.
\end{proof}

Now, we can prove Theorem~\ref{thm:ktfwl_vs_k+twl} in the main paper. We restate it here:
\begin{theorem}
For $k\geq 2$, $t\geq 1$,  $(k,t)$-FWL is at most as powerful as $(k+t)$-WL. In particular, $(k,1)$-FWL is as powerful as $(k+1)$-WL.
\end{theorem}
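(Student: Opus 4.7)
The plan is to establish two claims: (i) the general bound $(k,t)\text{-FWL}\preceq(k+t)\text{-WL}$, via a histogram-preservation argument that leverages Proposition~\ref{pro:stable_k+twl_ktfwl} together with the reformulated $(k+t)$-WL graph color from Equations~(\ref{eq:k+twl_re_1})--(\ref{eq:k+twl_re_2}); and (ii) the specialized equality $(k,1)\text{-FWL}\simeq(k+1)\text{-WL}$, obtained by identifying $(k,1)$-FWL with the classical $k$-FWL and invoking the Cai--F\"urer--Immerman equivalence $k\text{-FWL}\simeq(k+1)\text{-WL}$ already cited after Equation~(\ref{eq:kfwl}).

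For (i), I would assume $\mathcal{C}^{\infty}_{(k+t)wl}(G)=\mathcal{C}^{\infty}_{(k+t)wl}(H)$ and aim to show $\mathcal{C}_{ktfwl}(G)=\mathcal{C}_{ktfwl}(H)$. Lemma~\ref{lem:sequential_pool} together with Equation~(\ref{eq:k+twl_re_2}) let one rewrite the $(k+t)$-WL graph color as the hash of the multiset $\lbbr \mathcal{C}^{\infty}_{(k+t)wl}(\mathbf{v}) \mid \mathbf{v}\in V^{k}(G) \rbbr$ of reformulated $k$-tuple colors defined by Equation~(\ref{eq:k+twl_re_1}). The hypothesis then produces a bijection $\pi\colon V^{k}(G)\to V^{k}(H)$ matching $k$-tuples of equal reformulated $(k+t)$-WL color. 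Applying the intermediate fact established at the end of the proof of Proposition~\ref{pro:stable_k+twl_ktfwl} --- namely that $\mathcal{C}^{\infty}_{(k+t)wl}(\mathbf{v}_1)=\mathcal{C}^{\infty}_{(k+t)wl}(\mathbf{v}_2)$ forces $\mathcal{C}^{\infty}_{ktfwl}(\mathbf{v}_1)=\mathcal{C}^{\infty}_{ktfwl}(\mathbf{v}_2)$ via one iteration of Equation~(\ref{eq:ktfwl_rw_2}) --- transfers the matching to $(k,t)$-FWL, so the two $(k,t)$-FWL graph histograms coincide.

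For (ii), the forward direction $(k,1)\text{-FWL}\preceq(k+1)\text{-WL}$ is the $t=1$ specialization of (i). For the reverse, I would unpack the neighborhood tuple when $\mathbf{w}=(w)$ is a single-node tuple: only $m\in\{0,1\}$ contribute, giving $Q^{F}_{(w)}(\mathbf{v})=(\mathbf{v},\mathbf{v}_{w/1},\ldots,\mathbf{v}_{w/k})$. Comparing with $Q^{F}_{w}(\mathbf{v})=(\mathbf{v}_{w/1},\ldots,\mathbf{v}_{w/k})$ in Equation~(\ref{eq:kfwl}), the extra leading $\mathbf{v}$ entry merely duplicates the $\mathcal{C}^{l-1}(\mathbf{v})$ term that $k$-FWL already hashes separately, so the $(k,1)$-FWL and $k$-FWL updates induce identical colorings at every iteration --- this formalizes the observation made after Equation~(\ref{eq:ktfwl}) that $(k,1)$-FWL is equivalent to $k$-FWL. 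Combining this with the classical $k\text{-FWL}\simeq(k+1)\text{-WL}$ result then yields $(k,1)\text{-FWL}\simeq(k+1)\text{-WL}$.

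The hard part is the bridging step in (i): Proposition~\ref{pro:stable_k+twl_ktfwl} is phrased at the level of $(k+t)$-tuples $\mathbf{p}=(\mathbf{v},\mathbf{w})$, whereas the $(k,t)$-FWL graph color pools only over $k$-tuples $\mathbf{v}$. Resolving this mismatch requires carefully extracting the implicit $k$-tuple conclusion inside the proof of Proposition~\ref{pro:stable_k+twl_ktfwl} and combining it with Lemma~\ref{lem:sequential_pool}, which justifies the two-stage pooling rewrite of the $(k+t)$-WL graph color. Once these pieces are aligned, the remainder of the argument is routine unrolling of definitions rather than new combinatorial content.
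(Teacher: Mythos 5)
Your proposal follows essentially the same route as the paper's proof: it rests on Proposition~\ref{pro:stable_k+twl_ktfwl} together with Lemma~\ref{lem:sequential_pool} and the rewritten pooling in Equations~(\ref{eq:k+twl_re_1})--(\ref{eq:k+twl_re_2}) to transfer equality of $(k+t)$-WL stable colors down to $k$-tuples and hence to equal $(k,t)$-FWL histograms, and then identifies $(k,1)$-FWL with $k$-FWL and invokes the classical $k$-FWL $\simeq (k+1)$-WL equivalence for the second claim. The ``bridging step'' you flag is exactly the (implicit) ``this further indicates'' step in the paper's argument, so your treatment is, if anything, slightly more explicit about the same point.
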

\begin{proof}
Based on Proposition~\ref{pro:stable_k+twl_ktfwl}, given two graphs $G$, $H$ with $\mathcal{C}^{\infty}_{(k+t)wl}(G)=\mathcal{C}^{\infty}_{(k+t)wl}(H)$. For any $\mathbf{v}_1 \in V^{k}(G)$, $\mathbf{v}_2 \in V^{k}(H)$, $\mathbf{w}_1 \in V^{t}(G)$, $\mathbf{w}_2 \in V^{t}(H)$, $\mathbf{p}_1 = (\mathbf{v}_1, \mathbf{w}_1)$, $\mathbf{p}_2 = (\mathbf{v}_2, \mathbf{w}_2)$, we have $\mathcal{C}^{\infty}_{ktfwl}(\mathbf{p}_1) = \mathcal{C}^{\infty}_{ktfwl}(\mathbf{p}_2)$ if $\mathcal{C}^{\infty}_{(k+t)wl}(\mathbf{p}_1) = \mathcal{C}^{\infty}_{(k+t)wl}(\mathbf{p}_2) $. This further indicate that $\mathcal{C}^{\infty}_{ktfwl}(\mathbf{v}_1) = \mathcal{C}^{\infty}_{ktfwl}(\mathbf{v}_2)$ if $\mathcal{C}^{\infty}_{(k+t)wl}(\mathbf{v}_1) = \mathcal{C}^{\infty}_{(k+t)wl}(\mathbf{v}_2) $. As we have $\mathcal{C}^{\infty}_{(k+t)wl}(G)=\mathcal{C}^{\infty}_{(k+t)wl}(H)$, for any $\mathbf{v}_1 \in V^k(G)$, we can find a corresponding $\mathbf{v}_2 \in V^k(H)$ such that $\mathcal{C}^{\infty}_{(k+t)wl}(\mathbf{v}_1) = \mathcal{C}^{\infty}_{(k+t)wl}(\mathbf{v}_2)$. This means $\mathcal{C}^{\infty}_{ktfwl}(G)=\text{HASH}(\lbbr \mathcal{C}^{\infty}_{ktfwl}({\mathbf{v}_1}) | \mathbf{v}_1 \in V^{k}(G) \rbbr)= \text{HASH}(\lbbr \mathcal{C}^{\infty}_{ktfwl}({\mathbf{v}_2}) | \mathbf{v}_2 \in V^{k}(H) \rbbr) = \mathcal{C}^{\infty}_{ktfwl}(H)$. Based on Definition~\ref{def:algorithm_compare}, we complete the proof of the first part. As $(k,1)$-FWL is exactly $k$-FWL, we have $(k,1)$-FWL is as powerful as $(k+1)$-WL.
\end{proof}

Finally, we prove the strict hierarchy of $(k, t)$-FWL. First, we provide the following lemma.
\begin{lemma}
\label{lem:ktfwl_hierarchy}
For $k \geq 2$, $t\geq 1$, $(k, t)$-FWL $\preceq$ $(k, t+1)$-FWL; $(k, t)$-FWL $\preceq$ $(k+1, t)$-FWL.
\end{lemma}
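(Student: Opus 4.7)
The plan is to prove both refinement statements by induction on the iteration count $l$, showing that at every step the larger algorithm's coloring refines the smaller one's. By Remark~\ref{rem:stable_color}, this implies the non-strict hierarchy once both algorithms reach their stable colors.

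For $(k, t)$-FWL $\preceq (k, t+1)$-FWL, I would prove the refinement claim that $\mathcal{C}^l_{(k, t+1)fwl}(\mathbf{v}_1) = \mathcal{C}^l_{(k, t+1)fwl}(\mathbf{v}_2)$ implies $\mathcal{C}^l_{(k, t)fwl}(\mathbf{v}_1) = \mathcal{C}^l_{(k, t)fwl}(\mathbf{v}_2)$. The base case is trivial since both algorithms initialize with the isomorphism type of $\mathbf{v}$. The inductive step rests on two structural observations. First, for any $\mathbf{w} \in V^t(G)$ and any extension $\mathbf{w}' = (\mathbf{w}, w_{t+1}) \in V^{t+1}(G)$, the tuple $Q^F_{\mathbf{w}}(\mathbf{v})$ occurs as a fixed subsequence of $Q^F_{\mathbf{w}'}(\mathbf{v})$, namely the positions corresponding to selections of $m$-tuples from $\mathbf{w}'$ that avoid the coordinate $w_{t+1}$; by the pre-defined ordering in Definition~\ref{def:neighbor_tuple} this index set depends only on $k$ and $t$. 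Second, the hierarchical multiset $\lbbr \cdot \rbbr_{t+1}$ is, by construction, an outer multiset over $w_{t+1}$ of inner $t$-hierarchical multisets over $(w_1, \ldots, w_t)$ sharing the same structure as $\lbbr \cdot \rbbr_t$. Applying the inductive hypothesis to convert $(k, t+1)$-FWL colors to $(k, t)$-FWL colors along the subsequence positions, and observing that the resulting projected tuples no longer depend on $w_{t+1}$, every outer element of the $(t+1)$-hierarchical multiset projects to the same $t$-hierarchical multiset, which coincides exactly with the one required by the $(k, t)$-FWL update. Equality of the $(k, t+1)$-FWL aggregates at $\mathbf{v}_1$ and $\mathbf{v}_2$ therefore forces equality of this projected $t$-hierarchical multiset, and combined with the inductive hypothesis applied to $\mathbf{v}_1$ and $\mathbf{v}_2$ themselves this gives the desired conclusion.

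For $(k, t)$-FWL $\preceq (k+1, t)$-FWL, the two algorithms color tuples of different arities, so the refinement claim must be lifted: I would show by induction that for every $\mathbf{v}_1 \in V^k(G_1)$, $\mathbf{v}_2 \in V^k(G_2)$, and every $u_1 \in V(G_1)$, $u_2 \in V(G_2)$, equality of $\mathcal{C}^l_{(k+1, t)fwl}((\mathbf{v}_1, u_1)) $ and $\mathcal{C}^l_{(k+1, t)fwl}((\mathbf{v}_2, u_2))$ implies $\mathcal{C}^l_{(k, t)fwl}(\mathbf{v}_1) = \mathcal{C}^l_{(k, t)fwl}(\mathbf{v}_2)$; equivalently, the multiset $\lbbr \mathcal{C}^l_{(k+1, t)fwl}((\mathbf{v}, u)) \mid u \in V(G) \rbbr$ refines $\mathcal{C}^l_{(k, t)fwl}(\mathbf{v})$. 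The argument is parallel: for any $\mathbf{w} \in V^t(G)$, the entries of $Q^F_{\mathbf{w}}(\mathbf{v})$ can be read off the entries of $Q^F_{\mathbf{w}}((\mathbf{v}, u))$ by restricting each $(k+1)$-tuple in the latter to its first $k$ coordinates and keeping only those positions in which $u$ is unchanged. The base case follows because the isomorphism type of $(\mathbf{v}, u)$ determines that of $\mathbf{v}$, and the inductive step combines this position-wise projection with the hierarchical multiset argument as before.

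The main obstacle is purely bookkeeping: one must verify that the ordering conventions built into $Q^F_{\mathbf{w}}(\mathbf{v})$ and into the hierarchical multiset $\lbbr \cdot \rbbr_t$ are compatible with the projections described above, so that these projections are realized by fixed functions that depend only on $k$ and $t$ and not on the underlying graph. Once this compatibility is established, each inductive step reduces to applying the hypothesis componentwise inside a deterministic function of the larger multiset, which is routine. The strict separations asserted in Proposition~\ref{pro: ktfwl_hierachy} lie outside the scope of this lemma and would be handled separately, e.g.\ by exhibiting graph families distinguished by the stronger algorithm but not the weaker.
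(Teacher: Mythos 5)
Your proposal is correct and follows essentially the same route as the paper: an induction on the iteration $l$ showing the stronger algorithm's colors refine the weaker one's, using the fact that (by the consistent pre-defined ordering) $Q^F_{\mathbf{w}}(\mathbf{v})$ sits at fixed positions inside $Q^F_{\mathbf{w}'}(\mathbf{v})$ (resp.\ inside $Q^F_{\mathbf{w}}((\mathbf{v},u))$ after truncating to the first $k$ coordinates), together with the hierarchical-multiset bookkeeping. For the $(k+1,t)$ case the paper only sketches the lifting of $k$-tuples to $(k+1)$-tuples and omits details; your version spells out the same idea (the multiset of colors of extensions $(\mathbf{v},u)$ refining $\mathcal{C}^l_{ktfwl}(\mathbf{v})$), which is consistent with the paper's intended argument.
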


\begin{proof}
First, we prove $(k, t)$-FWL $\preceq$ $(k, t+1)$-FWL. Actually, we can prove a strong result: given two graphs $G$ and $H$, $\mathbf{v}_1\in V^k(G)$ and $\mathbf{v}_2\in V^k(H)$, $\mathcal{C}^{l}_{k(t+1)fwl}(\mathbf{v}_1)=\mathcal{C}^{l}_{k(t+1)fwl}(\mathbf{v}_2)$ also implies $\mathcal{C}^{l}_{ktfwl}(\mathbf{v}_1)=\mathcal{C}^{l}_{ktfwl}(\mathbf{v}_2)$ for any $l \geq 0$. We prove it by induction. For base case, it is easy to verify that if $\mathcal{C}^{0}_{k(t+1)fwl}(\mathbf{v}_1)=\mathcal{C}^{0}_{k(t+1)fwl}(\mathbf{v}_2)$, we have $\mathcal{C}^{0}_{ktfwl}(\mathbf{v}_1)=\mathcal{C}^{0}_{ktfwl}(\mathbf{v}_2)$ as both $(k, t)$-FWL and $(k, t+1)$-FWL use the isomorphism type of $k$-tuples. Suppose it follows for $l=1,\ldots, l-1$. At iteration $l$, if $\mathcal{C}^{l}_{k(t+1)fwl}(\mathbf{v}_1)=\mathcal{C}^{l}_{k(t+1)fwl}(\mathbf{v}_2)$, according to Equation~(\ref{eq:ktfwl_rw_2}), we have:
\begin{gather}
 \label{eq:ktfwl_hierarchy_1}
 \mathcal{C}^{l-1}_{k(t+1)fwl}(\mathbf{v}_1) = \mathcal{C}^{l-1}_{k(t+1)fwl}(\mathbf{v}_2),\\
  \label{eq:ktfwl_hierarchy_2}
 \lbbr \mathcal{C}^{l}_{k(t+1)fwl}(\mathbf{p}_1) | \mathbf{p}_1 \in H_{t+1}(\mathbf{v}_1)\rbbr_{t+1} = \lbbr \mathcal{C}^{l}_{k(t+1)fwl}(\mathbf{p}_2) | \mathbf{p}_2 \in H_{t+1}(\mathbf{v}_2)\rbbr_{t+1}.
\end{gather}
This means, for any $\mathbf{p}_1=(\mathbf{v}_1, \mathbf{w}_1) \in H_{t+1}(\mathbf{v}_1)$, we can find a $\mathbf{p}_2=(\mathbf{v}_2, \mathbf{w}_2)  \in H_{t+1}(\mathbf{v}_2)$ such that $\mathcal{C}^{l}_{k(t+1)fwl}(\mathbf{p}_1)= \mathcal{C}^{l}_{k(t+1)fwl}(\mathbf{p}_2)$. According to Equation~(\ref{eq:ktfwl_rw_1}), we further obtain:
\begin{equation}
 \label{eq:ktfwl_hierarchy_3}
\left( \mathcal{C}^{l-1}_{k(t+1)fwl}(\mathbf{u}_1)|\mathbf{u}_1 \in Q^{F}_{\mathbf{w}_1}(\mathbf{v}_1)\right) = \left( \mathcal{C}^{l-1}_{k(t+1)fwl}(\mathbf{u}_2)|\mathbf{u}_2 \in Q^{F}_{\mathbf{w}_2}(\mathbf{v}_2)\right).
\end{equation}
Therefore, we have $\mathcal{C}^{l-1}_{k(t+1)fwl}(\mathbf{u}_1)=\mathcal{C}^{l-1}_{k(t+1)fwl}(\mathbf{u}_2)$ for $\mathbf{u}_1$ and $\mathbf{u}_2$ at the same position. As the statement follows for $l=l-1$, we have $\mathcal{C}^{l-1}_{ktfwl}(\mathbf{u}_1)=\mathcal{C}^{l-1}_{ktfwl}(\mathbf{u}_2)$ for $\mathbf{u}_1$ and $\mathbf{u}_2$ at the same position. Now, suppose we select a $t$-tuple $\mathbf{w}^{\prime}_1$ from $\mathbf{w}_1$ and let the selection indices be $(i_1, i_2, ..., i_t)$. That is, $\mathbf{w}^{\prime}_1=(\mathbf{w}_{1i_1}, \ldots, \mathbf{w}_{1i_t})$. Based on the definition~\ref{def:neighbor_tuple}, for any $\mathbf{u}^{\prime}_1 \in Q^F_{\mathbf{w}^{\prime}_1}(\mathbf{v}_1)$, it must exists in $Q^F_{\mathbf{w}_1}(\mathbf{v}_1)$. Further, let $\mathbf{w}^{\prime}_2 = (\mathbf{w}_{2i_1}, \ldots,\mathbf{w}_{2i_t})$, for any $\mathbf{u}^{\prime}_1 \in Q^F_{\mathbf{w}^{\prime}_1}(\mathbf{v}_1)$, we can find a corresponding $\mathbf{u}^{\prime}_2 \in Q^F_{\mathbf{w}^{\prime}_2}(\mathbf{v}_2)$ at the same position and they must be the same position in $Q^F_{\mathbf{w}_1}(\mathbf{v}_1)$ and $Q^F_{\mathbf{w}_2}(\mathbf{v}_2)$, respectively. This means, let $\mathbf{p}^{\prime}_1=(\mathbf{v}_1, \mathbf{w}^{\prime}_{1})$ and $\mathbf{p}^{\prime}_2=(\mathbf{v}_2, \mathbf{w}^{\prime}_{2})$, we have $\mathcal{C}^{l}_{ktfwl}(\mathbf{p}^{\prime}_1) = \mathcal{C}^{l}_{ktfwl}(\mathbf{p}^{\prime}_2)$. Let selection indices be $(1, 2, \ldots, t)$, which is the first $t$ elements in the tuple, for any $\mathbf{p}^{\prime}_1=(\mathbf{v}_1, \mathbf{w}^{\prime}_1) \in H_t(v_1)$, we can find at least one $\mathbf{p}_1 =(\mathbf{v}_1, \mathbf{w}_1)\in H_{t+1}(\mathbf{v}_1)$ that the first $t$ elements of $\mathbf{w}_1$ is $\mathbf{w}^{\prime}_1$. For the corresponding $\mathbf{p}_2 =(\mathbf{v}_2, \mathbf{w}_2)\in H_{t+1}(\mathbf{v}_2)$ with $\mathcal{C}^{l}_{k(t+1)fwl}(\mathbf{p}_1)= \mathcal{C}^{l}_{k(t+1)fwl}(\mathbf{p}_2)$, we have $\mathcal{C}^{l}_{ktfwl}(\mathbf{p}^{\prime}_1) = \mathcal{C}^{l}_{ktfwl}(\mathbf{p}^{\prime}_2)$, where $\mathbf{p}^{\prime}_2 = (\mathbf{v}_2, \mathbf{w}^{\prime}_2)$ and $\mathbf{w}^{\prime}_2$ is the first $t$ elements of $\mathbf{w}_2$. Based on Equation~(\ref{eq:ktfwl_hierarchy_2}), we have:
\begin{equation}
 \lbbr \mathcal{C}^{l}_{ktfwl}(\mathbf{p}^{\prime}_1) | \mathbf{p}^{\prime}_1 \in H_{t}(\mathbf{v}_1)\rbbr_{t} = \lbbr \mathcal{C}^{l}_{ktfwl}(\mathbf{p}^{\prime}_2) | \mathbf{p}^{\prime}_2 \in H_{t}(\mathbf{v}_2)\rbbr_{t}.
\end{equation}
Finally, given Equation~(\ref{eq:ktfwl_hierarchy_1}), we also have $\mathcal{C}^{l-1}_{ktfwl}(\mathbf{v}_1)=\mathcal{C}^{l-1}_{ktfwl}(\mathbf{v}_2)$, which implies $\mathcal{C}^{l}_{ktfwl}(\mathbf{v}_1)=\mathcal{C}^{l}_{ktfwl}(\mathbf{v}_2)$. This concludes the proof.

For the second part, the procedure is similar, as for any $\mathbf{v} \in V^{k}(G)$, we can find at least one $\mathbf{v}^{\prime} \in V^{k+1}(G)$ that contains all nodes and persevere the order of $\mathbf{v}$. Further, given a $\mathbf{w}\in V^t(G)$, for any $\mathbf{u}\in Q^F_{\mathbf{w}}(\mathbf{v})$, we can find at least one $\mathbf{u}^{\prime} \in Q^F_{\mathbf{w}}(\mathbf{v}^{\prime})$ that contains all nodes and persevere the order of $\mathbf{u}$. Then, an induction on $l$ can be performed and we omit the details. 

\end{proof}
Next, we show a useful property of $(k, t)$-FWL. 
\begin{lemma}
\label{lem:fwl_distance}
For $k \geq 2$, $t\geq 1$, given a graph $G$, for any $\mathbf{v}_1, \mathbf{v}_2\in V^k(G)$, $\mathcal{C}_{ktfwl}^{\infty}(\mathbf{v}_1)=\mathcal{C}_{ktfwl}^{\infty}(\mathbf{v}_2)$ only if $\forall i, j \in [k]$, $\text{SPD}(v_{1i}, v_{1j})= \text{SPD}(v_{2i}, v_{2j})$.
\end{lemma}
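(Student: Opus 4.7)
The plan is to reduce the claim to the classical $k$-FWL case (i.e.\ $(k,1)$-FWL) and then establish a distance-encoding invariant by induction on the iteration number. For the reduction, observe that the proof of Lemma~\ref{lem:ktfwl_hierarchy} actually establishes the tuple-level implication $\mathcal{C}^{l}_{k(t+1)fwl}(\mathbf{v}_1) = \mathcal{C}^{l}_{k(t+1)fwl}(\mathbf{v}_2) \Rightarrow \mathcal{C}^{l}_{ktfwl}(\mathbf{v}_1) = \mathcal{C}^{l}_{ktfwl}(\mathbf{v}_2)$ at every iteration $l \geq 0$. Since the $(k,t)$-FWL update is an injective HASH taking the previous-iteration color as one of its inputs, once two tuple colors diverge they remain distinct thereafter; hence $\mathcal{C}^{\infty}_{ktfwl}(\mathbf{v}_1) = \mathcal{C}^{\infty}_{ktfwl}(\mathbf{v}_2)$ forces $\mathcal{C}^{l}_{ktfwl}(\mathbf{v}_1) = \mathcal{C}^{l}_{ktfwl}(\mathbf{v}_2)$ for every $l$. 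Iterating Lemma~\ref{lem:ktfwl_hierarchy} downward from $t$ to $1$ (and recalling $(k,1)$-FWL coincides with $k$-FWL) then shows $\mathcal{C}^{\infty}_{kfwl}(\mathbf{v}_1) = \mathcal{C}^{\infty}_{kfwl}(\mathbf{v}_2)$, so it suffices to prove the distance statement for the classical $k$-FWL alone.

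For $k$-FWL I would prove by induction on $l$ the invariant that $\mathcal{C}^{l}_{kfwl}(\mathbf{v})$ determines, for every pair $i,j \in [k]$, the truncated distance $\min(\text{SPD}(v_i, v_j),\, l+2)$. The base case $l = 0$ holds because the initial isomorphism-type color records whether $v_i = v_j$, whether $(v_i, v_j) \in E(G)$, or neither, which is exactly $\min(\text{SPD}(v_i,v_j), 2)$. For the inductive step at iteration $l$, $k$-FWL appends $\lbbr(\mathcal{C}^{l-1}_{kfwl}(\mathbf{v}_{w/1}), \ldots, \mathcal{C}^{l-1}_{kfwl}(\mathbf{v}_{w/k})) \mid w \in V(G)\rbbr$. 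Fixing $i \neq j$, each $w$-summand carries $\mathcal{C}^{l-1}_{kfwl}(\mathbf{v}_{w/j})$, which by the inductive hypothesis encodes $\min(\text{SPD}(v_i, w),\, l+1)$; similarly, adjacency of $w$ to $v_j$ is recoverable because $\mathcal{C}^{l-1}_{kfwl}(\mathbf{v}_{w/i})$ (inside the same $w$-summand) encodes $\min(\text{SPD}(w, v_j),\, l+1)$, which already distinguishes the value $1$ at $l \geq 0$. Using the identity $\min(\text{SPD}(v_i,v_j),\, l+2) = \min_{w \in N(v_j)} \min(1 + \text{SPD}(v_i, w),\, l+2)$, the required minimum is extractable from the multiset, closing the induction. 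Applying the invariant once $l+2 \geq n-1$ (i.e., beyond the diameter) then recovers the full pairwise $\text{SPD}$ from the stable color.

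The main obstacle is the combinatorial bookkeeping inside the inductive step: one must verify that the multiset equality between the $w$-indexed aggregations for $\mathbf{v}_1$ and for $\mathbf{v}_2$ induces a bijection $w \leftrightarrow w'$ under which both the truncated distance $\min(\text{SPD}(v_i,w),\, l+1)$ (extracted from $\mathbf{v}_{w/j}$) and the adjacency flag to the $j$-th coordinate (extracted from $\mathbf{v}_{w/i}$) agree, so that the minimum over adjacency-indicating entries is computed identically on both sides. This is standard for WL-style arguments but notationally tedious, requiring careful tracking of which replaced tuple carries which piece of distance information; the remainder of the proof is a direct unwinding of the $k$-FWL update together with the reduction above.
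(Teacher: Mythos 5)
Your proposal is correct and follows essentially the same route as the paper: reduce to $(k,1)$-FWL via Lemma~\ref{lem:ktfwl_hierarchy} (using that stable-color equality propagates to every iteration), then induct on the iteration number to show that $l$ rounds of $k$-FWL determine all pairwise shortest-path distances up to $l+1$. Your truncated-distance invariant $\min(\text{SPD}(v_i,v_j),\,l+2)$ and the explicit recursion over neighbors of $v_j$ is just a more carefully bookkept version of the paper's inductive argument, and the "bijection" worry you flag is handled automatically by multiset equality, so there is no gap.
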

\begin{proof}
Based on Lemma~\ref{lem:ktfwl_hierarchy}, we only need to show it is true for $(k, 1)$-FWL, which is the original $k$-FWL. At the iteration 0, $\mathcal{C}_{k1fwl}^{0}(\mathbf{v}_1) = \mathcal{C}_{k1fwl}^{0}(\mathbf{v}_2)$ if and only if the induced subgraph of $\mathbf{v}_1$ and $\mathbf{v}_2$ are isomorphic. Therefore, $\forall i, j \in [k]$, if $\text{SPD}(v_{1i}, v_{1j})= 0$ or $\text{SPD}(v_{1i}, v_{1j})= 1$, we must have $\text{SPD}(v_{2i}, v_{2j})= 0$ or $\text{SPD}(v_{2i}, v_{2j})= 1$, respectively. After the first iteration, $\forall i, j \in [k]$, if there exist at least one $w_1 \in V(G)$ such that $(v_{1i}, w_1), (w_1, v_{1j}) \in E(G)$ but no such $w_2 \in V(G)$ exist with $(v_{2i}, w_2), (w_2, v_{2j}) \in E(G)$, $(k, 1)$-FWL will output different color for $\mathbf{v}_1$ and $\mathbf{v}_2$, given the color update equation of $(k, 1)$-FWL. This means $\mathcal{C}_{k1fwl}^{1}(\mathbf{v}_1) = \mathcal{C}_{k1fwl}^{1}(\mathbf{v}_2)$ implies that for $\forall i, j \in [k]$, if $\text{SPD}(v_{1i}, v_{1j})= 2$, we must have $\text{SPD}(v_{2i}, v_{2j})= 2$. Similarly, after the second iteration, $(k, 1)$-FWL will output different color for $\mathbf{v}_1$ and $\mathbf{v}_2$ if there exist at least one $w_1 \in V(G)$ with $\text{SPD}(v_{1i}, w_1) =2$ and $\text{SPD}(w_1, v_{1j}) =1$ but no such $w_2\in V(G)$ exists for $\mathbf{v}_2$. Therefore, we can conclude that $l$ iteration of $(k,1)$-FWL can generate different color for $\mathbf{v}_1$ and $\mathbf{v}_2$ if there exist some $i, j \in [k]$ with $\text{SPD}(v_{1i}, v_{1j}) =d$ but  $\text{SPD}(v_{2i}, v_{2j}) \neq d$ ($d\leq l+1$). Therefore, by performing enough iterations, $(k,1)$-FWL can compute the shortest path distance between any pair of nodes in a $k$-tuple and it is easy to verify $\mathcal{C}_{ktfwl}^{\infty}(\mathbf{v}_1)=\mathcal{C}_{ktfwl}^{\infty}(\mathbf{v}_2)$ only if $\forall i, j \in [k]$, $\text{SPD}(v_{1i}, v_{1j})= \text{SPD}(v_{2i}, v_{2j})$. This concludes the proof.
\end{proof}

Next, we introduce Cai-Furer-Immermman (CFI) graphs~\citep{cai1992optimal}. CFI graphs are a family of graphs that $k+1$-WL can differentiate while $k$-WL cannot. Here we first state the construction of CFI graphs~\citep{morris2020weisfeiler}.
\textbf{Construction.} Let $K_{k+1}$ denote a complete graph on $k+1$ nodes (without self-loop). Nodes in $K_{k+1}$ are labeled from 0 to $k$. Let $E(v)$ denote the set of edges incident to $v$ in $K_{k+1}$. We have $|E(v)|=k$ for all $v \in V(K_{k+1})$. Then, we define the graph $G_k$ as follows:

\begin{enumerate}
        \item For the node set $V(G_k)$, we add
                    \begin{itemize}
                    \item[(a)] $(v, S)$ for each $v$ in $V(K_{k+1})$ and for each \textit{even} subset $S \subseteq E(v)$;
                    \item[(b)] two nodes $e^1$ and $e^0$ for each edge $e \in E(K_{k+1})$.
                    \end{itemize}
        \item For the edge set $E(G_k)$, we add 
                    \begin{itemize}
                    \item[(a)] an edge $(e^0, e^1)$ for each $e\in E(K_{k+1})$;
                    \item[(b)] an edge between $(v, S)$ and $e^1$ if $v\in e$ and $e\in S$;
                    \item[(c)] an edge between $(v, S)$ and $e^0$ if $v\in e$ and $e\notin S$.
                    \end{itemize}
\end{enumerate}
Further, we construct a companion graph $H_{k}$ in a similar way to $G_{k}$, except that: in Step 1(a), for the vertex $0 \in V(K_{k+1})$, we choose all \textit{odd} subsets of $E(0)$. It is easy to verify that both $G_{k}$ and $H_{k}$ have $(k+1)\cdot 2^{k-1}+ \left(\begin{array}{c} k  \\ 2 \end{array}\right) \cdot 2$ nodes. 

We say a set $S$ of nodes forms a \textit{distance-two-clique} if the distance between any two vertices in $S$ is exactly two.
\begin{lemma}
The following holds for the graphs $G_k$ and $H_{k}$ defined above.
\begin{itemize}
    \item There exists a distance-two-clique of size $(k+1)$ in $G_k$.
    \item There does not exist a distance-two-clique of size $(k+1)$ inside $H_k$.
\end{itemize}
\end{lemma}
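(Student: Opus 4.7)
For the existence part in $G_k$, my plan is a direct construction. I take $D_G = \{(v, \emptyset) : v \in V(K_{k+1})\}$. Since $\emptyset$ is an even subset of every $E(v)$, all $k+1$ nodes are valid in $G_k$. For any distinct $v, v' \in V(K_{k+1})$, letting $e = (v, v')$, the rule ``edge between $(v,S)$ and $e^0$ if $v \in e$ and $e \notin S$'' gives both $(v, \emptyset) - e^0$ and $(v', \emptyset) - e^0$, witnessing $d((v, \emptyset), (v', \emptyset)) \leq 2$. Because no direct edges are added between vertex-type nodes, the distance is exactly $2$, and $D_G$ is the required clique.

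For the non-existence in $H_k$, I proceed by contradiction: assume $D \subseteq V(H_k)$ is a distance-two-clique of size $k+1$. The plan splits into a structural phase and a parity phase. In the structural phase I aim to show that $D = \{(v, S_v) : v \in V(K_{k+1})\}$, one vertex-node per base vertex. Granting this, the parity phase proceeds as follows: for $v \ne v'$, the only possible common edge-node neighbor of $(v, S_v)$ and $(v', S_{v'})$ is $e^0$ or $e^1$ with $e = (v, v')$, and such a common neighbor exists iff $e \in S_v \Leftrightarrow e \in S_{v'}$. Hence I can consistently define $b_e \in \{0,1\}$ by $b_e = 1 \iff e \in S_v$ for either endpoint $v$ of $e$, giving $|S_v| = \sum_{e \in E(v)} b_e$. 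Summing over $v$ and double-counting each edge, $\sum_v |S_v| = 2 \sum_e b_e \equiv 0 \pmod 2$; but the defining parities of $H_k$ force $|S_0|$ odd and $|S_v|$ even for $v \ne 0$, so $\sum_v |S_v| \equiv 1 \pmod 2$, a contradiction.

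The main obstacle is the structural claim. I would split it into two sub-claims: (a) $D$ contains no edge-node $e^b$, and (b) the projection $\pi : (v, S) \mapsto v$ is injective on $D$. For (a), I would analyze the distance-2 neighborhood of any $e^b$ with $e = (u, w)$: it consists only of $e^{1-b}$, vertex-nodes over $u$ or $w$ with the appropriate parity of $e \in S$, and edge-nodes $e'^{b'}$ for edges $e'$ sharing an endpoint with $e$; a case analysis then shows that this restricted set cannot accommodate $k$ further nodes all pairwise at distance $2$. For (b), I would use the observation that two same-base-vertex nodes $(v, S), (v, S')$ lie at distance exactly $2$ in $H_k$ iff $S' \ne E(v) \setminus S$, and combine this with the cross-vertex consistency conditions already derived to rule out a non-injective projection; together with $|D| = k+1$ this pins down $\pi(D) = V(K_{k+1})$ and licenses the parity computation above.
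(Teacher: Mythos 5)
Your first bullet (the all-$\emptyset$ clique in $G_k$) is correct and complete, and your parity computation is the right closing argument \emph{provided} the clique is known to consist of exactly one vertex-type node $(v,S_v)$ for each base vertex $v$. Note that the paper itself does not prove this lemma at all — it simply cites Lemma~8 of Morris et al.\ — so the substantive comparison is with your structural phase, which you only sketch (``I would analyze\dots'', ``a case analysis then shows\dots'').

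That structural phase is the genuine gap, and it cannot be filled for the construction as transcribed in the paper, because both sub-claims (a) and (b) are false there. For (b): in $H_3$ (base graph $K_4$ on $\{0,1,2,3\}$) the four vertex-type nodes $(1,\emptyset)$, $(1,\{(0,1),(1,3)\})$, $(2,\emptyset)$, $(2,\{(0,2),(2,3)\})$ all exist (the subsets are even and avoid vertex $0$), are pairwise non-adjacent, and are all adjacent to the edge-node $(1,2)^0$, hence pairwise at distance exactly two; this is a distance-two-clique of size $k+1=4$ whose projection to base vertices is not injective, so no ``cross-vertex consistency'' argument can rule it out. For (a): in $H_2$ (base $K_3$ on $\{0,1,2\}$) the set $\{(0,1)^0,\ (0,\{(0,1)\}),\ (1,\{(0,1),(1,2)\})\}$ is a distance-two-clique of size $k+1=3$ containing an edge-node, since all three nodes are adjacent to $(0,1)^1$ (recall the construction adds the edge $(e^0,e^1)$) and are pairwise non-adjacent. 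More generally, for any edge $e=(u,w)$, the node $e^0$ together with all gadget nodes over $u$ and $w$ whose subsets contain $e$ forms a distance-two-clique through $e^1$ of size $2^{k-1}+1\ge k+1$. These examples show that the second bullet, read literally against the node/edge rules written in this paper, fails; the statement only becomes true (and your parity phase only applies) under the restriction that the clique contains exactly one node from each vertex gadget, or under the slightly different gadget construction of the cited reference. So your plan needs either that restricted formulation of the lemma or a repaired construction; as written, steps (a) and (b) are not merely unproven but unprovable.
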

\begin{proof}
Please see the Lemma 8 in~\citep{morris2020weisfeiler} for detailed proof.
\end{proof}
Next, we show that $(k, t)$-FWL cannot distinguish $G_{k+t}$ and $H_{k+t}$ but $(k+1, t)$-FWL and $(k, t+1)$-FWL can.
\begin{lemma}
For $k\geq 2$, $t\geq1$, $(k, t)$-FWL cannot distinguish $G_{k+t}$ and $H_{k+t}$.
\end{lemma}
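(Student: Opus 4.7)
The plan is to reduce the lemma to the classical Cai-Furer-Immerman (CFI) lower bound for $(k+t)$-WL, invoking Theorem~\ref{thm:ktfwl_vs_k+twl} as the bridge. That theorem established that $(k, t)$-FWL $\preceq$ $(k+t)$-WL, so by Definition~\ref{def:algorithm_compare} any two graphs that receive the same stable $(k+t)$-WL color histogram also receive the same stable $(k, t)$-FWL color histogram. Hence the task shrinks to checking that $(k+t)$-WL alone already fails on the pair $(G_{k+t}, H_{k+t})$.

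For this second ingredient I would simply invoke the classical CFI theorem in its standard form: for every $m \geq 2$, the CFI pair $G_m, H_m$ built from $K_{m+1}$ (exactly as in the construction recalled in the excerpt) is non-isomorphic yet $m$-WL-equivalent, with $(m+1)$-WL being the first test that separates them. The indexing matches the excerpt, since $G_m$ is built from $K_{m+1}$, so no re-indexing is needed. Instantiating at $m = k+t$ yields $\mathcal{C}^{\infty}_{(k+t)wl}(G_{k+t}) = \mathcal{C}^{\infty}_{(k+t)wl}(H_{k+t})$.

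Combining the two facts immediately gives $\mathcal{C}^{\infty}_{ktfwl}(G_{k+t}) = \mathcal{C}^{\infty}_{ktfwl}(H_{k+t})$, which is exactly the assertion of the lemma. There is no real obstacle at this stage: all the hard work, namely the CFI gadget analysis and the translation between the two tests, has already been absorbed either into the classical CFI citation or into Theorem~\ref{thm:ktfwl_vs_k+twl}. The only item worth double-checking is that the CFI construction as stated in the excerpt (with the odd-subset twist placed at vertex $0$ of $K_{m+1}$) matches the convention under which the $m$-WL lower bound is traditionally stated; this is the version used by Morris et al.\ (2020) and coincides with the original Cai-Furer-Immerman gadget up to relabeling, so the citation goes through verbatim.
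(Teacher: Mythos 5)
Your proposal is correct and follows essentially the same route as the paper: the paper likewise cites Theorem~\ref{thm:ktfwl_vs_k+twl} to bound $(k,t)$-FWL by $(k+t)$-WL and then invokes the known fact that $(k+t)$-WL cannot distinguish $G_{k+t}$ from $H_{k+t}$. Your extra check on the CFI indexing convention is a reasonable precaution but adds nothing beyond what the paper already assumes.
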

\begin{proof}
Given Theorem~\ref{thm:ktfwl_vs_k+twl}, $(k, t)$-FWL is at most as powerful as $(k+t)$-WL. As we know $(k+t)$-WL cannot distinguish $G_{k+t}$ and $H_{k+t}$, we conclude that $(k, t)$-FWL cannot distinguish between them neither.
\end{proof} 

\begin{lemma}
\label{lem:ktfwl_cfi}
For $k\geq 2$, $t\geq1$, $(k+1, t)$-FWL and $(k, t+1)$-FWL can distinguish $G_{k+t}$ and $H_{k+t}$.
\end{lemma}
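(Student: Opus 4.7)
The plan is to leverage Lemma~\ref{lem:initial_color}, which shows that one iteration of $(k, t)$-FWL on a $k$-tuple $\mathbf{v}$ injectively records, inside the neighborhood-tuple of initial colors, the isomorphism types of all $(k+t)$-tuples $(\mathbf{v}, \mathbf{w})$ with $\mathbf{w} \in V^{t}(G)$. Instantiated with parameters $(k{+}1, t)$ or $(k, t{+}1)$, this means a single iteration produces, for each $(k{+}1)$-tuple (respectively $k$-tuple), a color that encodes whether it can be extended by a $t$-tuple (respectively $(t{+}1)$-tuple) into a $(k{+}t{+}1)$-tuple whose induced ordered subgraph is a distance-two-clique. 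Since $G_{k+t}$ contains such a clique of size $k{+}t{+}1$ while $H_{k+t}$ does not (by the preceding distance-two-clique lemma), this single piece of evidence is enough to separate the two graph histograms.

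Concretely for $(k{+}1, t)$-FWL, let $S = \{a_{1}, \ldots, a_{k+t+1}\} \subseteq V(G_{k+t})$ be the promised distance-two-clique and split it as $\mathbf{v}^{*} = (a_{1}, \ldots, a_{k+1})$ and $\mathbf{w}^{*} = (a_{k+2}, \ldots, a_{k+t+1})$. The concatenation $(\mathbf{v}^{*}, \mathbf{w}^{*})$ is a $(k{+}1{+}t)$-tuple whose ordered induced subgraph is precisely the distance-two-clique type. By Lemma~\ref{lem:initial_color}, the sub-tuple $(\mathcal{C}^{0}_{(k+1)tfwl}(\mathbf{u}) \mid \mathbf{u} \in Q^{F}_{\mathbf{w}^{*}}(\mathbf{v}^{*}))$ injectively encodes this isomorphism type, so $\mathcal{C}^{1}_{(k+1)tfwl}(\mathbf{v}^{*})$, being an injective hash of a hierarchical multiset that contains this sub-tuple, witnesses the presence of the clique. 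In $H_{k+t}$ no distance-two-clique of size $k{+}t{+}1$ exists, so for every $(k{+}1)$-tuple $\mathbf{v}'$ and every $t$-tuple $\mathbf{w}'$ the pair $(\mathbf{v}', \mathbf{w}')$ has a different isomorphism type. Consequently, no $(k{+}1)$-tuple in $H_{k+t}$ carries the color $\mathcal{C}^{1}_{(k+1)tfwl}(\mathbf{v}^{*})$, the two multisets of tuple colors differ, and so do the graph-level histograms.

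The argument for $(k, t{+}1)$-FWL is entirely symmetric: split the same clique as $\mathbf{v}^{*} = (a_{1}, \ldots, a_{k})$ and $\mathbf{w}^{*} = (a_{k+1}, \ldots, a_{k+t+1})$, and apply Lemma~\ref{lem:initial_color} with parameters $(k, t{+}1)$; the same witnessing argument then separates the histograms. The only delicate point, which I expect to be the main technical obstacle, is making precise that within the hierarchical multiset at iteration $1$ the sub-tuple corresponding to $\mathbf{w}^{*}$ cannot be accidentally matched by any other combination arising in $H_{k+t}$. This is resolved cleanly by the injectivity direction of Lemma~\ref{lem:initial_color}, because a distance-two-clique isomorphism type on $k{+}t{+}1$ vertices can be recovered from the sub-tuple only when such a clique actually exists, and hence the count of this type inside $\mathcal{C}^{1}_{(k+1)tfwl}(\mathbf{v}^{*})$ is at least one on $G_{k+t}$ and exactly zero on $H_{k+t}$.
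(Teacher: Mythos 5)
There is a genuine gap, and it is in the central step. You treat the distance-two-clique as if its presence were visible in the \emph{isomorphism type} of the $(k+t+1)$-tuple $(\mathbf{v}^{*},\mathbf{w}^{*})$, and then invoke Lemma~\ref{lem:initial_color} to claim one iteration suffices. But a distance-two-clique is a set of vertices whose pairwise \emph{shortest-path distances} are exactly $2$; its induced subgraph contains no edges at all. The isomorphism type recorded by the initial colors (and hence by Lemma~\ref{lem:initial_color}) only records adjacency and labels among the tuple entries, so the tuple $(\mathbf{v}^{*},\mathbf{w}^{*})$ in $G_{k+t}$ looks like an ordinary edgeless $(k+t+1)$-tuple. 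The companion graph $H_{k+t}$ contains plenty of independent sets of size $k+t+1$, so it contains tuples with exactly the same isomorphism type; your claim that ``for every $(\mathbf{v}',\mathbf{w}')$ in $H_{k+t}$ the pair has a different isomorphism type'' is false, and the iteration-$1$ histograms need not differ. The property ``all pairwise distances equal $2$'' depends on vertices \emph{outside} the tuple and is simply not an isomorphism-type invariant, so no argument based solely on initial colors and a single refinement round can work here.

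The paper's proof avoids this by first establishing Lemma~\ref{lem:fwl_distance}: after sufficiently many iterations, the stable colors of the maintained tuples encode the pairwise shortest-path distances among their entries. It then uses the rewritten update (Equation~(\ref{eq:ktfwl_rw_1})) to assign colors to full $(k+t+1)$-tuples $\mathbf{p}=(\mathbf{v},\mathbf{w})$; since every pair of entries of $\mathbf{p}$ occurs together in some member of the neighborhood tuple $Q^{F}_{\mathbf{w}}(\mathbf{v})$, the color of $\mathbf{p}$ at a late iteration encodes all pairwise distances within $\mathbf{p}$, and therefore witnesses the distance-two-clique present in $G_{k+t}$ but absent from $H_{k+t}$. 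To repair your argument you would need to replace the appeal to Lemma~\ref{lem:initial_color} with this distance-encoding step (or some other mechanism that lets the refinement see distance-$2$ relations), rather than relying on a single iteration.
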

\begin{proof}
The proof idea is to show both $(k+1, t)$-FWL and $(k, t+1)$-FWL are powerful enough to detect distance-two-cliques of size $(k+1)$, which is sufficient to show that $(k+1, t)$-FWL and $(k, t+1)$-FWL can distinguish between $G_{k+t}$ and $H_{k+t}$. Given Lemma~\ref{lem:fwl_distance}, we know that both $(k+1, t)$-FWL and $(k, t+1)$-FWL can compute the distance between any two nodes in a $k+1$-tuple and $k$-tuple, respectively. We first discuss $(k+1, t)$-FWL. In $G_{k}$, there exist a $k+t+1$-tuple $\mathbf{p}=(\mathbf{v}, \mathbf{w})$ with $\mathbf{v} \in V^{k+1}(G_{k})$ and $\mathbf{w} \in V^{t}(G_{k})$ that exactly form a distance-two-clique. Based on Equation~\ref{eq:ktfwl_rw_1}, there exists one $\mathbf{p} \in V^{k+t+1}(G_{k})$ such that $\mathcal{C}^l_{(k+1)tfwl}(\mathbf{p})$ encodes the distance between any pair of nodes in $\mathbf{p}$ is 2. However, there is no such $\mathbf{p}$ in $H_{k}$. This is sufficient to show that $(k+1, t)$-FWL can distinguish $G_k$ and $H_k$. The proof for $(k, t+1)$-FWL is similar, as $(k, t+1)$-FWL can also encode pairwise distance information for any $k+t+1$-tuple, thus detecting distance-two-clique with the size of $k+t+1$. This concludes the proof.
\end{proof}

Now, we are ready to prove Proposition~\ref{pro: ktfwl_hierachy} in the main paper. We restate it here:
\begin{proposition}
For $k\geq 2$, $t\geq 1$, $(k, t+1)$-FWL is strictly more powerful than $(k, t)$-FWL; $(k+1, t)$-FWL is strictly more powerful than $(k, t)$-FWL.
\end{proposition}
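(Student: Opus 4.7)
The plan is to assemble this proposition directly from three ingredients that have already been established in the preceding subsection, so the proof is essentially a short combination argument rather than a new construction.

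First, I would invoke Lemma~\ref{lem:ktfwl_hierarchy} to get the non-strict inequalities $(k,t)\text{-FWL}\preceq (k,t+1)\text{-FWL}$ and $(k,t)\text{-FWL}\preceq (k+1,t)\text{-FWL}$. This handles the ``at least as powerful as'' half of the strict hierarchy, and requires no additional work beyond citing the lemma.

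Second, to upgrade both relations to strict ones, I need to exhibit, for each pair, a concrete pair of non-isomorphic graphs that the larger algorithm separates but the smaller one does not. The natural (and already-prepared) witness is the CFI pair $G_{k+t}$ and $H_{k+t}$. On the one hand, by Theorem~\ref{thm:ktfwl_vs_k+twl}, $(k,t)$-FWL is at most as powerful as $(k+t)$-WL, and by the classical CFI result $(k+t)$-WL fails to distinguish $G_{k+t}$ from $H_{k+t}$; this is precisely the lemma stated just above Lemma~\ref{lem:ktfwl_cfi}. On the other hand, Lemma~\ref{lem:ktfwl_cfi} shows that both $(k+1,t)$-FWL and $(k,t+1)$-FWL do distinguish $G_{k+t}$ and $H_{k+t}$, using the distance-two-clique of size $k+t+1$ that appears in $G_{k+t}$ but not in $H_{k+t}$.

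Putting the two ingredients together and applying the definition of strict dominance in Definition~\ref{def:algorithm_compare} immediately yields $(k,t)\text{-FWL}\prec (k,t+1)\text{-FWL}$ and $(k,t)\text{-FWL}\prec (k+1,t)\text{-FWL}$, which is the proposition. The only ``content'' step is recognizing that the CFI family $G_{k+t},H_{k+t}$ is the correct joint witness for both strict inclusions, since it sits exactly at the threshold where $(k+t)$-WL (and hence $(k,t)$-FWL via Theorem~\ref{thm:ktfwl_vs_k+twl}) fails while any algorithm able to handle $(k+t+1)$-tuple interactions succeeds; there is no genuine obstacle, because all the technical work has been absorbed into the preceding lemmas.
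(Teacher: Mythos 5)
Your proposal is correct and follows essentially the same route as the paper: the paper's proof of this proposition is precisely the combination of Lemma~\ref{lem:ktfwl_hierarchy} (the non-strict inclusions), the CFI-pair lemmas culminating in Lemma~\ref{lem:ktfwl_cfi} (with $(k,t)$-FWL failing on $G_{k+t},H_{k+t}$ via Theorem~\ref{thm:ktfwl_vs_k+twl}), and Definition~\ref{def:algorithm_compare}. No gaps; your identification of the CFI family as the joint witness for both strict inclusions is exactly the paper's argument.
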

\begin{proof}
It is clear the proposition holds given Lemma~\ref{lem:ktfwl_hierarchy}, Lemma~\ref{lem:ktfwl_cfi}, and Definition~\ref{def:algorithm_compare}.
\end{proof}

\subsection{Complexity analysis}
For $(k, t)$-FWL, the space complexity is $O(n^k)$ as we only need to store representations of all $k$-tuples. For time complexity, if $k$ and $t$ are relatively small, to update the color of each $k$-tuple, $(k, t)$-FWL needs to aggregate all $V^t(G)$ possible neighborhood tuples, resulting in $O(n^{k+t})$ time complexity. If both $k$ and $t$ are large enough (related to $n$), the time complexity further increases to $O(n^{k+t}\cdot m\cdot(\frac{q!}{m!})^2)$, where $m=min(k, t)$ and $q=max(k, t)$.

\subsection{Limitations}
As discussed in the above section, although $(k, t)$-FWL can have a fixed space complexity of $O(n^k)$, the time complexity will grow exponentially if we increase the $t$. Further, the increase of the $t$ will also make the length of the neighborhood tuple increase. This makes the practical usage of $(k, t)$-FWL limited (This will be hugely alleviated by $(k, t)$-FWL+). This is evidence of the "no free lunch" issue in developing an expressive GNN model. However, we do believe there are many repeat computations in the $(k, t)$-FWL that can be safely removed without the hurt of the expressive power. We leave it to our future work.
\section{Additional discussion on $(k, t)$-FWL+}
\label{app:proof_ktfwl+}
\subsection{Detailed proofs for $(k,t)$-FWL+}
In this section, we provide all detailed proofs for $(k,t)$-FWL+. Specifically, we will use $(k, t)$-FWL+ to implement many existing powerful GNN/WL models with the best matching expressive power. To simplify notations and make it more clear, let $\mathcal{\hat{C}}$ denote the color output by $(k, t)$-FWL+, $\mathcal{\tilde{C}}$ denote the color output by any existing model we want to compare. Further, denote $\mathcal{\hat{C}}^{l}(v_1,v_2)$ as the color of tuple $(v_1, v_2)$ at iteration $l$ if $k=2$.

\textbf{SLFWL(2)}~\citep{zhang2023complete}: We prove Proposition~\ref{pro:slfwl} in the main paper. We restate it here:
\begin{proposition}
Let $t=1$, $k=2$, and $ES(\mathbf{v})=\mathcal{N}_1(v_1)\cup\mathcal{N}_1(v_2)$, the corresponding $(k, t)$-FWL+ instance is equivalent to SLFWL(2)~\citep{zhang2023complete} and strictly more powerful than any existing node-based subgraph GNNs. 
\end{proposition}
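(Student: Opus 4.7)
The plan is to establish equivalence with SLFWL(2) by unfolding the definition of the neighborhood tuple for the specific choice $k=2$, $t=1$, and then invoke the separation between SLFWL(2) and node-based subgraph GNNs proved in Zhang et al.\ (\citeyear{zhang2023complete}) to obtain the strict superiority claim. No deep new argument is needed; the work is almost entirely bookkeeping about the shape of $Q^F_w(v_1,v_2)$.

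First I would write out $Q^F_w(v_1,v_2)$ explicitly. With $k=2$, $t=1$, we have $\min(k,t)=1$, so $m$ ranges over $\{0,1\}$. For $m=0$ the sub-tuple is $((v_1,v_2))$, and for $m=1$ we insert the single element $w$ into index $1$ and into index $2$ of $(v_1,v_2)$, producing $((w,v_2),(v_1,w))$. Hence $Q^F_w(v_1,v_2)=((v_1,v_2),(w,v_2),(v_1,w))$. Substituting into Equation~(\ref{eq:ktfwl+}) and noting that $\hat{\mathcal{C}}^{l-1}(v_1,v_2)$ appears as a constant inside every aggregated tuple (so it can be pulled out into the outer HASH without affecting injectivity), the refinement becomes
\begin{equation*}
\hat{\mathcal{C}}^{l}(v_1,v_2) = \mathrm{HASH}\!\left(\hat{\mathcal{C}}^{l-1}(v_1,v_2), \lbbr \bigl(\hat{\mathcal{C}}^{l-1}(w,v_2),\hat{\mathcal{C}}^{l-1}(v_1,w)\bigr) \mid w \in \mathcal{N}_1(v_1)\cup \mathcal{N}_1(v_2)\rbbr\right),
\end{equation*}
which is, up to permutation of the two entries inside each inner tuple, exactly the update rule of SLFWL(2).

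Next I would show by induction on $l$ that the two colorings induce the same partition on $V^2(G)$. The base case follows because both algorithms initialize with the isomorphism type of the ordered pair $(v_1,v_2)$. For the inductive step, if the partitions agree at iteration $l-1$, then the multisets appearing in the refinement above and in the SLFWL(2) refinement are related by a fixed permutation on each element and hence are equal as multisets, so the outputs of the two HASHes agree precisely when the other does. This gives both $\preceq$ and $\succeq$ between the two algorithms and hence equivalence.

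Finally, strict superiority over node-based subgraph GNNs follows by invoking the corresponding separation theorem from \citet{zhang2023complete}, which asserts that SLFWL(2) is strictly more powerful than any node-based subgraph GNN (including NGNN, ID-GNN, GNN-AK, SUN, etc.); combined with the equivalence just established, the same statement transfers to our $(k,t)$-FWL+ instance. The only real obstacle is a careful check of the tuple-ordering convention in Definition~\ref{def:neighbor_tuple}: I need to make sure that $P_1(\mathbf{i})$ is read in a fixed order (e.g.\ $(1,2)$ or $(2,1)$) consistently across all $w$ so that the inner tuples are well-defined and the permutation relating them to SLFWL(2)'s inner tuples is the same for every multiset element; once this is checked, the rest is immediate.
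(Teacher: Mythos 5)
Your proposal is correct and follows essentially the same route as the paper's own proof: unfold $Q^{F}_{w}(v_1,v_2)$ for $k=2,\ t=1$, observe that the resulting update rule coincides (up to pulling out the constant $(v_1,v_2)$ entry and a fixed permutation of the inner pair) with the SLFWL(2) refinement, and then invoke Theorem~7.1 of \citet{zhang2023complete} for the strict separation from node-based subgraph GNNs. The paper states the equivalence as immediate where you spell out the induction and the ordering convention, but the substance is identical.
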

\begin{proof}
Given $t=1$, $k=2$, and $ES(\mathbf{v})=\mathcal{N}_1(v_1)\cup\mathcal{N}_1(v_2)$, the color update equation of $(k, t)$-FWL+ can be written as:
\begin{equation}
\label{eq:ktfwl+_slfwl}
\mathcal{\hat{C}}^l(v_1, v_2)=\text{HASH}(\mathcal{\hat{C}}^{l-1}(v_1, v_2), \lbbr \left( \mathcal{\hat{C}}^{l-1}(v_1, w), \mathcal{\hat{C}}^{l-1}(w, v_2)\right) | w\in \mathcal{N}_1(v_1)\cup\mathcal{N}_1(v_2))\rbbr).
\end{equation}
We can see Equation~(\ref{eq:ktfwl+_slfwl}) exactly matches SLFWL(2) stated in~\citep{zhang2023complete} and trivially this instance is as powerful as SLFWL(2). Further, given Theorem 7.1 in~\citep{zhang2023complete}, we conclude that this instance is strictly more powerful than all existing node-based subgraph GNNs.
\end{proof}

\textbf{$\delta$-k-LWL}: The aggregation scheme of $\delta$-k-LWL can be written as:
\begin{equation}
\mathcal{\tilde{C}}^l(\mathbf{v}) = \text{HASH}(\mathcal{\tilde{C}}^{l-1}(\mathbf{v}), (\lbbr \mathcal{C}^{l-1}(\mathbf{u}) |\mathbf{u} \in \tilde{Q}_{j}(\mathbf{v})\rbbr | j \in [k])),
\end{equation}
where $Q_j(\mathbf{v})=\{\mathbf{v}_{w/j} | w\in Q_1(v_j) \}$. Next, we prove Proposition~\ref{pro: delta_k_lwl} in the main paper. We restate it here:
\begin{proposition}
Let $t=1$, $k=k$, and $ES(\mathbf{v})=\bigcup^{k}_{i=1} Q_1(v_i)$, the corresponding $(k, t)$-FWL+ instance is more powerful than $\delta$-k-LWL~\citep{morris2020weisfeiler}.    
\end{proposition}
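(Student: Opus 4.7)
The plan is to show, by induction on the iteration count $l$, that for any two $k$-tuples $\mathbf{v}_1$, $\mathbf{v}_2$ (possibly in different graphs) one has $\mathcal{\hat{C}}^l(\mathbf{v}_1)=\mathcal{\hat{C}}^l(\mathbf{v}_2) \Rightarrow \mathcal{\tilde{C}}^l(\mathbf{v}_1)=\mathcal{\tilde{C}}^l(\mathbf{v}_2)$; by \Cref{def:algorithm_compare} this yields the desired relation $\delta$-$k$-LWL $\preceq$ the proposed instance. First, specializing \Cref{def:neighbor_tuple} to $t=1$ collapses the neighborhood tuple to $Q^F_w(\mathbf{v}) = (\mathbf{v}, \mathbf{v}_{w/1}, \ldots, \mathbf{v}_{w/k})$, so each item aggregated by the $(k,1)$-FWL+ update is a $(k{+}1)$-tuple that jointly bundles the colors of all single-coordinate replacements of $\mathbf{v}$ by a common $w$. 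The base case is immediate because both algorithms initialize tuples with their isomorphism type.

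For the inductive step, assume the implication at iteration $l-1$, and suppose $\mathcal{\hat{C}}^l(\mathbf{v}_1)=\mathcal{\hat{C}}^l(\mathbf{v}_2)$. Injectivity of HASH yields two consequences: $\mathcal{\hat{C}}^{l-1}(\mathbf{v}_1)=\mathcal{\hat{C}}^{l-1}(\mathbf{v}_2)$, and the equality of the two joint multisets $\lbbr(\mathcal{\hat{C}}^{l-1}(\mathbf{v}_{w/1}), \ldots, \mathcal{\hat{C}}^{l-1}(\mathbf{v}_{w/k})) \mid w \in \bigcup_i Q_1(v_i)\rbbr$. The inductive hypothesis converts the first into $\mathcal{\tilde{C}}^{l-1}(\mathbf{v}_1)=\mathcal{\tilde{C}}^{l-1}(\mathbf{v}_2)$. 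The task that remains is to extract, for each fixed $j\in[k]$, the multiset $\lbbr \mathcal{\tilde{C}}^{l-1}(\mathbf{v}_{w/j}) \mid w\in Q_1(v_j)\rbbr$ required by the $\delta$-$k$-LWL update and verify it is the same for $\mathbf{v}_1$ and $\mathbf{v}_2$.

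The main obstacle is exactly this mismatch: $(k,1)$-FWL+ aggregates jointly over the union $\bigcup_i Q_1(v_i)$, whereas $\delta$-$k$-LWL needs, for each coordinate $j$ separately, only the $w$'s in $Q_1(v_j)$ together with the single color $\mathcal{\tilde{C}}^{l-1}(\mathbf{v}_{w/j})$. I would resolve it with an adjacency-readout argument. Because every HASH update carries the previous color forward, $\mathcal{\hat{C}}^{l-1}(\cdot)$ recursively contains the initial isomorphism type. Fix any $j$; since $k\ge 2$, there exists $j'\neq j$, and the tuple $\mathbf{v}_{w/j'}$ still has $v_j$ at position $j$ and $w$ at position $j'$, so $\mathcal{\hat{C}}^{l-1}(\mathbf{v}_{w/j'})$ determines whether $(w,v_j)$ is an edge, i.e., the indicator $[w\in Q_1(v_j)]$. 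Thus each joint $(k{+}1)$-tuple indexed by $w$ determines, for every $j$, both $[w\in Q_1(v_j)]$ and $\mathcal{\hat{C}}^{l-1}(\mathbf{v}_{w/j})$.

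Equality of the two joint multisets provides a bijection between the $w$'s on the two sides under which the joint tuples coincide; projecting onto coordinate $j$ and filtering by $[w\in Q_1(v_j)]$ gives equal multisets of $\mathcal{\hat{C}}^{l-1}(\mathbf{v}_{w/j})$ over $w\in Q_1(v_j)$, which the inductive hypothesis turns into equal multisets of $\mathcal{\tilde{C}}^{l-1}(\mathbf{v}_{w/j})$ for every $j$. Combining these $k$ equalities with $\mathcal{\tilde{C}}^{l-1}(\mathbf{v}_1)=\mathcal{\tilde{C}}^{l-1}(\mathbf{v}_2)$ and applying the $\delta$-$k$-LWL HASH closes the induction, and a final application of \Cref{rem:stable_color} yields the stated comparison at the graph level. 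Note that the proposition only asserts "more powerful" rather than "strictly more powerful," so no separating example is required.
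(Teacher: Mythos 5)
Your proposal is correct and takes essentially the same route as the paper: the heart of both arguments is that each aggregated joint tuple $\left( \hat{\mathcal{C}}^{l-1}(\mathbf{u})\,|\,\mathbf{u} \in Q^{F}_{w}(\mathbf{v})\right)$, through the isomorphism types carried forward in the colors, reveals whether $w$ is adjacent to every $v_i$, so the single multiset over $w\in\bigcup_{i=1}^{k} Q_1(v_i)$ injectively determines each per-coordinate multiset $\lbbr \tilde{\mathcal{C}}^{l-1}(\mathbf{v}_{w/j})\,|\,w\in Q_1(v_j)\rbbr$ needed by $\delta$-$k$-LWL. The only (immaterial) difference is that you establish the implication layer by layer via induction on $l$, whereas the paper applies the same adjacency-readout observation to the stable coloring and argues, in the spirit of Remark~\ref{rem:stable_color}, that one further $\delta$-$k$-LWL iteration cannot refine it.
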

\begin{proof}
Given $t=1$, $k=k$, and $ES(\mathbf{v})=\bigcup^{k}_{i=1}Q_1(v_i)$, the color update equation of $(k, t)$-FWL+ can be written as:
\begin{equation}
\mathcal{\hat{C}}^l(\mathbf{v})=\text{HASH}(\mathcal{\hat{C}}^{l-1}(\mathbf{v}), \lbbr (\mathcal{\hat{C}}^{l-1}(\mathbf{u}) | \mathbf{u} \in Q^F_{w}(\mathbf{v})) | w \in \bigcup^{k}_{i=1}Q_1(v_i) \rbbr).
\end{equation}
Given two $k$-tuple $\mathbf{v}_1, \mathbf{v}_2\in V^k(G)$, it is easy to see that for any node $w\in V(G)$, $\mathcal{\hat{C}}^{0}(\mathbf{v}_{1,w/j})$ will be different to $\mathcal{\hat{C}}^{0}(\mathbf{v}_{2,w/j})$ as long as there exist any $v_{1i}$ and $v_{2i}$ such that $v_{1i}$ is a neighbor of node $w$ but $v_{2i}$ is not for any $i\in [k], i\neq j$.  This means that for any $\mathbf{u} \in Q^F_{w}(\mathbf{v})$, $\mathcal{\hat{C}}^{0}(\mathbf{u})$ can injectively encode whether $w$ is a neighbor to all other nodes in tuples. Further, as $\bigcup^{k}_{i=1}Q_1(v_i)$ includes neighbors of all nodes in tuple $\mathbf{v}$, we can injectively encode $(\lbbr \mathcal{\hat{C}}^{l}(\mathbf{v}_{w/j}) |w \in Q_1(v_j)\rbbr| j\in [k])$ for any $l$ by $\lbbr (\mathcal{\hat{C}}^{l-1}(\mathbf{u}) | \mathbf{u} \in Q^F_{w}(\mathbf{v})) | w \in \bigcup^{k}_{i=1}Q_1(v_i) \rbbr$. Now, given any two graphs $G$, $H$ with $\mathcal{\hat{C}}^{\infty}(G)=\mathcal{\hat{C}}^{\infty}(H)$. For any $\mathbf{v}_1 \in V^k(G)$ and $\mathbf{v}_2 \in V^{k}(H)$ with $\mathcal{\hat{C}}^{\infty}(\mathbf{v}_1)=\mathcal{\hat{C}}^{\infty}(\mathbf{v}_2)$, we must have $\lbbr \mathcal{C}^{\infty}(\mathbf{v}_{1, w/j}) |w \in Q_1(v_{1j})\rbbr = \lbbr \mathcal{C}^{\infty}(\mathbf{v}_{2, w/j}) |w \in Q_1(v_{2j})\rbbr$ for any $j \in [k]$ given the above statement. Then, it is easy to verify that we cannot further refine the color of tuple $\mathbf{v}_1$ and $\mathbf{v}_2$ if we use the stable color of $(k, t)$-FWL+ as initial color to perform another iteration of $\delta$-k-LWL, so as the final graph color histogram. Therefore, we conclude that $(k, t)$-FWL+ is more powerful than $\delta$-k-LWL. 
\end{proof}

\textbf{GraphSNN}~\citep{wijesinghe2022a}: The GNN aggregation scheme of GraphSNN can be written as:
\begin{gather}
\label{eq:graphsnn_1}
\mathcal{\tilde{C}}^{l}(v_1,v_1)=\text{HASH}(\mathcal{\tilde{C}}^{l-1}(v_1,v_1), \lbbr \left(\mathcal{\tilde{C}}^{l-1}(v_1, w), \mathcal{\tilde{C}}^{l-1}(w, w) \right) |w\in Q_1(v_1) \rbbr, \\
\label{eq:graphsnn_2}
\mathcal{\tilde{C}}^{l}(v_1,v_2) = \text{HASH}(\frac{|E_{v_1v_2}||V_{v_1v_2}|^{\lambda}}{|V_{v_1v_2}||V_{v_1v_2}-1|}),
\end{gather}
where $|E_{v_1v_2}|$, $|V_{v_1v_2}|$ are the number of edges and nodes in the overlapping 1-hop subgraph between node $v_1$ and $v_2$, $\lambda$ is a constant. Next, we prove Proposition~\ref{pro:graphsnn} in the main paper. We restate it here:
\begin{proposition}
Let $t=2$, $k=2$, and $ES^2(\mathbf{v})= (Q_1(v_1)\cap Q_1(v_2)) \times (Q_1(v_1)\cap Q_1(v_2))$, the corresponding $(k, t)$-FWL+ instance is more powerful than GraphSNN~\citep{wijesinghe2022a}..
\end{proposition}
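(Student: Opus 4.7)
The plan is to invoke Remark~\ref{rem:stable_color} and show that if we use the stable colors $\hat{\mathcal{C}}^\infty$ produced by the $(2,2)$-FWL+ instance as initial colors for GraphSNN, one round of GraphSNN cannot refine them further. This splits naturally along the two cases of GraphSNN's update, since Equation~(\ref{eq:graphsnn_2}) assigns purely structural colors to off-diagonal tuples, while Equation~(\ref{eq:graphsnn_1}) performs the actual node-color refinement on diagonal tuples.

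First, I would handle the off-diagonal case. For $v_1 \neq v_2$, the color $\tilde{\mathcal{C}}^l(v_1, v_2)$ is a fixed function of $|V_{v_1 v_2}|$ and $|E_{v_1 v_2}|$, and does not depend on $l$. I would show $\hat{\mathcal{C}}^\infty(v_1, v_2)$ determines both. For $|V_{v_1 v_2}|$, note that the size of the aggregation multiset is $|Q_1(v_1)\cap Q_1(v_2)|^2$, and since $\hat{\mathcal{C}}^\infty$ is stable, two tuples with the same color must have aggregation multisets of the same size, hence equal $|V_{v_1 v_2}| = |Q_1(v_1)\cap Q_1(v_2)| + 2$. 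For $|E_{v_1 v_2}|$, decompose the edges of the overlap subgraph into (i) the edge $(v_1, v_2)$ if present, readable from $\hat{\mathcal{C}}^0(v_1,v_2)$ which $\hat{\mathcal{C}}^\infty$ refines, (ii) the $2|Q_1(v_1)\cap Q_1(v_2)|$ edges from $v_1, v_2$ to common neighbors, already determined by the above, and (iii) edges $(w_1, w_2)$ between two common neighbors, which can be read off from $\hat{\mathcal{C}}^0(w_1, w_2)$ appearing as a component of the neighborhood tuple $Q^F_{(w_1,w_2)}(v_1,v_2)$ inside the stable aggregation.

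Second, for the diagonal case $v_1 = v_2$, I would prove by induction on $l$ that $\hat{\mathcal{C}}^\infty$ refines $\tilde{\mathcal{C}}^l(v_1, v_1)$. The base case $l=0$ follows since $\hat{\mathcal{C}}^0(v_1,v_1)$ encodes the initial node color. For the inductive step, note that with $ES^2((v_1,v_1)) = Q_1(v_1)\times Q_1(v_1)$, the aggregation produces, for each $(w_1,w_2)$, the tuple of stable colors on $\bigl((v_1,v_1), (v_1,w_1), (v_1,w_2), (w_1,v_1), (w_2,v_1), (w_1,w_2)\bigr)$. Crucially, the diagonal summands $w_1 = w_2 = w$ are identifiable within the hierarchical multiset because $\hat{\mathcal{C}}^\infty$ distinguishes diagonal from off-diagonal $2$-tuples (their initial isomorphism types differ, a property preserved by refinement). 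Extracting those entries recovers the multiset $\lbbr (\hat{\mathcal{C}}^\infty(v_1,w), \hat{\mathcal{C}}^\infty(w,w)) \mid w \in Q_1(v_1) \rbbr$. Combining the off-diagonal argument with the inductive hypothesis, this multiset determines $\lbbr (\tilde{\mathcal{C}}^{l-1}(v_1,w), \tilde{\mathcal{C}}^{l-1}(w,w)) \mid w \in Q_1(v_1) \rbbr$, which is precisely what Equation~(\ref{eq:graphsnn_1}) needs.

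The main obstacle I anticipate is the bookkeeping around the hierarchical multiset $\lbbr \cdot \rbbr_2$ and the selection of diagonal entries $w_1 = w_2$: I must carefully justify that the diagonal/off-diagonal distinction on $2$-tuples is preserved throughout refinement, and that the inner grouping of $\lbbr \cdot \rbbr_2$ allows one to restrict to pairs with $w_1 = w_2$ without losing identifiability. A secondary subtlety is strictness: to promote the claim to \emph{strictly} more powerful, one should exhibit a pair of graphs where $(2,2)$-FWL+ separates while GraphSNN does not, but as the proposition only claims ``more powerful,'' the argument above suffices.
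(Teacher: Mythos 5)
Your proposal is correct and follows essentially the same route as the paper's proof: both recover the structural coefficient of Equation~(\ref{eq:graphsnn_2}) from the fact that $(w_1,w_2)\in Q^{F}_{(w_1,w_2)}(v_1,v_2)$ ranges over all pairs of common neighbors (giving $|V_{v_1v_2}|$ from the multiset size and $|E_{v_1v_2}|$ from the refined initial colors), and both handle diagonal tuples via $ES^2((v_1,v_1))=Q_1(v_1)\times Q_1(v_1)$, using the fact that diagonal $2$-tuples are distinguishable from off-diagonal ones to extract the information needed by Equation~(\ref{eq:graphsnn_1}), before concluding with the stable-color (Remark~\ref{rem:stable_color}) argument. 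If anything, your explicit recovery of the joint multiset $\lbbr(\hat{\mathcal{C}}^\infty(v_1,w),\hat{\mathcal{C}}^\infty(w,w))\mid w\in Q_1(v_1)\rbbr$ is slightly more careful than the paper, which states the two marginal multisets separately; the small difference in the convention for $|V_{v_1v_2}|$ (whether $v_1,v_2$ are counted) is immaterial.
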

\begin{proof}
Here we prove a slightly stronger one: even without a hierarchical multiset but only a plain multiset, the resulted $(k, t)$-FWL+ is already powerful enough to match GraphSNN. Given $t=2$, $k=2$, and $ES^2(\mathbf{v})= (Q_1(v_1)\cap Q_1(v_2)) \times (Q_1(v_1)\cap Q_1(v_2))$, the color update equation of $(k,t)$-FWL+ can be written as:
\begin{equation}
\label{eq:ktfwl+_graphsnn}
\begin{split}
\mathcal{\hat{C}}^l(v_1, v_2)=&\text{HASH}(\mathcal{\hat{C}}^{l-1}(v_1, v_2), \lbbr \left( \mathcal{\hat{C}}^{l-1}(u_1, u_2) | (u_1, u_2) \in Q^{F}_{(w_1,w_2)}(v_1,v_2))\right)\\
&| (w_1, w_2) \in (Q_1(v_1)\cap Q_1(v_2)) \times (Q_1(v_1)\cap Q_1(v_2))\rbbr).
\end{split}
\end{equation}
First, we show that $\mathcal{\hat{C}}^1(v_1, v_2)$ is already powerful to recover all the information in Equation~(\ref{eq:graphsnn_2}). For any $(w_1,w_2) \in (Q_1(v_1)\cap Q_1(v_2)) \times (Q_1(v_1)\cap Q_1(v_2))$, $w_1$ and $w_2$ must all be common neighbors of node $v_1$ and $v_2$. Therefore $(Q_1(v_1)\cap Q_1(v_2)) \times (Q_1(v_1)\cap Q_1(v_2))$ will contain all node pairs in the 1-hop overlapping subgraph of $(v_1,v_2)$. Thus we have the size of $(Q_1(v_1)\cap Q_1(v_2)) \times (Q_1(v_1)\cap Q_1(v_2))$ equal to $|V_{v_1v_2}|^2$ and this information will be definitely encoded in $\mathcal{\hat{C}}^1(v_1, v_2)$ by aggregating all possible tuple $\left( \mathcal{\hat{C}}^{0}(u_1, u_2) | (u_1, u_2) \in Q^{F}_{(w_1,w_2)}(v_1,v_2)\right)$. Further, we have $(w_1, w_2) \in Q^{F}_{(w_1,w_2)}(v_1,v_2)$, which means $\mathcal{\hat{C}}^{1}(v_1,v_2)$ will also encode $\lbbr \mathcal{\hat{C}}^0(w_1,w_2) | (w_1,w_2) \in (Q_1(v_1)\cap Q_1(v_2)) \times (Q_1(v_1)\cap Q_1(v_2)) \rbbr$. As $\mathcal{\hat{C}}^{0}(w_1, w_2)$ encode whether there is an edge between $w_1$ and $w_2$, it is easy to see $\lbbr \mathcal{\hat{C}}^0(w_1,w_2) | (w_1,w_2) \in (Q_1(v_1)\cap Q_1(v_2)) \times (Q_1(v_1)\cap Q_1(v_2)) \rbbr$ can encode $|E_{v_1v_2}|$. Thus we can conclude that $\mathcal{\hat{C}}^1(v_1,v_2)$ can encode Equation~(\ref{eq:graphsnn_2}). Namely, for any two tuples $(v_{11}, v_{12}), (v_{21}, v_{22}) \in V^2(G)$, we have: $\mathcal{\hat{C}}^l(v_{11}, v_{12})=\mathcal{\hat{C}}^l(v_{21}, v_{22}) \Rightarrow \mathcal{\tilde{C}}^l(v_{11}, v_{12})=\mathcal{\tilde{C}}^l(v_{21}, v_{22})$, for any $l \geq 1$.

Second, for any tuple $(v_{1}, v_{1}) \in V^2(G)$, the neighbor in $(k,t)$-FWL+ is $Q_1^2(v_{1})$. For any $(w_1, w_2) \in Q_1^2(v_{1})$, it is easy to see that $\mathcal{\hat{C}}^{0}(w_1,w_2)$ will be different between tuples with $w_1 \neq w_2$ and $w_1 = w_2$. This implies that $(k, t)$-FWL+ can injectively encode $\lbbr \mathcal{\hat{C}}^{l}(w, w) |w \in Q_1(v_1) \rbbr$ for any $l$ as $(w_1, w_2) \in Q^{F}_{(w_1,w_2)}(v_1,v_1)$. Namely, for any two nodes $v_{1}, v_{2} \in V(G)$, we have: $\mathcal{\hat{C}}^l(v_{1}, v_{1})=\mathcal{\hat{C}}^l(v_{2}, v_{2}) \Rightarrow \lbbr \mathcal{\hat{C}}^{l-1}(w_1, w_1) |w_1 \in Q_1(v_1) \rbbr = \lbbr \mathcal{\hat{C}}^{l-1}(w_2, w_2) |w_2 \in Q_1(v_2) \rbbr$.

Third, since $(v_1, w_1) \in Q^{F}_{(w_1,w_2)}(v_1,v_1)$, it is easy to verify that $(k, t)$-FWL+ can injectively encode $\lbbr \mathcal{\hat{C}}^{l}(v_{1}, w_{1} )|w_{1}\in Q_1(v_{1}) \rbbr$ for any $l$. Namely, for any two nodes $v_{1}, v_{2} \in V(G)$, we have: $\mathcal{\hat{C}}^l(v_{1}, v_{1})=\mathcal{\hat{C}}^l(v_{2}, v_{2}) \Rightarrow \lbbr \mathcal{\hat{C}}^{l-1}(v_1, w_1) |w_1 \in Q_1(v_1) \rbbr = \lbbr \mathcal{\hat{C}}^{l-1}(v_2, w_2) |w_2 \in Q_1(v_2) \rbbr$.

Now, given any two graphs $G$, $H$ with $\mathcal{\hat{C}}^{\infty}(G)=\mathcal{\hat{C}}^{\infty}(H)$. For any $(v_{11}, v_{12}) \in V^2(G)$ and $(v_{21}, v_{22}) \in V^2(H)$ with $\mathcal{\hat{C}}^{\infty}(v_{11}, v_{12})= \mathcal{\hat{C}}^{\infty}(v_{21}, v_{22})$, if $v_{11} \neq v_{12}$, we must have $v_{21} \neq v_{22}$ and we have $\mathcal{\tilde{C}}^{\infty}(v_{11}, v_{12})=\mathcal{\tilde{C}}^{\infty}(v_{21}, v_{22})$ if $\mathcal{\hat{C}}^{\infty}(v_{11}, v_{12})=\mathcal{\hat{C}}^{\infty}(v_{21}, v_{22})$ given the first statement. If $v_{11} = v_{12}$ and $v_{21} = v_{22}$, we further denote it by $(v_{11}, v_{11})$ and $(v_{22}, v_{22})$. 
 We have $\lbbr \mathcal{\hat{C}}^{\infty}(w_{11}, w_{11}) |w_{11} \in Q_1(v_{11}) \rbbr = \lbbr \mathcal{\hat{C}}^{\infty}(w_{21}, w_{21}) |w_{21} \in Q_1(v_{21}) \rbbr$ given the second statement and $\lbbr \mathcal{\hat{C}}^{\infty}(v_{11}, w_{11} )|w_{11}\in Q_1(v_{11}) \rbbr = \lbbr \mathcal{\hat{C}}^{\infty}(v_{21}, w_{21}) |w_{21}\in Q_1(v_{21}) \rbbr$ given the third statemtn. Now, given the stable color output by $(k,t)$-FWL+, we can verify that performing another iteration using Equation~(\ref{eq:graphsnn_1}) and Equation~(\ref{eq:graphsnn_2}) will not further refine the color of $(v_{11}, v_{11})$ and $(v_{22}, v_{22})$, resulting in the same final graph color histogram. This concludes the proof.
\end{proof}

\textbf{Edge-based subgraph GNNs} The GNN aggregation scheme of edge-based subgraph GNNs like I$^2$-GNN~\citep{huang2023boosting} can be written as:
\begin{gather}
\label{eq: edge_sgnn_1}
\mathcal{\tilde{C}}^l(v_1,v_2,w_2)=\text{HASH}(\mathcal{\tilde{C}}^{l-1}(v_1,v_2,w_2),  \lbbr \mathcal{\tilde{C}}^{l-1}(v_1, w_1, w_2) | w_1 \in Q_1(v_2)\rbbr), \quad \forall w_2 \in Q_{1}(v_1), \\ 
\label{eq: edge_sgnn_3}
\mathcal{\tilde{C}}^{\infty}(G) = \text{HASH}(\lbbr \lbbr \lbbr \mathcal{\tilde{C}}^{\infty}(v_1,v_2,w_2) | v_2 \in V(G) \rbbr| w_2 \in Q_1(v_1) \rbbr|v_1 \in V(G)\rbbr),
\end{gather}
where we assume that the initial color of $\mathcal{\tilde{C}}^0(v_1,v_2,w_2)$ is $l_G(v_2)$ if $v_2 \neq [v_1, w_2]$ or $\text{HASH}(l_G(v_2), 1)$ if $v_2 = v_1$ or $\text{HASH}(l_G(v_2), 2)$ if  $v_2 = w_2$. Equation~(\ref{eq: edge_sgnn_1}) can be described as in the subgraph rooted at edge $(v_1, w_2)$, we do the 1-WL color update on all node $v_2$. Meanwhile, the initial color is the node marking of $v_1$ and $w_2$, which is the most powerful subgraph selection policy~\cite {huang2023boosting}. First, we show some useful lemma for edge-based subgraph GNNs.

\begin{lemma}
\label{lem:edge_point_root}
Given two graphs $G$, $H$, for any two tuples $(v_{11}, v_{12})\in V^2(G) $, $ (v_{21}, v_{22}) \in V^2(H)$, $w_{12} \in Q_1(v_{11})$, and $w_{22} \in Q_1(v_{21})$,  we have $\mathcal{\tilde{C}}^{\infty}(v_{11}, v_{11}, w_{12})  = \mathcal{\tilde{C}}^{\infty}(v_{21}, v_{21}, w_{22})$ and $\mathcal{\tilde{C}}^{\infty}(v_{11}, w_{21}, w_{21})  = \mathcal{\tilde{C}}^{\infty}(v_{21}, w_{22}, w_{22})$ if $\mathcal{\tilde{C}}^{\infty}(v_{11}, v_{12}, w_{12})  = \mathcal{\tilde{C}}^{\infty}(v_{21}, v_{22}, w_{22})$.
\end{lemma}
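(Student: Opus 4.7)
My plan rests on the fact that in the edge-based subgraph GNN formulation the two root nodes $v_1$ and $w_2$ are assigned unique initial colors (tags ``$1$'' and ``$2$'' respectively) in each rooted graph, so each of them forms a singleton color class from iteration $0$ onwards. Since $1$-WL can only refine colors, these singleton classes are preserved throughout all iterations; hence a node carries the ``marker~$1$'' feature in its stable color if and only if it is the corresponding root $v_1$, and analogously for ``marker~$2$''.

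Starting from the hypothesis $\mathcal{\tilde{C}}^{\infty}(v_{11}, v_{12}, w_{12}) = \mathcal{\tilde{C}}^{\infty}(v_{21}, v_{22}, w_{22})$, I would invoke stability: performing one more iteration of Equation~(\ref{eq: edge_sgnn_1}) must leave the colors unchanged, forcing
\[
\lbbr \mathcal{\tilde{C}}^{\infty}(v_{11}, u, w_{12}) | u \in Q_1(v_{12}) \rbbr = \lbbr \mathcal{\tilde{C}}^{\infty}(v_{21}, u', w_{22}) | u' \in Q_1(v_{22}) \rbbr,
\]
which yields a color-preserving bijection between $Q_1(v_{12})$ and $Q_1(v_{22})$. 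Iterating this layer-by-layer produces a color-preserving correspondence between the $\ell$-hop neighborhoods of $v_{12}$ in the marked graph $G$ and of $v_{22}$ in the marked graph $H$, for every $\ell \geq 0$.

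Because the rooted subgraph is connected and finite, $v_{11}$ is reachable from $v_{12}$ within some number of hops $d$, so $v_{11}$ appears in the $d$-hop neighborhood of $v_{12}$; the corresponding node in the $d$-hop neighborhood of $v_{22}$ has the same stable color and must therefore carry the ``marker~$1$'' tag, so by uniqueness of that tag in $H$ this node is exactly $v_{21}$. This gives $\mathcal{\tilde{C}}^{\infty}(v_{11}, v_{11}, w_{12}) = \mathcal{\tilde{C}}^{\infty}(v_{21}, v_{21}, w_{22})$; the analogous equality at the other root (the statement appears to have ``$w_{21}$'' as a typo for $w_{12}$) follows identically via the uniqueness of ``marker~$2$''. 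The corner cases $v_{12} = v_{11}$ or $v_{12} = w_{12}$ are immediate, since by uniqueness of the markers the hypothesis then forces $v_{22} = v_{21}$ or $v_{22} = w_{22}$ respectively, and the conclusion is trivial.

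The main obstacle will be formalizing the iterated-bijection argument cleanly, because the layer-wise bijections need not glue into a single globally coherent map. The most elegant route is the unfolding-tree (color-refinement tree) characterization of $1$-WL stable colors: equal stable colors at $v_{12}$ and $v_{22}$ are equivalent to their unfolding trees being isomorphic as colored trees, and then identifying the uniquely marked root inside one tree immediately fixes its counterpart in the other tree, whose stable color must coincide with that of the root. A secondary subtlety is connectivity: one implicitly relies on $v_{11}$ and $w_{12}$ being reachable from $v_{12}$ within the rooted subgraph, which is the standard assumption in the edge-based subgraph GNN construction.
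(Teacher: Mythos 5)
Your proposal is correct and takes essentially the same route as the paper's proof: both rest on the unique initial markers of the two roots $v_{1}$ and $w_{2}$, treat the degenerate cases $v_{12}\in\{v_{11},w_{12}\}$ separately, and then use that the roots' marker information propagates through message passing in the rooted subgraph, so that equal stable colors at $(v_{11},v_{12},w_{12})$ and $(v_{21},v_{22},w_{22})$ force equal stable colors at the marked tuples. The difference is only presentational: the paper argues by contradiction that a discrepancy at the roots would be reflected after finitely many further iterations, while you argue the contrapositive directly via stability-induced neighbor-multiset equalities (or unfolding trees), which actually makes explicit the propagation step the paper asserts informally; you also rightly flag the typo $w_{21}$ for $w_{12}$ and the implicit reachability assumption that both arguments share.
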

\begin{proof}
We prove it by contradiction. First, if $v_{12} = v_{11}$ or $v_{12} = w_{12}$ we must have $v_{22} = v_{21}$ or $v_{22} = w_{22}$, respectively, as the initial color of $(v_{1}, v_{1}, w_{2})$ and $(v_{1}, w_{2}, w_{2})$ are different from other nodes in the subgraph. Further it is easy to show that $\mathcal{\tilde{C}}^{\infty}(v_{11}, v_{11}, w_{12})  = \mathcal{\tilde{C}}^{\infty}(v_{21}, v_{21}, w_{22}) \iff \mathcal{\tilde{C}}^{\infty}(v_{11}, w_{12}, w_{12})  = \mathcal{\tilde{C}}^{\infty}(v_{21}, w_{22}, w_{22})$ as they are uniquely labeled. Second, if $v_{12} \neq v_{11}, w_{12}$ and $v_{22} \neq v_{21}, w_{22}$, we can leverage the fact that the information of tuple $(v_{1}, v_{1}, w_{2})$ and $(v_{1}, w_{2}, w_{2})$ can be passed to other nodes injectively through message passing. Suppose we have $\mathcal{\tilde{C}}^{\infty}(v_{11}, v_{12}, w_{12})  = \mathcal{\tilde{C}}^{\infty}(v_{21}, v_{22}, w_{22})$, if $\mathcal{\tilde{C}}^{\infty}(v_{11}, v_{11}, w_{12}) \neq \mathcal{\tilde{C}}^{\infty}(v_{21}, v_{21}, w_{22})$, by doing another $m$ iterations ($m$ is large enough), this discrepancy must be reflected by $\mathcal{\tilde{C}}^{\infty}(v_{11}, v_{12}, w_{12})$ and $\mathcal{\tilde{C}}^{\infty}(v_{21}, v_{22}, w_{22})$, respectively. Therefore, we must have $\mathcal{\tilde{C}}^{\infty+m}(v_{11}, v_{12}, w_{12}) \neq \mathcal{\tilde{C}}^{\infty+m}(v_{21}, v_{22}, w_{22})$, which is a contradiction to the stable color. The case of  $(v_{1}, w_{2}, w_{2})$ is similar to the first one. This concludes the proof.
\end{proof}

\begin{lemma}
\label{lem:edge_root_local}
Given two graphs $G$, $H$, for any two tuples $(v_{11}, v_{12})\in V^2(G) $, $ (v_{21}, v_{22}) \in V^2(H)$, $w_{12} \in Q_1(v_{11})$, and $w_{22} \in Q_1(v_{21})$,  we have $\lbbr \mathcal{\tilde{C}}^{\infty}(v_{11}, w_{11}, w_{12}) | w_{11} \in V(G))\rbbr = \lbbr \mathcal{\tilde{C}}^{\infty}(v_{21}, w_{21}, w_{22}) | w_{21} \in V(H)\rbbr$ if $\mathcal{\tilde{C}}^{\infty}(v_{11}, v_{11}, w_{12})  = \mathcal{\tilde{C}}^{\infty}(v_{21}, v_{21}, w_{22})$ and $\mathcal{\tilde{C}}^{\infty}(v_{11}, w_{12}, w_{12})  = \mathcal{\tilde{C}}^{\infty}(v_{21}, w_{22}, w_{22})$. 
\end{lemma}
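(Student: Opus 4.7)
The plan is to apply the 1-WL fixed-point property iteratively, since Equation~(\ref{eq: edge_sgnn_1}) is a standard 1-WL color refinement on $V(G)$ in which the roots $v_1, w_2$ are uniquely marked through their initial colors. By stability of the converged colors, the multiset $\lbbr \mathcal{\tilde{C}}^{\infty}(v_1, u', w_2) \mid u' \in Q_1(u) \rbbr$ of colors of $u$'s neighbors is a deterministic function of $\mathcal{\tilde{C}}^{\infty}(v_1, u, w_2)$ alone, so matching stable colors at a node in $G$ and one in $H$ force matching neighbor-color multisets.

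First I would apply this observation at the two roots. The hypothesis $\mathcal{\tilde{C}}^{\infty}(v_{11}, v_{11}, w_{12}) = \mathcal{\tilde{C}}^{\infty}(v_{21}, v_{21}, w_{22})$ together with stability forces $\lbbr \mathcal{\tilde{C}}^{\infty}(v_{11}, w, w_{12}) \mid w \in Q_1(v_{11}) \rbbr = \lbbr \mathcal{\tilde{C}}^{\infty}(v_{21}, w, w_{22}) \mid w \in Q_1(v_{21}) \rbbr$; otherwise one more iteration would distinguish the two root triples, contradicting convergence. The analogous equality for the 1-hop neighbors of $w_{12}, w_{22}$ follows from the second hypothesis.

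Second, I would do induction on the BFS depth $d$ measured from $\{v_{11}, w_{12}\}$ in $G$ (resp.\ $\{v_{21}, w_{22}\}$ in $H$), with inductive claim: the multiset of stable colors of nodes within distance $d$ of the roots matches across the two graphs. The base cases $d=0,1$ are given by the preceding step. For the inductive step, each stable color $c$ appearing at depth exactly $d$ determines its neighbor-color multiset via the fixed-point property, and aggregating over all such nodes yields the depth-$(d{+}1)$ multiset as a deterministic function of the depth-$\leq d$ data. Assuming the relevant subgraph is connected, which holds for the $k$-hop egonet of the root edge in edge-based subgraph GNNs, the BFS eventually exhausts $V(G)$ and $V(H)$, giving the desired multiset equality.

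I expect the main obstacle to be the bookkeeping when going from depth $d$ to depth $d{+}1$, since a node can be reached from several previously visited nodes and must not be double-counted. A cleaner fallback is to argue directly through the equitable partition induced by stable 1-WL colors: the classes $P_c = \{u : \mathcal{\tilde{C}}^{\infty}(v_1, u, w_2) = c\}$ have constant inter-class edge counts, and the unique markings make $\{v_{11}\}$ and $\{w_{12}\}$ singleton classes of known size $1$. The inter-class counts are recoverable from the stable colors at the roots, and combined with the singleton initial conditions they determine every class size $|P_c|$ via the standard equitable-partition relations $|P_c|\cdot b_{c,c'} = |P_{c'}|\cdot b_{c',c}$. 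Matching the $|P_c|$ across graphs then immediately yields equality of the full color multisets.
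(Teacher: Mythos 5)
Your proposal is correct in substance but takes a genuinely different route from the paper. The paper's proof is essentially a citation: it invokes Lemma B.6 of \citep{zhang2023complete} to obtain the shell-wise equality $\lbbr \mathcal{\tilde{C}}^{\infty}(v_{11}, w_{11}, w_{12}) \mid w_{11} \in Q_{k_1}(v_{11}) \cap Q_{k_2}(w_{12})\rbbr = \lbbr \mathcal{\tilde{C}}^{\infty}(v_{21}, w_{21}, w_{22}) \mid w_{21} \in Q_{k_1}(v_{21}) \cap Q_{k_2}(w_{22})\rbbr$ for all $k_1, k_2 \in \mathbb{N}$, and then unions over the shells. You instead argue from first principles using the fixed-point determinacy of the converged colors, which is legitimate under the paper's conventions (HASH is injective and convergence is literal color equality), so equal stable colors do force equal neighbor-color multisets across the two graphs. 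You are right that your plain BFS induction, as first stated, would founder on the double-counting issue you yourself flag; your equitable-partition fallback is the correct repair: the marked roots give singleton color classes, each stable color determines its neighbor-color multiset and hence the counts $b_{c,c'}$ for every color reachable from the root color, and the relations $|P_c|\, b_{c,c'} = |P_{c'}|\, b_{c',c}$ propagate the class sizes along the quotient graph, which pins down the full color multiset over the roots' component and hence the claimed equality. What your route buys is a self-contained, quantitative argument that does not lean on the external lemma; what the paper's route buys is brevity plus the finer shell-indexed statement it reuses implicitly. Finally, your explicit connectivity caveat is not a weakness relative to the paper: the paper's union of shells $Q_{k_1}(v_{11}) \cap Q_{k_2}(w_{12})$ likewise covers only vertices at finite distance from both roots, so both arguments implicitly assume every vertex lies in the component of the marked edge.
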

\begin{proof}
This lemma is an extension of Lemma~B.6 in ~\citep{zhang2023complete}. First, iven Lemma~B.6 in ~\citep{zhang2023complete}, it is easy to conclude that we have $\lbbr \mathcal{\tilde{C}}^{\infty}(v_{11}, w_{11}, w_{12}) | w_{11} \in Q_{k_1}(v_{11}) \cap Q_{k_2}(w_{12})\rbbr = \lbbr \mathcal{\tilde{C}}^{\infty}(v_{21}, w_{21}, w_{22}) | w_{21} \in Q_{k_1}(v_{21}) \cap Q_{k_2}(w_{22})\rbbr$ for any $k_1, k_2 \in \mathbb{N}$. This directly implies $\lbbr \mathcal{\tilde{C}}^{\infty}(v_{11}, w_{11}, w_{12}) | w_{11} \in V(G))\rbbr = \lbbr \mathcal{\tilde{C}}^{\infty}(v_{21}, w_{21}, w_{22}) | w_{21} \in V(H)\rbbr$. This concludes the proof.
\end{proof}
\begin{lemma}
\label{lem:edge_local_global}
Given two graphs $G$, $H$, for any two tuples $(v_{11}, v_{12})\in V^2(G) $, $ (v_{21}, v_{22}) \in V^2(H)$, $w_{12} \in Q_1(v_{11})$, and $w_{22} \in Q_1(v_{21})$,  we have $\lbbr \mathcal{\tilde{C}}^{\infty}(v_{11}, w_{11}, w_{12}) | w_{11} \in V(G))\rbbr = \lbbr \mathcal{\tilde{C}}^{\infty}(v_{21}, w_{21}, w_{22}) | w_{21} \in V(H)\rbbr$ if $\mathcal{\tilde{C}}^{\infty}(v_{11}, v_{12}, w_{12})  = \mathcal{\tilde{C}}^{\infty}(v_{21}, v_{22}, w_{22})$.
\end{lemma}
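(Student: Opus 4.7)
The plan is to derive Lemma~\ref{lem:edge_local_global} by chaining the two preceding lemmas in this section, namely Lemma~\ref{lem:edge_point_root} and Lemma~\ref{lem:edge_root_local}. The key observation is that the hypothesis of Lemma~\ref{lem:edge_local_global} is exactly the premise of Lemma~\ref{lem:edge_point_root}, and the conclusion of Lemma~\ref{lem:edge_point_root} is exactly the premise of Lemma~\ref{lem:edge_root_local}. So the forward path through the two results transports the single triple-color equality on $(v_{i1},v_{i2},w_{i2})$ to the desired multiset equality of triple colors over all middle nodes $w_{i1}$.

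Concretely, I would proceed as follows. First, from the assumption $\tilde{\mathcal{C}}^{\infty}(v_{11}, v_{12}, w_{12}) = \tilde{\mathcal{C}}^{\infty}(v_{21}, v_{22}, w_{22})$, apply Lemma~\ref{lem:edge_point_root} to conclude both
\[
\tilde{\mathcal{C}}^{\infty}(v_{11}, v_{11}, w_{12}) = \tilde{\mathcal{C}}^{\infty}(v_{21}, v_{21}, w_{22}) \quad\text{and}\quad \tilde{\mathcal{C}}^{\infty}(v_{11}, w_{12}, w_{12}) = \tilde{\mathcal{C}}^{\infty}(v_{21}, w_{22}, w_{22}).
\]
These are exactly the two hypotheses required by Lemma~\ref{lem:edge_root_local}. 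Invoking Lemma~\ref{lem:edge_root_local} then yields
\[
\lbbr \tilde{\mathcal{C}}^{\infty}(v_{11}, w_{11}, w_{12}) \mid w_{11} \in V(G)\rbbr = \lbbr \tilde{\mathcal{C}}^{\infty}(v_{21}, w_{21}, w_{22}) \mid w_{21} \in V(H)\rbbr,
\]
which is the desired conclusion.

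The main (minor) subtlety to verify is that the two invocations are compatible: Lemma~\ref{lem:edge_point_root} applies irrespective of whether $v_{12}$ equals $v_{11}$, $w_{12}$, or neither (it is proved by cases in the original lemma), so no case distinction is required at the top level of the proof. There is no real obstacle here, since this lemma is essentially a bookkeeping combination: it lifts a pointwise color agreement on one middle node to a global multiset agreement over all middle nodes, using the subgraph-localized root markings as the intermediate anchor. The only thing one should double-check is that the stability of colors (used implicitly in Lemma~\ref{lem:edge_point_root} via the contradiction argument) does not impose extra conditions on $w_{12}, w_{22}$ beyond being neighbors of $v_{11}, v_{21}$ respectively, which it does not.
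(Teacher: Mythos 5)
Your proposal is correct and matches the paper's own proof exactly: the paper likewise applies Lemma~\ref{lem:edge_point_root} to the hypothesis to obtain the two root-tuple color equalities, and then invokes Lemma~\ref{lem:edge_root_local} to conclude the multiset equality. No further commentary is needed.
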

\begin{proof}
Given Lemma~\ref{lem:edge_point_root}, if $\mathcal{\tilde{C}}^{\infty}(v_{11}, v_{12}, w_{12})  = \mathcal{\tilde{C}}^{\infty}(v_{21}, v_{22}, w_{22})$, we have $\mathcal{\tilde{C}}^{\infty}(v_{11}, v_{11}, w_{12})  = \mathcal{\tilde{C}}^{\infty}(v_{21}, v_{21}, w_{22})$ and $\mathcal{\tilde{C}}^{\infty}(v_{11}, w_{21}, w_{21})  = \mathcal{\tilde{C}}^{\infty}(v_{21}, w_{22}, w_{22})$. Next, given Lemma~\ref{lem:edge_root_local}, we have $\lbbr \mathcal{\tilde{C}}^{\infty}(v_{11}, w_{11}, w_{12}) | w_{11} \in V(G))\rbbr = \lbbr \mathcal{\tilde{C}}^{\infty}(v_{21}, w_{21}, w_{22}) | w_{21} \in V(H)\rbbr$. This concludes the proof. 
\end{proof}

\begin{lemma}
\label{lem:edge_pooling}
For any two graphs $G$, $H$ and two nodes $v_{11}\in V(G)$ and $ v_{21}\in V(H) $, we have $\lbbr \lbbr \mathcal{\tilde{C}}^{\infty}(v_{11},w_{11},w_{12}) | w_{11} \in V(G) \rbbr| w_{12} \in Q_1(v_{11}) \rbbr = \lbbr \lbbr \mathcal{\tilde{C}}^{\infty}(v_{21},w_{21},w_{22}) | w_{21} \in V(H) \rbbr| w_{22} \in Q_1(v_{21}) \rbbr$ if $\lbbr \lbbr \mathcal{\tilde{C}}^{\infty}(v_{11},w_{11},w_{12}) | w_{12} \in Q_1(v_{11}) \rbbr| w_{11} \in V(G) \rbbr = \lbbr \lbbr \mathcal{\tilde{C}}^{\infty}(v_{21},w_{21},w_{22}) | w_{22} \in Q_1(v_{21}) \rbbr| w_{21} \in V(H) \rbbr$.
\end{lemma}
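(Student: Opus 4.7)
The plan is to exploit the distinguished ``mark $1$'' initial color borne by tuples of the form $(v_1, v_1, w_2)$: use it to pin down the correspondence $v_{11} \leftrightarrow v_{21}$ from the hypothesis, extract a bijection between $Q_1(v_{11})$ and $Q_1(v_{21})$, and then apply Lemma~\ref{lem:edge_local_global} column-by-column to swap the pooling order. To this end I introduce the \emph{node-first} inner multiset $N_G(w_{11}) := \lbbr \mathcal{\tilde{C}}^{\infty}(v_{11},w_{11},w_{12})|w_{12} \in Q_1(v_{11})\rbbr$ and the \emph{edge-first} inner multiset $M_G(w_{12}) := \lbbr \mathcal{\tilde{C}}^{\infty}(v_{11},w_{11},w_{12})|w_{11} \in V(G)\rbbr$, with analogous notations $N_H, M_H$ for $H$. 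The hypothesis then reads $\lbbr N_G(w_{11})|w_{11}\in V(G)\rbbr = \lbbr N_H(w_{21})|w_{21}\in V(H)\rbbr$, and the goal is $\lbbr M_G(w_{12})|w_{12}\in Q_1(v_{11})\rbbr = \lbbr M_H(w_{22})|w_{22}\in Q_1(v_{21})\rbbr$.

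Next, I would identify $v_{11}$ with $v_{21}$ via the initial coloring in Equation~(\ref{eq: edge_sgnn_1}): a tuple $(v_1, v_2, w_2)$ carries mark $1$ precisely when $v_2 = v_1$, so by injectivity of HASH together with the fact that distinct colors stay distinct under iteration, the set $S_1$ of stable colors of mark-$1$ tuples is disjoint from every other stable color. Consequently $N_G(v_{11}) \subseteq S_1$, while for $w_{11} \neq v_{11}$ no entry $\mathcal{\tilde{C}}^{\infty}(v_{11}, w_{11}, w_{12})$ can lie in $S_1$. The hypothesis supplies some $w_{21} \in V(H)$ with $N_H(w_{21}) = N_G(v_{11}) \subseteq S_1$; repeating the same disjointness argument in $H$ (and noting that $v_{21} \notin Q_1(v_{21})$ under the standard no-self-loop convention) forces $w_{21} = v_{21}$, hence $N_G(v_{11}) = N_H(v_{21})$.

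Finally, from this equality I would extract a bijection $\sigma: Q_1(v_{11}) \to Q_1(v_{21})$ with $\mathcal{\tilde{C}}^{\infty}(v_{11}, v_{11}, w_{12}) = \mathcal{\tilde{C}}^{\infty}(v_{21}, v_{21}, \sigma(w_{12}))$ for every $w_{12}$; applying Lemma~\ref{lem:edge_local_global} with $v_{12} := v_{11}$, $v_{22} := v_{21}$, and $w_{22} := \sigma(w_{12})$ then yields $M_G(w_{12}) = M_H(\sigma(w_{12}))$, and bijectivity of $\sigma$ immediately delivers the desired multiset equality. The main obstacle I anticipate is step two: verifying that mark $1$ is the only initial-color class producing a fully $S_1$-valued $N_G$-profile. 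This requires careful bookkeeping against the mark-$2$ entries that can appear when $w_{12} = w_{11}$, and quietly relies on $G$ and $H$ being simple so that $v_{11} \notin Q_1(v_{11})$; once this identification is locked in, the rest is a routine column-by-column application of Lemma~\ref{lem:edge_local_global}.
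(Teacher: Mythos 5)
Your proposal is correct, and it leans on the same key tool as the paper --- Lemma~\ref{lem:edge_local_global} applied after extracting matching rows from the hypothesis --- but it differs in the decisive step. The paper's two-sentence proof matches an \emph{arbitrary} row $w_{11}$ to some $w_{21}$ with equal inner multisets and then immediately invokes Lemma~\ref{lem:edge_local_global} to state the conclusion, leaving implicit how the resulting pointwise column equalities assemble into a multiset equality indexed by $Q_1(v_{11})$ versus $Q_1(v_{21})$. You instead specialize to the single diagonal row $w_{11}=v_{11}$, use the mark-$1$ initial color (which stays a separate color class under refinement, by injectivity of HASH) to force the matching row in $H$ to be the one with $w_{21}=v_{21}$, read off a color-preserving bijection $\sigma\colon Q_1(v_{11})\to Q_1(v_{21})$ from $\lbbr \mathcal{\tilde{C}}^{\infty}(v_{11},v_{11},w_{12})\,|\,w_{12}\in Q_1(v_{11})\rbbr=\lbbr \mathcal{\tilde{C}}^{\infty}(v_{21},v_{21},w_{22})\,|\,w_{22}\in Q_1(v_{21})\rbbr$, and only then apply Lemma~\ref{lem:edge_local_global} columnwise with $v_{12}=v_{11}$, $v_{22}=v_{21}$, $w_{22}=\sigma(w_{12})$, so that bijectivity of $\sigma$ delivers the outer multiset equality. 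What your route buys is precisely the explicit column bijection that the paper's argument glosses over; what it costs is the bookkeeping you already flag --- freshness of the marked hash colors and the no-self-loop convention --- which the paper itself assumes elsewhere (e.g., in the proof of Lemma~\ref{lem:edge_point_root}), plus the trivial degenerate case $Q_1(v_{11})=\emptyset$, where the hypothesis forces $Q_1(v_{21})=\emptyset$ and both sides are empty. With those two remarks made explicit, your argument is complete and, if anything, tighter than the paper's.
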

\begin{proof}
If we have $\lbbr \lbbr \mathcal{\tilde{C}}^{\infty}(v_{11},w_{11},w_{12}) | w_{12} \in Q_1(v_{11}) \rbbr| w_{11} \in V(G) \rbbr = \lbbr \lbbr \mathcal{\tilde{C}}^{\infty}(v_{21},w_{21},w_{22}) | w_{22} \in Q_1(v_{21}) \rbbr| w_{21} \in V(H) \rbbr$, for any $w_{11} \in V(G)$, we can find a $w_{21} \in V(H)$ such that $\lbbr \mathcal{\tilde{C}}^{\infty}(v_{11},w_{11},w_{12}) | w_{12} \in Q_1(v_{11}) \rbbr =  \lbbr \mathcal{\tilde{C}}^{\infty}(v_{21},w_{21},w_{22}) | w_{22} \in Q_1(v_{21}) \rbbr$. Next, as the color is stable, based on Lemma~\ref{lem:edge_local_global}, we can conclude that $\lbbr \lbbr \mathcal{\tilde{C}}^{\infty}(v_{11},w_{11},w_{12}) | w_{11} \in V(G) \rbbr| w_{12} \in Q_1(v_{11}) \rbbr = \lbbr \lbbr \mathcal{\tilde{C}}^{\infty}(v_{21},w_{21},w_{22}) | w_{21} \in V(H) \rbbr| w_{22} \in Q_1(v_{21}) \rbbr$.
\end{proof}

Now, given $t=2$, $k=2$, and $ES^2(\mathbf{v})= Q_1(v_2) \times Q_1(v_1)$, the corresponding $(k, t)$-FWL+ instance can be written as:
\begin{equation}
\begin{split}
\mathcal{\hat{C}}^l(v_1,v_2) = \text{HASH}(&\mathcal{\hat{C}}^{l-1}(v_1,v_2), \lbbr \lbbr \left( \mathcal{\hat{C}}^{l-1}(u_1, u_2) | (u_1, u_2)\in Q^F_{(w_1, w_2)}(v_1, v_2)\right)\\
&| w_1 \in Q_1(v_2)\rbbr w_2 \in Q_1(v_1)\rbbr).
\end{split}
\end{equation}

However, edge-based subgraph GNNs use 3-tuple representations instead of  2-tuples in the $(k, t)$-FWL+, which makes the comparison not trivial. Therefore, we slightly rewrite the above equation:
\begin{gather}
\label{eq: ktfwl_edge_sgnn_1}
\mathcal{\hat{C}}^l(v_1, v_2, w_1, w_2) =  \text{HASH}(\mathcal{\hat{C}}^{l-1}(v_1, v_2, w_1, w_2), \left( \mathcal{\hat{C}}^{l-1}(u_1, u_2) | (u_1, u_2)\in Q^F_{(w_1, w_2)}(v_1, v_2)\right)),\\
\label{eq: ktfwl_edge_sgnn_2}
\mathcal{\hat{C}}^l(v_1, v_2, w_2) =  \text{HASH}(\mathcal{\hat{C}}^{l-1}(v_1, v_2, w_2), \lbbr \mathcal{\hat{C}}^l(v_1, v_2, w_1, w_2) | w_1 \in Q_1(v_2)\rbbr),\\
\label{eq: ktfwl_edge_sgnn_3}
\mathcal{\hat{C}}^l(v_1, v_2) = \text{HASH}(\mathcal{\hat{C}}^{l-1}(v_1,v_2), \lbbr \mathcal{\hat{C}}^l(v_1, v_2, w_2) | w_2 \in Q_1(v_1)\rbbr),
\end{gather}

where we let $\mathcal{\hat{C}}^0(v_1, v_2, w_2)$ and  $\mathcal{\hat{C}}^0(v_1, v_2, w_1, w_2)$ be the same for any $(v_1, v_2) \in V^2(G)$, $w_1\in Q_1(v_2)$, and $w_2\in Q_1(v_1)$. It is easy to verify that the above equations the equivalent to the original one. Now, we prove a strong lemma:
\begin{lemma}
\label{lem:ktfwl_edge}
For any two graphs $G$, $H$, given two tuples $(v_{11}, v_{12})\in V^2(G)$ and $ (v_{21}, v_{22}) \in V^2(H)$, for any $(w_{11}, w_{12}) \in Q_1(v_{12}) \times Q_1(v_{11})$ and $(w_{21}, w_{22}) \in Q_1(v_{22}) \times Q_1(v_{21})$,  we have $\mathcal{\tilde{C}}^{l-1}(v_{11}, w_{11}, w_{12})=\mathcal{\tilde{C}}^{l-1}(v_{21}, w_{21}, w_{22})$ if $\mathcal{\hat{C}}^{l}(v_{11}, v_{12}, w_{11}, w_{12})=\mathcal{\hat{C}}^{l}(v_{21}, v_{22}, w_{21}, w_{22})$ for any $l\geq1$.
\end{lemma}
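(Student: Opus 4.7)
The plan is to prove Lemma~\ref{lem:ktfwl_edge} by induction on $l$. For the base case $l=1$, expanding the 4-tuple update via Equation~\eqref{eq: ktfwl_edge_sgnn_1} (with the constant initial 4-tuple color) shows that $\mathcal{\hat{C}}^{1}(v_1,v_2,w_1,w_2)$ hashes the isomorphism types of all 2-tuples in the neighborhood tuple $Q^F_{(w_1,w_2)}(v_1,v_2)$. The entries $(v_{11},w_{11})$ and $(w_{11},w_{12})$ (and their counterparts on the RHS) appear there, so the hypothesis recovers the node label $l_G(w_{11})$ and the two equality markers ($w_{11}\stackrel{?}{=}v_{11}$ and $w_{11}\stackrel{?}{=}w_{12}$) needed to reconstruct $\mathcal{\tilde{C}}^{0}(v_{11},w_{11},w_{12})$.

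For the inductive step, assume the lemma for $l-1$ and consider the update of the edge-based GNN in Equation~\eqref{eq: edge_sgnn_1}. Since $\mathcal{\hat{C}}^{l-1}(v_{11},v_{12},w_{11},w_{12})$ is a component of the HASH defining $\mathcal{\hat{C}}^{l}$, the hypothesis at iteration $l$ forces the hypothesis at iteration $l-1$, and the inductive hypothesis immediately yields $\mathcal{\tilde{C}}^{l-2}(v_{11},w_{11},w_{12})=\mathcal{\tilde{C}}^{l-2}(v_{21},w_{21},w_{22})$. It then remains to match the multiset $\lbbr\mathcal{\tilde{C}}^{l-2}(v_{11},u,w_{12})\mid u\in Q_1(w_{11})\rbbr$ with its counterpart. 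Each term-equality $\mathcal{\tilde{C}}^{l-2}(v_{11},u_{11},w_{12})=\mathcal{\tilde{C}}^{l-2}(v_{21},u_{21},w_{22})$ reduces, via the inductive hypothesis, to a matching 4-tuple equality $\mathcal{\hat{C}}^{l-1}(v_{11},w_{11},u_{11},w_{12})=\mathcal{\hat{C}}^{l-1}(v_{21},w_{21},u_{21},w_{22})$ where I set the free slot $v'_{12}=w_{11}$ and $v'_{22}=w_{21}$ (valid since $u_{11}\in Q_1(w_{11})$ and $u_{21}\in Q_1(w_{21})$).

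To produce the required bijective matching $u_{11}\leftrightarrow u_{21}$, I will argue the intermediate 3-tuple equality $\mathcal{\hat{C}}^{l-1}(v_{11},w_{11},w_{12})=\mathcal{\hat{C}}^{l-1}(v_{21},w_{21},w_{22})$: once established, the HASH structure of Equation~\eqref{eq: ktfwl_edge_sgnn_2} directly delivers the multiset equality of 4-tuples indexed by $u\in Q_1(w_{11})$, finishing the induction. To get this 3-tuple equality, I will unroll Equation~\eqref{eq: ktfwl_edge_sgnn_1} to observe that the 4-tuple color $\mathcal{\hat{C}}^{l}(v_{11},v_{12},w_{11},w_{12})$ records the full sequence $(\mathcal{\hat{C}}^{l'}(u)\mid u\in Q^F_{(w_{11},w_{12})}(v_{11},v_{12}))$ across $l'=0,\ldots,l-1$. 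Thus the hypothesis yields the pointwise 2-tuple equalities $\mathcal{\hat{C}}^{l'}(v_{11},w_{11})=\mathcal{\hat{C}}^{l'}(v_{21},w_{21})$, $\mathcal{\hat{C}}^{l'}(v_{11},w_{12})=\mathcal{\hat{C}}^{l'}(v_{21},w_{22})$, and $\mathcal{\hat{C}}^{l'}(w_{11},w_{12})=\mathcal{\hat{C}}^{l'}(w_{21},w_{22})$ for every $l'\leq l-1$, and these three simultaneous pairings will be used to pin the match $w_{12}\leftrightarrow w_{22}$ inside the hierarchical multiset of $\mathcal{\hat{C}}^{l-1}(v_{11},w_{11})$.

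The main obstacle is exactly this pinning step. The equality $\mathcal{\hat{C}}^{l-1}(v_{11},w_{11})=\mathcal{\hat{C}}^{l-1}(v_{21},w_{21})$ alone only supplies a hierarchical-multiset matching over $Q_1(v_{11})$, which a priori pairs $w_{12}$ with some $w_{22}^{*}\in Q_1(v_{21})$ rather than with the given $w_{22}$. The other two 2-tuple identities fix $w_{22}^{*}=w_{22}$, but one must handle the fact that $(k,t)$-FWL+ works with ordered 2-tuples, so reversed pairs such as $(w_{12},w_{11})$ appearing inside $Q^F_{(u_{11},w_{12})}(v_{11},w_{11})$ are not automatically equivalent to $(w_{21},w_{22})$'s counterparts. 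I expect the cleanest resolution is to strengthen the inductive hypothesis so it simultaneously transports the triples of 2-tuple color equalities above at every iteration, guaranteeing that the pointwise pairing $w_{12}\leftrightarrow w_{22}$ persists across the recursion and closing the chain of 2-tuple equalities needed to conclude the 4-tuple equality $\mathcal{\hat{C}}^{l-1}(v_{11},w_{11},u_{11},w_{12})=\mathcal{\hat{C}}^{l-1}(v_{21},w_{21},u_{21},w_{22})$ for the matched neighbors.
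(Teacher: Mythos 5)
Your scaffolding matches the paper's: the same strong induction on $l$, the same base case via the initial colors and Lemma~\ref{lem:initial_color}, the same use of the rewritten updates~(\ref{eq: ktfwl_edge_sgnn_1})--(\ref{eq: ktfwl_edge_sgnn_3}), and the same trick of re-instantiating the inductive hypothesis with the free second slot set to $w_{11}$ so that $\mathcal{\hat{C}}^{l-1}(v_{11},w_{11},u_{11},w_{12})$ transfers to $\mathcal{\tilde{C}}^{l-2}(v_{11},u_{11},w_{12})$. The genuine gap is precisely the step you flag and then leave unresolved: you make the pointwise equality $\mathcal{\hat{C}}^{l-1}(v_{11},w_{11},w_{12})=\mathcal{\hat{C}}^{l-1}(v_{21},w_{21},w_{22})$ the linchpin, and your proposed repair --- strengthening the induction so that the three 2-tuple equalities $\mathcal{\hat{C}}^{l'}(v_{11},w_{11})=\mathcal{\hat{C}}^{l'}(v_{21},w_{21})$, $\mathcal{\hat{C}}^{l'}(v_{11},w_{12})=\mathcal{\hat{C}}^{l'}(v_{21},w_{22})$, $\mathcal{\hat{C}}^{l'}(w_{11},w_{12})=\mathcal{\hat{C}}^{l'}(w_{21},w_{22})$ are transported at every iteration --- does not deliver it. By Equation~(\ref{eq: ktfwl_edge_sgnn_2}), $\mathcal{\hat{C}}^{l-1}(v_{11},w_{11},w_{12})$ hashes the multiset $\lbbr \mathcal{\hat{C}}^{l-1}(v_{11},w_{11},x,w_{12})\mid x\in Q_1(w_{11})\rbbr$, i.e.\ joint information about how $w_{12}$ sits relative to every neighbor $x$ of $w_{11}$ (and how each such $x$ relates to $v_{11}$). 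This is strictly more than the three pairwise 2-tuple colors, so "the other two 2-tuple identities fix $w_{22}^{*}=w_{22}$" is not justified: $w_{22}$ can agree with $w_{12}$ on all three pairwise colors at every iteration and still differ from $w_{12}$'s multiset-matched partner $y_{22}$ in its 3-tuple color. Moreover, the strengthened statement would itself have to be proved by a further induction, which is where the actual difficulty lives; as written, the plan is circular at exactly this point.

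The paper never needs that pointwise FWL+ 3-tuple equality. From the same-position entries of the two neighborhood tuples it extracts only $\mathcal{\hat{C}}^{l-1}(v_{11},w_{11})=\mathcal{\hat{C}}^{l-1}(v_{21},w_{21})$; Equation~(\ref{eq: ktfwl_edge_sgnn_3}) then yields the multiset matching $\lbbr \mathcal{\hat{C}}^{l-1}(v_{11},w_{11},y_{12})\mid y_{12}\in Q_1(v_{11})\rbbr = \lbbr \mathcal{\hat{C}}^{l-1}(v_{21},w_{21},y_{22})\mid y_{22}\in Q_1(v_{21})\rbbr$, Equation~(\ref{eq: ktfwl_edge_sgnn_2}) plus the inductive hypothesis upgrades every matched pair to $\mathcal{\tilde{C}}^{l-1}(v_{11},w_{11},y_{12})=\mathcal{\tilde{C}}^{l-1}(v_{21},w_{21},y_{22})$, and only at the very end is the specific pair $(w_{12},w_{22})$ identified, using the iteration-$(l-1)$ 4-tuple equality~(\ref{eq:ktfwl_edge_1}) together with the inductive statement --- i.e.\ the pinning is performed on the $\mathcal{\tilde{C}}$ side, not by establishing an FWL+ 3-tuple equality. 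To complete your route you should either fall back to this multiset-level argument or supply a genuinely new derivation of $\mathcal{\hat{C}}^{l-1}(v_{11},w_{11},w_{12})=\mathcal{\hat{C}}^{l-1}(v_{21},w_{21},w_{22})$; carrying the three pairwise 2-tuple equalities alone will not suffice.
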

\begin{proof}
We prove it by induction on $l$. The base case is easy to verify given Lemma~\ref{lem:initial_color} and the definition of initial color in edge-based subgraph GNNs.

Now suppose it is true for $l=2,\ldots,l-1$. At iteration $l$, if $\mathcal{\hat{C}}^l(v_{11}, v_{12}, w_{11}, w_{12})=\mathcal{\hat{C}}^l(v_{21}, v_{22}, w_{21}, w_{22})$, given Equation~\ref{eq: ktfwl_edge_sgnn_1}, we must have:
\begin{gather}
\label{eq:ktfwl_edge_1}
 \mathcal{\hat{C}}^{l-1}(v_{11}, v_{12}, w_{11}, w_{12})=\mathcal{\hat{C}}^{l-1}(v_{21}, v_{22}, w_{21}, w_{22}), \\
 \label{eq:ktfwl_edge_2}
 \mathcal{\hat{C}}^{l-1}(u_{11}, u_{12}) = \mathcal{\hat{C}}^{l-1}(u_{21}, u_{22}),
\end{gather}
for any $(u_{11}, u_{12}) \in Q^F_{(w_{11}, w_{12})}(v_{11}, v_{12})$ and $(u_{21}, u_{22}) \in Q^F_{(w_{21}, w_{22})}(v_{21}, v_{22})$ at the same position. Particularly, we have $(v_{11}, w_{11}) \in Q^F_{(w_{11}, w_{12})}(v_{11}, v_{12})$ and $(v_{21}, w_{21}) \in Q^F_{(w_{21}, w_{22})}(v_{21}, v_{22})$ and they are in the same position. Given Equation~(\ref{eq: ktfwl_edge_sgnn_3}), we have:
\begin{equation}
 \label{eq:ktfwl_edge_3}
\lbbr \mathcal{\hat{C}}^{l-1}(v_{11}, w_{11}, y_{12}) | y_{12} \in Q_1(v_{11})\rbbr = \lbbr \mathcal{\hat{C}}^{l-1}(v_{21}, w_{21}, y_{22}) | y_{22} \in Q_1(v_{21})\rbbr.
\end{equation}
Equation~(\ref{eq:ktfwl_edge_3}) further indicates that for any $y_{12} \in Q_1(v_{11})$, we can find a $y_{22} \in Q_1(v_{21})$ such that 
\begin{equation}
 \label{eq:ktfwl_edge_4}
\mathcal{\hat{C}}^{l-1}(v_{11}, w_{11}, y_{12}) = \mathcal{\hat{C}}^{l-1}(v_{21}, w_{21}, y_{22}).
\end{equation}
Next, given Equation~(\ref{eq: ktfwl_edge_sgnn_2}), we can conclude that:
\begin{equation}
\lbbr \mathcal{\hat{C}}^{l-1}(v_{11}, w_{11}, x_{11}, y_{12}) | x_{11} \in Q_1(w_{11}) \rbbr = \lbbr \mathcal{\hat{C}}^{l-1}(v_{21}, w_{21}, x_{21}, y_{12}) | x_{21} \in Q_1(w_{21}) \rbbr.
\end{equation}
Since the statement follows at iteration $l-1$, we have $\lbbr \mathcal{\tilde{C}}^{l-2}(v_{11}, x_{11}, y_{12}) | x_{11} \in Q_1(w_{11}) \rbbr = \lbbr \mathcal{\tilde{C}}^{l-2}(v_{21}, x_{21}, y_{12}) | x_{21} \in Q_1(w_{21}) \rbbr $. We also have $\mathcal{\tilde{C}}^{l-2}(v_{11}, w_{11}, y_{12}) = \mathcal{\tilde{C}}^{l-2}(v_{21}, w_{21}, y_{22})$ given Equation~(\ref{eq:ktfwl_edge_4}). This means that we must have $\mathcal{\tilde{C}}^{l-1}(v_{11}, w_{11}, y_{12}) = \mathcal{\tilde{C}}^{l-1}(v_{21}, w_{21}, y_{22})$ based on Equation~(\ref{eq: edge_sgnn_1}). Finally, as $w_{12}$ is one of $y_{12}$ and $w_{22}$ is one of $y_{22}$ and it is one such pair that holds the statement at iteration $l-1$. It must be the case that they are also the pair at iteration $l$. This means we have $\mathcal{\tilde{C}}^{l-1}(v_{11}, w_{11}, w_{12}) = \mathcal{\tilde{C}}^{l-1}(v_{21}, w_{21}, w_{22})$, which concludes the proof.
\end{proof}

Now we prove Proposition~\ref{pro:edge_based_sgnn} in the main paper. We restate it here: 
\begin{proposition}
Let $t=2$, $k=2$, and $ES^2(\mathbf{v})= Q_1(v_2) \times Q_1(v_1)$, the corresponding $(k, t)$-FWL+ instance is more powerful than edge-based subgraph GNNs like I$^2$-GNN~\citep{huang2023boosting}.
\end{proposition}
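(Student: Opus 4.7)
The plan is to show that the stable color produced by this $(k,t)$-FWL+ instance is already a valid refinement for the edge-based subgraph GNN, so the latter cannot distinguish any graph pair that the former cannot. Concretely, assuming $\mathcal{\hat{C}}^{\infty}(G) = \mathcal{\hat{C}}^{\infty}(H)$, I would reconstruct the graph-level readout of the subgraph GNN (Equation~(\ref{eq: edge_sgnn_3})) from the $(k,t)$-FWL+ output by combining Lemma~\ref{lem:ktfwl_edge} with the multiset-manipulation Lemmas~\ref{lem:edge_point_root}--\ref{lem:edge_pooling} already proved above.

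The first step is to canonicalize the outermost pool over $v_1 \in V(G)$. I would evaluate the 2-tuple matching at the diagonal: the isomorphism-type initial color distinguishes diagonal tuples $(v,v)$ from off-diagonal ones, so any $(v_{21}, v_{22}) \in V^2(H)$ matched to $(v_{11}, v_{11})$ by $\mathcal{\hat{C}}^{\infty}$ must satisfy $v_{21} = v_{22}$, giving a canonical node-level correspondence $v_{11} \leftrightarrow v_{21}$ that respects stable colors. The second step is to unroll $\mathcal{\hat{C}}^{\infty}(v_{11}, v_{11}) = \mathcal{\hat{C}}^{\infty}(v_{21}, v_{21})$ through Equations~(\ref{eq: ktfwl_edge_sgnn_3}) and~(\ref{eq: ktfwl_edge_sgnn_2}), obtaining multiset equalities of 3-tuple and then 4-tuple $\mathcal{\hat{C}}$-colors indexed by $w_{12} \in Q_1(v_{11})$ and $w_{11} \in Q_1(v_{11})$. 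Restricting to the diagonal sub-pattern $w_{11} = w_{12}$ (again detected by the initial color, which records which coordinates coincide) produces $\mathcal{\hat{C}}^{\infty}(v_{11}, v_{11}, w_{12}, w_{12}) = \mathcal{\hat{C}}^{\infty}(v_{21}, v_{21}, w_{22}, w_{22})$ for matched $w_{12} \leftrightarrow w_{22}$, and Lemma~\ref{lem:ktfwl_edge} upgrades this to $\mathcal{\tilde{C}}^{\infty}(v_{11}, w_{12}, w_{12}) = \mathcal{\tilde{C}}^{\infty}(v_{21}, w_{22}, w_{22})$.

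At this point Lemma~\ref{lem:edge_local_global} (equivalently, Lemma~\ref{lem:edge_point_root} combined with Lemma~\ref{lem:edge_root_local}) promotes the pointwise 3-tuple equality into the global subgraph-representation match $\lbbr \mathcal{\tilde{C}}^{\infty}(v_{11}, v_2, w_{12}) \mid v_2 \in V(G) \rbbr = \lbbr \mathcal{\tilde{C}}^{\infty}(v_{21}, v_2, w_{22}) \mid v_2 \in V(H) \rbbr$, which is exactly the per-subgraph representation used in the inner pool of Equation~(\ref{eq: edge_sgnn_3}). Assembling these matches first across $w_{12} \in Q_1(v_{11})$ (using the multiset correspondence from the previous paragraph, with Lemma~\ref{lem:edge_pooling} invoked to swap inner and outer pool orders if necessary) and then across $v_{11} \in V(G)$ (via the diagonal matching from the first step) reconstructs the full nested multiset $\lbbr \lbbr \lbbr \mathcal{\tilde{C}}^{\infty}(v_1, v_2, w_2) \mid v_2 \in V(G) \rbbr \mid w_2 \in Q_1(v_1) \rbbr \mid v_1 \in V(G) \rbbr$, yielding $\mathcal{\tilde{C}}^{\infty}(G) = \mathcal{\tilde{C}}^{\infty}(H)$.

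The main obstacle I foresee is the bookkeeping of hierarchical multisets: at each descent level we only obtain an existence-of-matching, and I must verify that these matchings remain mutually compatible as we peel off the outer $v_1$ pool, the middle $w_2$ pool, and the innermost $v_2$ pool. The diagonal trick, which converts multiset matches into canonical node-level matches via the discriminating initial color, is what makes this compatibility tractable. A minor subtlety is that Lemma~\ref{lem:ktfwl_edge} is stated at iteration $l$ with the conclusion at $l-1$; since both algorithms converge in finitely many iterations, running to any common budget exceeding both convergence times identifies the $l$-th and $(l-1)$-th colors with $\mathcal{\hat{C}}^{\infty}$ and $\mathcal{\tilde{C}}^{\infty}$ respectively, removing the off-by-one without further work.
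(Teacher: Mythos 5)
Your proposal is correct and rests on the same machinery as the paper's proof: the rewriting of the $(2,2)$-FWL+ update into the hierarchical colors of Equations~(\ref{eq: ktfwl_edge_sgnn_1})--(\ref{eq: ktfwl_edge_sgnn_3}) and, crucially, Lemma~\ref{lem:ktfwl_edge} as the bridge from $\mathcal{\hat{C}}$ to $\mathcal{\tilde{C}}$, together with the stable-color Lemmas~\ref{lem:edge_point_root}--\ref{lem:edge_local_global}. Where you genuinely diverge is the final assembly. The paper matches arbitrary tuples: from $\mathcal{\hat{C}}^{\infty}(G)=\mathcal{\hat{C}}^{\infty}(H)$ it pairs general $(v_{11},v_{12},w_{12})$ with $(v_{21},v_{22},w_{22})$, pushes through Equation~(\ref{eq: ktfwl_edge_sgnn_2}) and Lemma~\ref{lem:ktfwl_edge} to get $\lbbr\mathcal{\tilde{C}}^{\infty}(v_{11},w_{11},w_{12})\mid w_{11}\in Q_1(v_{12})\rbbr=\lbbr\mathcal{\tilde{C}}^{\infty}(v_{21},w_{21},w_{22})\mid w_{21}\in Q_1(v_{22})\rbbr$, argues via the one-more-iteration device that the edge-based GNN cannot refine further, and then must invoke Lemma~\ref{lem:edge_pooling} because the two readouts nest their multisets in different orders ($v_2\rightarrow w_2\rightarrow v_1$ versus $w_2\rightarrow v_2\rightarrow v_1$). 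You instead anchor everything at diagonal tuples $(v,v)$, whose $(2,2)$-FWL+ neighborhood is $Q_1(v)\times Q_1(v)$, and peel off the pools in exactly the edge-based GNN's native order: the diagonal matching gives the outer $v_1$ pool, Equation~(\ref{eq: ktfwl_edge_sgnn_3}) gives the middle $w_2\in Q_1(v_1)$ pool, and Lemma~\ref{lem:edge_local_global} (fed by the diagonal $w_1=w_2$ element extracted from Equation~(\ref{eq: ktfwl_edge_sgnn_2}) plus Lemma~\ref{lem:ktfwl_edge}) gives the inner global $v_2$ pool. This buys you a cleaner endgame: Lemma~\ref{lem:edge_pooling} becomes unnecessary rather than merely "invoked if necessary," since you reconstruct the nested readout of Equation~(\ref{eq: edge_sgnn_3}) directly, while the paper's route avoids any reliance on diagonal tuples and therefore matches the readout only after the pooling-order reconciliation. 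One small imprecision: the coincidence $w_{11}=w_{12}$ is not visible in the \emph{initial} color of the auxiliary 4-tuple (the paper deliberately sets $\mathcal{\hat{C}}^0(v_1,v_2,w_1,w_2)$ uniform); it is detected from the first iteration onward because $(w_1,w_2)$ occupies a fixed position in $Q^F_{(w_1,w_2)}(v_1,v_2)$ and the 2-tuple colors refine isomorphism types. Since you only ever use stable colors, this does not affect the argument, but the justification should cite the aggregated 2-tuple entry rather than the initial color.
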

\begin{proof}
Note that the initial color of edge-based subgraph GNNs is weaker than $(k,t)$-FWL+ given Lemma~\ref{lem:ktfwl_edge}, but it is still sufficient to show that the stable color from $(k,t)$-FWL+ cannot be further refined by edge-based subgraph GNNs. For any two graphs $G$ and $H$ with $\mathcal{\hat{C}}^{\infty}(G) = \mathcal{\hat{C}}^{\infty}(H)$, it is easy to see that for any $(v_{11}, v_{12})\in V^2(G)$ and $w_{12} \in Q_1(v_{11})$, we can find a $ (v_{21}, v_{22}) \in V^2(H)$ and $w_{22} \in Q_1(v_{21})$ with $\mathcal{\hat{C}}^{\infty}(v_{11}, v_{12}, w_{12}) = \mathcal{\hat{C}}^{\infty}(v_{21}, v_{22},  w_{22})$. Then, given Equation~(\ref{eq: ktfwl_edge_sgnn_2}), we have $\lbbr\mathcal{\hat{C}}^{\infty}(v_{11}, v_{12}, w_{11}, w_{12}) |w_{11} \in Q_1(v_{12}) \rbbr =\lbbr\mathcal{\hat{C}}^{\infty}(v_{21}, v_{22}, w_{21}, w_{22}) |w_{21} \in Q_1(v_{22}) \rbbr$. Further, given Lemma~\ref{lem:ktfwl_edge}, we have $\lbbr\mathcal{\tilde{C}}^{\infty}(v_{11}, w_{11}, w_{12}) |w_{11} \in Q_1(v_{12}) \rbbr =\lbbr\mathcal{\tilde{C}}^{\infty}(v_{21}, w_{21}, w_{22}) |w_{21} \in Q_1(v_{22}) \rbbr$
Now, if we use the stable color from $(k, t)$-FWL+ as the initial color to do one more iteration of edge-based subgraph GNNs, it is easy to see that the color of $(v_{11}, w_{11}, w_{12})$ and $(v_{21}, w_{21}, w_{22})$ cannot get further refined and we have $\mathcal{\tilde{C}}^{\infty}(v_{11}, v_{12}, w_{12}) = \mathcal{\tilde{C}}^{\infty}(v_{21}, v_{22},  w_{22})$. Note that the graph color histogram of edge-based subgraph GNNs is slightly different than $(k, t)$-FWL+ as in edge-based subgraph GNNs, the order of multiset is $v_2 \rightarrow w_2 \rightarrow v_1$. In $(k, t)$-FWL+, the order of multiset is  $w_2 \rightarrow v_2 \rightarrow v_1$. However, based on Lemma~\ref{lem:edge_pooling}, the first order is less powerful than the second order. Therefore we can conclude that the final graph color histogram of edge-based subgraph GNNs will also be the same. This concludes the proof.
\end{proof}

\textbf{KP-GNN}~\citep{feng2022how}: The GNN aggregation scheme of KP-GNN with the shortest path distance kernel and peripheral subgraph encoder as 1-WL can be written as:
\begin{gather}
\label{eq: kpgnn_1}
\mathcal{\tilde{C}}^{l}(v_1, v_1) = \text{HASH}(\mathcal{\tilde{C}}^{l-1}(v_1,v_1) , \lbbr (\mathcal{\tilde{C}}^{l-1}(w, w), \mathcal{\tilde{C}}^{l-1}(v_1, w)) |w \in V(G)\rbbr), \\
\label{eq: kpgnn_2}
\mathcal{\tilde{C}}^{l}(v_1, v_2) = \text{HASH}(\mathcal{\tilde{C}}^{l-1}(v_1, v_2), \lbbr \mathcal{\tilde{C}}^{l-1}(v_1, w) | w \in Q_{\text{SPD}(v_1, v_2)}(v_1) \cap Q_1(v_2)\rbbr),
\end{gather}
where the initial color of $\mathcal{\tilde{C}}^0(v_1,v_2)$ is the isomorphism type of $(v_1, v_2)$ and $\text{SPD}(v_1, v_2)$. Before we prove Proposition~\ref{pro:kp_gnn}, we first show that a slightly stronger version of $(k, t)$-FWL+ is still bounded by KP-GNN. 
\begin{lemma}
\label{lem: ktfwl_kpgnn}
Let $t=1$, $k=2$, and $ES^{t}(\mathbf{v})=Q_{\text{SPD}(v_1,v_2)}(v_1)\cap Q_1(v_2)$. Further let the initial color $\mathcal{\hat{C}}^0(v_1, v_2)$ encode $\text{SPD}(v_1, v_2)$,  the corresponding $(k, t)$-FWL+ instance is at most as powerful as KP-GNN~\citep{feng2022how} with the peripheral subgraph encoder as powerful as 1-WL. 
\end{lemma}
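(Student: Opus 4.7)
The plan is to apply Remark~\ref{rem:stable_color}: since the initial color of this $(k,t)$-FWL+ instance is defined to encode the isomorphism type of $(v_1,v_2)$ together with $\text{SPD}(v_1,v_2)$, it matches the initial color of KP-GNN, so it suffices to prove by induction on the iteration count $l$ that for any pair of tuples $(v_{11},v_{12})\in V^2(G)$ and $(v_{21},v_{22})\in V^2(H)$,
\[
\mathcal{\tilde{C}}^l(v_{11},v_{12})=\mathcal{\tilde{C}}^l(v_{21},v_{22})\ \Longrightarrow\ \mathcal{\hat{C}}^l(v_{11},v_{12})=\mathcal{\hat{C}}^l(v_{21},v_{22}).
\]
Once this tuplewise refinement is in hand, lifting to the graph histogram is routine: Equation~(\ref{eq: kpgnn_1}) causes $\mathcal{\tilde{C}}^\infty(v,v)$ to encode the entire multiset $\lbbr(\mathcal{\tilde{C}}^{l-1}(w,w),\mathcal{\tilde{C}}^{l-1}(v,w))|w\in V(G)\rbbr$, so equality of the node-level KP-GNN histogram forces a nodewise matching that induces a tuplewise matching, and the inclusion $(k,t)\text{-FWL+}\preceq\text{KP-GNN}$ follows. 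The base case $l=0$ is immediate. For the inductive step, I split on whether $v_1=v_2$: in the diagonal case, the equivariant set $Q_0(v_1)\cap Q_1(v_1)$ is empty in a simple graph, so $\mathcal{\hat{C}}^l(v_1,v_1)=\mathcal{\hat{C}}^{l-1}(v_1,v_1)$ and the hypothesis at $l-1$ transports directly to $l$.

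The off-diagonal case is where the real work happens. Write $d=\text{SPD}(v_1,v_2)$. From $\mathcal{\tilde{C}}^l(v_{11},v_{12})=\mathcal{\tilde{C}}^l(v_{21},v_{22})$ and the injectivity of the hash in Equation~(\ref{eq: kpgnn_2}), I obtain $\mathcal{\tilde{C}}^{l-1}(v_{11},v_{12})=\mathcal{\tilde{C}}^{l-1}(v_{21},v_{22})$ and the multiset identity $\lbbr\mathcal{\tilde{C}}^{l-1}(v_{11},w_1)|w_1\in Q_d(v_{11})\cap Q_1(v_{12})\rbbr=\lbbr\mathcal{\tilde{C}}^{l-1}(v_{21},w_2)|w_2\in Q_d(v_{21})\cap Q_1(v_{22})\rbbr$. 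The $(k,t)$-FWL+ update on the same set aggregates the triple $(\mathcal{\hat{C}}^{l-1}(v_1,v_2),\mathcal{\hat{C}}^{l-1}(v_1,w),\mathcal{\hat{C}}^{l-1}(w,v_2))$ for each $w$. The first slot is controlled by the induction hypothesis applied to $(v_1,v_2)$; the marginal multiset of second slots is controlled slot-by-slot by the hypothesis applied to each $(v_1,w)$ pair, since those pairs appear in the KP-GNN multiset above; the third slot is an edge color (because $w\in Q_1(v_2)$ forces $\text{SPD}(w,v_2)=1$) and can be unfolded as one iteration of Equation~(\ref{eq: kpgnn_2}) at distance $1$, whose aggregation set $Q_1(w)\cap Q_1(v_2)$ is exactly the common-neighbor structure that a 1-WL peripheral subgraph encoder discriminates.

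The main obstacle, and the step where the 1-WL-power peripheral subgraph encoder assumption is essential, is to preserve the \textbf{pairing} between the second and third slots inside the multiset rather than just the two marginals: the plain Equation~(\ref{eq: kpgnn_2}) only hands back the multiset of $\mathcal{\tilde{C}}^{l-1}(v_1,w)$ values, not their joint distribution with $\mathcal{\tilde{C}}^{l-1}(w,v_2)$. The 1-WL peripheral subgraph encoder resolves this by refining each $\mathcal{\tilde{C}}^{l-1}(v_1,w)$ with a feature that records how $w$ sits inside $v_2$'s 1-hop peripheral subgraph, which in particular distinguishes $w$'s whose edge colors toward $v_2$ differ; after this refinement, the KP-GNN multiset already encodes the joint $(v_1,w,v_2)$ structure that the $(k,t)$-FWL+ triple requires. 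With the pairing established, the tuplewise induction closes, which combined with the graph-histogram lifting described in the first paragraph completes the proof.
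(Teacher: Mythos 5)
Your overall skeleton matches the paper's: transfer KP-GNN color equality to the instance at the tuple level (base case from the matching initial colors, diagonal tuples handled by noting that $Q_{\text{SPD}(v_1,v_1)}(v_1)\cap Q_1(v_1)$ gives no usable neighbors), then lift to graph histograms through Equation~(\ref{eq: kpgnn_1}); the paper does this with stable colors in the spirit of Remark~\ref{rem:stable_color} rather than an explicit induction, but that difference is cosmetic. The divergence — and the gap — is exactly at the step you yourself call the main obstacle. The paper's proof sidesteps it by writing the off-diagonal update of this instance as Equation~(\ref{eq: kpgnn_ktfwl}), i.e.\ the aggregated object is just the multiset $\lbbr \mathcal{\hat{C}}^{l-1}(v_1,w) \mid w\in Q_{\text{SPD}(v_1,v_2)}(v_1)\cap Q_1(v_2)\rbbr$, which is syntactically identical to KP-GNN's Equation~(\ref{eq: kpgnn_2}); with that reading the tuple-level comparison is immediate and the only real work is the diagonal tuples and the pooling. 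You instead keep the full neighborhood tuple, so you must match the \emph{joint} multiset of pairs $\left(\mathcal{\hat{C}}^{l-1}(v_1,w),\mathcal{\hat{C}}^{l-1}(w,v_2)\right)$, and your resolution of that pairing problem does not hold up.

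Concretely, two things fail in your third-slot argument. First, in the formalization of KP-GNN being compared against (Equations~(\ref{eq: kpgnn_1})--(\ref{eq: kpgnn_2})), $\mathcal{\tilde{C}}^{l-1}(v_1,w)$ is a color of the pair $(v_1,w)$ alone; there is no mechanism that ``refines each $\mathcal{\tilde{C}}^{l-1}(v_1,w)$ with a feature recording how $w$ sits inside $v_2$'s peripheral subgraph.'' The 1-WL peripheral-subgraph encoder is precisely what Equation~(\ref{eq: kpgnn_2}) already expresses (aggregation over $Q_{\text{SPD}(v_1,v_2)}(v_1)\cap Q_1(v_2)$), not an additional per-neighbor, $v_2$-dependent feature, and a pair color cannot encode its relation to a third node. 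Second, the third slot is not ``an edge color'': for $l\ge 2$, $\mathcal{\hat{C}}^{l-1}(w,v_2)$ is a full iterated tuple color, and by your own induction hypothesis controlling it would require controlling $\mathcal{\tilde{C}}^{l-1}(w,v_2)$ in a way that is \emph{paired} with $\mathcal{\tilde{C}}^{l-1}(v_1,w)$ inside the multiset — information Equation~(\ref{eq: kpgnn_2}) simply does not provide. Hence your inductive step is not established. The route that matches the paper is to reduce the instance's aggregation to Equation~(\ref{eq: kpgnn_ktfwl}) (the $(v_1,v_2)$ slot is constant across the multiset, and the $(w,v_2)$ slot is not retained), after which your remaining steps — the diagonal case and the lifting via Equation~(\ref{eq: kpgnn_1}) — go through essentially as in the paper.
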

\begin{proof}
Given $t=1$, $k=2$, and $ES^{t}(\mathbf{v})=Q_{\text{SPD}(v_1,v_2)}(v_1)\cap Q_1(v_2)$, the color update equation of $(k, t)$-FWL+ can be written as:
\begin{equation}
\label{eq: kpgnn_ktfwl}
\mathcal{\hat{C}}^{l}(v_1, v_2) = \text{HASH}(\mathcal{\hat{C}}^{l-1}(v_1, v_2), \lbbr \mathcal{\hat{C}}^{l-1}(v_1, w) | w \in Q_{\text{SPD}(v_1, v_2)}(v_1) \cap Q_1(v_2)\rbbr).
\end{equation}
We can see Equation~(\ref{eq: kpgnn_ktfwl}) exactly match Equation~(\ref{eq: kpgnn_2}). However, the update of $\mathcal{\hat{C}}^{l}(v_1, v_1)$ is not equal to Equation~(\ref{eq: kpgnn_1}) as the neighbor of $(v_1, v_1)$ in $(k, t)$-FWL+ only contain itself. Therefore it is easy to see that for any $l$ given two nodes $v_1, v_2 \in V(G)$, must have $\mathcal{\tilde{C}}^{l}(v_1, v_1) = \mathcal{\tilde{C}}^{l}(v_2, v_2)$ if $\mathcal{\hat{C}}^{l}(v_1, v_1) = \mathcal{\hat{C}}^{l}(v_2, v_2)$. 
Given two graphs $G$ and $H$ such that $\mathcal{\tilde{C}}^{\infty}(G) = \mathcal{\tilde{C}}^{\infty}(H)$, for any $(v_{11}, v_{12}) \in V^2(G)$ and $(v_{21}, v_{22}) \in V^2(H)$ with $\mathcal{\tilde{C}}^{\infty}(v_{11}, v_{12}) = \mathcal{\tilde{C}}^{\infty}(v_{21}, v_{22})$, we must have $\lbbr \mathcal{\tilde{C}}^{\infty}(v_{11}, w) | w \in Q_{\text{SPD}(v_{11}, v_{12})}(v_{11}) \cap Q_1(v_{12})\rbbr = \lbbr \mathcal{\tilde{C}}^{\infty}(v_{21}, w) | w \in Q_{\text{SPD}(v_{21}, v_{22})}(v_{21}) \cap Q_1(v_{22})\rbbr$. Thus, it is easy to verify that if we use the stable color output by KP-GNN as initial color to perform 1 iteration of $(k, t)$-FWL+, we have $\mathcal{\hat{C}}^{\infty}(v_{11}, v_{12}) = \mathcal{\hat{C}}^{\infty}(v_{21}, v_{22})$. Further, given Equation~(\ref{eq: kpgnn_1}), for any $(v_{11}, v_{11}) \in V^2(G)$, we can find a $(v_{22}, v_{22})\in V^2(H)$ such that $\mathcal{\tilde{C}}^{\infty}(v_{11}, v_{11}) = \mathcal{\tilde{C}}^{\infty}(v_{22}, v_{22})$, which means $\lbbr  \mathcal{\tilde{C}}^{\infty}(v_{11}, w_{12}) |w_{12} \in V(G)\rbbr = \lbbr  \mathcal{\tilde{C}}^{\infty}(v_{22}, w_{22}),  |w_{22} \in V(H)\rbbr$, and thus  $\lbbr  \mathcal{\tilde{C}}^{\infty}(w_{11}, w_{12}) |(w_{11}, w_{12}) \in V^2(G)\rbbr = \lbbr  \mathcal{\tilde{C}}^{\infty}(w_{21}, w_{22}) |(w_{21}, w_{22}) \in V^2(H)\rbbr$. This indicates that even if $(k, t)$-FWL+ use the shortest path distance as the initial color, we still have$\lbbr \mathcal{\hat{C}}^{\infty}(w_{11}, w_{12}) |(w_{11}, w_{12}) \in V^2(G)\rbbr = \lbbr  \mathcal{\hat{C}}^{\infty}(w_{21}, w_{22}) |(w_{21}, w_{22}) \in V^2(H)\rbbr$ and thus the final graph color histogram cannot be refined. This concludes the proof.
\end{proof}

Now, it is easy to prove Proposition~\ref{pro:kp_gnn} in the main paper. We restate it here:
\begin{proposition}
Let $t=1$, $k=2$, and $ES(\mathbf{v})=Q_{\text{SPD}(v_1,v_2)}(v_1)\cap Q_1(v_2)$, the corresponding $(k, t)$-FWL+ instance is at most as powerful as KP-GNN~\citep{feng2022how} with the peripheral subgraph encoder as powerful as 1-WL. 
\end{proposition}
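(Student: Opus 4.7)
The plan is to reduce the proposition directly to Lemma~\ref{lem: ktfwl_kpgnn}, which has already done the heavy lifting. The only difference between the two statements is the choice of initial color: Lemma~\ref{lem: ktfwl_kpgnn} endows $(k,t)$-FWL+ with the strictly stronger initial color that additionally encodes $\text{SPD}(v_1,v_2)$, while Proposition~\ref{pro:kp_gnn} uses the default initial color (the isomorphism type of the 2-tuple alone). Since enriching the initial color can never decrease the expressive power of a color refinement algorithm, the version in the proposition is no more powerful than the enriched version in the lemma.

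Concretely, first I would invoke the standard monotonicity fact for color refinement: if $\mathcal{\hat{C}}^{0}$ and $\mathcal{\hat{C}}^{0\prime}$ are two initial colorings on the same tuple space with $\mathcal{\hat{C}}^{0}(v_1,v_2)=\mathcal{\hat{C}}^{0}(u_1,u_2) \Leftarrow \mathcal{\hat{C}}^{0\prime}(v_1,v_2)=\mathcal{\hat{C}}^{0\prime}(u_1,u_2)$, then running the same update rule with $\mathcal{\hat{C}}^{0\prime}$ produces stable colors that are a refinement of those obtained from $\mathcal{\hat{C}}^{0}$. A straightforward induction on $l$ confirms this for the $(k,t)$-FWL+ update equation. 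In our case $\mathcal{\hat{C}}^{0\prime}$ is the initial color from the lemma (isomorphism type together with $\text{SPD}(v_1,v_2)$) and $\mathcal{\hat{C}}^{0}$ is the plain initial color of the proposition, so the hypothesis is trivially satisfied.

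Second, chaining this monotonicity with Lemma~\ref{lem: ktfwl_kpgnn} gives the desired conclusion: the plain-initial-color $(k,t)$-FWL+ instance $\preceq$ the enriched-initial-color $(k,t)$-FWL+ instance $\preceq$ KP-GNN (with 1-WL peripheral encoder). Formally, for any two graphs $G,H$ with identical KP-GNN color histograms, the lemma yields identical enriched-$(k,t)$-FWL+ histograms, and monotonicity then yields identical plain-$(k,t)$-FWL+ histograms, which is exactly Definition~\ref{def:algorithm_compare} for $\preceq$.

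There is essentially no hard step here; the one subtlety to be careful about is the monotonicity argument, since the update rule of $(k,t)$-FWL+ also uses $\text{SPD}(v_1,v_2)$ implicitly through the equivariant set $ES(\mathbf{v})=Q_{\text{SPD}(v_1,v_2)}(v_1)\cap Q_1(v_2)$. However, $ES(\mathbf{v})$ depends only on the graph structure and the tuple indices, not on the current color, so it is identical in both runs and the induction goes through unchanged. This makes the proof essentially a one-line corollary of Lemma~\ref{lem: ktfwl_kpgnn}.
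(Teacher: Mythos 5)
Your proposal is correct and matches the paper's own argument: the paper likewise reduces Proposition~\ref{pro:kp_gnn} to Lemma~\ref{lem: ktfwl_kpgnn} by noting the plain-initial-color instance is weaker than the SPD-enriched one, merely omitting the monotonicity details you spell out. Your observation that $ES(\mathbf{v})$ depends only on graph structure (not on the evolving colors), so the induction is unaffected, is exactly the right justification for the step the paper leaves implicit.
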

\begin{proof}
It is easy to see that the corresponding $(k, t)$-FWL+ is weaker than Equation~(\ref{eq: kpgnn_ktfwl}) with the initial color as the shortest path distance, we omit the detailed proof. Then, given Lemma~\ref{lem: ktfwl_kpgnn}, we can directly conclude the proof.
\end{proof}

We conjecture that if we further equip $(k, t)$-FWL+ with Equation~(\ref{eq: kpgnn_1}), $(k, t)$-FWL+ can be as powerful as KP-GNN. We also conjecture that if we let $t=p$ and $k=2$, $ES(\mathbf{v})=(Q_{\text{SPD}}(v_1, v_2)(v_1) \cap Q_{1}(v_2))^p$ with initial color as the shortest path distance, the resulting $(k, t)$-FWL+ has a close relationship with KP-GNN with the peripheral subgraph encoder as powerful as LFWL($p$)~\citep{zhang2023complete} and leave the detailed proof in the future.

\textbf{GDGNN}~\citep{kong2022geodesic}. Before we give the formal update equation of GDGNN, we first formally define $\mathcal{SP}(v_1, v_2)$ as a set of nodes that are in the shortest distance path between node $v_1$ and $v_2$. Namely, $\mathcal{SP}(v_1, v_2)=\{w| \text{SPD}(v_1, w) + \text{SPD}(w, v_2) = \text{SPD}(v_1, v_2), \forall w \in V(G)\}$ if $v_1 \neq v_2$ and $\mathcal{SP}(v_1, v_1)=\{v_1\}$. Then, the color update equation of GDGNN can be written as:
\begin{gather}
\label{eq: gdgnn_1}
\mathcal{\tilde{C}}^l(v_1, v_2) = \text{HASH}(\mathcal{\tilde{C}}^{l-1}(v_1, v_2), \lbbr \mathcal{\tilde{C}}^{l-1}(v_1, w) |w \in Q_1(v_2)\rbbr, \lbbr \mathcal{\tilde{C}}^{l-1}(v_1, w)| w \in \mathcal{SP}(v_1, v_2)\rbbr).
\end{gather}
Note that Equation~(\ref{eq: gdgnn_1}) is slightly different from the original version of GDGNN~\citep{kong2022geodesic} stated in their paper. In the original paper, the author use MPNNs in $O(n)$ space. Meanwhile, they only consider the direct neighbor of node $v_1$ and $v_2$ in $\mathcal{SP}(v_1, v_2)$, and they only add it in the last layer. However, it is easy to verify that Equation~(\ref{eq: gdgnn_1}) is at least as powerful as the original version and we show that the $(k, t)$-FWL+ can be more powerful than this version. Now we prove Proposition~\ref{pro:gdgnn}. We restate it here:

\begin{proposition}
Let $t=2$, $k=2$, and $ES^2(\mathbf{v})=Q_1(v_2) \times \mathcal{SP}(v_1, v_2) $, the corresponding $(k, t)$-FWL+ is more powerful than GDGNN~\citep{kong2022geodesic}.
\end{proposition}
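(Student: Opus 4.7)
The plan is to invoke Remark~\ref{rem:stable_color}: it suffices to show that one extra iteration of GDGNN's update (Equation~(\ref{eq: gdgnn_1})), applied using the stable color $\mathcal{\hat{C}}^{\infty}$ of this $(k,t)$-FWL+ instance as the initial color, cannot refine $\mathcal{\hat{C}}^{\infty}$. Concretely, I would fix two graphs $G, H$ and pairs $(v_{11}, v_{12}) \in V^2(G)$, $(v_{21}, v_{22}) \in V^2(H)$ with $\mathcal{\hat{C}}^{\infty}(v_{11}, v_{12}) = \mathcal{\hat{C}}^{\infty}(v_{21}, v_{22})$ and extract from this equality the two multisets GDGNN aggregates, namely $\lbbr \mathcal{\hat{C}}^{\infty}(v_1, w) \mid w \in Q_1(v_2) \rbbr$ and $\lbbr \mathcal{\hat{C}}^{\infty}(v_1, w) \mid w \in \mathcal{SP}(v_1, v_2) \rbbr$. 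The initial-color compatibility is immediate because $(k,t)$-FWL+ uses the isomorphism type of a 2-tuple, which already encodes adjacency and so refines whatever initial color GDGNN adopts.

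The key step is to unroll the neighborhood tuple using the default ordering fixed in Section~3.1: $Q^F_{(w_1,w_2)}(v_1,v_2) = ((v_1,v_2), (v_1,w_1), (v_1,w_2), (w_1,v_2), (w_2,v_2), (w_1,w_2))$, a fixed-length 6-tuple whose component positions do not depend on the specific $(w_1, w_2)$ chosen. Because the outer aggregation is a hierarchical multiset $\lbbr \cdot \rbbr_2$ with $w_2 \in \mathcal{SP}(v_1, v_2)$ indexing the outer group and $w_1 \in Q_1(v_2)$ the inner one, position $2$ across the inner index at any fixed $w_2$ recovers $\lbbr \mathcal{\hat{C}}^{\infty}(v_1, w) \mid w \in Q_1(v_2) \rbbr$, and position $3$, which does not depend on $w_1$, when collected across the outer index recovers $\lbbr \mathcal{\hat{C}}^{\infty}(v_1, w) \mid w \in \mathcal{SP}(v_1, v_2) \rbbr$. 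Thus both multisets GDGNN uses are injectively encoded inside the single color $\mathcal{\hat{C}}^{\infty}(v_1, v_2)$.

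Combining these two ingredients, the GDGNN update applied with $\mathcal{\hat{C}}^{\infty}$ as its initial color cannot produce finer information on any pair, so $\mathcal{\tilde{C}}^{\infty}(v_{11}, v_{12}) = \mathcal{\tilde{C}}^{\infty}(v_{21}, v_{22})$ whenever $\mathcal{\hat{C}}^{\infty}(v_{11}, v_{12}) = \mathcal{\hat{C}}^{\infty}(v_{21}, v_{22})$. Aggregating over $V^2(G)$ and $V^2(H)$ then gives $\mathcal{\hat{C}}^{\infty}(G) = \mathcal{\hat{C}}^{\infty}(H) \Rightarrow \mathcal{\tilde{C}}^{\infty}(G) = \mathcal{\tilde{C}}^{\infty}(H)$, which is the definition of \emph{more powerful} in Definition~\ref{def:algorithm_compare}.

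The main obstacle I anticipate is technical bookkeeping around the hierarchical multiset $\lbbr \cdot \rbbr_2$: I need to argue carefully that the coordinate-wise extraction of positions~$2$ and~$3$ is genuinely available, i.e.\ that $\lbbr \cdot \rbbr_2$ preserves the grouped-by-$w_2$ structure cleanly enough that the inner multiset over $w_1$ at each $w_2$ can be separated from (and then marginalized over) the outer index; this uses the fact established in the main text that $\lbbr \cdot \rbbr_2$ injectively encodes the grouping. A secondary point is verifying that $Q_1(v_2) \times \mathcal{SP}(v_1, v_2)$ is a valid $ES^2(\mathbf{v})$, which is routine because any $g \in S_n$ preserves both 1-hop neighborhoods and shortest paths, so $g$ maps $\mathcal{SP}(v_1, v_2)$ bijectively onto $\mathcal{SP}(g(v_1), g(v_2))$.
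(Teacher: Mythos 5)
Your proposal is correct and follows essentially the same route as the paper's proof: both extract the multiset $\lbbr \mathcal{\hat{C}}^{\infty}(v_1, w_1) \mid w_1 \in Q_1(v_2)\rbbr$ from the component $(v_1,w_1)$ of the neighborhood tuple and the multiset $\lbbr \mathcal{\hat{C}}^{\infty}(v_1, w_2) \mid w_2 \in \mathcal{SP}(v_1,v_2)\rbbr$ from the component $(v_1,w_2)$ (using that it is constant over the inner $w_1$-index), and then conclude via the stable-color argument of Remark~\ref{rem:stable_color} that one further GDGNN iteration cannot refine the coloring. Your added remarks on initial-color compatibility and the equivariance of $Q_1(v_2)\times\mathcal{SP}(v_1,v_2)$ are fine but only make explicit details the paper leaves implicit.
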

\begin{proof}
Given $t=2$, $k=2$, and $ES^2(\mathbf{v})=Q_1(v_2) \times \mathcal{SP}(v_1, v_2) $, the color update equation of $(k, t)$-FWL+ can be written as:
\begin{equation}
\label{ktfwl_gdgnn}
\begin{split}
\mathcal{\hat{C}}^l(v_1, v_2)=\text{HASH}(&\mathcal{\hat{C}}^{l-1}(v_1, v_2), \lbbr \lbbr \left( \mathcal{\hat{C}}^{l-1}(u_1, u_2)|(u_1, u_2) \in Q^F_{(w_1, w_2)}(v_1, v_2) \right) \\
&| w_1 \in Q_1(v_2) \rbbr |w_2 \in \mathcal{SP}(v_1, v_2)\rbbr).
\end{split}
\end{equation}
First, as $(v_1, w_1) \in Q^F_{(w_1, w_2)}(v_1, v_2)$, we can directly conclude that $(k, t)$-FWL+ can injectively encode $\lbbr \mathcal{\hat{C}}^l(v_1, w_1) | w_1 \in Q_1(v_2)\rbbr$ for any $l$.

Second, we also have $(v_1, w_2) \in Q^F_{(w_1, w_2)}(v_1, v_2)$. Further, as in the inner multiset, each element has the same $w_2$. This means that the result of the inner multiset will not affect the information in the outer multiset and it is easy to verify  $(k, t)$-FWL+ can injectively encode $\lbbr \mathcal{\hat{C}}^{l}(v_1, w_2)| w_2 \in \mathcal{SP}(v_1, v_2) \rbbr$. 

Now, given any two graphs $G$, $H$ with $\mathcal{\hat{C}}^{\infty}(G)=\mathcal{\hat{C}}^{\infty}(H)$. For any $(v_{11}, v_{12}) \in V^2(G)$ and $(v_{21}, v_{22}) \in V^2(H)$ with $\mathcal{\hat{C}}^{\infty}(v_{11}, v_{12})= \mathcal{\hat{C}}^{\infty}(v_{21}, v_{22})$ we must have $\lbbr \mathcal{\hat{C}}^{\infty}(v_{11}, w_{11}) | w_{11} \in Q_1(v_{12})\rbbr = \lbbr \mathcal{\hat{C}}^{\infty}(v_{21}, w_{21}) | w_{21} \in Q_1(v_{22})\rbbr$ and  $\lbbr \mathcal{\hat{C}}^{\infty}(v_{11}, w_{12})| w_{12} \in \mathcal{SP}(v_{11}, v_{12}) \rbbr = \lbbr \mathcal{\hat{C}}^{\infty}(v_{21}, w_{22})| w_{22} \in \mathcal{SP}(v_{21}, v_{22}) \rbbr$ based on two statements. Then, it is easy to verify that if we use the stable color from $(k ,t)$-FWL+ as the initial color to do one more iteration of GDGNN, the color of $(v_{11}, v_{12})$ and  $(v_{21}, v_{22})$ cannot be further refined so as the final graph color histogram. This concludes the proof.

\end{proof}

\subsection{Discussion on the complexity}
The space complexity of $(k, t)$-FWL+ is still $O(n^k)$. However, the time complexity of $(k, t)$-FWL+ is dependent on the choice of $ES(\mathbf{v})$ and it is hard to give a formal analysis.

\subsection{Limitations}
In the paper,  we implement several different instances of $(k, t)$-FWL+ which expressive power is closely related to existing expressive GNNs. However, the whole space of $(k, t)$-FWL+ is less explored. Especially, how many different $ES(\mathbf{v})$ exists? Moreover, how do theoretically and quantitatively analyze the expressive power of $(k, t)$-FWL+? Finally, although $(k, t)$-FWL+ can be arbitrarily powerful even within $O(n^2)$ space from the theoretical side, we observe it is sometimes hard to optimize the model to achieve its theoretical power by using fewer embedding. Specifically, how to ensure the injectiveness of the multiset function? For example, if we use large $t$ and small $k$ for $(k, t)$-FWL+ to conduct a graph isomorphism test, for each $k$-tuple, the hierarchical multiset $\lbbr \rbbr_t$ contains exponentially increased elements. So far, we only use the summation as the function to encode this hierarchical multiset. Although it is an injective function from a theoretical view, we find it works badly with information loss when the number of elements is too large. It is worth investigating how to design model architecture that can unleash the full power of $(k, t)$-FWL+. We leave all these to our future work. 
\section{Additional discussion on N$^2$-FWL and N$^2$-GNN}
\label{app:n2gnn_more}
\subsection{Detailed proofs for N$^2$-FWL}
In this section, we provide all detailed proofs for N$^2$-FWL. To make it clear, we denote $\mathcal{\hat{C}}^l(v_1,v_2)$ as the color of tuple $(v_1, v_2)$ at iteration $l$ for N$^2$-FWL. We rewrite Equation~\ref{eq:n2fwl} as:
\begin{equation}
\begin{split}
\mathcal{\hat{C}}^l(v_1, v_2)= &\text{HASH}(\mathcal{\hat{C}}^{l-1}(v_1, v_2),
\lbbr\left( \mathcal{\hat{C}}^{l-1}(u_1, u_2) |(u_1, u_2 \in Q^F_{(w_1, w_2)}(v_1, v_2)\right)\\
& (w_1, w_2) \in (\mathcal{N}_{1}(v_2) \times \mathcal{N}_{1}(v_1))\cap(\mathcal{N}_{h}(v_1) \cap \mathcal{N}_{h}(v_2))^2\rbbr).
\end{split}
\end{equation}
We start with the first lemma:
\begin{lemma}
\label{lem:n2fwl_slfwl}
Given $h$ large enough, N$^2$-FWL is more powerful than SLFWL(2)~\citep{zhang2023complete}.
\end{lemma}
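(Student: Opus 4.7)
The plan is to apply the stable-color approach of Remark~\ref{rem:stable_color}: show $\text{SLFWL}(2) \preceq \text{N}^2\text{-FWL}$ by arguing that a single SLFWL(2) update, initialized with the N$^2$-FWL stable color, cannot refine any tuple color further. Both algorithms take the isomorphism type of a $2$-tuple as initial color, so the baseline condition is immediate. What remains is to show that for every iteration $l$, the N$^2$-FWL color $\mathcal{\hat{C}}^l(v_1,v_2)$ determines the SLFWL(2) aggregation $\lbbr (\mathcal{\hat{C}}^{l-1}(v_1, w), \mathcal{\hat{C}}^{l-1}(w, v_2)) \mid w \in \mathcal{N}_1(v_1) \cup \mathcal{N}_1(v_2) \rbbr$ used in Equation~\eqref{eq:ktfwl+_slfwl}.

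First, I would take $h$ at least the diameter of every graph under comparison, so that $\mathcal{N}_1(v_1) \cup \mathcal{N}_1(v_2) \subseteq \mathcal{N}_h(v_1) \cap \mathcal{N}_h(v_2)$ whenever $v_1,v_2$ lie in the same connected component; the disconnected case is handled directly by the initial color. Under this choice, $N^{2}(v_1,v_2)$ degenerates to the full product $\mathcal{N}_1(v_2) \times \mathcal{N}_1(v_1)$. Next I would read off two marginals from the positional layout of $Q^{F}_{(w_1,w_2)}(v_1,v_2)$: positions $3$ and $5$ carry $(v_1,w_2)$ and $(w_2,v_2)$, which are constant inside each inner block of the hierarchical multiset $\lbbr \cdot \rbbr_2$, while positions $2$ and $4$ carry $(v_1,w_1)$ and $(w_1,v_2)$, which sweep $w_1 \in \mathcal{N}_1(v_2)$ inside each inner block. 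Because the hierarchical multiset is an injective encoding of the underlying doubly-indexed family, both marginals $M_2 := \lbbr (\mathcal{\hat{C}}^{l-1}(v_1,w_2), \mathcal{\hat{C}}^{l-1}(w_2,v_2)) \mid w_2 \in \mathcal{N}_1(v_1)\rbbr$ (extracted from the outer layer via the constants) and $M_1 := \lbbr (\mathcal{\hat{C}}^{l-1}(v_1,w_1), \mathcal{\hat{C}}^{l-1}(w_1,v_2)) \mid w_1 \in \mathcal{N}_1(v_2)\rbbr$ (extracted from any single inner block) are functions of $\mathcal{\hat{C}}^l(v_1,v_2)$.

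It then remains to reconstruct the SLFWL(2) set-union multiset $M_{\cup}$ from the disjoint-union marginals $M_1, M_2$. The critical observation is that $\mathcal{\hat{C}}^{l-1}(v_1,w)$ refines the initial color $\mathcal{\hat{C}}^{0}(v_1,w)$, which is the isomorphism type of $(v_1,w)$ and thus records whether $v_1 \sim w$; symmetrically for the second coordinate. Consequently, every pair in $M_1 \uplus M_2$ can be classified by whether its underlying $w$ belongs to $\mathcal{N}_1(v_1)\setminus\mathcal{N}_1(v_2)$, $\mathcal{N}_1(v_2)\setminus\mathcal{N}_1(v_1)$, or $\mathcal{N}_1(v_1)\cap\mathcal{N}_1(v_2)$, so the exact duplication pattern between $M_1$ and $M_2$ is visible; de-duplicating the intersection class recovers $M_{\cup}$ exactly. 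A routine induction on $l$, combined with injectivity of HASH, then yields $\mathcal{\hat{C}}^l(v_1,v_2) = \mathcal{\hat{C}}^l(v_1',v_2') \Rightarrow \mathcal{\tilde{C}}^l(v_1,v_2) = \mathcal{\tilde{C}}^l(v_1',v_2')$, and the lemma follows from Definition~\ref{def:algorithm_compare}. The main obstacle is precisely this last bookkeeping step: making rigorous that the edge signal inherited from the initial color is enough to convert a disjoint union of multisets back into the desired set-union multiset without any loss of information.
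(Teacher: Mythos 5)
Your proposal follows essentially the same route as the paper's (sketchy) proof: for $h$ large the neighborhood collapses to $\mathcal{N}_1(v_2)\times\mathcal{N}_1(v_1)$, the SLFWL(2) pairs $(v_1,w),(w,v_2)$ all occur inside $Q^{F}_{(w_1,w_2)}(v_1,v_2)$, and an induction on $l$ transfers color equality — and you correctly fill in the step the paper dismisses as "easy to verify," namely recovering the union-indexed multiset over $\mathcal{N}_1(v_1)\cup\mathcal{N}_1(v_2)$ from the product-indexed (hierarchical) aggregation by using the adjacency/equality information preserved from the initial colors to de-duplicate the intersection. Your brief hand-wave on cross-component tuples is no less rigorous than the paper, which simply assumes the neighborhood becomes the full product, so the attempt matches the paper's argument with strictly more detail.
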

\begin{proof}
Given $h$ large enough, it is clearly that the neighbor of N$^2$-FWL becomes $ \mathcal{N}_1(v_2) \times \mathcal{N}_1(v_1)$. For any $(w_1, w_2)\in \mathcal{N}_1(v_2) \times \mathcal{N}_1(v_1)$, it is easy to verify that $Q^F_{(w_1, w_2)}(v_1, v_2)$ contains all tuples in the aggregation of SLFWL(2). Further, the elements in $ \mathcal{N}_1(v_2) \times \mathcal{N}_1(v_1)$ includes all elements in $Q_1(v_2) \cup Q_1(v_1)$. Therefore, it is easy to prove that N$^2$-FWL is more powerful than SLFWL(2) by doing induction on $l$. Here we omit the detailed proof. 
\end{proof}
\begin{lemma}
\label{lem:n2fwl_edge_sgnn}
Given $h$ large enough, N$^2$-FWL is more powerful than edge-based subgraph GNNs.
\end{lemma}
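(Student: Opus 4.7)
My plan is to reduce the claim to Proposition~\ref{pro:edge_based_sgnn}, which already establishes that the $(2,2)$-FWL+ instance $\mathrm{F}^{+}$ defined by $ES^2(\mathbf{v}) = Q_1(v_2) \times Q_1(v_1)$ is more powerful than edge-based subgraph GNNs. It therefore suffices to show that, when $h$ is taken large enough, N$^2$-FWL refines $\mathrm{F}^{+}$. Since ``more powerful than'' is transitive (Definition~\ref{def:algorithm_compare}), chaining the two refinements yields the lemma.

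First I would fix $h$ to be at least the diameter of every connected component, so that $(\mathcal{N}_h(v_1)\cap\mathcal{N}_h(v_2))^2$ imposes no constraint and $N^2(\mathbf{v})$ collapses to $\mathcal{N}_1(v_2) \times \mathcal{N}_1(v_1)$. Using $\mathcal{N}_1(v)=\{v\}\cup Q_1(v)$, this is a superset of $Q_1(v_2) \times Q_1(v_1)$, and the only extra pairs are those $(w_1,w_2)$ with $w_1=v_2$ or $w_2=v_1$. Next I would prove by induction on $l$ that whenever $\mathcal{\hat{C}}^{l}_{n^2fwl}(\mathbf{v}_1)=\mathcal{\hat{C}}^{l}_{n^2fwl}(\mathbf{v}_2)$ for tuples in graphs $G,H$, the corresponding stable $\mathrm{F}^{+}$ colors also agree. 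The base case is immediate since both algorithms use the isomorphism-type initial color. The inductive step hinges on a key observation: the predicates ``$w_1=v_2$'' and ``$w_2=v_1$'' are already readable from the initial color of the entries $(w_1,v_2)$ and $(v_1,w_2)$ that sit inside every neighborhood tuple $Q^{F}_{(w_1,w_2)}(v_1,v_2)$, because the isomorphism type distinguishes the degenerate tuples $(v_2,v_2)$ or $(v_1,v_1)$ from any tuple $(w,v_2)$ or $(v_1,w)$ with $w$ different. Consequently the hierarchical multiset of N$^2$-FWL can be canonically split at both the outer ($w_2$) and inner ($w_1$) level into a ``proper'' part indexed by $Q_1(v_1)$ and then $Q_1(v_2)$, which is exactly the $\mathrm{F}^{+}$ aggregation, and a ``pollution'' part which, being separable from the proper part at level $0$, only adds side information. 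An injective HASH over the enriched hierarchical multiset thus produces a color at least as refined as the $\mathrm{F}^{+}$ color, and the inductive hypothesis closes the step.

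The main obstacle I expect is the careful bookkeeping for the hierarchical multiset $\lbbr \cdot \rbbr_2$. I need to verify that the $\mathrm{F}^{+}$ hierarchy (outer over $w_2 \in Q_1(v_1)$, inner over $w_1 \in Q_1(v_2)$) can be recovered faithfully as a sub-hierarchy of the N$^2$-FWL one, rather than through an ad-hoc flattening that could entangle proper and pollution groups. This amounts to an explicit injection between hierarchical multisets that respects the two nested groupings, and the crucial enabling fact is precisely the level-$0$ separability of $w_1=v_2$ and $w_2=v_1$ noted above. Once that is formalized, the refinement N$^2$-FWL $\succeq \mathrm{F}^{+}$ follows, and combining with Proposition~\ref{pro:edge_based_sgnn} gives the desired conclusion.
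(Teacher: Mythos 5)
Your proposal is correct and takes essentially the same route as the paper's own proof: for $h$ large enough the neighborhood collapses to $\mathcal{N}_1(v_2)\times\mathcal{N}_1(v_1)$, i.e.\ the instance of Proposition~\ref{pro:edge_based_sgnn} augmented only by root-node pairs, and since the diagonal tuples $(v_1,v_1)$, $(v_2,v_2)$ have distinct initial colors these extra terms remain separable and merely add information, so the claim follows by chaining with Proposition~\ref{pro:edge_based_sgnn}. Your write-up simply makes explicit the level-$0$ separability and hierarchical-multiset bookkeeping that the paper leaves implicit in its remark that the root tuples' information "will not affect other neighbors."
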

\begin{proof}
Given $h$ large enough, it is clear that the neighbor of N$^2$-FWL becomes $ \mathcal{N}_1(v_2) \times \mathcal{N}_1(v_1)$. The N$^2$-FWL exactly matches the instance in Proposition~\ref{pro:edge_based_sgnn} with additional root nodes. As the initial feature of root tuples($(v_1, v_1)$, $(v_2, v_2)$) are different from other tuples, its information will not affect other neighbors. This concludes the proof.
\end{proof}
It is intuitive to conjecture that N$^2$-FWL with the number of $h$ is more powerful than edge-based subgraph GNNs with $h$-hop subgraph and we leave the detailed proof in the future. Now, we can prove Theorem~\ref{thm:n2fwl_compare} in the main paper. We restate it here:
\begin{theorem}
Given $h$ is large enough,  N$^2$-FWL is more powerful than SLFWL(2)~\citep{zhang2023complete} and edge-based subgraph GNNs. 
\end{theorem}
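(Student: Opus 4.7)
The plan is to derive Theorem~\ref{thm:n2fwl_compare} as a direct corollary of Lemma~\ref{lem:n2fwl_slfwl} and Lemma~\ref{lem:n2fwl_edge_sgnn}, since the two lemmas together cover the two claims of the theorem under exactly the same hypothesis on $h$. Concretely, I would observe that when $h$ is at least the diameter of the graph, $(\mathcal{N}_h(v_1)\cap\mathcal{N}_h(v_2))^2=V(G)^2$ on each connected component, so the equivariant set in N$^2$-FWL collapses to $\mathcal{N}_1(v_2)\times\mathcal{N}_1(v_1)$; this is the uniform setting in which both lemmas are stated, so the "simultaneous $h$" issue is immediately resolved.

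From there the proof is a two-line assembly: apply Lemma~\ref{lem:n2fwl_slfwl} to conclude that N$^2$-FWL refines at least as finely as SLFWL(2), and apply Lemma~\ref{lem:n2fwl_edge_sgnn} to conclude the same against edge-based subgraph GNNs. By Definition~\ref{def:algorithm_compare}, "more powerful than" is the right notion for each $\preceq$ relation, and the conjunction of the two $\preceq$ statements is precisely the claim of the theorem.

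The main (and only) obstacle is really inside the two lemmas, not in the assembly step: one has to verify that the product neighborhood $\mathcal{N}_1(v_2)\times\mathcal{N}_1(v_1)$ together with the neighborhood tuple $Q^F_{(w_1,w_2)}(v_1,v_2)$ is simultaneously informative enough to simulate (i) SLFWL(2)'s aggregation over $\mathcal{N}_1(v_1)\cup\mathcal{N}_1(v_2)$ with tuple-style colors $(\mathcal{C}(v_1,w),\mathcal{C}(w,v_2))$, and (ii) the 3-tuple edge-subgraph aggregation of I$^2$-GNN and its cousins via the correspondence made explicit in Proposition~\ref{pro:edge_based_sgnn}. Since those reductions are exactly what the two lemmas already establish, the theorem itself only requires stitching them together and noting that the same "large enough $h$" hypothesis suffices for both.
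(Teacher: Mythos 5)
Your proposal matches the paper's own proof: the paper derives the theorem directly from Lemma~\ref{lem:n2fwl_slfwl} and Lemma~\ref{lem:n2fwl_edge_sgnn}, whose proofs likewise rest on the observation that for $h$ large enough the neighborhood collapses to $\mathcal{N}_1(v_2)\times\mathcal{N}_1(v_1)$. Your assembly of the two lemmas, and your remark that the substantive work lives inside them, is exactly the paper's route.
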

\begin{proof}
Lemma~\ref{lem:n2fwl_slfwl} and Lemma~\ref{lem:n2fwl_edge_sgnn} directly implies the theorem. This concludes the proof. 
\end{proof}
Next, we prove Theorem~\ref{thm: n2fwl_counting} in the main paper. We restate it here:
\begin{theorem}
 N$^2$-FWL can count up to (1) 6-cycles; (2) all connected graphlets with size 4; (3) 4-paths at node level.
\end{theorem}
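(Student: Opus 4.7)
The plan is to prove the three claims in order, leveraging the two comparison results already established in Theorem~\ref{thm:n2fwl_compare} (N$^2$-FWL dominates SLFWL(2) and edge-based subgraph GNNs when $h$ is large enough) together with a direct analysis of the stable colors. The key structural observation is that for every pair $(v_1,v_2)$ and every neighbor $(w_1,w_2)\in N^2(v_1,v_2)$, the aggregated list $\left(\mathcal{\hat{C}}^{l-1}(u_1,u_2) \mid (u_1,u_2)\in Q^F_{(w_1,w_2)}(v_1,v_2)\right)$ contains the six pairs $(v_1,v_2),(v_1,w_1),(v_1,w_2),(w_1,v_2),(w_2,v_2),(w_1,w_2)$; at $l=0$ their isomorphism types completely encode the induced subgraph on $\{v_1,v_2,w_1,w_2\}$, which is the common primitive for all three counting results. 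Throughout, ``can count'' is interpreted in the standard WL sense: two graphs with the same stable color histogram must agree on the count of the substructure in question.

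For part (2), I would argue that a single round of N$^2$-FWL (taking $h\geq 2$) already suffices. Every $4$-subset $\{v_1,v_2,w_1,w_2\}$ that induces a connected graphlet and contains $(v_1,v_2)$ appears in $\mathcal{\hat{C}}^1(v_1,v_2)$ labeled by its full adjacency type, because the initial colors of the six pairs above jointly determine the isomorphism type of the induced $4$-vertex subgraph. Summing the appropriate indicators (extracted via injectivity of HASH) over all ordered $(v_1,v_2)$ and dividing by the standard overcount factor recovers the counts of each of the six connected $4$-graphlets: $4$-clique, chordal $4$-cycle (diamond), $4$-cycle, tailed triangle (paw), $4$-path, and $3$-star. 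The verification is routine combinatorial bookkeeping per type.

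For parts (1) and (3), I would use a two-pronged strategy. For 6-cycles, the cleanest route is to invoke Theorem~\ref{thm:n2fwl_compare}: edge-based subgraph GNNs such as I$^2$-GNN are known to count up to $6$-cycles, and since N$^2$-FWL is more powerful than them, it inherits this counting ability. A direct backup argument also works: any $6$-cycle through $(v_1,v_2)$ decomposes into two internally disjoint length-$3$ paths between $v_1$ and $v_2$, and after two refinement rounds, N$^2$-FWL encodes the joint multiset of such paths through the pairs $(w_1,w_2)$ realizing them inside the common $h$-hop neighborhood. For $4$-paths at node level, I would note that the diagonal color $\mathcal{\hat{C}}(v,v)$, together with $\{\mathcal{\hat{C}}(v,w)\}_w$, encodes the induced-subgraph counts of each $4$-graphlet containing $v$ from part (2); a standard inclusion–exclusion conversion from induced to (not-necessarily-induced) counts then yields the node-level count of $4$-paths rooted at $v$.

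The main obstacle is part (1): the neighbor set is intersected with $(\mathcal{N}_h(v_1)\cap\mathcal{N}_h(v_2))^2$, which could drop intermediate vertices of a $6$-cycle when $h$ is small. The proof must therefore either fix $h\geq 3$ and observe that the diameter of a $6$-cycle is $3$ (so every vertex on such a cycle lies within $3$ hops of both $v_1$ and $v_2$), or handle the boundary cases by hand. A secondary technicality is converting ``distinguishing power'' statements into genuine counting statements; this relies on the injectivity of HASH together with the graph-level readout, and I would flag it explicitly but not belabor the argument.
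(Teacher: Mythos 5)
Your overall framing is sound, but your direct argument for part (2) has a genuine gap, and it is the part you chose \emph{not} to delegate to the domination result. The paper proves all three claims in one stroke: by \cref{thm:n2fwl_compare}, N$^2$-FWL is more powerful than I$^2$-GNN, and the counting theorems of \citet{huang2023boosting} (their Theorems 4.1--4.3) already give 6-cycles, all connected 4-graphlets, and node-level 4-paths for I$^2$-GNN; the counting power is then inherited. You use exactly this route for 6-cycles, but for the 4-graphlets you instead claim that one round of N$^2$-FWL sees every connected 4-subset $\{v_1,v_2,w_1,w_2\}$ containing $(v_1,v_2)$ with its full adjacency type. This is false for the 3-star (claw), which is one of the six connected graphlets on 4 vertices: the membership constraint $\mathbf{w}\in N^2(\mathbf{v})$ forces $w_1\in\mathcal{N}_1(v_2)$ and $w_2\in\mathcal{N}_1(v_1)$, and in $K_{1,3}$ no assignment of the four vertices to $(v_1,v_2,w_1,w_2)$ satisfies this with all four distinct (whichever vertices play $v_1,v_2$, the required neighbor is always the unique center, so $w_1$ or $w_2$ collides with another slot). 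Your six-pair observation correctly recovers the induced type of $\{v_1,v_2,w_1,w_2\}$ \emph{when} such a pair $(w_1,w_2)$ exists, and it does exist for the path, cycle, paw, diamond, and clique, but the claw never appears this way, so the ``routine combinatorial bookkeeping'' cannot start. Since your part (3) is built on the node-level version of part (2), the gap propagates there as well.

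The gap is repairable: either recover claw counts indirectly (e.g., classify the $\binom{\deg v}{3}$ triples of neighbors of a vertex by the number of edges among them, which expresses the induced-claw count through degree information together with the paw/diamond/clique counts you can obtain), or simply apply the same domination argument you used for 6-cycles uniformly to all three parts, which is what the paper does. Also note that reducing everything to \cref{thm:n2fwl_compare} quietly requires $h$ large enough, a hypothesis you flag for 6-cycles but which applies to the whole theorem.
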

\begin{proof}
Given Theorem~\ref{thm:n2fwl_compare}, N$^2$-FWL is more powerful than I$^2$-GNN~\citep{huang2023boosting}. Given Theorem 4.1, 4.2, 4.3 in~\citep{huang2023boosting}, I$^2$-GNN~\citep{huang2023boosting} is able to count up to (1) 6-cycles; (2) all connected graphlets with size 4; (3) 4-paths at node level. This concludes the proof.
\end{proof}

\begin{figure*}[t]
\centering
\vspace{-10pt}
\includegraphics[width=0.95\textwidth]{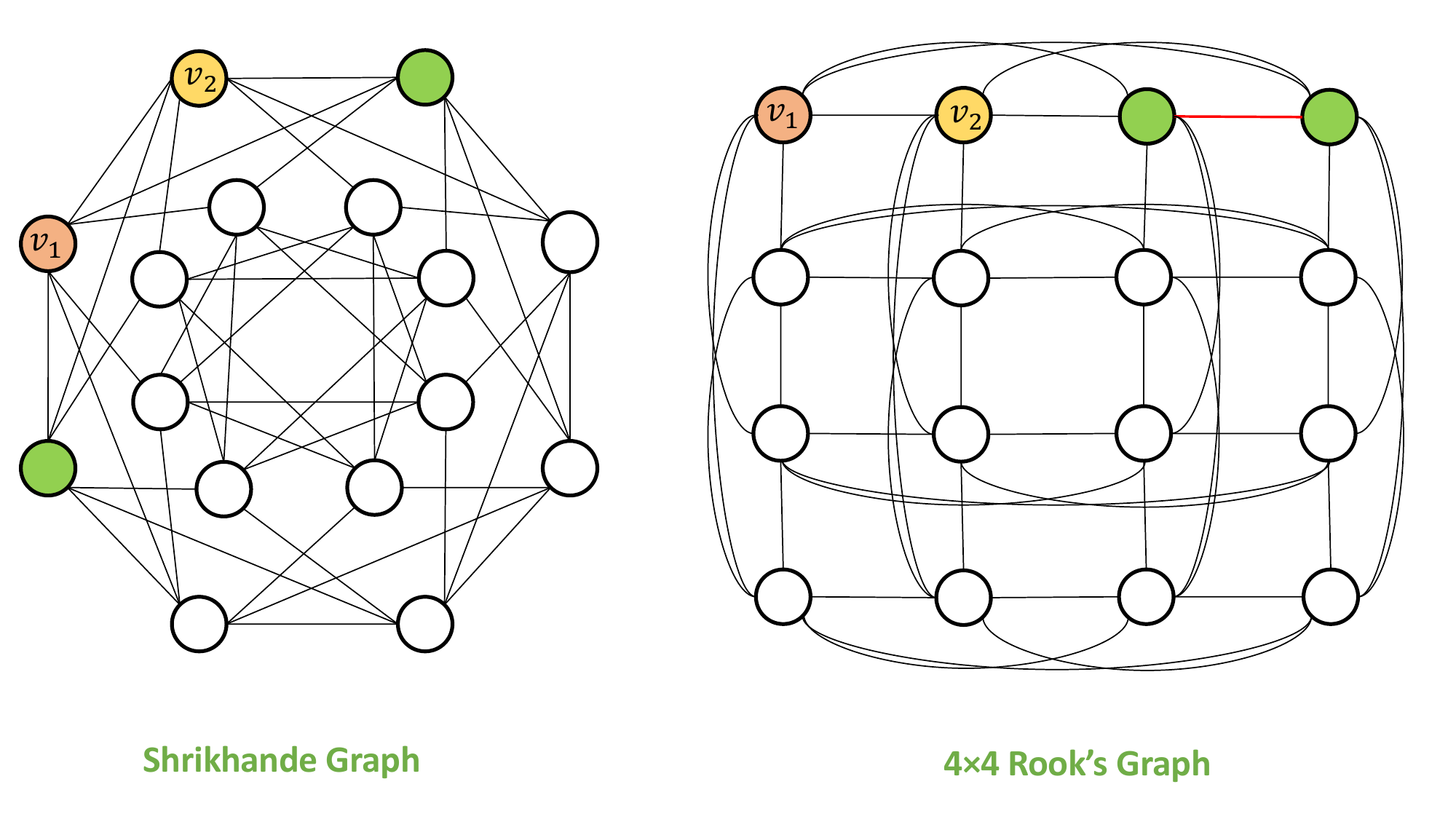}
\vspace{-5pt}
\caption{The Shrikhande graph and 4 $\times$ 4 Rook's graph.}
 \vspace{-15pt}
\label{fig:srg}
\end{figure*}

Then, we prove Corollary~\ref{cor: n2fwl_vs_3wl} in the main paper. We restate it here:
\begin{corollary}
 N$^2$-FWL is strictly more powerful than all existing node-based subgraph GNNs and no less powerful than 3-WL.
\end{corollary}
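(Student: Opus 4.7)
The plan is to split the corollary into its two assertions---(i) N$^2$-FWL strictly dominates every existing node-based subgraph GNN, and (ii) N$^2$-FWL is at least as powerful as $3$-WL---and to obtain each of them by chaining Theorem~\ref{thm:n2fwl_compare} with a known fact about SLFWL(2), then invoking the transitivity of the refinement relations $\preceq$ and $\prec$ from Definition~\ref{def:algorithm_compare}.

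For (i), I would combine Theorem~\ref{thm:n2fwl_compare}, which (for $h$ large enough) gives N$^2$-FWL $\succeq$ SLFWL(2), with the second half of Proposition~\ref{pro:slfwl}, which quotes \citet{zhang2023complete} to the effect that SLFWL(2) is strictly more powerful than any existing node-based subgraph GNN. Transitivity then yields the strict inequality: any distinguishing pair for a node-based subgraph GNN is already separated by SLFWL(2) and hence by N$^2$-FWL, while the witness pair separating SLFWL(2) from the node-based subgraph GNNs is likewise separated by N$^2$-FWL.

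For (ii), the cleanest route is to invoke the result of \citet{zhang2023complete} that SLFWL(2) is no less powerful than $3$-WL (equivalently, $2$-FWL) and compose it with Theorem~\ref{thm:n2fwl_compare} to conclude N$^2$-FWL $\succeq$ $3$-WL. A self-contained alternative is to argue directly, in the style of the proofs for Propositions~\ref{pro:slfwl}--\ref{pro:gdgnn}, that the stable coloring of N$^2$-FWL refines that of $2$-FWL. The key observation for the alternative is that the diagonal choice $w_1=w_2=w$ makes $Q^F_{(w,w)}(v_1,v_2)$ contain both $(v_1,w)$ and $(w,v_2)$ together with the seed $(v_1,v_2)$, so N$^2$-FWL effectively performs a local analogue of the $2$-FWL aggregator; a routine induction on the iteration number would then transfer the dominance.

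The hard part will be (ii): $2$-FWL aggregates $w$ over all of $V(G)$, whereas N$^2$-FWL restricts $(w_1,w_2)$ to a strictly local set, so there is no direct tuple-level simulation and the dominance has to be argued at the stable-color level, in the same spirit as Lemma~\ref{lem:sequential_pool} and Lemma~\ref{lem:ktfwl_edge}. The $h$-hop overlapping-subgraph condition plays an essential role for pairs with large $\text{SPD}(v_1,v_2)$, and I expect the main technical check to be that taking $h$ sufficiently large allows N$^2$-FWL to propagate enough structural information across long distances to match $2$-FWL on such tuples.
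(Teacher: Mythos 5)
Your handling of the first claim coincides with the paper's: Theorem~\ref{thm:n2fwl_compare} gives N$^2$-FWL $\succeq$ SLFWL(2), Theorem~7.1 of \citep{zhang2023complete} (as quoted in Proposition~\ref{pro:slfwl}) gives SLFWL(2) $\succ$ all node-based subgraph GNNs, and transitivity of the relations in Definition~\ref{def:algorithm_compare} finishes that half. The problem is entirely in your part (ii), where there is a genuine gap on two counts. First, the ``cleanest route'' rests on a premise that does not exist: \citep{zhang2023complete} does \emph{not} show SLFWL(2) $\succeq$ 2-FWL; what it establishes is that SLFWL(2) sits strictly above the node-based subgraph GNNs (and is at most 2-FWL/3-WL), so there is no lower bound against 3-WL to compose with Theorem~\ref{thm:n2fwl_compare}. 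Second, the fallback simulation argument cannot be completed: in N$^2$-FWL the pair $(w_1,w_2)$ is constrained to $(\mathcal{N}_1(v_2)\times\mathcal{N}_1(v_1))\cap(\mathcal{N}_h(v_1)\cap\mathcal{N}_h(v_2))^2$, so your diagonal choice $w_1=w_2=w$ ranges only over \emph{common neighbors} of $v_1$ and $v_2$, never over all of $V(G)$, and increasing $h$ does not relax the $\mathcal{N}_1$ constraint. Whether such a local aggregator matches the global 2-FWL aggregation is exactly the delicate local-versus-global question that a ``routine induction'' will not settle; indeed the paper never claims N$^2$-FWL $\succeq$ 3-WL.

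Relatedly, you have read ``no less powerful than 3-WL'' as the refinement statement $\succeq$, whereas the paper means the weaker claim that 3-WL does not dominate N$^2$-FWL (``partially outperform 3-WL''), which calls for an exhibit-a-witness argument rather than a simulation. The paper's proof does precisely this: by Theorem~\ref{thm:n2fwl_compare}, N$^2$-FWL $\succeq$ edge-based subgraph GNNs, and Proposition~4.1 of \citep{huang2023boosting} already gives graph pairs that I$^2$-GNN separates but 3-WL cannot; in addition it gives an explicit witness, the Shrikhande graph versus the $4\times 4$ Rook's graph. Even with $h=1$, for adjacent $v_1,v_2$ the two common neighbors form a pair $(w_1,w_2)$ that belongs to $N^2(v_1,v_2)$, and since $(w_1,w_2)\in Q^F_{(w_1,w_2)}(v_1,v_2)$ its isomorphism type is injected into $\mathcal{\hat{C}}^1(v_1,v_2)$; this pair is a non-edge in the Shrikhande graph but an edge in the Rook's graph, so N$^2$-FWL separates a 3-WL-indistinguishable pair. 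So your part (ii) targets a stronger statement than the corollary asserts and, even for that target, leans on an unavailable lemma and a simulation that the definition of $N^2(\mathbf{v})$ rules out.
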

\begin{proof}
Given Theorem~\ref{thm:n2fwl_compare}, this corollary can be directly concluded given Theorem 7.1 in~\citep{zhang2023complete} and Proposition 4.1 in~\citep{huang2023boosting}. But we further give an example to show that even with $h=1$, N$^2$-FWL is already powerful enough to distinguish some non-isomorphic strongly regular graph pairs that 3-WL cannot distinguish. When $h=1$, the neighbors of N$^2$-FWL is $(\mathcal{N}_{1}(v_2) \times \mathcal{N}_{1}(v_1))\cap(\mathcal{N}_{1}(v_1) \cap \mathcal{N}_{1}(v_2))^2$. Consider the Shrikhande graph and 4$\times$4 Rook's graph in Figure~\ref{fig:srg}. It is well known this pair of graphs cannot be distinguished by 3-WL. First, look at node $v_1$ and $v_2$ in Shrikhande graph, $\mathcal{N}_{1}(v_1) \cap \mathcal{N}_1(v_2)$ includes two green nodes, $v_1$, and $v_2$. This means that $(w_1, w_2) \in (\mathcal{N}_{1}(v_2) \times \mathcal{N}_{1}(v_1))\cap(\mathcal{N}_{1}(v_1) \cap \mathcal{N}_{1}(v_2))^2$ contains a pair of $(w_1, w_2)$ such that $w_1$ is one green node and $w_2$ is the other. Given Definition~\ref{def:neighbor_tuple},  $(w_1, w_2) \in Q^F_{(w_1, w_2)}(v_1, v_2)$. This means that the information of $\mathcal{\hat{C}}^0(w_1, w_2)$ will be encoded into $\mathcal{\hat{C}}^1(v_1, v_2)$ and we have $(w_1, w_2)$ is not an edge. Similarly, in 4 $\times$ 4 Rook's graph, $\mathcal{N}_{1}(v_1) \cap \mathcal{N}_1(v_2)$ includes two green nodes, $v_1$, and $v_2$. Thus, $\mathcal{\hat{C}}^0(w_1, w_2)$ will be encoded into $\mathcal{\hat{C}}^1(v_1, v_2)$. However, $(w_1, w_2)$ here is an edge, which will have a different isomorphism type than $(w_1, w_2)$ in the Shrikhande graph, thus resulting in different $\mathcal{\hat{C}}^1(v_1, v_2)$ and further graph color histogram. This concludes the proof.
\end{proof}
Finally, we prove Proposition~\ref{pro:n2fwk_n2gnn} in the main paper. We restate it here:
\begin{proposition}
If $\mathbf{U}^{l}$, $\mathbf{M}^{l}$, and $\mathbf{R}$ are injective functions, there exist a N$^2$-GNN instance that can be as powerful as N$^2$-FWL.
\end{proposition}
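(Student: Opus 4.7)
The plan is to establish that N$^2$-GNN, equipped with injective $\mathbf{U}^l$, $\mathbf{M}^l$, $\mathbf{R}$, produces tuple representations that are equally discriminative as the colors produced by N$^2$-FWL at every iteration, and hence yields the same graph-level distinguishing power. The key observation is that Equation~(\ref{eq:n2gnn_node}) is structurally identical to Equation~(\ref{eq:n2fwl}): both iterate over the same hierarchical multiset $\lbbr\cdot\rbbr_2$ indexed by $\mathbf{w}\in N^2(\mathbf{v})$, and in both the inner object is a function of the ordered tuple $(\mathcal{C}^{l-1}(\mathbf{u})\,|\,\mathbf{u}\in Q^F_{\mathbf{w}}(\mathbf{v}))$. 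Therefore, replacing $\text{HASH}$ by injective learnable $\mathbf{U}^l$ and $\mathbf{M}^l$ should preserve the refinement pattern.

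Concretely, I would proceed by induction on the layer/iteration index $l$, proving the statement: for any two tuples $\mathbf{v}_1\in V^2(G)$, $\mathbf{v}_2\in V^2(H)$, one has $h^l_{\mathbf{v}_1}=h^l_{\mathbf{v}_2}$ if and only if $\mathcal{C}^l_{n^2fwl}(\mathbf{v}_1)=\mathcal{C}^l_{n^2fwl}(\mathbf{v}_2)$. For the base case $l=0$, choose $h^0_{\mathbf{v}}$ to be an injective encoding of the isomorphism type of $\mathbf{v}$ (together with the relevant edge attributes), matching $\mathcal{C}^0_{n^2fwl}(\mathbf{v})$. For the inductive step, assume the equivalence at layer $l-1$; then for each fixed $\mathbf{w}$, injectivity of $\mathbf{M}^l$ on the ordered tuple $(h^{l-1}_{\mathbf{u}}\,|\,\mathbf{u}\in Q^F_{\mathbf{w}}(\mathbf{v}))$ ensures that the message $m^l_{\mathbf{v}\mathbf{w}}$ carries exactly the same information as $(\mathcal{C}^{l-1}_{n^2fwl}(\mathbf{u})\,|\,\mathbf{u}\in Q^F_{\mathbf{w}}(\mathbf{v}))$. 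Aggregating over the hierarchical multiset indexed by $\mathbf{w}\in N^2(\mathbf{v})$ and composing with the injective $\mathbf{U}^l$ on the pair $(h^{l-1}_{\mathbf{v}},\lbbr m^l_{\mathbf{v}\mathbf{w}}\rbbr_2)$ then yields a value that is in bijection with $\mathcal{C}^l_{n^2fwl}(\mathbf{v})$, closing the induction. Finally, applying the injective readout $\mathbf{R}$ to the multiset of final layer tuple representations reproduces the graph color histogram used by N$^2$-FWL up to relabeling, so N$^2$-GNN distinguishes exactly the same graph pairs.

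The only delicate step is guaranteeing that such injective $\mathbf{U}^l$, $\mathbf{M}^l$, $\mathbf{R}$ actually exist and can be realized in the model class. Since the domain at every layer is a finite set (colors over finite graphs form a countable set, and for any fixed graph collection the active inputs are finite), one can invoke the standard argument used for GIN-style models~\citep{xu2018powerful}: there exist injective functions on multisets of bounded-support countable inputs, and these are universally approximated by MLPs on a compact subset of $\mathbb{R}^d$ (combined with a sum aggregator over multisets, which is known to be injective on multisets of countable inputs). The hierarchical multiset $\lbbr\cdot\rbbr_2$ can be handled by applying the injective multiset encoder twice in sequence, once per level of the hierarchy, since the composition of injective maps is injective. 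This construction yields a concrete N$^2$-GNN instance whose distinguishing power coincides with that of N$^2$-FWL, completing the proof.
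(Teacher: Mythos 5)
Your proposal is correct and follows essentially the same route as the paper, which simply observes that injective $\mathbf{U}^l$, $\mathbf{M}^l$, $\mathbf{R}$ play the role of the HASH functions and then invokes induction on $l$ (with details omitted). Your write-up additionally spells out the base case and the GIN-style existence argument for injective multiset encoders, which the paper leaves implicit, but the underlying argument is the same.
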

\begin{proof}
If $\mathbf{U}^{l}$, $\mathbf{M}^{l}$, and $\mathbf{R}$ are injective, they are exactly the same as $\text{HASH}$ used in N$^2$-FWL. Therefore, it is easy to prove it by doing induction on $l$ to show that if N$^2$-GNN outputs the same color for two tuples, N$^2$-FWL also does for any $l \geq 0$ and vice versa. We omit the detailed proof here.
\end{proof}

\subsection{Complexity analysis}
In this section, we provide a complexity analysis of N$^2$-GNN. Let $m$ denote the number of edges in the graph and $d$ denote the average degree of the graph. First, we consider $h$ is large enough, where the neighbor of N$^2$-GNN becomes $\mathcal{N}_{1}(v_2) \times \mathcal{N}_{1}(v_1)$. As N$^2$-GNN is an instance of $(2, 2)$-FWL, it is straightforward that the space complexity of N$^2$-GNN is upper-bounded by $O(n^2)$. Further, each tuple needs to aggregate $(d+1)^2$ different neighbors, which results in a total time complexity of $O(n^2d^2)=O(m^2)$. Notes that the time complexity of N$^2$-GNN is the same as edge-based subgraph GNNs. However, the space complexity of N$^2$-GNN is $O(n^2)$, which is far less than edge-based subgraph GNNs with a $O(nm)$ space complexity. 

If we consider a pratical $h$, the neighbors of N$^2$-GNN is $(\mathcal{N}_{1}(v_2) \times \mathcal{N}_{1}(v_1))\cap(\mathcal{N}_{h}(v_1) \cap \mathcal{N}_{h}(v_2))^2$. This means, only $(w_1, w_2)$ that within the overlapping $h$-hop subgraph of $v_1$ and $v_2$ can send message to $v_1, v_2$. Therefore, the time complexity is bounded by $O(nd^{h+2})$, which is still the same as edge-based subgraph GNNs with $h$-hop subgraph selection policy. Further, it is easy to verify that only tuple $(v_1, v_2)$ with $\text{SPD}(v_1, v_2)\leq h$ can update its representation. Thus, in memory-sensitive tasks, we can only keep representation for tuples that can be updated, resulting in a space complexity of $O(nd^h)$.

\subsection{Model implementation details}
\label{app:model_details}
Here we provide the details about model implementation. As N$^2$-GNN is an instance of $(2, 2)$-FWL+, we are dealing with a tuple of size 6 in the aggregation process. Note that for any tuple $(v_1, v_2)$, $Q^F_{(w_1,w_2)}(v_1,v_2)$ will also include itself. However, it does not increase the expressive power as we already encode its information. Therefore, in the implementation, we remove it from the tuple. Thus, we implement the neighbor tuple as follows: for any 2-tuple $(v_1, v_2)$, the message from each neighbor $(w_1, w_2)\in N^2(v_1, v_2)$ is defined as $((v_1, w_1), (v_1, w_2), (w_1, v_2), (w_2, v_2), (w_1, w_2))$. Note the order of tuples is flexible as long as it is uniform across different neighbors, according to Definition~\ref{def:neighbor_tuple}. We implement N$^2$-GNN in a memory-saving way. To make it more clear, we slightly redefine some notations. Let $h^{l}(v_1, v_2)$ be the representation of tuple $(v_1, v_2)$, $m^{l}_{(v_1, v_2)}(w_1,w_2)$ be the message from the neighbor $(w_1, w_2)$, $N^2(v_1,v_2)$ be the set of all neighborhoods of $(v_1, v_2)$ in N$^2$-GNN with the output embedding hidden size of $hs$. We let the initial feature of tuple $h^{0}(v_1, v_2)=\text{MLP}(l_G(v_2)+ \text{SPD}(v_1, v_2))$. The message function is implemented as:
\begin{gather}
u^{l}_{i}(v_1,v_2) = \text{LIN}_{1}(h^{l-1}(v_1,v_2)) * \text{Tanh}(\mathbf{W}_i), \\
m^{l}_{(v_1, v_2)}(w_1,w_2) =\text{LIN}_2(\text{CON}( u^{l}_{1}(v_1, w_1), u^{l}_{2}(v_1, w_2), u^{l}_{3}(w_1, v_2), u^{l}_{4}(w_2, v_2), u^{l}_{5}(w_1, w_2))),
\end{gather}
where $\text{LIN}_1$ is a linear layer project the embedding from hidden size $h$ to a lower value $d$, named as inner size, $\mathbf{W}_i \in R^{d}, i\in [5]$ are learnable weight vector with size $d$ and $*$ is element-wise product, $\text{CON}$ is the concatenation, and $\text{LIN}_2$ is another linear layer project the concatenated embedding from size $5 * d$ back to hidden size $h$. It is easy to validate that this implementation is injective to tuple input. Next, the update function $\mathbf{U}^{l}$ is implemented as:
\begin{equation}
\label{eq:n2gnn_update}
h^{l}(v_1, v_2) = \text{MLP}^{l}\left((1+\epsilon^{l}) * h^{l-1}(v_1, v_2) + \sum_{(w_1, w_2)\in N^2(v_1,v_2)}m^{l}_{(v_1, v_2)}(w_1,w_2)\right),
\end{equation}
where $\epsilon^{l}$ is a learnable scalar and $\text{MLP}^{l}$ is a one-hidden-layer MLP with hidden size and output size equal to $h$ and activation function be ReLu. Note that in the implementation we do not consider the hierarchical multiset but directly pooling all neighbors together. We observe it already achieves superior results in all real-world tasks and is more parameter-saving. 

As discussed in~\citep{zhang2023complete, frasca2022understanding}, for each $h^{l-1}(v_1, v_2)$, add the representation of $h^{l-1}(v_2, v_2)$ can boost the performance in real-world tasks. Therefore, we adopt a similar way for all real-world experiments:
\begin{gather}
\label{eq:n2gnn_root_1}
a^{l}(v_1, v_2) = \text{MLP}^{l}_{a}\left( (1+\epsilon^{l}_{a}) * h^{l-1}(v_1, v_2) + h^{l-1}(v_2, v_2)\right), \\ 
\label{eq:n2gnn_root_2}
\hat{h}^{l}(v_1,v_2) = h^{l}(v_1,v_2) + a^{l}(v_1, v_2).
\end{gather}
If add this additional aggregation, we will use the $\hat{h}^{l}(v_1,v_2)$ instead of $h^{l}(v_1,v_2)$ as the output representation. We conjecture this works as virtual nodes in real-work tasks, which allows a better mixture of node features and structure information. As indicated by~\citep{zhang2023complete}, adding this term will not increase the expressive power from the theoretical side. Finally, the readout function is implemented in a hierarchical way:
\begin{gather}
\label{eq:n2gnn_pool_1}
h(v_1) = \text{MLP}\left(\text{POOL}_{1}(\lbbr h^{L}(v_1, w)|w\in V(G)\rbbr) \right), \\
\label{eq:n2gnn_pool_2}
h_{G} = \text{POOL}_{2}\left( \lbbr h(w) | w\in V(G)\rbbr\right),
\end{gather}
where $\text{POOL}_1$ and $\text{POOL}_2$ are two different pooling function. 


In addition, we adopted two implementation strategies for N$^2$-GNN. The first strategy is to keep all $O(n^2)$ tuples even if it has a distance greater than $h$. Notes that it will not aggregate information from other tuples but is only updated in Equation~(\ref{eq:n2gnn_root_1}) and Equation~(\ref{eq:n2gnn_root_2}). We observe it has great performance in real-world tasks combined with Equation~(\ref{eq:n2gnn_pool_1}) and Equation~(\ref{eq:n2gnn_pool_2}). The second implementation is used for memory-sensitive tasks. We remove all tuples that have a distance greater than $h$. We refer to the first strategy as the dense version and the second strategy as the sparse version. We will specify which version we used in our experiments later on. The code is implemented using PyG~\citep{Fey/Lenssen/2019}. We provide open-sourced code in \url{https://github.com/JiaruiFeng/N2GNN} for reproducing all results reported in the main paper.

\subsection{Limitations}
Although N$^2$-GNN has efficient theoretical space complexity, the empirical space complexity can still be high as we need to keep neighbors index for all tuple $(u_1, u_2)\in Q^F_{(w_1, w_2)}(v_1, v_2)$. This introduces a constant factor (5 in the implementation) on the saving edge index and corresponding gradients during training (see Table~\ref{tab:complexity1} and Table~\ref{tab:complexity2} for the practical time and memory comparison). It is still worth studying how to improve the empirical space complexity of N$^2$-GNN. Further, in the current implementation, we didn't consider the hierarchical set but directly pooled all elements together to save the parameters and computational cost. Additionally, each tuple needs to aggregate many more neighbors than normal MPNNs, which makes the sum aggregation hard to achieve injectiveness during the aggregation, introducing the optimization issue. These hinder the model from achieving its theoretical power. Finally, although N$^2$-GNN has $O(nd^{h+2})$ time complexity, which is practical for most real-world tasks, the complexity still goes exponentially if the graph is dense (like a strongly regular graph). But as the design space of $(k, t)$-FWL+ is extremely flexible, it is easy to design more practical and expressive instances suitable for dense graphs. 
\section{Experimental details}
\label{app:exp}
This section presents additional information on the experimental settings.

\subsection{Dataset statistics}

\begin{table}[h]
 \setlength{\tabcolsep}{8pt}
 \renewcommand{\arraystretch}{1.2}
    \caption{Dataset statistics.}\label{tab:dataset}
    \centering
    \begin{tabular}{l|ccccc}
        \toprule
        Dataset & \#Graphs &Avg. \#Nodes & Avg. \#Edges  &Task type   \\ \midrule
        EXP & 1200 & 44.4 & 110.2 & Graph classification \\
        CSL & 150 & 41.0 & 164.0 & Graph classification \\
        SR25 & 15 & 25.0 & 300.0 & Graph classification \\
        BREC & 800 & 34.8 & 143.4 & Pair distinguishment \\
        Counting & 5,000 & 18.8 & 31.3  & Node regression\\
        QM9 & 130,831 & 18.0 & 18.7 & Graph regression\\
        ZINC-12k &12,000&23.2 & 24.9  & Graph regression\\
        ZINC-250k &249,456 & 23.1 & 24.9 & Graph regression\\
        \bottomrule
    \end{tabular}
\end{table}

\subsection{Experimental details}
\label{app:exp_detail}
In this section, we provide all the experimental details. First, we discuss some general experimental settings that, if not specified further, are consistent across all experiments. 

\textbf{Model}. We adopt $\text{POOL}_1$ as summation and $\text{POOL}_2$ as mean pooling in Equation~(\ref{eq:n2gnn_pool_1}) and Equation~(\ref{eq:n2gnn_pool_2}). We set the $\epsilon^l$ in Equation~(\ref{eq:n2gnn_update}), $\epsilon_a^l$ in Equation~(\ref{eq:n2gnn_root_1}) to be 0.0 and fix it during training and inference. At each MLP, we add batch normalization.

\textbf{Training}. For all experiments, we use Adam as the optimizer and ReduceLRonPlateau as the learning rate scheduler. We set l2 weight decay to 0.0 across all experiments. All experiments are conducted on a single NVIDIA A100 80GB GPU.
\subsubsection{Graph structures counting}
\textbf{Model}. For the substructure counting dataset, we adopt a similar setting as I$^2$-GNN~\citep{huang2023boosting}. Specifically, we set the number of hop $h$ to be 1, 2, 2, 3, 2, 2, 1, 4, 2 for 3-cycles, 4-cycles, 5-cycles, 6-cycles, tailed triangles, chordal cycles,
4-cliques, 4-paths, and triangle-rectangle respectively. For all substructures, we set the number of layers to 5, the hidden size $h$ to 96, and the inner size $d$ to 32. In the substructure count dataset, we remove all normalization layers. Finally, we use the sparse version of data preprocessing. 

\textbf{Training}. The training/validation/test splitting ratio is 0.3/0.2/0.5. The initial learning rate is 0.001 and the minimum learning rate is 1e-5. The patience and factor of the scheduler are 10 and 0.9 respectively. The batch size is set to 256 and the number of epochs is 2000. For all substructures, we run the experiments 3 times and report the mean results on the test dataset. 

\subsubsection{ZINC}
\textbf{Model}. The setting is the same for both ZINC-Subset and ZINC-Full except for the inner size. We set the number of hop $h$ to be 3. We set the number of layers to 6, and the hidden size $h$ to 96. The inner size $d$ is 20 for the ZINC-subset and 48 for the ZINC-full.

\textbf{Training}. The initial learning rate is 0.001 and the minimum learning rate is 1e-6. The patience and factor of the scheduler are 20 and 0.5 respectively. The batch size is set to 128 and the number of epochs is 500. For both ZINC-Subset and ZINC-Full, we run experiments 10 times and report the mean results and standard deviation on the test dataset. 

\subsubsection{QM9}

\textbf{Model}. For all targets, we adopt the same setting. We set the number of hop $h$ to be 3. We set the number of layers to 6, the hidden size $h$ to 120, and the inner size $d$ to 32. We further add the resistance distance feature into the model, similar to the implementation in~\citet{feng2022how, huang2023boosting}. 

\textbf{Training}. The initial learning rate is 0.001. The patience and factor of the scheduler are 5 and 0.9 respectively. The batch size is set to 128 and the number of epochs is 350. we run experiments 1 time and report the test result with the model of the best validation result. 

\subsubsection{CSL}
\textbf{Model}: We set the number of hop $h$ to be 4, the number of layers to be 4, the hidden size $h$ to 48, and the inner size $d$ to 16. We use the sparse version of data preprocessing and we do not add root aggregation (Equation~(\ref{eq:n2gnn_root_1})-(\ref{eq:n2gnn_root_2})).

\textbf{Training}. The initial learning rate is 0.001 and the minimum learning rate is 1e-6. The patience and factor of the scheduler are 20 and 0.5 respectively. The batch size is set to 32 and the number of epochs is 80. We use 10-fold cross-validation and report the average results. 

\subsubsection{SR25}
\textbf{Model}: We set the number of hop $h$ to be 1, the number of layers to be 6, the hidden size $h$ to 64, and the inner size $d$ to 16. We use the sparse version of data preprocessing and we do not add root aggregation (Equation~(\ref{eq:n2gnn_root_1})-(\ref{eq:n2gnn_root_2})). Further, the normalization layer in SR25 is set to Layer normalization to avoid test/training mismatch. 

\textbf{Training}. The initial learning rate is 0.001 and the minimum learning rate is 1e-6. The patience and factor of the scheduler are 200 and 0.5 respectively. The batch size is set to 15 and the number of epochs is 800. We report the single-time results.

\subsubsection{EXP}
\textbf{Model}: We set the number of hop $h$ to be 3, the number of layers to be 4, the hidden size to 48, and the inner size $d$ to 24. We use the sparse version of data preprocessing and we do not add root aggregation (Equation~(\ref{eq:n2gnn_root_1})-(\ref{eq:n2gnn_root_2})).

\textbf{Training}. The initial learning rate is 0.001 and the minimum learning rate is 1e-6. The patience and factor of the scheduler are 20 and 0.5 respectively. The batch size is set to 32 and the number of epochs is 200. We use 10-fold cross-validation and report the average results. 

\subsubsection{BREC}
\textbf{Model}: We set the number of hop $h$ to be 8, the number of layers to be 4, the hidden size to 64, and the inner size $d$ to 32. Meanwhile, the output channel is set to 16 for computing the similarity. We use the sparse version of data preprocessing and we do not add root aggregation  (Equation~(\ref{eq:n2gnn_root_1})-(\ref{eq:n2gnn_root_2})).

\textbf{Training}. We follow the same training and evaluation procedure as described by BREC~\citep{wang2023brec}. The initial learning rate is 0.001 and the weight decay is set to 0.0001. The batch size is set to 4 and the number of epochs is 20. Due to the resource limitation, we only ran the experiment on 340 graph pairs (removed 20 4-vertex-condition graphs and 40 CFI graphs), which follow the same protocol as the reported result for I$^2$-GNN.

\section{Additional experimental results}
\label{app:additional_results}
\subsection{Ablation studies for $(k, t)$-FWL+}
In this section, we provide an additional ablation study on $(2, t)$-FWL+ by varying different $t$ and $ES$. We select two important $ES$ mentioned in our paper---$Q_1(v)$ and $\mathcal{SP}(v_1, v_2)$. We conduct experiments on both Expressiveness verification datasets and counting datasets. The experiment setup is the same as what is discussed in the above section. The results can be seen in Table~\ref{tab:exp_ktfwl+} and Table~\ref{tab:count_ktfwl+}. Notes that we find the performance of $\mathcal{SP}$ or $\mathcal{SP} \times \mathcal{SP}$ on counting dataset is similar to MPNN and thus omit it in Table~\ref{tab:count_ktfwl+}.

We can see that as long as $t=2$, we can have a perfect performance on SR25 no matter if we use $\mathcal{SP}$ or $Q_1(v_1)$, this aligns with our theoretical results. Further, $\mathcal{SP}(v_1, v_2) \times Q_1(v_1)$ achieve great results on all counting tasks and expressiveness verification. This indicates the $\mathcal{SP}$ can also be a great choice in the practical scenario. 
\begin{table}[h]
\large
\vspace{-12pt}
\caption{Expressive power verification (Accuracy) for different $(2, t)$-FWL+.}
\label{tab:exp_ktfwl+}
\begin{minipage}[t]{\linewidth}
  \centering
  \resizebox{\textwidth}{!}{
  \begin{tabular}{@{}l|ccccc}
    \toprule
Datasets & $t=1, Q_1(v_1)$ & $t=1, Q_1(v_1) \cup Q_1(v_2) $ & $t=1, \mathcal{SP}(v_1, v_2)$ & $t=2, \mathcal{SP}(v_1, v_2) \times Q_1(v_1)$ & $ t=2, \mathcal{SP}(v_1, v_2) \times \mathcal{SP}(v_1, v_2) $\\
\midrule
 EXP & 100 & 100 & 100 & 100 & 100\\
 CSL & 100 & 100 & 100 & 100  & 100\\
 SR25 & 6.67 & 6.67 & 6.67 & 100 & 100 \\
  \bottomrule
\end{tabular}
}
\end{minipage}

\end{table}

\begin{table}[h]
\vspace{-15pt}
    \centering
    \caption{ Evaluation of different $(2, t)$-FWL+ variants on Counting Substructures (norm MAE), cells with MAE \textbf{greater} than 0.01 are colored.} 
    \label{tab:count_ktfwl+}

  \begin{minipage}[t]{\linewidth}
  \small
  \resizebox{\textwidth}{!}{
  \begin{tabular}{@{}l|ccc|c}
    \toprule
    Target &$t=1, Q_1(v_1)$ &$t=1, Q_1(v_1) \cup Q_1(v_2) $ & $t=2, \mathcal{SP}(v_1, v_2) \times Q_1(v_1)$ & N$^{2}$-GNN\\
    \midrule
    Tailed Triangle & 0.0033 &  0.0031  & 0.0022  & 0.0025\\
    Chordal Cycle & 0.0019 & 0.0017  & 0.0021 &  0.0019\\
    4-Clique & 0.0013  &  0.0014  & 0.0016 &  0.0005  \\
    4-Path & 0.0046 &  0.0043  & 0.0076  & 0.0042  \\
    Tri.-Rec. & 0.0043 &  0.0053 & 0.0081  &  0.0055\\
    3-Cycles & 0.0005 &  0.0006 & 0.0004  & 0.0002\\
    4-Cycles & 0.0027 &  0.0022  & 0.0037  & 0.0024 \\
    5-Cycles & 0.0051 &  0.0042  & 0.0037  & 0.0039\\
    6-Cycles  & \cellcolor{LightGreen}0.0113 &  0.0097  & 0.0097  &   0.0075\\
  \bottomrule
\end{tabular}}
\end{minipage}

\vspace{-10pt}
\end{table}

\subsection{Practical complexity of N$^2$-GNN}
In this section, we provide practical complexity analysis for N$^2$-GNN. Here we use the cycle counting datasets with a similar parameter size for all models and a batch size of 256. To ensure a fair comparison, in N$^2$-GNN, we use the sparse version for data preprocessing (which is true for all compared models), and set the number of workers as 0 and random seed as 1 for all models. The experiments are run on a single TeslaV100 32GB GPU. We report the training time, maximum training memory, inference time, and maximum inference memory usage during inference for both $h=1$ and $h=2$ for an average of 20 epochs. The results is shown in Table~\ref{tab:complexity1} and Table~\ref{tab:complexity2}. 

We can see that the empirical memory usage of N$^2$-GNN is much less than $I^2$-GNN when $h=1$ and comparable when $h=2$. We conjecture the reason why N$^2$-GNN require more training memory than I$^2$-GNN when $h=2$ is that when $h$ increase, each tuple will need to aggregate much more neighbors than each node in I$^2$-GNN and each aggregation require a constant size (5 in N$^2$-GNN) of more memory for saving the gradients. Meanwhile, As we mentioned in limitations (Appendix~\ref{app:n2gnn_more}), to enjoy some extent of parallelism, the current implementation of $N^2$-GNN needs to save all neighbor indices, which introduces a constant factor of more memory on saving. But the inference memory of N$^2$-GNN is less than I$^2$-GNN in both $h=1$ and $h=2$. The training time and inference time of N$^2$-GNN are also more efficient than I$^2$-GNN. However, currently, the time and memory cost by N$^2$-GNN are higher than node-based subgraph GNNs. We do want to highlight several points: (1) By changing the way of implementation, $N^2$-GNN can be implemented strictly within $O(n^2)$ space. Therefore, the merit of $N^2$-GNN still holds. (2) The experiment is conducted on the same parameter budget level. However, in real-world tasks, $N^2$-GNN needs much fewer parameters to achieve SOTA results (Table~\ref{tab:zinc}).  (3) In this work, we not only aim to propose a new model but also mean to introduce a new flexible framework, $(k, t)$-FWL+. The empirical success of $N^2$-GNN demonstrates the great potential of this framework for designing new expressive GNN variants.

\begin{table}[h]
\large
\vspace{-12pt}
\caption{Practical usage comparison for $h=1$.}
\label{tab:complexity1}
\begin{minipage}[t]{\linewidth}
  \centering
  \resizebox{\textwidth}{!}{
  \begin{tabular}{@{}l|cccccc}
    \toprule
$h=1$ & \# Params &Training time (ms/epoch)&Training memory (GB)&Inference time (ms/epoch)&Inference Memory (GB) \\
\midrule
 NGNN & 186k & 478.1 & 0.398 & 463.4 & 0.089 \\
 ID-GNN & 135k & 455.5 & 0.375& 432.8 & 0.094 \\
 GNN-AK+ & 251k & 546.5 & 0.479 & 490.1 & 0.090  \\
 I$^2$-GNN & 194k & 666.9 & 1.326 & 613.5 & 0.292 \\
 N$^2$-GNN & 202k & 368.3 & 0.744 & 311.8 & 0.140 \\
  \bottomrule
\end{tabular}
}
\end{minipage}

\end{table}

\begin{table}[h]
\large
\vspace{-12pt}
\caption{Practical usage comparison for $h=2$.}
\label{tab:complexity2}

\begin{minipage}[t]{\linewidth}
  \centering
  \resizebox{\textwidth}{!}{
  \begin{tabular}{@{}l|cccccc}
    \toprule
$h=2$ & \# Params &Training time (ms/epoch)&Training memory (GB)&Inference time (ms/epoch)&Inference Memory (GB) \\
\midrule
 NGNN & 186k & 516.5 & 0.916 & 470.7 & 0.198 \\
 ID-GNN & 135k & 521.7 & 0.858 & 471.5 & 0.211 \\
 GNN-AK+ & 251k & 617.4 & 1.082 & 556.1 & 0.200  \\
 I$^2$-GNN & 194k & 926.8 & 3.012 & 770.1 & 0.675 \\
 N$^2$-GNN & 202k & 892.9 & 4.026 & 709.8 & 0.660 \\
  \bottomrule
\end{tabular}
}
\end{minipage}
\vspace{-12pt}

\end{table}

\subsection{Additional discussion on BREC experiment}
In Table~\ref{tab:brec_full}, we provide a complete comparison of all baseline methods on the BREC dataset. Please see the detailed description of all baseline methods in BREC~\citep{wang2023brec}. We can see that N$^2$-GNN achieve the best result among all GNN baselines and the second best result among all baselines, even if we only test on 340 pair of graphs. Particularly, we surpass I$^2$-GNN on all parts. This result empirically verified Theorem~\ref{thm:n2fwl_compare} that the theoretical power of N$^2$-GNN is more powerful than I$^2$-GNN. Meanwhile, we notice that SSWL$\_$P can distinguish more CFI graph pairs than ours, which may contradict Theorem~\ref{thm:n2fwl_compare}. This may be due to the fact that we remove 40 CFI graph pairs in the experiments given the resource limitation and SSWL$\_$P uses 8 layers in the experiment and we only use 4 layers. Meanwhile, we don't implement the hierarchical multiset in the current version of N$^2$-GNN, which slightly compromises the expressive power.

\begin{table}[h]
\vspace{-15pt}
\caption{Complete comparison on BREC dataset}
\label{tab:brec_full}
\begin{center}
\resizebox{1.0\textwidth}{!}{
\begin{tabular}{lcccccccccc}
\toprule
~ & \multicolumn{2}{c}{Basic Graphs (60)} & \multicolumn{2}{c}{Regular Graphs (140)} & \multicolumn{2}{c}{Extension Graphs (100)} & \multicolumn{2}{c}{CFI Graphs (100)} & \multicolumn{2}{c}{Total (400)}\\
\cmidrule(r{0.5em}){2-3} \cmidrule(l{0.5em}){4-5}\cmidrule(l{0.5em}){6-7}\cmidrule(l{0.5em}){8-9}\cmidrule(l{0.5em}){10-11}
Model &  Number & Accuracy & Number & Accuracy  & Number & Accuracy  & Number & Accuracy  & Number & Accuracy  \\
\midrule
3-WL & 60 & 100\%  & 50 & 35.7\% & 100 & 100\%  & 60 & 60.0\%  & 270 & 67.5\% \\
SPD-WL & 16 & 26.7\% & 14 & 11.7\% & 41 & 41\% & 12 & 12\% & 83 & 20.8\% \\
$S_3$ & 52 & 86.7\% & 48 & 34.3\% & 5 & 5\% & 0 & 0\% & 105 & 26.2\%    \\
$S_4$ & 60 & 100\% & 99 & 70.7\%  & 84 & 84\% & 0 & 0\% & 243 & 60.8\% \\
$N_1$ & 60 & 100\% & 99 & 85\% & 93 & 93\% & 0 & 0\% & 252 & 63\% \\
$N_2$ & 60 & 100\% & 138 & 98.6\% & 100 & 100\% & 0 & 0\% & \textbf{298} & \textbf{74.5\%} \\
$M_1$ & 60 & 100\% & 50 & 35.7\%  & 100 & 100\% & 41 & 41\% & 251 & 62.8\% \\
\midrule
NGNN & 59 & 98.3\% & 48 & 34.3\% & 59 & 59\%  & 0 & 0\%  & 166 & 41.5\%  \\
DE+NGNN & 60 & 100\%& 50 & 35.7\% & 100 & 100\%& 21 & 21\% & 231 & 57.8\%\\
DS-GNN & 58 & 96.7\% & 48 & 34.3\%  & 100 & 100\%  & 16 & 16\%  & 222 & 55.5\% \\
DSS-GNN & 58 & 96.7\% & 48 & 34.3\% & 100 & 100\% & 15 & 15\%  & 221 & 55.2\% \\
SUN & 60 & 100\% & 50 & 35.7\% & 100 & 100\% & 13 & 13\% & 223 & 55.8\% \\
SSWL\_P & 60 & 100\% & 50 & 35.7\% & 100 & 100\% & 38 & 38\% & 248 & 62\% \\
GNN-AK & 60 & 100\% & 50 & 35.7\% & 97 & 97\%& 15 & 15\% & 222 & 55.5\%\\
KP-GNN & 60 & 100\% & 106 & 75.7\%& 98 & 98\%& 11 & 11\% & 275 & 68.8\% \\ 
I$^2$-GNN & 60 & 100\%& 100 & 71.4\% & 100 & 100\%& 21 & 21\% & 281 & 70.2\%\\
PPGN & 60 & 100\% & 50 & 35.7\% & 100 & 100\% & 23 & 23\%  & 233 & 58.2\% \\
$\delta$-k-LGNN & 60 & 100\% & 50 & 35.7\%& 100 & 100\%& 6 & 6\% & 216 & 54\% \\ 
KC-SetGNN & 60 & 100\% & 50 & 35.7\%& 100 & 100\%& 1 & 1\% & 211 & 52.8\% \\ 
GSN & 60 & 100\% & 99 & 70.7\%  & 95 & 95\% & 0 & 0\% & 254 & 63.5\% \\
DropGNN & 52 & 86.7\% & 41 & 29.3\% & 82 & 82\% & 2 & 2\% & 177 & 44.2\% \\
OSAN &  56 & 93.3\% & 8 & 5.7\% & 79 & 79\% & 5 & 5\% & 148 & 37\% \\
Graphormer & 16 & 26.7\% & 12 & 10\% & 41 & 41\% & 10 & 10\% & 79 & 19.8\% \\
\midrule
N$^2$-GNN & 60 & 100\%&  100 & 71.4\% & 100 & 100\%& 27 & 27\% & \textbf{287} & \textbf{71.8\%}\\

\bottomrule
\end{tabular}
}
\end{center}
\vspace{-15pt}
\end{table}

\subsection{Ablation studies for N$^2$-GNN}
In this subsection, we perform ablation studies to investigate each component in N$^2$-GNN. All ablation studies are performed on ZINC-Subset and target $C_v$ in QM9. Except hyper-parameter mentioned later, all other setting is exactly the same as what is described in Section~\ref{app:exp_detail}. 

In the first ablation study, we investigate different components of the model. Specifically, we analyze two components. The first component is aggregation $h^l(v_1,v_2)$ described in Equation~(\ref{eq:n2gnn_root_1})-(\ref{eq:n2gnn_root_2}). The second component is the dense/sparse embedding described in Section~\ref{app:model_details}. In the ablation study, we verify all different combinations of two components. For the model without Equation~(\ref{eq:n2gnn_root_1})-(\ref{eq:n2gnn_root_2}), to ensure that the size of the model is comparable, we increase the model hidden size to 120 and 136 for ZINC-Subset and QM9, respectively. The results are shown in Table~\ref{table:ablation_components}. First, we can see adding Equation~(\ref{eq:n2gnn_root_1})-(\ref{eq:n2gnn_root_2}) indeed improve the performance across different tasks. Second, by explicitly keeping $O(n^2)$ embedding the performance, of N$^2$-GNN on ZINC-Subset improved from 0.079 to 0.059. This may indicate that even though the two models have the same expressive power from the theoretical side, adding additional feature embedding can still boost the performance significantly in real-world tasks as it allows a better mixture of both feature and structure information. Note that these two components were used in many previous models like ESAN~\citep{puny2023equivariant}, SUN~\citep{frasca2022understanding}, and SSWL+~\citep{zhang2023complete}, etc. However, our model still outperforms them by a large margin. This indicates that the expressive power of the model is still the most critical ingredient, adding these additional components is just a way to unleash all the potential of the model for real-world tasks.

\begin{table}[h]
\hspace{-10pt}
\renewcommand{\tabcolsep}{2pt}
\small
\resizebox{0.55\textwidth}{!}{
\begin{minipage}[t]{0.54\textwidth}
\centering
\caption{Ablation study of model components.}
\label{table:ablation_components}
\begin{tabular}{@{}l|cc}
    \toprule
    Model design & ZINC-Subset $\downarrow$ & QM9 ($C_v$) $\downarrow$ \\
     \midrule
    Full &  \first{0.059 $\pm$ 0.002} & \first{0.0760}\\
    \midrule
    w/o $h^l(v_2, v_2)$ & 0.064 $\pm$ 0.003 & 0.0811 \\ 
    sparse & 0.079 $\pm$ 0.003 & 0.0876 \\ 
    w/o $h^l(v_2, v_2)$ + sparse & 0.084 $\pm$ 0.003 & 0.0847\\ 
    \bottomrule
\end{tabular}
\end{minipage}
}
\hspace{-5pt}
\small
\resizebox{0.48\textwidth}{!}{
\begin{minipage}[t]{0.47\textwidth}
\centering
\caption{Ablation study of $h$.}
\label{table:ablation_h}
\renewcommand{\arraystretch}{1.15}
\begin{tabular}{@{}l|cc}
    \toprule
    $h$ in N$^2$-GNN & ZINC-Subset $\downarrow$ & QM9 ($C_v$) $\downarrow$ \\
     \midrule
    $h$=1 & 0.150 $\pm$ 0.009 & 0.0823 \\ 
    $h$=2 & 0.127 $\pm$ 0.005 & 0.0808 \\ 
    $h$=3 &  \first{0.059 $\pm$ 0.002} & \first{0.0760}\\
    $h$=4 & 0.063 $\pm$ 0.004 & 0.0826 \\ 
    \bottomrule
\end{tabular}
\end{minipage}}
\end{table}

In the second ablation study, we evaluate the effect of the number of hop $h$. We vary $h$ from 1 to 4 and results are shown in Table~\ref{table:ablation_h}. We can see an increase of $h$ can boost the performance of the model, which has already been verified in previous subgraph-based methods. Particularly, for ZINC-Subset, we find that the third hop is critical to the performance. However, if we continue to increase the number of hops, the performance drops, which is a sign of overfitting.

\end{document}